\theoremstyle{definition}  
\theoremstyle{plain}
\newtheorem{assumption}{Assumption}
\newtheorem{example}{Example}
\newtheorem{lemma}{Lemma}
\newtheorem{theorem}{Theorem}
\newtheorem{corollary}{Corollary}
\newtheorem{definition}{Definition}
\xpatchcmd{\proof}{\itshape}{\normalfont\proofnameformat}{}{}
\newcommand{\proofnameformat}{\bfseries}
\newcommand{\pref}[1]{\prettyref{#1}}
\newcommand{\savehyperref}[2]{\texorpdfstring{\hyperref[#1]{#2}}{#2}}
\newcommand\numberthis{\addtocounter{equation}{1}\tag{\theequation}}
\DeclarePairedDelimiter{\abs}{\lvert}{\rvert} 
\DeclarePairedDelimiter{\brk}{[}{]}
\DeclarePairedDelimiter{\crl}{\{}{\}}
\DeclarePairedDelimiter{\prn}{(}{)}
\DeclarePairedDelimiter{\nrm}{\|}{\|}
\DeclarePairedDelimiter{\tri}{\langle}{\rangle}
\let\Pr\undefined
\DeclareMathOperator{\En}{\mathbb{E}}
\DeclareMathOperator{\Pr}{Pr}
\DeclareMathOperator*{\argmin}{argmin} 
\DeclareMathOperator*{\argmax}{argmax}
\newcommand{\indicator}[1]{\mathbbm{1}\crl*{#1}}    
\newcommand{\eps}{\epsilon}
\newcommand{\ldef}{\vcentcolon=}
\newcommand{\wt}[1]{\widetilde{#1}}
\newcommand{\wh}[1]{\widehat{#1}}
\newcommand{\mb}[1]{\boldsymbol{#1}}
\def\ddefloop#1{\ifx\ddefloop#1\else\ddef{#1}\expandafter\ddefloop\fi}
\def\ddef#1{\expandafter\def\csname bb#1\endcsname{\ensuremath{\mathbb{#1}}}}
\def\ddefloop#1{\ifx\ddefloop#1\else\ddef{#1}\expandafter\ddefloop\fi}
\def\ddef#1{\expandafter\def\csname b#1\endcsname{\ensuremath{\mathbf{#1}}}}
\def\ddef#1{\expandafter\def\csname c#1\endcsname{\ensuremath{\mathcal{#1}}}}
\def\ddef#1{\expandafter\def\csname h#1\endcsname{\ensuremath{\widehat{#1}}}}
\def\ddef#1{\expandafter\def\csname hc#1\endcsname{\ensuremath{\widehat{\mathcal{#1}}}}}
\def\ddef#1{\expandafter\def\csname t#1\endcsname{\ensuremath{\widetilde{#1}}}}
\def\ddef#1{\expandafter\def\csname tc#1\endcsname{\ensuremath{\widetilde{\mathcal{#1}}}}}
\newcommand{\KL}[2]{\mathrm{KL}{\prn*{#1 \| #2}}}
\newcommand{\diag}{\textrm{diag}}
\newcommand{\rank}{\mathrm{rank}}
\newcommand{\proman}[1]{\prn*{\romannumeral #1}}
\newcommand{\overleq}[1]{\overset{ #1}{\leq{}}}
\newcommand{\overgeq}[1]{\overset{#1}{\geq{}}}
\newcommand{\overeq}[1]{\overset{#1}{=}}
\newcommand{\ind}[1]{^{#1}} 
\providecommand{\uniform}{\text{Uniform}}
\newcommand{\transfer}{P} 
\newcommand{\valestimate}{\textsf{ValEstimate}}
\newcommand{\adavalestimate}{\textsf{AdaValEstimate}} 
\newcommand{\ignore}[1]{}
\renewcommand{\epsilon}{\varepsilon}
\newcommand{\mdpfamily}{\mathscr{M}}
\newcommand{\obsmap}{\phi} 
\newcommand{\obserr}[1]{\Delta(#1)} 
\title{Agnostic Reinforcement Learning with \\ 
Low-Rank MDPs and Rich Observations} 
\author[2]{Christoph Dann$^{1}$, Yishay Mansour$^{1, 3}$, Mehryar Mohri$^{1,4}$ \\ Ayush Sekhari$^{2}$  and  Karthik Sridharan}
\affil[1]{Google Research} 
\affil[2]{Cornell University}
\affil[3]{Tel Aviv University}
\affil[4]{Courant Institute of Mathematical Sciences}
\date{} 
\begin{document}

\maketitle 
\begin{abstract}
There have been many recent advances on provably efficient Reinforcement Learning (RL) in problems with rich observation spaces. However, all these works share a strong realizability assumption about the optimal value function of the true MDP. Such realizability assumptions are often too strong to hold in practice. In this work, we consider the more realistic setting of agnostic RL with rich observation spaces and a fixed class of policies $\Pi$ that may not contain any near-optimal policy. We provide an algorithm for this setting whose error is bounded in terms of the rank $d$ of the underlying MDP.  Specifically, our algorithm enjoys a sample complexity bound of $\wt{O}\left((H^{4d} K^{3d} \log |\Pi|)/\epsilon^2\right)$ where $H$ is the length of episodes, $K$ is the number of actions and $\epsilon>0$ is the desired sub-optimality.  We also provide a nearly matching lower bound for this agnostic setting that shows that the exponential dependence on rank is unavoidable, without further assumptions. 
\end{abstract} 

\let\thefootnote\relax\footnotetext{Correspondence  to: Ayush Sekhari $<$as3663@cornell.edu$>$}

\section{Introduction}
Reinforcement Learning (RL) has achieved several remarkable empirical successes in the last decade, which include playing Atari 2600 video games at superhuman levels \citep{mnih2015human}, AlphaGo or AlphaGo Zero surpassing champions in Go  \citep{silver2018general}, AlphaStar's victory over top-ranked professional players in StarCraft \citep{vinyals2019grandmaster}, or practical self-driving cars. These applications all correspond to the setting of rich observations, where the state space is very large and where observations may be images, text or audio data.
In contrast, most provably efficient RL algorithms are still limited to the classical tabular setting where the state space is small \citep{kearns2002near, brafman2002r,azar2017minimax, dann2019policy} and do not scale to the rich observation setting. 

To derive guarantees for large state spaces, much of the existing work in
RL theory relies on a \emph{realizability} and 
a \emph{low-rank} assumption \citep{krishnamurthy2016pac, jiang2017contextual,dann2018oracle,du2019provably,misra2020kinematic, agarwal2020flambe}.
Different notions of rank have been adopted in the literature, including that of a low-rank transition matrix \citep{jin2019learning}, a low Bellman rank \citep{jiang2017contextual}, Wittness rank \citep{sun2019model}, Eluder dimension \citep{osband2014model}, Bellman-Eluder dimension \citep{jin2021bellman}, or bilinear classes \citep{du2021bilinear}.  These studies also show that learning without any such structural assumptions requires a sample size that grows exponentially in the time horizon of the MDP \citep{dann2015sample,krishnamurthy2016pac, du2019good}. The choice of the
most suitable and most general notion of rank is the topic of much active 
research in RL theory.

In comparison, the realizability assumption has received much less attention. This is the strong premise that the optimal value function belongs to the class of functions considered, which typically does not hold in practice.  In many applications, the optimal value function $Q^\star$ is highly complex and we cannot hope to accurately approximate it in the absence of some strong domain knowledge.  Can we relax the realizability assumption in RL?

Value-function realizability can be viewed as the analogue of the PAC-realizability assumption in classical statistical learning theory. That assumption rarely holds, which has motivated the development and analysis of numerous algorithms for the agnostic PAC learnability model. Those algorithms provably learn to predict as well as the best predictor in the given function class, independently of whether the Bayes predictor belongs to the class. The counterparts of such results in reinforcement learning are mostly unavailable, which prompts the following question: Can we derive a theory of agnostic reinforcement learning?

Here, we precisely initiate that study. In this agnostic setting, we adopt common structural assumptions, e.g.\ small rank of the transition matrix, but seek to learn to perform as well as the best policy in the given policy class, independently of how close this class represents the Bellman-optimal policy.  Specifically, we study agnostic Reinforcement Learning (RL) with a fixed policy class $\Pi$ in the episodic MDPs with rich observations. Provably sample-efficient algorithms for agnostic RL would be highly desirable but it is still unknown to what degree learning is possible in this setting. Our work provides new insights about learnability with structural assumptions in the absence of (approximate) realizability in RL.

Agnostic RL without any additional structural assumptions has been considered in the past. By evaluating each policy in the class individually, one can easily obtain a sample complexity upper bound of $O(|\Pi| / \epsilon^2)$. \citet{kearns2000approximate} also showed that an upper bound of $(K^H \log |\Pi|) / \epsilon^2$ is possible, where $K$ is the number of actions and $H$ is the time horizon. However, as discussed in prior work such as \citep{krishnamurthy2016pac}, bounds of this form are rather unsatisfactory as one of them admits a linear dependence on the size of the function class, which is prohibitively large, and the other one admits an exponential dependence on the length of the episodes $H$, which is typically long. Using existing constructions, one can derive a lower bound on the sample complexity of the form $\min\{|\Pi|, K^H\} / {\epsilon^2}$ in the rich observation setting. 
This further justifies our adoption of rank as a natural structural assumption. 
\ignore{
 that
the rank $d$ of the state transition matrix induced by any policy is small. 
We view this as 
and ask whether this alone is sufficient to learn more efficiently in the agnostic setting.}

\paragraph{Our Contributions:}  
The following highlights our main technical contributions, where
$d$ is the rank of the state transition matrix induced by any policy in the class $\Pi$, 
and is assumed to be small. 
\begin{enumerate}[label=$\bullet$, itemsep=0mm, leftmargin=8mm] 
\item We provide a uniform exploration-based algorithm that can find an $\epsilon$-sub-optimal policy w.r.t. the policy class $\Pi$ after collecting $O\prn{(H K/ d)^{4d} \log(d \abs{\Pi}) / \epsilon^2}$ samples in the MDP.
This bound shows that one can achieve a sample complexity that is polynomial in both $H$ and $\log |\Pi|$, while being exponential in rank $d$ only (which we assume is small). 

In addition to the sample complexity bound obtained here, the algorithmic techniques itself might be of independent interest and useful beyond this work. The algorithm is based on showing that for every policy, the expected rewards follows an autoregressive model of degree $d$. Thus obtaining samples of $O(d)$-length paths for a policy we show that one can extrapolate expected rewards for the entire episode.

\item We complement this upper bound with a sample complexity lower bound of $\Omega \prn[\big]{  \left(H / d\right)^{d /2 }  / \epsilon^2}$ (when $K=2$), thereby showing that the $H^{O(d)}$ term in the upper bound is unavoidable. The lower bound also highlights which structures in the policy class induce the $H^{O(d)}$ terms thus shedding light on what structural assumptions could help alleviate the exponential dependence on the rank.

\item Finally, we seek to improve upon the $H^d$ term and
provide an adaptive algorithm that admits a sample complexity that depends on the eigenspectrum of the transition matrix of the MDP; while in the worst case that bound matches the above one, it provides a significantly better guarantee when the eigenspectrum is more favorable.

\end{enumerate} 

However, we view the main benefit of our work to be the initiation of the study of agnostic reinforcement learning and the presentation of an in-depth analysis of a natural structural assumption within that setting. This can form the basis for future research in this domain with alternative and perhaps more favorable rank-type assumptions.

\section{Problem Setup}  
We consider an episodic Markov decision process with episode length $H \in \bbN$, observation space $\cX$ and action space $\cA \ldef{} \crl{1, \ldots, K}$. For ease of exposition, we assume that the observation space $\cX$ is finite (albeit extremely large), but our results can be readily extended to countably infinite and possibly uncountably infinite observation spaces.
Each episode is a sequence $((x_1, a_1, r_1), (x_2, a_2, r_2), \dots, (x_{H}, a_H, r_H)) \in (\cX \times \cA \times \bbR)^H$, where the initial observation $x_0$ is drawn from the initial distribution $\mu_0 \in \Delta(\cX)$, the actions are generated by the learning agent and all the following observations are sampled from the transition kernel $x_{h+1} \sim T( \cdot | x_h, a_h) \in \Delta(\cX)$ that depends on the previous observation and action. Finally, the rewards $r_h$ are drawn from a sub-Gaussian distribution with mean $r(x_h, a_h)$ where $r \colon \cX \times \cA \mapsto [0,1]$. The learning agent does not know the transition kernel $T$, the initial distribution $\mu_0$, or the reward function $r$. 

In our setting, the agent is given a policy class $\Pi \subseteq \crl*{\cX \mapsto \Delta(\cA)}$ consisting of  policies that map observations to distributions over the actions $\cA$.  
For any policy $\pi \in \Pi$, we denote by $T \ind{\pi} \in \bbR^{\cX \times \cX}$, the transition matrix induced by $\pi$, i.e., for any $x, x' \in \cX$, $$ \brk{T\ind{\pi}}_{(x', x)} = \En_{a \sim \pi(x)} T(x' \mid x, a).$$ 

\begin{assumption}[Low-rank transition] 
\label{ass:low_rank}
There exists $d \in \bbN$ such that $\rank(T\ind{\pi}) \leq d$ for all $\pi \in \Pi$. \end{assumption} 

For the main part of the paper, we assume that the learner knows the value of $d$, but later extend our results to the case where $d$ is unknown. We define $\lambda \ind{\pi} = (\lambda_1 \ind{\pi}, \ldots, \lambda_d \ind{\pi})^\top \in \bbC^d$ to denote the   eigenvalues of the transition matrix $T \ind{\pi}$ with rank at most $d$. Without any loss of generality, assume that $\abs{\lambda_1\ind{\pi}} \geq \abs{\lambda_2\ind{\pi}} \geq \ldots \geq \abs{\lambda_d\ind{\pi}}$. 

We denote by $\bbP\ind{\pi}$ the distribution over episodes when following policy $\pi$ and by $\En\ind{\pi}$ its expectation. We call the expected rewards obtained at time $h$ by policy $\pi$ \textbf{expected policy rewards}:
\begin{align*}
R\ind{\pi}_h &\ldef{} \En\ind{\pi} \brk*{r(x_h, a_h)}. \numberthis \label{eq:policy_step_rewards} 
\end{align*} 

The value function of $\pi$ at time $h$ is given by 
$
V\ind{\pi}_h(x) = \En^{\pi}\brk*{\sum_{h' = h}^H r(x_h, a_h) \mid x_h = x}
$.
Further, when using $V\ind{\pi}$ without a time index and arguments, we mean the value or expected $H$-step return:   
\begin{align*}
V\ind{\pi} &\ldef{} \En\brk*{V\ind{\pi}_0(x_0)} =  \En\ind{\pi} \brk[\Big]{ \sum_{h=1}^H r(x_h, a_h) } = \sum_{h=1}^H R\ind{\pi}_h,  \numberthis \label{eq:value_decomposition} 
\end{align*} 
the value function averaged over initial observations.

\paragraph{Learning objective.}  The goal of the learner is to return a policy $\wt \pi$, after interacting with the MDP for $n$ episodes of length $H$, such that the value of the returned policy is as close as possible to the value of the best policy in $\Pi$, that is, 
\begin{align*}
    V\ind{\wt \pi} \geq \max_{\pi \in \Pi} ~ V\ind{\pi} - \epsilon, 
\end{align*} 
where the error $\epsilon$ is a small as possible and may depend on $n$, the policy class $\Pi$ and the MDP. 
  
\section{Related work} \label{sec:related}
We give a brief overview of the most closely related works here, and defer a more detailed discussion to  \pref{app:related_work}. 

Recently there has been great interest in designing RL algorithms with general function approximation \citep{jiang2017contextual, dann2018oracle,  sun2019model, du2019provably, wang2020reinforcement, du2021bilinear}. In particular, \cite{jiang2017contextual} introduced the notion of Bellman rank, a measure of complexity that depends on the underlying environment and the value function class $\cF$, and provide statistically efficient algorithms for learning problems for which Bellman rank is bounded. This was later extended to model-based algorithms by \cite{sun2019model}. While these algorithms work across a variety of problem settings, their sample complexity scales with $\log(\abs{\cF})$. Furthermore, these algorithms also require the optimal value function $f^*$ to be realized in $\cF$. In our work, we do not assume that the learner has access to a value function class $\cF$. In fact, given a value function class $\cF$, we can construct the policy class $\Pi_{\cF}$ that corresponds to greedy policies induced by the class $\cF$. However, given just a policy class $\Pi$, one cannot construct a value function class, without additional knowledge of the underlying dynamics. 

 Our \pref{ass:low_rank} implies that for any policy $\pi \in \Pi$, the transition dynamics exhibits a low-rank decomposition with dimension $d$, that is  
$T\ind{\pi}(x' | x) = \langle \phi\ind{\pi}(x), \psi\ind{\pi}(x') \rangle$,  
for some d-dimensional feature maps $\phi\ind{\pi}, \psi\ind{\pi} \colon \cX \mapsto \bbR^d$. Low rank MDPs and linear transition models have recently gained a lot of  attention in the RL literature \citep{yang2020reinforcement, jin2020provably, modi2020sample, wang2019optimism}. The works most closely related to our setup are those of  \cite{jin2020provably} and \cite{yang2020reinforcement}, who give algorithms to find optimal policy in low rank MDPs with known feature maps $\phi$. Similarly, the other algorithms also assume that the learner either observes the feature $\phi(x)$, or the feature $\psi(x)$. \citet{agarwal2020flambe} and \citet{modi2021model} learn under weaker assumptions and only assume that the learner has access to a function class that realizes $\phi$.
However, in our setup, the learner neither observes the features $\phi \ind{\pi}, \psi \ind{\pi}$ nor has access to a realizable function class for them, and thus these methods are not applicable. 

Several of the works mentioned above recognize the issue of a strict realizability assumption and provide results only when the function class contains a good approximation to the optimal value function of model. However, the goal in our agnostic setting is more ambitious. We would like to find a policy that can compete with the best policy in the given class $\Pi$, independent of how close the best return in the class $\max_{\pi \in \Pi} V^\pi$ is to the return of the optimal policy $V^{\pi^\star}$ for that MDP.

There have also been several approaches for provably efficient RL with non-parametric function classes \citep{yang2020function, long20212, shah2020sample}. However, these approaches still aim to learn the optimal value function and their regret necessarily scales with the complexity of the optimal value function in the RKHS which can be very high. Instead, in our agnostic setting we would like to be able to quickly identify the best policy from the given policy class with low complexity containing a good but not necessarily optimal policy.

\section{Upper bound}  \label{sec:basic_upper_bounds}

In this section, we describe our main algorithm for finding a policy that is close to 
the best-in-class in $\Pi$. This algorithm presented in \pref{alg:policy_search}, is an instance of policy search with uniform exploration. Specifically, we first collect a dataset $\cD$ of $n$ episodes by picking actions uniformly at random and subsequently use those episodes to estimate the value of each policy in $\Pi$. The algorithm then simply returns the policy $\wt \pi$ with the highest estimated value. 

Our main technical innovation is a new estimation procedure for policy values in \pref{alg:value_estimation_extrapolation} that leverages the low-rank structure of the transition matrix. A straightforward 
way to estimate the policy value is to take the sum of the rewards on average across all episodes where all actions are consistent with the policy \citep{kearns2000approximate}. Unfortunately, this rejection sampling approach yields an error of $\Omega(\sqrt{K^H})$. Instead, our procedure only estimates the expected policy rewards for the first $3d$ steps. Specifically, when invoked with a given policy $\pi$, $\valestimate$ estimates the expected rewards for that policy by considering the subset of trajectories in $\cD$ where $\pi$ agrees with the chosen action till the first 3d steps, and by averaging the observed rewards in those trajectories. $\valestimate$ then predicts the future expected rewards for that policy by extrapolating these $3d$ estimated expected rewards. The prediction is computed by recognizing that the expected rewards for any policy $\pi$ satisfy an autoregressive relation of order $d$ as shown in \pref{lem:basic_recurrence_relation}.

In order to find the coefficients of this autoregression, $\valestimate$ computes $\wh \lambda \in \bbC^d$ by solving the optimization problem \pref{eq:optimization_problem}, where the coefficient $\alpha_k(\lambda)$ are the sum of degree $k$ monomials:
\begin{align*} \small  
   \alpha_{k}(\lambda) = \sum_{x \in \crl{0, 1}^d \text{~s.t.~} \nrm{x}_1 = k}  \lambda_1^{x_1}\lambda_2^{x_2} \ldots  \lambda_d^{x_d}.   \numberthis \label{eq:alpha_k_coeffs_main}  
\end{align*} 
After estimating $\wh \lambda$, $\valestimate$ then predicts the expected rewards for all future time steps for the policy $\pi$ by unfolding the autoregression model whose coefficients are given by $\alpha_k(\wh \lambda)$. The estimate for the value of the given policy $\pi$, denoted by $\wt V\ind{\pi}$, is then computed as the sum of the predicted expected rewards for $H$ steps.  

Finally, \pref{alg:policy_search}  returns the policy $\wt \pi$ whose estimated value function is highest amongst all the policies in $\Pi$. The following theorem characterizes the performance guarantee for the policy $\wt \pi$ returned by our algorithm. 
\begin{theorem}[Main Theorem] 
\label{thm:basic_sample_complexity_bound}
For a given $\delta \in (0,1)$, $d$-rank MDP, horizon $H \geq d$ and a finite policy class $\Pi$, after collecting $n$ episodes,  
\pref{alg:policy_search} 
returns a policy $\wt \pi$ that with probability at least $1 - \delta$ admits the following guarantee: 
\begin{align*} 
  V\ind{\wt \pi} \geq \max_{\pi \in \Pi}V \ind{\pi} -  O\prn[\Big]{d^3 \cdot  \prn[\Big]{\frac{H}{d}}^{2d} \sqrt{ \frac{K^{3d} \log(6 \Pi d/ \delta)}{n}}}.  
\end{align*} 
\end{theorem}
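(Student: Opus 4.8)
The plan is to split the argument into three parts: \textbf{(a)} a uniform concentration bound on the rejection-sampling estimates $\wh R\ind{\pi}_h$ of the first $3d$ expected policy rewards; \textbf{(b)} the core stability analysis, controlling the error incurred when $\valestimate$ (\pref{alg:value_estimation_extrapolation}) fits the order-$d$ autoregression of \pref{lem:basic_recurrence_relation} to these $3d$ noisy values and unrolls it out to horizon $H$; and \textbf{(c)} the routine ``select the empirical maximizer'' step converting a uniform value-estimation guarantee into the stated sub-optimality bound. Throughout we may assume the claimed bound is below the trivial bound $\abs{V\ind{\wt\pi}-\max_{\pi}V\ind{\pi}}\le H$ (otherwise there is nothing to prove), which in particular puts us in the regime $n \gtrsim K^{3d}\log(6\abs{\Pi}d/\delta)$ where the concentration bound below is meaningful, and we may assume $H>3d$, since for $H\le 3d$ the estimate $\wt V\ind{\pi}=\sum_{h\le H}\wh R\ind{\pi}_h$ involves no extrapolation and the per-reward error already suffices.

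\emph{Part (a).} Fix $\pi\in\Pi$ and $h\le 3d$. Since $\cD$ is gathered by playing $\uniform(\cA)$, the sub-sample of trajectories consistent with $\pi$ on the first $3d$ steps has expected size $\approx n K^{-3d}$ and, conditioned on its realization, carries rewards $r_h$ that are i.i.d.\ with mean $R\ind{\pi}_h$ (for stochastic $\pi$, read this as the importance-weighted estimator with weights $\prod_{t\le 3d}K\pi(a_t\mid x_t)\in[0,K^{3d}]$, to which Bernstein's inequality applies). A Chernoff bound on the sub-sample size and Hoeffding's inequality on its average, followed by a union bound over the $\le 3d$ steps and over $\pi\in\Pi$, give: with probability at least $1-\delta$,
\begin{align*}
\abs*{\wh R\ind{\pi}_h - R\ind{\pi}_h}\ \le\ \veps_0\ \ldef\ c\,\sqrt{\frac{K^{3d}\log(6\abs{\Pi}d/\delta)}{n}}\qquad\text{for all }\pi\in\Pi,\ h\le 3d,
\end{align*}
for a universal constant $c$. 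Condition on this event henceforth.

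\emph{Part (b).} By \pref{lem:basic_recurrence_relation}, for each $\pi$ the sequence $(R\ind{\pi}_h)_{h\ge1}$ obeys the order-$d$ linear recurrence with coefficients $\alpha_k(\lambda\ind{\pi})$ from \pref{eq:alpha_k_coeffs_main}. As $T\ind{\pi}$ is row-stochastic, $\abs{\lambda\ind{\pi}_i}\le 1$ and hence $\abs{\alpha_k(\lambda\ind{\pi})}\le\binom{d}{k}\le 2^d$; in particular $\lambda\ind{\pi}$ is feasible for \pref{eq:optimization_problem} with objective value $O(2^d\veps_0)$, so the minimizer $\wh\lambda$ is too, and (by the constraint region of \pref{eq:optimization_problem}) has $\abs{\wh\lambda_i}\le 1$. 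Let $\wt R\ind{\pi}_h$ be the sequence obtained by unrolling the recurrence with coefficients $\alpha_k(\wh\lambda)$ from the noisy initial rewards $\wh R\ind{\pi}_1,\dots,\wh R\ind{\pi}_d$, and put $e_h \ldef \wt R\ind{\pi}_h-R\ind{\pi}_h$. Subtracting the two recurrences, $e$ satisfies the $\wh\lambda$-recurrence with forcing term $f_h=\sum_{k}(-1)^{k-1}\bigl(\alpha_k(\wh\lambda)-\alpha_k(\lambda\ind{\pi})\bigr)R\ind{\pi}_{h-k}$; expanding $R\ind{\pi}$ in the eigenbasis of $T\ind{\pi}$ gives $f_h=\sum_i p_i(h)\,(\lambda\ind{\pi}_i)^{h-d}\prod_j(\lambda\ind{\pi}_i-\wh\lambda_j)$ for polynomials $p_i$ of degree below the multiplicity of $\lambda\ind{\pi}_i$, so $f$ is small exactly when the fitted roots $\{\wh\lambda_j\}$ approximate the true eigenvalues. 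Two ingredients then close the gap. First, since $\abs{\wh\lambda_i}\le1$, the companion matrix of the $\wh\lambda$-recurrence has $h$-th power of operator norm at most $\binom{h}{d}\le(eH/d)^d$, so initial-condition errors and forcing terms propagate over $H$ steps with amplification $\mathrm{poly}(d)(H/d)^{d}$ rather than exponentially in $H$. Second, a confluent-Vandermonde estimate on the $3d$ fitted steps upgrades ``the autoregression residuals of $\wh R\ind{\pi}$ are $O(2^d\veps_0)$ on steps $d{+}1,\dots,3d$'' to a bound $\prod_j\abs{\lambda\ind{\pi}_i-\wh\lambda_j}\le\mathrm{poly}(d)(H/d)^{d}\veps_0$ for each $i$ (the $(H/d)^{d}$ being the Vandermonde conditioning cost), hence controls $f_h$. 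Combining, $\abs{e_h}\le\mathrm{poly}(d)(H/d)^{O(d)}\veps_0$ for all $h\le H$, and summing over $h$ via \pref{eq:value_decomposition} (tracking exponents) gives $\abs{\wt V\ind{\pi}-V\ind{\pi}}\le O\bigl(d^3(H/d)^{2d}\veps_0\bigr)$, uniformly over $\pi\in\Pi$.

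\emph{Part (c), and the main obstacle.} Let $\pistar\in\argmax_{\pi\in\Pi}V\ind{\pi}$ and $\Gamma\ldef O\bigl(d^3(H/d)^{2d}\veps_0\bigr)$ be the uniform estimation error from part (b). Since $\wt\pi$ maximizes $\pi\mapsto\wt V\ind{\pi}$ (\pref{alg:policy_search}),
\begin{align*}
V\ind{\wt\pi}\ &\ge\ \wt V\ind{\wt\pi}-\Gamma\ \ge\ \wt V\ind{\pistar}-\Gamma \\
&\ge\ V\ind{\pistar}-2\Gamma,
\end{align*}
which, after substituting $\veps_0$ and absorbing constants, is exactly the claimed guarantee. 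The step I expect to be the main obstacle is part (b): a priori, two coefficient vectors agreeing with the first $3d$ observed rewards can give extrapolations that diverge badly by time $H$, so the entire argument rests on (i) exploiting $\abs{\lambda\ind{\pi}_i}\le1$ to replace exponential-in-$H$ blow-up by polynomial (degree $d$) growth of companion-matrix powers, and (ii) a quantitative Vandermonde/interpolation bound transferring closeness-of-fit on $O(d)$ points to closeness of the eigenvalue multisets even under clustered or repeated eigenvalues, the loss of conditioning there being precisely the source of the second $(H/d)^{O(d)}$ factor. Making these two losses multiply out to $(H/d)^{2d}$ with only a $\mathrm{poly}(d)$ overhead is the delicate part; parts (a) and (c) are standard.
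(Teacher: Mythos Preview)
Parts (a) and (c) are standard and match the paper. The gap is in part (b), at the Vandermonde step. You claim that residuals of size $O(2^d\veps_0)$ on the $2d$ fitted steps yield, via a confluent-Vandermonde estimate, the bound $\prod_j\abs{\lambda^\pi_i-\wh\lambda_j}\le\mathrm{poly}(d)\,(H/d)^d\,\veps_0$ for each $i$, with ``$(H/d)^d$ being the Vandermonde conditioning cost.'' This does not follow. In the diagonalizable case $f_h=\sum_i A_i(\lambda^\pi_i)^{h-d}$ with $A_i=c_i\prod_j(\lambda^\pi_i-\wh\lambda_j)$, and extracting the individual $A_i$ from the $2d$ known values of $f$ requires inverting a Vandermonde system at the nodes $\lambda^\pi_1,\dots,\lambda^\pi_d$; its condition number scales with the inverse separations $\prod_{i<j}\abs{\lambda^\pi_i-\lambda^\pi_j}^{-1}$ and is entirely unrelated to $H$. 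If some $c_i=0$ you learn nothing about $\lambda^\pi_i$ at all, so bounding the bare product $\prod_j\abs{\lambda^\pi_i-\wh\lambda_j}$ is hopeless; only $\abs{A_i}$ could conceivably be controlled, and even that costs the eigenvalue gap. The confluent version does not rescue this --- its conditioning likewise diverges as distinct nodes coalesce, again independently of $H$. So the route ``bound the root products, hence bound $f_h$ pointwise'' does not deliver an $(H/d)^d$ factor.

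The paper's \pref{lem:basic_ep_bound} avoids eigenvalue separation altogether by never isolating individual root contributions. Since $R_h$ and $\wt R_h$ each satisfy an order-$d$ linear recurrence, their difference $e_h=\wt R_h-R_h$ satisfies a single \emph{homogeneous} order-$2d$ recurrence whose root multiset is $\{\lambda^\pi_i\}\cup\{\wh\lambda_j\}\subset\{\abs{z}\le1\}$ --- there is no forcing term. Concretely the paper writes $e_{d+m}=\tri{\bar U,\bar P^m\bar V}$ for the block-diagonal $2d\times2d$ companion matrix $\bar P=\diag(P(\lambda^\pi),P(\wh\lambda))$, applies a Cayley--Hamilton extension to express $\bar P^{h-d}$ as a fixed combination of $\bar P^{1},\dots,\bar P^{2d}$, and bounds the resulting coefficients $\beta_{m,k}(\bar\lambda)$ by an elementary count of monomials with prescribed support and total degree, obtaining $\abs{\beta_{m,k}}\le(8e\max\{m{+}k,2d\}/(2d))^{2d}$. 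This yields $\abs{e_h}\le 2d\,(16eh/d)^{2d}\max_{h'\le 3d}\abs{e_{h'}}$ in one shot, with no dependence on how the $2d$ roots are configured. (A salvageable version of your forced-recurrence decomposition does exist: $f_h$ is a fixed linear combination of shifts of $R^\pi$ and therefore itself obeys the $\lambda^\pi$-recurrence, so the same Cayley--Hamilton machinery, applied to the $d$-dimensional $\lambda^\pi$-system, extrapolates $\abs{f_h}$ from its known $O(d\,2^d\veps_0)$ values on $d{+}1,\dots,2d$ without any Vandermonde inversion. But that is really the paper's argument split across two $d$-dimensional systems rather than executed once on the combined $2d$-dimensional one, which is both cleaner and tighter.)
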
 

\pref{thm:basic_sample_complexity_bound} implies that \pref{alg:policy_search}  can find an $\varepsilon$-optimal policy with probability $1 - \delta$ as long as the number of samples $n$ is larger than
\begin{align*}
 n \gtrsim   \prn[\Big]{\frac{H}{d}}^{4d} \frac{K^{3d} \log(6d \abs*{\Pi} / \delta)}{\epsilon^2}. \end{align*}

\begin{algorithm}[h]
\caption{Policy search algorithm}  
\begin{algorithmic}[1]
	\Require horizon $H$, rank $d$,  number of episodes $n$, finite policy class $\Pi$ 
	\State Collect the dataset $\cD = \crl{\prn{x\ind{t}_h, a\ind{t}_h, r\ind{t}_h}_{h=1}^H}_{t = 1}^n$ of $n$ trajectories by drawing actions from $\uniform(\cA)$.
	\For {\text{policy} $\pi \in \Pi$} 
\State \label{line:call_val_estimate}Estimate $\wt V\ind{\pi}$ by calling \valestimate($H, d, \cD, \pi$).
\EndFor  
\State\label{line:choose_optimal_policy}\textbf{Return: } policy $\wt \pi$ with best estimated value, i.e. $\wt \pi \in \argmax_{\pi \in \Pi} \wt V\ind{\pi}$.  
\end{algorithmic} 
\label{alg:policy_search}  
\end{algorithm}

\begin{algorithm}[h] 
\caption{Value estimation by autoregressive extrapolation}
\begin{algorithmic}[1]
\Function{\valestimate}{$H, d, \cD, \pi$}: 

\For {\text{time step} $h= 1, \ldots, 3d$}   
\State\label{line:reward_estimation1} Estimate expected rewards by importance sampling 
\vspace{-1mm}$$\wh R_h = \frac{1}{n} \sum_{t =1}^n r_h\ind{t} \prod_{h' \leq h} \prn*{K\indicator{\pi(x\ind{t}_{h'}) = a_{h'}\ind{t}}}$$   
\EndFor

\State\label{line:eig_optimization1}Estimate eigenvalues of the autoregression by solving the optimization problem: 
\begin{align*} 
   (\wh \lambda, \wh \Delta) \leftarrow \argmin_{\lambda \in \mathbb C^d, \Delta \in \bbR} &~ \Delta  \numberthis  \label{eq:optimization_problem}  ~~~~~~~~ \text{s.t.} ~~ |\lambda_{k}| \leq 1 ~~~~~ \textrm{for} ~ k =1, \dots, d \\
    \textrm{ and } ~~~&  \abs[\Big]{\sum_{k=1}^d (-1)^{k+1} \alpha_{k}(\lambda) \wh R_{h - k} - \wh R_{h} } \leq \Delta & \textrm{for } h = d + 1, \dots, 3d   
\end{align*} 
\State\label{line:reward_prediction1}Predict $\wt R_{h}$ as:  
{\small \begin{align*}
 	\wt R_{h} = 
\begin{cases}
\wh R_{h} & \text{for ~} 1 \leq h \leq d \\ 
\sum_{k=1}^d (-1)^{k+1} \alpha_{k}(\wh \lambda) \wt R_{h - k} & \text{for ~} d + 1 \leq h \leq H
\end{cases}.  \numberthis \label{eq:reward_prediction_autoregression} 
\end{align*} }
\State\label{line:value_prediction1}\textbf{return:} Estimate of the value $\wt V = \sum_{h=1}^H \wt R_h$. 
\EndFunction 
\end{algorithmic} 
\label{alg:value_estimation_extrapolation} 
\end{algorithm}

The key idea used in $\valestimate$, is that for any policy $\pi$ for which $\rank(T\ind{\pi}) \leq d$, the expected rewards satisfy an auto-regression of order $d$. The following lemma formalize this idea. 

\begin{restatable}[Autoregression on expected rewards]{lemma}{autoregressionlemma}
\label{lem:basic_recurrence_relation} 
Let $\pi$ be any policy for which the transition matrix  $T\ind{\pi}$ has rank at most $d$. Then, for any time step $h \geq d +1$, the expected reward for policy $\pi$ at time step $h$, denoted by $R\ind{\pi}_h$, satisfies the auto-regression 
\begin{align*} 
R\ind{\pi}_{h} = \sum_{k=1}^d (-1)^{k+1} \alpha_k(\lambda \ind{\pi}) R\ind{\pi}_{h-k},  \numberthis \label{eq:basic_autoregression_main} 
\end{align*} 
where $\lambda\ind{\pi} \in \bbC^{d}$ denotes the set of eigenvalues of the matrix $T\ind{\pi}$, and $\alpha_k(\lambda\ind{\pi})$ is as defined in \pref{eq:alpha_k_coeffs_main}. 
\end{restatable}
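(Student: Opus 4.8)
The plan is to express the expected policy reward $R\ind{\pi}_h$ as a linear functional applied to a power of the induced transition matrix $T\ind{\pi}$, and then invoke the Cayley--Hamilton theorem (adapted to the rank-$d$ situation) to obtain the recurrence. Concretely, fix $\pi$ and write $T = T\ind{\pi}$ for brevity. Let $\mu_0 \in \bbR^{\cX}$ be the (row) vector of the initial distribution, and let $r\ind{\pi} \in \bbR^{\cX}$ be the vector with entries $r\ind{\pi}(x) = \En_{a \sim \pi(x)} r(x,a)$. Then the state distribution at step $h$ under $\pi$ is $\mu_0 (T)^{h-1}$ (with suitable indexing conventions matching the excerpt's definition of $[T\ind{\pi}]_{(x',x)}$), so that
\begin{align*}
R\ind{\pi}_h = \mu_0 \, (T)^{h-1} \, r\ind{\pi}. \numberthis \label{eq:rh_as_matrix_power}
\end{align*}
Thus $R\ind{\pi}_h$ is a fixed linear functional of the matrix power $(T)^{h-1}$, and it suffices to show that the sequence of matrices $\{(T)^{h}\}_{h \ge 0}$ satisfies the claimed order-$d$ linear recurrence.

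The second step identifies the recurrence with the characteristic polynomial of $T$ restricted to its nonzero spectrum. Since $\rank(T) \le d$, the matrix $T$ has at most $d$ nonzero eigenvalues $\lambda_1\ind{\pi}, \dots, \lambda_d\ind{\pi}$ (listed with multiplicity, padding with zeros if $\rank(T) < d$). Consider the degree-$d$ polynomial
\begin{align*}
q(z) = \prod_{j=1}^d (z - \lambda_j\ind{\pi}) = \sum_{k=0}^d (-1)^{k} \alpha_k(\lambda\ind{\pi}) \, z^{d-k}, \numberthis \label{eq:char_poly_factored}
\end{align*}
where the identification of the coefficients with the elementary symmetric polynomials $\alpha_k$ from \pref{eq:alpha_k_coeffs_main} is just Vieta's formulas (note $\alpha_0 \equiv 1$). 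I claim that $(T)^{d} = \sum_{k=1}^d (-1)^{k+1} \alpha_k(\lambda\ind{\pi}) (T)^{d-k}$, i.e., $q(T)$ acts as zero on the relevant subspace. The cleanest way to see this: because $\rank(T) \le d$, the minimal polynomial of $T$ is $z^{m} \cdot p(z)$ where $p$ is the minimal polynomial of $T$ restricted to a complement of its generalized kernel and $\deg p \le d$; in fact $z \cdot q(z)$ is divisible by $z^m p(z)$ up to the point that $q(T) \cdot T = 0$ — more carefully, one shows $q(T)$ annihilates the image of $T$, hence $q(T) \, T = 0$, hence $(T)^{d+1} = \sum_{k=1}^d (-1)^{k+1}\alpha_k (T)^{d+1-k}$, and multiplying by further powers of $T$ propagates the identity to all $h \ge d+1$. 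Applying the functional $\mu_0 (\cdot) r\ind{\pi}$ from \pref{eq:rh_as_matrix_power} to this matrix identity yields exactly \pref{eq:basic_autoregression_main}.

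I would organize the write-up as: (i) establish \pref{eq:rh_as_matrix_power} by unrolling the definition of $\bbP\ind{\pi}$ and $R\ind{\pi}_h$; (ii) recall that a rank-$\le d$ matrix has at most $d$ nonzero eigenvalues and state the factorization \pref{eq:char_poly_factored} via Vieta; (iii) prove the operator identity $q(T)\,T = 0$ and extend it to $(T)^{h} = \sum_{k=1}^d (-1)^{k+1}\alpha_k(\lambda\ind{\pi})(T)^{h-k}$ for all $h \ge d+1$ by induction; (iv) apply the linear functional to conclude. The main obstacle is step (iii): one must be careful that Cayley--Hamilton for a rank-$d$ matrix acting on a $|\cX|$-dimensional space only gives that the \emph{full} characteristic polynomial (degree $|\cX|$) annihilates $T$, which is far too long. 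The fix is to note that $q(z)/\!\!\prod$ divides — equivalently, that $T$, viewed as an operator on the at-most-$d$-dimensional space $\mathrm{Im}(T)$, satisfies its own characteristic polynomial there, which has degree $\le d$ and whose roots are precisely the nonzero eigenvalues; padding to make $q$ exactly degree $d$ with extra zero roots is harmless since it only multiplies by powers of $z$, i.e., by powers of $T$, which is exactly what the $h \ge d+1$ range of the recurrence accommodates. A secondary subtlety is handling complex eigenvalues and possible non-diagonalizability of $T$, but the argument above only uses the characteristic polynomial of the restriction $T|_{\mathrm{Im}(T)}$ and never needs diagonalizability, so it goes through verbatim over $\bbC$.
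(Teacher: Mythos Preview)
Your proposal is correct and follows the same route as the paper: express $R_h^{\pi}$ as a linear functional of a power of $T^{\pi}$ acting on $\mu_0$, and then invoke the low-rank Cayley--Hamilton identity $(T^{\pi})^{d+1}=\sum_{k=1}^d (-1)^{k+1}\alpha_k(\lambda^{\pi})(T^{\pi})^{d+1-k}$, which the paper simply quotes as a known lemma (\citet{segercrantz1992improving}) rather than re-deriving via restriction to $\mathrm{Im}(T)$ as you sketch. One small wording fix: the characteristic polynomial of $T|_{\mathrm{Im}(T)}$ can have zero among its roots (not only the nonzero eigenvalues of $T$, e.g.\ when $T$ has a Jordan block at $0$), but your padding-with-zeros observation already absorbs this and the conclusion $q(T)\,T=0$ is unaffected.
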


We defer the proof of \pref{lem:basic_recurrence_relation} to \pref{app:basic_recurrence_relation}. The proof uses the fact that for any policy $\pi \in \Pi$, the distribution over observations at time step $h$ is given by $\mu\ind{\pi}_h = (T\ind{\pi})^h \mu_0$, where $\mu_0$ denotes the distribution over the observation space at initialization. If $\rank(T\ind{\pi}) \leq d$, an application of the Cayley-Hamilton theorem implies that we can write $(T\ind{\pi})^h$ as a linear combination of $\prn{(T\ind{\pi})^{h-1}, \ldots, (T\ind{\pi})^{h-d}}$. This implies that $\mu\ind{\pi}_h$, and thus the expected rewards $R\ind{\pi}_h$, satisfy an auto-regression of order $d$. While the expected rewards $R\ind{\pi}_h$ satisfy an auto-regression for every policy $\pi$, note that we cannot hope for a similar relation between the instantaneous rewards  $r_h(s\ind{\pi}_h, \pi(s_h))$ observed when taking actions according to $\pi$. 

The following result shows that we can simultaneously estimate the expected rewards for the first $3d$ steps for every policy $\pi \in \Pi$.  Let $\cD$ be a dataset of $n$ episodes in the MDP collected by drawing actions uniformly at random from $\cA$. Then, for any policy $\pi$, there are approximately $n / K^{3d}$ episodes in $\cD$ where the actions taken during the first $3d$ time steps matches the predictions of $\pi$ on those observations. We compute $\wh R\ind{\pi}_h$ as the empirical average of the $h$th step reward in the corresponding $n/K^{3d}$ episodes that match with $\pi$ for the first $3d$ steps. 

\begin{restatable}[Importance sampling]{lemma}{ISbound} \label{lem:ss_ub} 
For any $\delta \in (0, 1)$, with probability at least $1-\delta$, for any policy $\pi \in \Pi$ and time step $h \in [3d]$, the estimates $\wh R\ind{\pi}_{h}$ computed using importance sampling satisfy the error bound 
\begin{align*} 
\abs{\wh R\ind{\pi}_{h} - R\ind{\pi}_h} &\leq { \sqrt{\frac{2 K^{3d} \log(6 d \abs*{\Pi}/ \delta)}{n}} + \frac{2K^{3d} \log(6 d \abs*{\Pi} / \delta)}{n}}. 
\end{align*} 
\end{restatable}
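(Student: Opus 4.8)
The plan is to control the estimation error of $\wh R\ind{\pi}_h$ via a Bernstein-type concentration argument for bounded random variables, combined with a union bound over the (finite) policy class $\Pi$ and the $3d$ time steps. First I would fix a policy $\pi$ and a time step $h \in [3d]$, and write $\wh R\ind{\pi}_h = \frac1n \sum_{t=1}^n Z_t$ where $Z_t := r_h\ind{t} \prod_{h' \le h}\bigl(K\,\indicator{\pi(x\ind{t}_{h'}) = a_{h'}\ind{t}}\bigr)$. Since actions are drawn i.i.d. uniformly from $\cA$ and are independent of the observations and rewards, the importance-weight factor has conditional expectation $\En[\prod_{h' \le h} K\indicator{\pi(x\ind{t}_{h'}) = a\ind{t}_{h'}} \mid \text{observations, rewards}] = 1$, so $\En[Z_t] = R\ind{\pi}_h$; i.e. $\wh R\ind{\pi}_h$ is unbiased. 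The key observation is that on any given trajectory the product of indicators is either $0$ (if some action disagrees with $\pi$) or $K^h$ (if all $h$ actions agree), and since $h \le 3d$ this gives $Z_t \in [0, K^{3d}]$ using $r \in [0,1]$. Moreover $\En[Z_t^2] = K^h\, \En^{\pi}[r(x_h,a_h)^2] \le K^{3d}$, so the per-sample variance is at most $K^{3d}$.

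Next I would apply Bernstein's inequality to the i.i.d. bounded variables $Z_t$: with probability at least $1-\delta'$,
\begin{align*}
\abs[\big]{\wh R\ind{\pi}_h - R\ind{\pi}_h} \le \sqrt{\frac{2\,\mathrm{Var}(Z_1)\log(2/\delta')}{n}} + \frac{2 b \log(2/\delta')}{3n},
\end{align*}
where $b = K^{3d}$ is the a.s. bound. Plugging in $\mathrm{Var}(Z_1) \le \En[Z_1^2] \le K^{3d}$ and $b = K^{3d}$, and being slightly loose on constants (dropping the $2/3$), yields the stated bound with $\log(2/\delta')$ in place of $\log(6d\abs{\Pi}/\delta)$. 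Finally I would take a union bound over all $\pi \in \Pi$ and all $h \in \{1,\dots,3d\}$: there are $3d\abs{\Pi}$ such pairs, so setting $\delta' = \delta/(3d\abs{\Pi})$ makes $\log(2/\delta') = \log(6d\abs{\Pi}/\delta)$, and the failure probability is at most $\delta$. This gives exactly the claimed inequality simultaneously for all $\pi$ and $h$.

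I do not anticipate a serious obstacle here — this is a routine unbiasedness-plus-concentration argument. The one point requiring a little care is the correct bookkeeping of the importance-sampling weights: verifying that $\En[Z_t] = R\ind{\pi}_h$ exactly (using independence of the uniformly-random actions from everything else, and that $\pi$ may be stochastic — in which case $\indicator{\pi(x\ind{t}_{h'}) = a\ind{t}_{h'}}$ should be read as the event that the sampled action equals the action $\pi$ would take, or one replaces the indicator by the appropriate likelihood ratio $\pi(a\ind{t}_{h'}\mid x\ind{t}_{h'})/(1/K)$), and that the second-moment and almost-sure bounds are both of order $K^{3d}$ rather than $K^{6d}$ — the latter because the weight is either $0$ or $K^h$, so $Z_t^2$ carries only one extra factor of $K^h$, not two independent ones. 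After that, the choice of Bernstein (rather than Hoeffding) is what produces the favorable $\sqrt{K^{3d}/n}$ leading term instead of $\sqrt{K^{6d}/n}$, so using the variance bound is essential; I would make sure to state the variance computation explicitly.
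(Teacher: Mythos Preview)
Your proposal is correct and follows essentially the same approach as the paper: unbiasedness of the importance-sampling estimator, the second-moment bound $\En[Z_t^2]\le K^h\le K^{3d}$ (exploiting that the indicator product is $0$ or $1$), Bernstein's inequality, and a union bound over the $3d|\Pi|$ pairs $(\pi,h)$ with $\delta'=\delta/(3d|\Pi|)$. Your explicit remark that Bernstein (rather than Hoeffding) is what yields the $\sqrt{K^{3d}/n}$ leading term is exactly the point the paper relies on.
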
  

For a given policy $\pi$, if we had access to the expected rewards $\crl{R\ind{\pi}_1, \ldots, R\ind{\pi}_{d}}$, we could have solved for the coefficients $\alpha_k(\lambda)$ exactly. However, we only have access to the empirical estimates $\crl{\wh R_1, \ldots, \wh R_d}$ of the expected rewards, and thus we compute the coefficients $\alpha_k(\wh \lambda)$ by solving the optimization problem in \pref{eq:optimization_problem}. We predict the future expected rewards by extrapolating using $\alpha_k(\wh \lambda)$. The following lemma bounds the error propagated due to this mismatch in our estimation. 

\begin{restatable}[Error propagation bound]{lemma}{firstEPbound} \label{lem:basic_ep_bound}
Let $\lambda, \wh \lambda \in \bbC^d$ be such that $\max\crl{\abs{\lambda_1}, \abs{\wh \lambda_1}} \leq 1$. 
Further, with the initial values  $R_1, \ldots, R_d$ and $\wt R_1, \ldots, \wt R_{d}$, let the sequence $\crl{R_h}$ and $\crl{\wt R_h}$ be given by 
\begin{align*}
	R_h = \sum_{k=1}^d (-1)^{k+1} \alpha_{k}(\lambda) R_{h - k } \qquad \text{and} \qquad  \wt R_h = \sum_{k=1}^d (-1)^{k+1} \alpha_{k}(\wh \lambda) \wt R_{h - k},  
\end{align*}
where the coefficients $\alpha_{k}(\lambda)$ and $\alpha_{k}(\wh \lambda)$ are define as in \pref{eq:alpha_k_coeffs_main}. Then, for all $h \geq 3d + 1$,  
\begin{align*} 
	\abs{\wt R_h - R_h} &\leq 2d \cdot  \prn[\Big]{\frac{16 e h}{d}}^{2d} \cdot \max_{h' \leq 3d} ~ \abs{ R_{h'} - \wt R_{h'}}. 
\end{align*} 
\end{restatable}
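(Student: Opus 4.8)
The plan is to use the fact that both $\{R_h\}$ and $\{\wt R_h\}$ solve order-$d$ linear recurrences whose characteristic roots, $\{\lambda_i\}_{i=1}^d$ and $\{\wh\lambda_i\}_{i=1}^d$ respectively, all lie in the closed unit disk: by the standing convention $|\lambda_1|\ge\dots\ge|\lambda_d|$ and $|\wh\lambda_1|\ge\dots\ge|\wh\lambda_d|$, the hypothesis $\max\{|\lambda_1|,|\wh\lambda_1|\}\le1$ gives $|\lambda_i|\le1$ and $|\wh\lambda_i|\le1$ for all $i$. Consequently the difference $e_h:=\wt R_h-R_h$ is annihilated by the order-$2d$ recurrence with characteristic polynomial $q(z)=\prod_{i=1}^d(z-\lambda_i)\prod_{i=1}^d(z-\wh\lambda_i)$, again with all $2d$ roots in the unit disk, and its first $2d$ values $e_1,\dots,e_{2d}$ are already controlled by $M:=\max_{h'\le 3d}|R_{h'}-\wt R_{h'}|$ (recall $3d\ge 2d$). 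The content of the lemma is that such a recurrence, having all roots of modulus $\le1$, amplifies this initial error by at most a factor polynomial in $h$ of degree $2d$; I would make this quantitative through generating functions.

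First I would pass to generating functions. Let $\mathcal R(z)=\sum_{h\ge1}R_hz^h$. The recurrence $R_h=\sum_{k=1}^d(-1)^{k+1}\alpha_k(\lambda)R_{h-k}$ (valid for $h\ge d+1$) says precisely that $\mathcal R(z)\,p^*(z)$ is a polynomial of degree at most $d$, where $p^*(z):=\prod_{i=1}^d(1-\lambda_iz)=\sum_{k=0}^d(-1)^k\alpha_k(\lambda)z^k$ is the reversed characteristic polynomial (using that $\alpha_k$ is the $k$-th elementary symmetric polynomial). The same holds for $\wt{\mathcal R}(z)=\sum_{h\ge1}\wt R_hz^h$ with $\wh p^*(z):=\prod_{i=1}^d(1-\wh\lambda_iz)$. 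Multiplying the first identity by $\wh p^*$ and the second by $p^*$ and subtracting gives $\mathcal E(z)\,q^*(z)=B(z)$, where $\mathcal E(z):=\sum_{h\ge1}e_hz^h$, $q^*(z):=p^*(z)\wh p^*(z)=\prod_{l=1}^{2d}(1-\nu_lz)$ with $\{\nu_l\}_{l=1}^{2d}=\{\lambda_i\}_{i=1}^d\cup\{\wh\lambda_i\}_{i=1}^d$, and $B$ is a polynomial of degree at most $2d$ with $B(0)=0$. Reading off coefficients, $b_j:=[z^j]B=e_j-\sum_{l=1}^{\min(j,2d)}(-1)^{l+1}\alpha_l(\nu)\,e_{j-l}$ for $1\le j\le 2d$; since all indices appearing here are $\le 2d\le 3d$ we have $|e_j|,|e_{j-l}|\le M$, and since $\alpha_l(\nu)$ is a sum of $\binom{2d}{l}$ monomials of modulus $\le1$ we get $|\alpha_l(\nu)|\le\binom{2d}{l}$; hence $|b_j|\le M\sum_{l=0}^{2d}\binom{2d}{l}=M\,2^{2d}$.

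Next I would invert $q^*$. Since all $|\nu_l|\le1$, the expansion $1/q^*(z)=\prod_{l=1}^{2d}(1-\nu_lz)^{-1}=\sum_{m\ge0}h_m(\nu)z^m$ holds, where $h_m$ is the complete homogeneous symmetric polynomial of degree $m$ in $2d$ variables; as a sum of $\binom{m+2d-1}{2d-1}$ monomials of modulus $\le1$, it satisfies $|h_m(\nu)|\le\binom{m+2d-1}{2d-1}$. Extracting the coefficient of $z^h$ in $\mathcal E(z)=B(z)/q^*(z)$ gives, for every $h\ge 2d+1$, $e_h=\sum_{j=1}^{2d}b_j\,h_{h-j}(\nu)$, so $|e_h|\le 2d\cdot M\,2^{2d}\cdot\binom{h+2d-1}{2d-1}$. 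Finally, for $h\ge 3d+1$ one has $2d-1\ge d$ and $h+2d-1\le 2h$, so $\binom{h+2d-1}{2d-1}\le\big(\tfrac{e(h+2d-1)}{2d-1}\big)^{2d-1}\le\big(\tfrac{2eh}{d}\big)^{2d}$; combining with the factor $2^{2d}$ yields $|e_h|\le 2d\,(4eh/d)^{2d}M\le 2d\,(16eh/d)^{2d}M$, which is slightly stronger than the claimed bound, leaving ample slack in the constants.

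The step I expect to need the most care is the growth estimate: naively unrolling the order-$2d$ recurrence for $e_h$ with the triangle inequality produces an exponential-in-$h$ blow-up, which is far too weak. The generating-function route avoids this precisely because the power-series (partial-fraction) expansion of $1/q^*$ diagonalizes the recurrence, converting ``all roots in the unit disk'' into the clean combinatorial bound $|h_m(\nu)|\le\binom{m+2d-1}{2d-1}$, after which only binomial-coefficient bookkeeping remains. A secondary point to handle is that $q^*$ may have repeated roots (if $\lambda$ and $\wh\lambda$ share an entry, or have internal multiplicities); but neither the identity $1/q^*(z)=\sum_m h_m(\nu)z^m$ nor the monomial count for $h_m$ is affected by repeats, so no separate confluent-Vandermonde analysis is required.
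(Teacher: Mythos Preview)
Your proof is correct and in fact cleaner than the paper's. The paper argues via the companion-matrix representation: writing $R_{d+m}=\langle U,P(\lambda)^m V\rangle$ and $\wt R_{d+m}=\langle\wt U,P(\wh\lambda)^m\wt V\rangle$, it stacks these into a $2d\times 2d$ block-diagonal matrix $\bar P$, applies a Cayley--Hamilton extension (their \pref{lem:cayley_hamilton_extension}) to expand $\bar P^{h-d}$ in the basis $\bar P,\ldots,\bar P^{2d}$ with custom coefficients $\beta_{m,k}(\bar\lambda)$, and then bounds these $\beta$'s combinatorially (their \pref{lem:rel_beta_alpha} and \pref{lem:bounded_coefficients}). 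Your generating-function route sidesteps this machinery: the identity $1/q^*(z)=\sum_m h_m(\nu)z^m$ plays the role of the Cayley--Hamilton extension, and the monomial count $|h_m(\nu)|\le\binom{m+2d-1}{2d-1}$ replaces the $\beta$-bounds in one stroke. The two approaches are equivalent at the combinatorial core---indeed the paper's $\beta_{m-1,1}$ equals your $h_m$---but yours is shorter, uses only classical symmetric-function identities, handles repeated roots without comment, and even yields the tighter constant $(4eh/d)^{2d}$ in place of $(16eh/d)^{2d}$. The paper's $\beta$-framework is reused elsewhere (e.g.\ in \pref{lem:initial_conditions} for bounding the error in the first $3d$ extrapolated values), so it is not pure overhead in the paper's overall development; but for this lemma in isolation your argument is preferable.
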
 

We defer the proof of \pref{lem:basic_ep_bound} to \pref{app:appendix_basic_ep_bound}. The proof of \pref{thm:basic_sample_complexity_bound} follows from combining the above three technical results. \pref{lem:basic_recurrence_relation} suggests that for any policy $\pi \in \Pi$ for which $\rank(T\ind{\pi}) \leq d$, the expected per step rewards satisfy an auto-regression of order at most $d$. The error propagation bound in \pref{lem:basic_ep_bound} and the bound on the estimation of the expected rewards for the first $3d$ steps given in \pref{lem:basic_ep_bound} implies that, for every policy $\pi \in \Pi$, the estimated value $\wt V\ind{\pi}$ is close to the true value $V \ind{\pi}$. Specifically, the estimation error in the value of every policy in $\pi$ is bounded by $\wt{O}((H/d)^{2d} \sqrt{K^{3d} \log(\abs{\Pi})  / n})$. Thus, when $n = \wt O((H/d)^{4d} K^{3d} / \epsilon^2)$, we have that $\abs{\wt V \ind{\pi} - V \ind{\pi}} \leq \epsilon$ for every policy $\pi \in \Pi$ simultaneously. This implies that the returned policy , that maximize the estimated value $\wt V \ind{\pi}$, is  $2\epsilon$ sub-optimal w.r.t. the best policy in $\Pi$. We defer full details of the proof of \pref{thm:basic_sample_complexity_bound} to \pref{app:basic_sample_complexity_bound}.  
\section{Lower Bound} \label{sec:lower_bounds} 
After presenting an algorithm with sample-complexity bound of $\wt O \prn{{(H/d)^{2d} K^{3d} / \epsilon^2}}$, we now show through a lower-bound that the dependency on $H$ and $d$ cannot be improved significantly:

\begin{theorem}[Lower bound] 
\label{thm:basic_lower_bound}
Let $\epsilon \in (0, 1/26)$, $\delta \in (0, 1/2)$, $d \geq 4$, $K=2$ and $H \geq 219 d$. There exists a  policy class of size $(H/d)^d$ and a family of MDPs with rank at most $\Theta(d)$, finite observation space, horizon $H$ and two actions such that the optimal policy for each MDP in the family is contained in the policy class and the following holds:  Any algorithm that returns an $\epsilon$-optimal policy, with probability at least $1 - \delta$, for every MDP in this family has to collect at least 
\begin{align*}
    \Omega \prn[\Big]{\frac{1}{H \epsilon^2} \prn[\Big]{\frac{H}{41 d }}^{d/2} \log\prn[\Big]{\frac{1}{2 \delta} }}~.
\end{align*} 
episodes in expectation in some MDP in this family. 
\end{theorem}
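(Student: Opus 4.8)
The plan is to reduce ``return an $\epsilon$-optimal policy'' to a hypothesis-testing problem over a carefully designed family of MDPs, and then to lower bound the number of episodes needed for that test by a change-of-measure argument. The construction is engineered to \emph{saturate} the error-propagation bound of \pref{lem:basic_ep_bound}: we force the expected-reward sequence $\{R\ind{\pi}_h\}$ of the optimal policy to be an extremal, Chebyshev-type sequence of ``degree'' $\Theta(d)$, i.e.\ a solution of an order-$d$ autoregression whose characteristic polynomial has a high-multiplicity root on the unit circle (realized by a Jordan block, hence a transition matrix of rank $\Theta(d)$), and that is tiny --- of size $\epsilon\cdot(cd/H)^{\Theta(d)}$ --- on the first $\Theta(d)$ steps while having magnitude $\Omega(\epsilon)$ near time $H$. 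The first property limits the statistical signal available to any learner; the second creates an $\Omega(\epsilon)$ value gap between the optimal policy and the rest of the class, which an $\epsilon$-optimal learner is forced to detect.

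\paragraph{The hard family.}
Partition the horizon into $d$ blocks of length $\approx H/d$ and let $\Pi$ consist of the $(H/d)^d$ policies that, in each block $j$, ``activate'' the non-default action at a single chosen offset $i_j\in\{1,\dots,H/d\}$; with $K=2$ this gives $|\Pi|=(H/d)^d$ exactly. Each MDP in the family is indexed by a target $(i_1^\star,\dots,i_d^\star)$; its observations carry a small ``phase/progress'' component whose dynamics depend only on a $\Theta(d)$-dimensional feature --- so $\rank(T\ind{\pi})=\Theta(d)$ for every $\pi$ and every MDP, verifying \pref{ass:low_rank} --- and whose reward profile, along trajectories whose activations have so far matched the target, traces the extremal sequence above. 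I would then check the two structural assertions of the theorem: (i) the low rank, immediate from the feature decomposition; and (ii) the target policy $\pi_\star\in\Pi$ is optimal for its MDP with $V\ind{\pi_\star}\ge\max_{\pi\neq\pi_\star}V\ind{\pi}+3\epsilon$, which holds because only $\pi_\star$ collects the $\Omega(\epsilon)$ reward mass concentrated near the end of the episode.

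\paragraph{The information-theoretic bound.}
Against a reference MDP with the reward signal switched off, I would apply Le Cam's two-point method (or Assouad/Fano across the $(H/d)^d$ secrets) to any, possibly adaptive, $n$-episode learner. The crux is the per-episode divergence bound: the trajectory law of secret $\theta$ and of the reference agree except in reward means at steps where the played actions are consistent with $\theta$'s activation pattern, and there the mean gap is at most the \emph{early}, heavily attenuated value of the extremal sequence times the (small) probability of remaining consistent that far. Summing these contributions over the $H$ steps and $n$ episodes yields a total KL of order $n\,H\epsilon^2(41 d/H)^{d/2}$; combining with the $3\epsilon$ value gap and the standard inequality $n\gtrsim\log(1/(2\delta))/(\text{per-episode divergence})$ produces $n=\Omega\!\big(\tfrac{1}{H\epsilon^2}(H/(41 d))^{d/2}\log(1/(2\delta))\big)$. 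The $d/2$ exponent (rather than $d$) and the constant $41$ come out of optimizing the Chebyshev estimate together with the product structure of the KL terms --- the usable squared signal per step is a product of two small quantities (attenuation of the early profile and consistency probability), not a single exponentially small term.

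\paragraph{Main obstacle.}
The real work is the joint design. I need a genuinely rank-$\Theta(d)$ transition operator whose induced expected-reward sequence is exactly the near-extremal Chebyshev-type sequence --- with rewards staying in $[0,1]$ --- while simultaneously keeping the optimal policy inside a size-$(H/d)^d$ class with a clean constant-times-$\epsilon$ margin, and, most delicately, ensuring that the trajectory distributions of distinct secrets differ \emph{only} through the tiny attenuated early rewards, so that the per-episode KL bound holds uniformly over adaptive learners. The extremal-polynomial estimate itself (an explicit degree-$\Theta(d)$ polynomial normalized at time $H$ and exponentially small on the early grid) is the clean analytic core; embedding it faithfully into a low-rank MDP and controlling adaptivity is the part that demands care, and is what dictates the precise thresholds $\epsilon<1/26$, $d\ge4$, $H\ge219 d$.
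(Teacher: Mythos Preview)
Your proposal takes a genuinely different route from the paper, and I do not believe it closes. The paper's construction is \emph{not} an attempt to saturate the autoregression error-propagation bound via a Chebyshev-type reward sequence; it is a contextual combination lock. Concretely, the paper's family is indexed by a pair $(\pi^\star,\phi)$ where $\phi:\cX\to\cS$ maps a very large observation space into $2d+2$ latent states forming two parallel chains (good/bad), the agent progresses with probability $p=d/H$, rewards are zero everywhere except a single terminal Bernoulli$(1/2+\epsilon)$ versus Bernoulli$(1/2)$, and the $d/2$ exponent arises purely combinatorially from the ratio $|\Pi|/C$, where $C$ upper-bounds the expected number of policies in $\Pi$ whose actions can be consistent with the played actions in a single episode. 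No extremal polynomial appears anywhere.

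There are two concrete gaps in your plan. First, you never say how the observations are made uninformative. In the paper this is the load-bearing device: the mapping $\phi$ is part of the unknown instance, the observation space has size $N\to\infty$, and Lemmas bounding $\|\Pr_{\pi^\star}(X_{i,h}\mid\cF)-\Pr_{\mathrm{unif}}\|_1\le O(|\cS|Hi/N)$ show the learner cannot recover latent states and must identify $\pi^\star$ through the reward bias alone. Without this mechanism your ``phase/progress component'' is directly observable, the learner reads off the secret in $O(d)$ episodes, and the lower bound collapses.

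Second, your KL accounting is internally inconsistent. You assert that trajectory laws ``agree except in reward means'' and then multiply the reward gap by a ``probability of remaining consistent.'' If laws agree in transitions, consistency with a fixed target is a deterministic function of the learner's actions, not a probability; if they differ in transitions, the KL has additional terms you have not controlled. In the paper's construction only one mechanism is present: the reward gap is a constant $\epsilon$, the per-episode KL with the null MDP is $\mathrm{kl}(1/2,1/2+\epsilon)\cdot\indicator{\text{matched }\pi^\star}$, and the stopping-time bound comes from summing $N_{\pi^\star}$ over $\pi^\star\in\Pi$ and bounding $\sum_{\pi^\star}N_{\pi^\star}\le C\cdot\tau$ via a packing lemma. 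Your attempt to layer Chebyshev attenuation on top of a matching probability is double-counting two different hardness sources without showing either can be realized in a single rank-$\Theta(d)$ MDP, and the derivation of the precise exponent $d/2$ from ``optimizing the Chebyshev estimate together with the product structure'' is not a calculation I can verify from what you wrote.
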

The above lower bound shows that an exponential dependency on $d$ in the form of $(H/d)^d$ is unavoidable, even when a realizable policy class with $\pi^* \in \Pi$ and moderate size $\log|\Pi| = d \log(H/d)$ is given to the learner. We now provide a brief description of the problem class used in the proof of our lower bound but defer details of our construction and the proof to \pref{app:lower_bounds}.

The Markov decision processes in the proof of our lower bound bear some similarity to the so-called \emph{combination lock} constructions used in prior works \citep{krishnamurthy2016pac,du2019good}, where the algorithm only receives positive feedback after playing a certain sequence of actions. Modelling a combination lock typically requires $K^H$ states in MDPs and $\Theta(H)$ latent states in POMDP. In contrast, our contextual version of a combination lock uses a low-rank MDP  
with very large observation space but where the transition dynamics are governed by $\Theta(d) \ll H$ hidden states (and thus the rank is $\Theta(d)$). 
The latent state structure is shown in \pref{fig:lower_bound_structure}. The agent starts at the top left latent state and always progresses with probability $p = d / H$ to the right. As long as it chooses good actions (blue edges), it progresses in the top chain where it will eventually reach state $(d, g)$ with constant probability and receive a reward of $1$ with probability $1/2 + \epsilon$. If at any time before reaching state $(d, g)$, it chooses a bad action (red edges), then it moves to the lower chain where it eventually has a $1/2$ chance of receiving a reward of $1$.

If the latent states $s \in \cS$ were directly observable, an $\epsilon$-optimal policy could be learned with $O(dH/\epsilon^2)$ samples. However, in the latent state $s$, the agent only receives an observation drawn uniformly from a large set $\cX_s$. The sets $\{\cX_s\}_{s \in \cS}$ form a partition of the entire observation space $\cX$ and there is a mapping $\phi \colon \cX \mapsto \cS$ that identifies the latent state for each observation. Each MDP $M_{\pi^*, \phi}$ in our problem class is parameterized by the mapping $\phi$ and a policy $\pi^* \in \Pi$. The class of policies $\Pi$ can be arbitrary as long as each pair of policies differ on at least a constant fraction of $\cX$. In MDP $M_{\pi^*, \phi}$, only the action $\pi^*(x)$ is a good action (blue action) and allows the agent to stay in the top latent state chain. Thus, finding the $\Theta(\epsilon)$-best policy in $\Pi$ for $M_{\pi^*, \phi}$ is equivalent to identifying $\pi^*$.

Importantly, our problem class contains MDP $M_{\pi^*, \phi}$ for every possible $\pi^* \in \Pi$ and latent state mapping $\phi$. We pick the number of observations large enough so that observations become uninformative and it is virtually impossible for a learner to learn $\phi$. Instead it can only hope to learn $\pi^*$ by identifying the bias $\epsilon$ in the rewards. We can show that this requires number of samples that are not much smaller than collecting $\Theta(1/\epsilon^2 \ln(1 / \delta))$ episodes with each of the $\prn*{H/d}^d$ policies in $\Pi$.

While the lower bound in \pref{thm:basic_lower_bound} does not have a dependence on $\log(\abs{\Pi})$. The simple observation that the contextual bandit problem can be seen as an instance of our setup where $d = 1$, implies that some dependence on $\log(\abs{\Pi})$ is necessary based on standard contextual bandit lower bounds \citep{lattimore2020bandit}. However, getting a lower bound of the form $\Omega\prn{H^d \log(\abs{\Pi})}$ is an interesting question, which we leave open for future work. 
\begin{figure}
    \centering
    \includegraphics[width=0.8\linewidth]{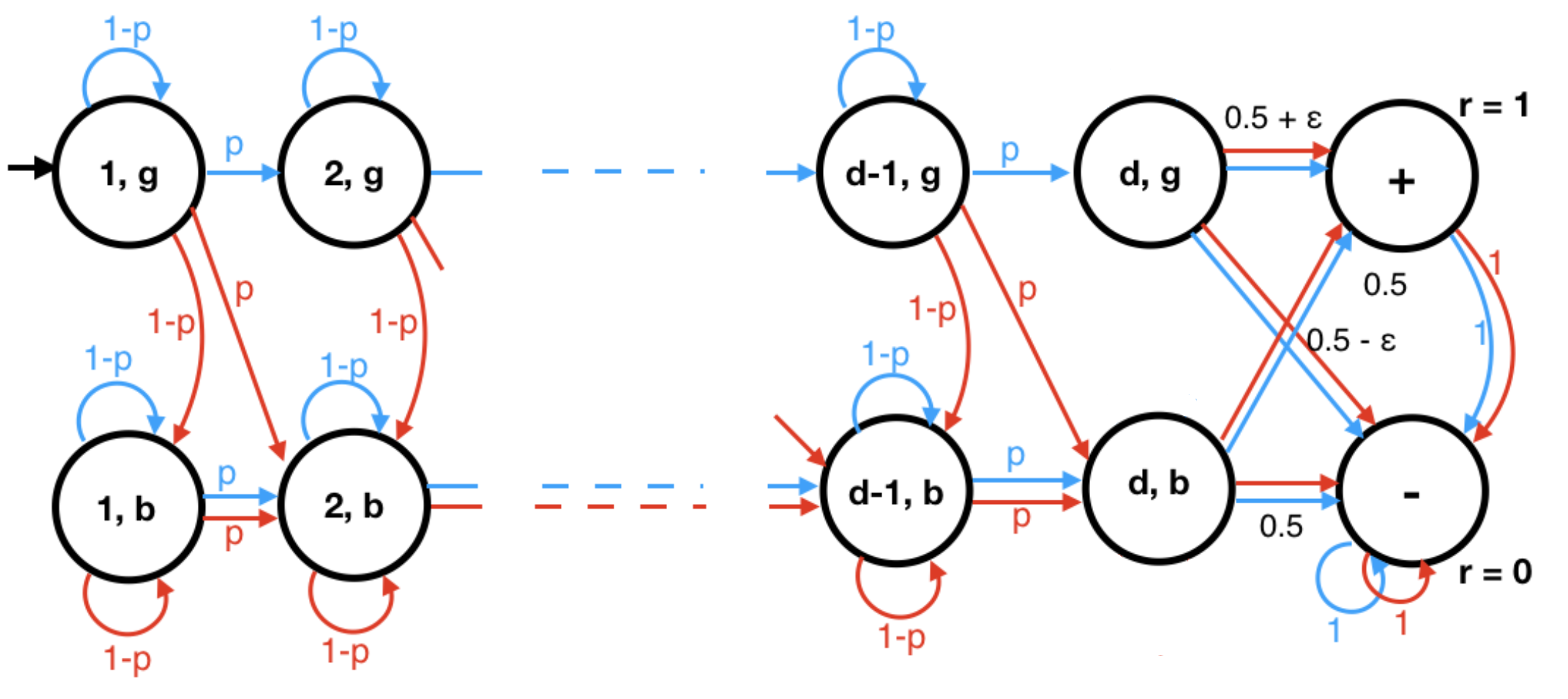}
    \caption{Latent state construction: contextual combination lock. As long as the agent follows actions of $\pi^*$(blue arrows), the agent remains in good states $(i, g)$ and receives a Bernoulli($1/2 + \epsilon$) reward but otherwise transits to bad states $(i,b)$ and receives a Bernoulli($1/2$) reward. 
    } 
    \label{fig:lower_bound_structure}
\end{figure} 
 
\section{Adaptive algorithms}  

In \pref{sec:basic_upper_bounds},
the algorithm introduced benefits from the guarantee provided by \pref{thm:basic_sample_complexity_bound},
which is near optimal in the worst case as the lower bound construction shows. 
However, in cases where the transition matrix induced by the policy class all have nicer eigenspectra, 
one could expect to have an improved sample complexity. Ideally, the
algorithm should automatically adapt to 
more favorable eigenspectra. 
This is precisely what we describe in this section. 
We give an adaptive algorithm whose sample complexity improves
when the eigenspectrum of transition matrices induced by the policy class 
admits a more favorable property. 

\subsection{Adaptivitity to the eigenspectrum}

Our adaptive algorithm, presented in \pref{alg:adaptive_policy_search} in \pref{app:adaptive_upper_bound_main}, is a policy search algorithm similar to  \pref{alg:policy_search} where, instead of invoking the procedure $\valestimate$,  we compute the value function for every policy $\pi$ by invoking the procedure  $\adavalestimate$ given in \pref{app:adaptive_upper_bound_main}.  

$\adavalestimate$ follows along the lines of $\valestimate$. When invoked for a policy $\pi$, it first estimates the expected rewards for the first $3d$ time steps. Then, $\adavalestimate$ computes the auto-regression coefficients $\alpha_k(\wh \lambda)$, and uses them to  predicts the expected rewards for all future time steps by extrapolating. The major difference between $\valestimate$ and  $\adavalestimate$ is the way the coefficients $\alpha_k(\wh \lambda)$ are computed. Specifically, using  $\Delta \ldef{} 2d 4^d \sqrt{(8 K^{3d} \log(6d \abs*{\Pi}) / \delta ) / n}$, the procedure $\adavalestimate$ computes the coefficients $\wh \lambda$ by solving the optimization problem 
\begin{align*} 
   \wh \lambda \leftarrow \argmin_{\lambda \in \mathbb C^d} &~  \prod_{k=2}^d \prn[\big]{\sum_{h = 0}^{H-1} \abs{\lambda_k}^h}  
   ~~~~ \text{s.t.} ~~ \lambda_{1} = 1, |\lambda_{k}| \leq 1 &\textrm{for } 2 \leq k \leq  d, \\  
   & \textrm{and }~ \abs[\Big]{\sum_{k=1}^d (-1)^{k+1} \alpha_{k}(\lambda) \wh R_{h - k} - \wh R_{h} } \leq \Delta  &\textrm{for } d + 1 \leq h \leq 3d. 
\end{align*} 

The above modification to the computation of $\wh \lambda$ allows our error propagation 
bound to adapt to $\lambda$, which defines the coefficients of autoregression for the expected rewards in policy $\pi$ (given in \pref{lem:basic_recurrence_relation}). The propagated error would be small if the coordinates of $\lambda$ are bounded away from $1$. The policy $\wt \pi$, returned by \pref{alg:adaptive_policy_search}, thus enjoys the following adaptive performance guarantee.
\begin{theorem}[Adaptive upper bound] 
\label{thm:adaptive_upper_bound_main} 
For a given $\delta \in (0,1)$, $d$-rank MDP, horizon $H$ and a finite policy class $\Pi$, after collecting $n$ episodes, \pref{alg:adaptive_policy_search} returns a policy $\wt \pi$ that with probability at least $1 - \delta$ admits the following guarantee: 
\begin{align*} 
V\ind{\wt \pi} &\geq \max_{\pi \in \Pi} V\ind{\pi} - O \prn[\bigg]{ d H^2 (16 e)^{2d} \cdot \max_{\pi' \in \Pi} \prod_{k=2}^{d} \prn[\Big]{\sum_{j=0}^{H-1} \abs{\lambda\ind{\pi'}_k}^{j}}^2  \sqrt{\frac{K^{3d} \log(6 \Pi d/ \delta)}{n}}}, 
\end{align*} 
\end{theorem}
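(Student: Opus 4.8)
The plan is to mirror the proof of \pref{thm:basic_sample_complexity_bound} but with a sharper, $\lambda$-dependent error propagation bound in place of \pref{lem:basic_ep_bound}. The three ingredients remain: (i) for every $\pi \in \Pi$ with $\rank(T\ind{\pi}) \le d$, the expected rewards $R\ind{\pi}_h$ satisfy the order-$d$ autoregression of \pref{lem:basic_recurrence_relation}; (ii) by \pref{lem:ss_ub}, the importance-sampling estimates $\wh R\ind{\pi}_h$ for $h \in [3d]$ are all within $\veps_{\mathrm{stat}} \ldef O(\sqrt{K^{3d}\log(6d\abs{\Pi}/\delta)/n})$ of $R\ind{\pi}_h$ simultaneously over $\pi \in \Pi$; and (iii) a new error-propagation estimate showing that if $\wh\lambda$ is chosen by the modified optimization then $\abs{\wt R\ind{\pi}_h - R\ind{\pi}_h}$ grows only like $d\,(16e)^{2d} \prod_{k=2}^d (\sum_{j=0}^{H-1}\abs{\lambda\ind{\pi}_k}^j)^2 \cdot \veps_{\mathrm{stat}}$ rather than the worst-case $(H/d)^{2d}$. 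Summing this over the $H$ time steps and using the value decomposition \pref{eq:value_decomposition} gives $\abs{\wt V\ind{\pi} - V\ind{\pi}} \le O(dH^2 (16e)^{2d} \max_{\pi'\in\Pi}\prod_{k=2}^d(\sum_{j}\abs{\lambda\ind{\pi'}_k}^j)^2\,\veps_{\mathrm{stat}})$ uniformly over $\Pi$; the usual argument (the returned $\wt\pi$ maximizes $\wt V$, so $V\ind{\wt\pi} \ge \wt V\ind{\wt\pi} - \veps \ge \max_\pi \wt V\ind{\pi} - \veps \ge \max_\pi V\ind{\pi} - 2\veps$) then yields the claimed bound.

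First I would establish feasibility of the optimization problem defining $\wh\lambda$: the true eigenvalue vector $\lambda\ind{\pi}$ (augmented so that $\lambda_1 = 1$, which is legitimate because one can always pad the characteristic polynomial with a factor $(1-\lambda)$ — here we exploit that the rank-$d$ assumption lets us view $T\ind{\pi}$ as having a unit eigenvalue corresponding to the stationary-like direction, or more carefully, that appending the root $1$ only changes the autoregression order from $d$ to $d{+}1$ and is absorbed into constants) satisfies the constraint $\abs{\sum_{k=1}^d (-1)^{k+1}\alpha_k(\lambda\ind{\pi})\wh R_{h-k} - \wh R_h} \le \Delta$ for $h = d{+}1,\dots,3d$ with high probability, since on the good event of \pref{lem:ss_ub} each $\wh R_{h'}$ is within $\veps_{\mathrm{stat}}$ of $R_{h'}$ and the true $\lambda\ind{\pi}$ makes the corresponding expression with exact $R$'s vanish; the coefficients $\alpha_k$ are bounded by $\binom{d}{k} \le 2^d$ in modulus since $\abs{\lambda_i} \le 1$, so the perturbation is at most $d 2^d \veps_{\mathrm{stat}}$, matching the chosen $\Delta = 2d\,4^d\sqrt{8K^{3d}\log(6d\abs{\Pi})/\delta)/n}$ up to constants. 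Hence $\wh\lambda$ achieves objective value no larger than that of $\lambda\ind{\pi}$, i.e. $\prod_{k=2}^d(\sum_{h=0}^{H-1}\abs{\wh\lambda_k}^h) \le \prod_{k=2}^d(\sum_{h=0}^{H-1}\abs{\lambda\ind{\pi}_k}^h)$.

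The heart of the argument — and the main obstacle — is the new error-propagation lemma. Writing both $\{R_h\}$ and $\{\wt R_h\}$ via their characteristic polynomials, the difference $e_h \ldef \wt R_h - R_h$ satisfies a linear recursion whose solution can be expanded in terms of the roots $\wh\lambda_k$ and $\lambda_k$; a Cauchy/Vandermonde-type bound expresses $\abs{e_h}$ as a sum over generalized-power-sum terms, each controlled by products of geometric sums $\sum_{j=0}^{h-1}\abs{\wh\lambda_k}^j$ and $\sum_{j=0}^{h-1}\abs{\lambda_k}^j$. The key point is that the constraint $\lambda_1 = \wh\lambda_1 = 1$ isolates the one potentially-large (order-$H$) geometric factor, while the remaining $d{-}1$ factors contribute $\prod_{k=2}^d \sum_{j=0}^{H-1}\abs{\lambda_k}^j$; combined with the matching bound for $\wh\lambda$ from the previous paragraph one gets the product-squared form $\prod_{k=2}^d(\sum_j \abs{\lambda\ind{\pi}_k}^j)^2$, and the $(16e)^{2d}$ factor absorbs the combinatorial overhead from the Vandermonde inverse and the $\alpha_k$ coefficients exactly as in the proof of \pref{lem:basic_ep_bound}. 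I expect this to require a careful induction or generating-function argument tracking how divided differences of the map $\lambda \mapsto \lambda^h$ behave when some coordinates coincide or are close; the technical subtlety is handling near-degenerate spectra (repeated or clustered eigenvalues) where naive Lagrange-interpolation bounds blow up, which one resolves by passing to the contour-integral / Hermite-interpolation representation that remains finite under coalescence. Once this lemma is in hand, the remainder is the routine summation and union-bound bookkeeping already sketched in the $\valestimate$ analysis.
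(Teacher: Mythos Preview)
Your overall architecture is correct and matches the paper: reuse \pref{lem:basic_recurrence_relation} and \pref{lem:ss_ub}, replace \pref{lem:basic_ep_bound} by an eigenspectrum-adaptive error-propagation bound, and use feasibility of the true $\lambda\ind{\pi}$ in the new optimization to transfer the $\wh\lambda$ product to the $\lambda\ind{\pi}$ product. Two remarks, one minor and one substantive.

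\emph{Minor.} Your discussion of why one may take $\lambda_1=1$ (``pad the characteristic polynomial with a factor $(1-\lambda)$'') is unnecessary and slightly confused. The matrix $T\ind{\pi}$ is column-stochastic, so $1$ is always an eigenvalue; thus $\lambda\ind{\pi}_1=1$ holds outright, and the feasibility argument goes through exactly as you say with $\Delta = 2d\,4^d\,\veps_{\mathrm{stat}}$.

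\emph{The error-propagation lemma: your route versus the paper's.} This is where your proposal diverges from the paper. You sketch an approach via Vandermonde expansion of the recurrence solutions and then divided-difference/contour-integral interpolation to survive root coalescence; you assert that this yields the products $\prod_{k\ge 2}\sum_{j<H}\abs{\lambda_k}^j$ but do not say how. In the paper the bound is obtained differently and much more concretely. One first writes (as in the basic case) $R_{d+m}-\wt R_{d+m}=\tri{\bar U,\bar P^{m}\bar V}$ for the $2d\times 2d$ block-diagonal companion matrix $\bar P=\mathrm{diag}(P(\lambda),P(\wh\lambda))$. The adaptive step is a generic bound on $\abs{u^\top A^m v}$ for any matrix $A$ with $\abs{\lambda_1(A)}=1$ and $\abs{\lambda_k(A)}\le 1$: define the telescoped quantities $z_{m,0}=u^\top A^m v$ and $z_{m,k}=z_{m,k-1}-\lambda_{d+1-k}\,z_{m-1,k-1}$; one checks (via diagonalization in the distinct-eigenvalue case and then density) that $z_{m,d}\equiv 0$ and $\abs{z_{d,k}}\le 2^k\max_{i\le d}\abs{u^\top A^i v}$. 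Unrolling the recursion $z_{m,k}=z_{m,k+1}+\lambda_{d-k}z_{m-1,k}$ from $k=d-1$ down to $k=0$ produces exactly one geometric-sum factor $\sum_{j\ge 0}\abs{\lambda_{d-k}}^{j}$ per level, and the single unit eigenvalue contributes a factor at most $m$. Applied to $\bar P$ (dimension $2d$, eigenvalues $\lambda_1,\wh\lambda_1,\dots,\lambda_d,\wh\lambda_d$), this yields the product over $k\ge 2$ of geometric sums in $\lambda_k$ and $\wh\lambda_k$ separately, times $2^{2d}h$; the optimization step then replaces the $\wh\lambda$ product by the $\lambda\ind{\pi}$ product, giving the square. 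Your divided-difference intuition is not wrong---the $z_{m,k}$ construction \emph{is} an iterated finite difference annihilating the eigenmodes---but the paper's telescoping makes the geometric-sum factors appear mechanically, whereas the direct Vandermonde route you propose would first produce coefficients of size $\prod_{j\ne k}\abs{\lambda_k-\lambda_j}^{-1}$, which are not obviously controlled by $\prod_{k\ge 2}\sum_j\abs{\lambda_k}^j$ without essentially redoing the telescoping. If you pursue your route, the missing link you would need to make precise is exactly how the contour-integral representation converts into that particular product of partial geometric sums; the paper sidesteps this entirely.
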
 
Proof of \pref{thm:adaptive_upper_bound_main} follows along the lines of the proof of \pref{thm:basic_sample_complexity_bound} where we replace the error propagation bound (in  \pref{lem:basic_ep_bound}) by a similar bound that adapts to the eigenspectrum of the transition matrix $T\ind{\pi}$. We defer the proof details to appendix \pref{app:adaptive_upper_bound_main}. Note that for any $\abs{\lambda_k} \leq 1$ and thus $\sum_{h = 0}^{H-1} \abs{\lambda_k}^h \leq H$.  Using this fact in \pref{thm:adaptive_upper_bound_main} recovers the result of \pref{thm:basic_sample_complexity_bound}, albeit upto a multiplicative factor of $2^{2d}$. In the following, we provide an example of a low rank MDP  problem in which the adaptive bound above could be much better than the worst case upper bound in \pref{thm:basic_sample_complexity_bound}. 
\begin{corollary}[Well mixing MDP] Given $\delta \in (0,1)$, horizon $H$ and a finite policy class $\Pi$. Let $M$ be a $d$-rank MDP such that the second largest eigenvalue of the transition matrix $T\ind{\pi}$ satisfies $\abs{\lambda\ind{\pi}_2} \leq 1 - \gamma~$ for every policy $\pi \in \Pi$. Then, after collecting $n$ episodes, our adaptive algorithm returns a policy $\wt \pi$ that with probability at least $1 - \delta$ admits the following guarantee: 
\begin{align*} 
V\ind{\wt \pi} \geq \max_{\pi \in \Pi} V \ind{\pi} - \widetilde{O}\prn[\Big]{\prn[\Big]{\frac{K}{\gamma }}^{2d} \frac{1}{\sqrt{n}}},  \end{align*} 
where the $\widetilde{O}$ hides polynomial factors of $d, H, \log(1/\delta)$ and multiplicative constants. 
\end{corollary}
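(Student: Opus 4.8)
The corollary is a direct specialization of \pref{thm:adaptive_upper_bound_main}, so the plan is just to bound the eigenspectrum-dependent factor that appears there under the well-mixing hypothesis. Recall from the problem setup that the eigenvalues of $T\ind{\pi}$ are indexed so that $\abs{\lambda\ind{\pi}_1} \geq \abs{\lambda\ind{\pi}_2} \geq \cdots \geq \abs{\lambda\ind{\pi}_d}$. Hence the assumption $\abs{\lambda\ind{\pi}_2} \leq 1-\gamma$ immediately gives $\abs{\lambda\ind{\pi}_k} \leq 1-\gamma$ for every coordinate $k \geq 2$ and every $\pi \in \Pi$; note also that $\gamma \in (0,1]$, since $1-\gamma$ must dominate the nonnegative quantity $\abs{\lambda\ind{\pi}_2}$.

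I would then control the geometric sums that govern the adaptive error-propagation term. For each $k \geq 2$ and $\pi \in \Pi$, since $0 \le 1-\gamma < 1$,
\begin{align*}
\sum_{j=0}^{H-1} \abs{\lambda\ind{\pi}_k}^{j} \;\leq\; \sum_{j=0}^{\infty} (1-\gamma)^{j} \;=\; \frac{1}{\gamma}.
\end{align*}
Taking the product over $k = 2, \dots, d$, squaring, and maximizing over $\pi' \in \Pi$ then yields
\begin{align*}
\max_{\pi' \in \Pi} \; \prod_{k=2}^{d} \prn[\Big]{ \sum_{j=0}^{H-1} \abs{\lambda\ind{\pi'}_k}^{j} }^{2} \;\leq\; \gamma^{-2(d-1)} \;\leq\; \gamma^{-2d},
\end{align*}
where the last inequality uses $\gamma \leq 1$.

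Substituting this bound into the guarantee of \pref{thm:adaptive_upper_bound_main} controls the suboptimality of $\wt\pi$ by
\begin{align*}
O\prn[\Big]{ d\, H^2 (16e)^{2d}\, \gamma^{-2d} \sqrt{\tfrac{K^{3d}\log(6\abs{\Pi}d/\delta)}{n}} } \;=\; O\prn[\Big]{ d\, H^2 (16e)^{2d} \sqrt{\log(6\abs{\Pi}d/\delta)} \cdot \prn[\big]{\tfrac{K^{3/4}}{\gamma}}^{2d} \tfrac{1}{\sqrt{n}} }.
\end{align*}
Since $K \geq 2$ we have $K^{3/4} \leq K$, hence $(K^{3/4}/\gamma)^{2d} \leq (K/\gamma)^{2d}$; folding the remaining factors $d$, $H^2$, $\sqrt{\log(6\abs{\Pi}d/\delta)}$, and $(16e)^{2d}$ into $\wt{O}(\cdot)$ — the last of these being a multiplicative constant under the standing convention that $d$ is small — yields the claimed bound $\wt{O}\prn[\big]{(K/\gamma)^{2d}/\sqrt{n}}$.

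Because this is a corollary, there is no genuine obstacle: the only point requiring care is the bookkeeping in the final step — reconciling the $K^{3d/2}$ that actually arises with the looser $K^{2d}$ reported in the statement, and absorbing the exponential-in-$d$ constant $(16e)^{2d}$ into the hidden factors. Everything else is a one-line substitution of the geometric-series estimate $\sum_{j\ge 0}(1-\gamma)^j = 1/\gamma$ into the adaptive bound.
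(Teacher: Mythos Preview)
Your proposal is correct and is exactly the intended argument: the paper itself does not spell out a proof of this corollary (indeed the source contains only the authors' placeholder note ``Add a sentence here on the proof of this!''), and what you have written—bounding each geometric sum by $1/\gamma$ via the ordering $\abs{\lambda_k\ind{\pi}} \leq \abs{\lambda_2\ind{\pi}} \leq 1-\gamma$ for $k\geq 2$ and substituting into \pref{thm:adaptive_upper_bound_main}—is the one-line derivation the authors had in mind. Your bookkeeping remarks about $K^{3d/2}\leq K^{2d}$ and about absorbing $(16e)^{2d}$ as a constant (under the paper's standing convention that $d$ is a small fixed parameter) are also on point; the only residual quibble is that the $\sqrt{\log\abs{\Pi}}$ factor is likewise being swept into $\wt O$, which the corollary's stated list of hidden factors does not mention explicitly, but this is an imprecision in the statement rather than in your argument.
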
 

\ascomment{Add a sentence here on the proof of this!} 
We next show through a lower bound that the adaptive upper bound in \pref{thm:adaptive_upper_bound_main}  cannot be improved further. We defer the proof details to \pref{app:adaptive_lower_bound_proof}. 

\begin{theorem}[Adaptive lower bound]  \label{thm:adaptive_lower_bound}
Let $\epsilon \in (0, 1/16)$, $\delta \in (0, 1/2)$, $d \geq 4$ and $(\lambda_i)_{i \in [d]} \in [0, 1]^{d}$
satisfy
\begin{align*}
    d^{2d} \lesssim  \prod_{i=1}^{d} \frac{1}{1 - \lambda_i} &\lesssim \exp(H)
& \textrm{and} 
&&\sum_{i=1}^{d} \frac{1}{1 - \lambda_i} \leq \frac{H}{4 \ln(4d)}~.
\end{align*}

Then, there is a realizable policy class and a family of MDPs with rank at most $\Theta(d)$, finite observation space, horizon $H$ and two actions such that:  For each $i \in [d]$, policy $\pi$ and MDP $M$ in this class, there is an eigenvalue of the induced transition matrix $T^\pi_{M}$ in $[\lambda_i/2, \lambda_i]$. Furthermore, any algorithm that returns, with probability at least $1 - \delta$ an $\epsilon$-optimal policy for any MDP in this family, has to collect at least  
\begin{align*} 
         \Omega\prn[\Big]{\frac{1}{\epsilon^2 d^d} 
    \sqrt{\prod_{i=1}^{d}\frac{1}{1 - \lambda_i}}  \log\prn*{1/{2 \delta}}}
\end{align*} 
episodes in expectation in some MDP in this family. 
\end{theorem}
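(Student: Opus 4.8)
The plan is to build on the contextual combination-lock construction behind \pref{thm:basic_lower_bound}, modifying it so that the transition matrices induced by the policy class have a \emph{prescribed} eigenspectrum rather than a uniform one. In \pref{thm:basic_lower_bound} the agent advances along the top (``good'') chain of $d$ latent states by an i.i.d.\ geometric sojourn with a single parameter $p=d/H$, which forces every nonzero eigenvalue of $T^{\pi}$ to equal $1-p$. Instead, I would make the self-loop probability at the $i$-th good latent state a suitable rounding of $\lambda_i$, so that the bidiagonal latent operator on the good chain has diagonal entries $\lambda_1,\dots,\lambda_d$ (hence these are its eigenvalues) and $T^{\pi}_M$ has an eigenvalue in $[\lambda_i/2,\lambda_i]$ for each $i$; the bad chain, the absorbing ``success''/``failure'' states, and the rich-observation layer (each latent state $s$ emitting a fresh uniform draw from a huge set $\cX_s$, with an unknown decoding $\phi\colon\cX\to\cS$) are kept as before. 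Each instance $M_{\pi^\star,\phi}$ is indexed by a target policy $\pi^\star\in\Pi$ and a map $\phi$; only the action $\pi^\star(x)$ keeps the agent in the good chain, reaching the good absorbing state yields a $\mathrm{Bernoulli}(\tfrac12+\epsilon)$ reward while every other terminal yields $\mathrm{Bernoulli}(\tfrac12)$; $\Pi$ is any packing whose members pairwise disagree on a constant fraction of $\cX$, and the rank is $\Theta(d)$. Write $m_i\ldef 1/(1-\lambda_i)$.

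The first substantive step is a value-gap lemma. Under $\pi^\star$ the agent never leaves the good chain, and the time to traverse all $d$ good states is a sum of $d$ independent geometrics of mean $\sum_i m_i$; the hypothesis $\sum_i m_i\le H/(4\ln(4d))$ lets me conclude --- via a Chernoff tail bound on each geometric together with a union bound over the $d$ levels, which is where the $\ln(4d)$ is spent --- that the good absorbing state is reached within the horizon with at least a constant probability $c$. Hence $V^{\pi^\star}=\tfrac12+c\,\epsilon$ up to lower-order terms, whereas any policy disagreeing with $\pi^\star$ on a constant fraction of $\cX$ leaves the good chain almost surely and has value $\tfrac12+o(\epsilon)$; so for $\epsilon$ below the stated threshold any $\epsilon$-optimal policy must agree with $\pi^\star$ on all but a small fraction of observations, and returning one with probability $\ge 1-\delta$ is at least as hard as identifying $\pi^\star$.

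The core is an information-theoretic lower bound, set up as a best-policy-identification argument over a uniform prior on $(\pi^\star,\phi)$: compare $M_{\pi^\star,\phi}$ against its reward-unbiased perturbation and apply a change-of-measure bound (Bretagnolle--Huber / Le Cam, or Fano over the prior). Since each $\cX_s$ is chosen large enough that, with high probability, every observation seen over $n$ episodes is distinct, the observation sequence carries no information about $(\pi^\star,\phi)$, so the only per-episode signal is the single biased reward emitted exactly when the algorithm's play is consistent with $\pi^\star$ through the whole good chain; a routine computation then bounds the per-episode divergence by $\lesssim \epsilon^2\,q$, where $q$ is the probability that the algorithm reaches the good absorbing state of the true instance in that episode. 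Summing and averaging over the prior, the total divergence is $\lesssim \epsilon^2\,Q$, where $Q$ is the expected total number of episodes (out of $n$) in which the algorithm reaches the good absorbing state, averaged over the prior. The crux is to show $Q\lesssim n\,d^{d}\big/\sqrt{\prod_i m_i}$ for every adaptive algorithm; combined with $\delta$-correct identification this forces $\epsilon^2 Q\gtrsim \log(1/2\delta)$, i.e.\ $n=\Omega\!\bigl(\tfrac{1}{\epsilon^2 d^{d}}\sqrt{\prod_i m_i}\,\log(1/2\delta)\bigr)$. The side conditions $d^{2d}\lesssim \prod_i m_i\lesssim e^{H}$ serve, respectively, to make this at least one episode and to keep the construction within horizon $H$.

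I expect the main obstacle to be exactly the bound $Q\lesssim n\,d^{d}/\sqrt{\prod_i m_i}$. Reaching good level $i{+}1$ requires both surviving the geometric sojourn at level $i$ and matching $\pi^\star$ on the fresh observations encountered there; unlike the uniform-$p$ case, the optimal strategy pours disproportionate effort into the ``slow'' levels ($\lambda_i$ near $1$, large $m_i$), and one must show that even the best level-by-level allocation of the sample budget --- a sum over levels whose terms scale like $\prod_{j\le i}m_j^{1/2}$ and is therefore dominated by the full product --- cannot beat the $\sqrt{\prod_i m_i}$ rate, with the $d^{-d}$ slack absorbing the level-by-level union bounds and the rounding losses. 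Making this rigorous for fully adaptive algorithms, rather than only for ``commit to one policy per episode'' strategies, and extracting the $d$-dependence in the stated form, is the delicate part; a secondary technical nuisance is arranging the rounding of the $\lambda_i$ (and the spectral contributions of the bad chain and absorbing states) so that the ``eigenvalue in $[\lambda_i/2,\lambda_i]$'' guarantee holds simultaneously for all $i$ and all $\pi$.
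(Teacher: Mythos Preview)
Your outline is correct and tracks the paper's proof closely: the same heterogeneous-$p_i$ combination lock with $p_i=1-\lambda_i$, the same value-gap argument (the condition $\sum_i m_i\le H/(4\ln(4d))$ is spent exactly on the per-level geometric tail plus union bound you describe), and the same change-of-measure reduction to an upper bound on the quantity you call $Q$. The one substantive difference is how that bound on $Q$ is obtained. You propose a ``level-by-level allocation'' analysis and flag adaptivity as the delicate part; the paper sidesteps this with a much simpler decomposition that does not track levels at all. Namely, for any threshold $h$ one has
\[
\sum_{\pi^*\in\Pi}\Pr_{\textrm{unif}}\bigl(\text{actions match }\pi^*\text{ for }G\text{ steps}\bigr)\ \le\ |\Pi|\,\Pr(G\le h)\;+\;1\;+\;|\Pi|\,(7/8)^h,
\]
the first term bounding the case $G\le h$ trivially by $|\Pi|$, and the third coming from the packing of $\Pi$ (at most one policy can agree with the played actions on more than a $7/8$ fraction of uniform observations). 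The paper then bounds $\Pr(G\le h)\le h^{d-1}\prod_i p_i$, \emph{sets} $|\Pi|=\prod_i m_i$ (a choice you leave open), takes $h\approx\log|\Pi|/\log(8/7)$ to make the last term $\le 1$, and finally converts the resulting $(\log|\Pi|)^{d-1}$ into $\sqrt{|\Pi|}$ via $\ln y\le 2(d{-}1)y^{1/(2(d-1))}$, which is precisely where the $d^{-d}$ slack and the square root appear. This threshold-on-$G$ plus packing argument handles fully adaptive algorithms automatically (everything is per-episode and uses only that observations are effectively uniform), so the obstacle you anticipated dissolves; the eigenvalue localization you call a secondary nuisance is handled by an explicit eigenvalue computation for the block-triangular latent transition (\pref{lem:mdp_lb_rank}).
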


\paragraph{Adaptivity to rank.} In \pref{app:rank_adaptivity}, we also provide an adaptive algorithm that can find the best policy in the class $\Pi$ without knowing the value of the rank parameter $d^*$. 
Our adaptive algorithm, given in \pref{alg:adaptive_policy_search_rank}, follows from standard techniques in the model selection literature. For every $d \in [H]$, we compute an optimal policy $\widetilde{\pi}_{d}$ assuming that the rank $d^* = d$. Then, for each $d \in [H]$, we estimate the value function for the policy $\widetilde{\pi}_d$ by drawing $n/2H$ fresh trajectories using that policy. Finally, we return the policy $\wt \pi$ from the set $\crl*{\widetilde{\pi}_d}_{d  \in [H]}$ with the highest estimated value. The returned policy $\wt \pi$ satisfies, with probability at least $1- \delta$, 
\begin{align*}
V \ind{\wt \pi} \geq \max_{\pi \in \Pi} ~ V\ind{\pi}   - \wt{O}\prn[\Big]{\prn[\Big]{\frac{H}{d^*}}^{2d^*} \sqrt{ \frac{(8K)^{3d^*} \log(\abs*{\Pi} / \delta )}{n}}}. 
\end{align*}
We defer full details of the analysis to the Appendix. 
\section{Conclusion} \label{sec:conclusion} 
We presented a new analysis of reinforcement learning with rich
observations in the agnostic setting, under the low rank
MDP assumption. We gave both a non-adaptive and an adaptive algorithm for
learning a quasi-optimal policy in this scenario, which we showed to
benefit from guarantees that are only polynomial in the horizon and
the number of actions, and only logarithmic in the size of the policy
class considered. While our bound is exponential in the MDP rank, we
give nearly matching lower bounds proving that that dependency is
unavoidable.
The agnostic setting is a more realistic setting that has 
received less attention in the literature. We view this 
work as initiating the study of this general setting under
workable assumptions and believe that many other algorithmic
and theoretical aspects of such scenarios need to be   
studied further.  

\subsection*{Acknowledgements} 
Part of the work was performed when AS was an intern at Google Research, NYC. KS acknowledges support from NSF CAREER Award 1750575. YM  has received funding from the European Research Council (ERC) under the European Union’s Horizon 2020 research and innovation program (grant agreement No. 882396), by the Israel Science Foundation (grant number 993/17) and the Yandex Initiative for Machine Learning at Tel Aviv University.  

\newpage 
\setlength{\bibsep}{6pt} 
\bibliography{refs}
\newpage

\appendix
\renewcommand{\contentsname}{Contents of Appendix}
\tableofcontents
\addtocontents{toc}{\protect\setcounter{tocdepth}{3}} 
\clearpage

\setlength\parindent{0pt}

\section{Detailed comparison to prior work} \label{app:related_work} 

Provably sample-efficient learning algorithms have been well studied in the classical tabular RL literature \citep{kearns2002near, brafman2002r}. However, the number of samples required by these algorithms to find the optimal policy $\pi^*$ scales with the size of the state space $\abs{\cX}$ \citep{jaksch2010near, lattimore2012pac}, and thus these methods fail to scale to the rich observation settings where $\abs{\cX}$ could be astronomically large. There have been significant recent advances in developing efficient algorithms for such rich observation settings, albeit under additional assumptions. The two main styles of assumptions considered in the literature to make learning tractable are: (a) the learner has access to a value function class $\cF$ that realizes the optimal value function $f^*$ for the underlying MDP, and (b) the underlying transition dynamics admits additional structure such as low rank or linear decomposition, etc. We note that the goal of these works is to find the optimal policy for the underlying MDP. In comparison, in our work, we assume access to a policy class $\Pi$ and our goal is to find a policy $\wt \pi$ that could compete with the best policy in the class $\Pi$. In the following, we compare our setup with the assumptions made in the prior work.   

\paragraph{RL with general value function approximation.} Recently there has been great interest in designing RL algorithms with general function approximation \citep{jiang2017contextual, dann2018oracle,  sun2019model, du2019provably, wang2020reinforcement}. In particular, \cite{jiang2017contextual} introduced the notion of Bellman rank, a measure of complexity that depends on the underlying environment and the value function class $\cF$, and provide statistically efficient algorithms for learning problem for which Bellman rank is bounded. This was later extended to model-based algorithms by \cite{sun2019model}. While these algorithms work across a variety of problem settings, their sample complexity scales with $\log(\abs{\cF})$. Furthermore, these algorithms also require the optimal value function $f^*$ to be realized in $\cF$. In our work, we do not assume that the learner has access to a value function class $\cF$. In fact, given a value function class $\cF$, we can construct the policy class $\Pi_{\cF}$ that corresponds to greedy policies induced by the class $\cF$. However, given just a policy class $\Pi$, one can not construct a value function class, without additional knowledge of the underlying dynamics. 

\begin{example} Let $\cX = \crl*{0, 1, \ldots, N}$, $\cA = \crl{0, 1}$, $\Pi = \crl{\pi_0, \pi_1}$ and $H = 2$. For every action $a \in \cA$, we define the reward $r(x, a) = 1$ when $x$ is even, and $r(x, a) = 0$ when $x$ is odd. Further, we assume that the transition dynamics $T$ is parameterized by a vector $p \in \crl*{0, 1}^N$ such that for any state $x$, if $p(x) = 1$ and $a = 1$, then the next state $x'$ is sampled uniformly at random from the set of even numbers in $\cX$. Otherwise, we sample an odd number uniformly at random for $x'$. Clearly, in order to learn the optimal value function, the leaner needs to recover the value of the vector $p$ on at least $O(N)$ states. From standard packing arguments, we get that in $N$ dimensions there are at least $2^{O(N)}$ vectors that are $O(N)$ apart. Thus, any appropriate value function class $\cF$ that contains $p$ must have size at least $2^{O(N)}$. 
\end{example} 
\paragraph{Linear MDP assumption.} Our \pref{ass:low_rank} implies that for any policy $\pi \in \Pi$, the transition dynamics exhibits a low-rank decomposition with dimension $d$, that is  
$
    T\ind{\pi}(x' | x) = \langle \phi\ind{\pi}(x), \psi\ind{\pi}(x') \rangle,
$,  
for some d-dimensional feature maps $\phi\ind{\pi}, \psi\ind{\pi} \colon \cX \mapsto \bbR^d$. Low rank MDPs and linear transition models have recently gained a lot of  attention in the RL literature \citep{yang2020reinforcement, jin2020provably, modi2020sample, wang2019optimism}. The works most closely related to our setup are those of  \cite{jin2020provably} and \cite{yang2020reinforcement}, who give algorithms to find optimal policy in low rank MDPs with known feature maps $\phi$. Similarly, the other algorithms also assume that the learner either observes the feature $\phi(x)$, or the feature $\psi(x)$. However, in our setup, the learner neither observes the features $\phi \ind{\pi}$ nor the features $\psi \ind{\pi}$, thus restricting application of these algorithms to our setting. 

A new line of work, initiated by \cite{agarwal2020flambe}, focuses on the representation learning question in the above setting. They assume that the feature functions $\phi$ and $\psi$, although not known to the learner, are realized in the given classes $\Phi$ and $\Psi$ respectively. In order to find the optimal policy, their algorithm first identifies the underlying feature functions $\phi^*$ and $\psi^*$, and thus, their sample complexity guarantees scale with $\log(\abs{\Phi} \abs{\Psi})$. Later, \citet{modi2021model} show that a similar approach also works when the learner has only access to a $\Phi$ but not $\Psi$. In comparison, we do not assume knowledge of either classes $\Phi$ or $\Psi$, and instead work with a policy class $\Pi$. 
In fact, the following simple illustrative example shows that the feature function $\Phi$ could be arbitrarily complex even when $\abs*{\Pi}$ is small, and thus we can not hope to learn the feature function from samples. 

\begin{example}
\label{eg:eq_one}
Let $\cX = [N]$, $\cA = \crl{0, 1}$. We define the feature function $\psi(x) \in \bbR^2$ such that $(1/2N, 0)^\top$ if $x$ is even and $(0, 1/2N)^\top$ if $x$ is odd. Further, for $\lambda \geq 0$, define the feature function $\phi_\lambda(x) \in \bbR^2$ such that $\phi_\lambda(x) = (1, 0)^\top$ if $\sin(x/\lambda) \geq 0$, and $\phi_\lambda(x) = (0, 1)^\top$ otherwise. In this MDP, the next state $x'$ is either sampled uniformly at random from the set of even numbers in $\cX$ or sampled uniformly at random from the set of odd numbers in $\cX$, depending on the value of $\sin(x/\lambda)$.

Note that the mapping $x \mapsto \sin(x/\lambda)$ could be arbitrarily complex when $\lambda$ is small. In fact, the function class $\Phi = \crl{\lambda \mid x \mapsto \sin(x/\lambda)}$ has infinite VC dimension. Thus, one cannot hope to learn the feature function $\phi_\lambda$ from samples. 
\end{example} 

It is worth noting that in the above example, FLAMBE \citep{agarwal2020flambe}, MOFFLE \citep{modi2021model}, or  in fact any other approach that attempts to recover the feature function $\phi$, as mentioned above will not succeed. Furthermore, when $\abs{\Pi}$ is large and the length of the episode $H$ is large, the previously known agnostic upper bounds of $\tfrac{\abs{\Pi}}{\epsilon^2}$ or $\tfrac{2^H \log(\abs{\Pi})}{\epsilon^2}$ are also prohibitively large. However, in the above example, our algorithm enjoys a sample complexity bound of ${\tfrac{H^4 \log(\abs{\Pi})}{\epsilon^2}}$. 

Finally, note that in our setup, the decomposition of the induced transition kernel (into $\phi\ind{\pi}$ and $\psi\ind{\pi}$) may be different for each policy $\pi$ in the class $\Pi$. Furthermore, there may be policies outside of $\Pi$ that do not even exhibit such a low-rank decomposition. Thus, although our low rank assumption is similar to those in linear or low-rank MDPs \citep{agarwal2020flambe}, our model is more general. 
 
\paragraph{Comparison to Block MDP model.} \citet{krishnamurthy2016pac} introduced the block MDP model, where a small number of latent states $\cS$ govern the transition dynamics, and the observations $x \in \cX$ are generated depending on the current latent state $s$. In this model, there is a decoding function $g^*$ that maps observations $x$ back to the latent state $s$ that generates $x$. \cite{du2019provably, misra2020kinematic} assume that the learner is given a realizable class of decoding functions $\cG$ and show that the true mapping $g^* \in \cG$ can be learnt efficiently, both computationally and statistically, which can then be used to find the optimal policy. However, note that the transition matrix in a Block MDPs with $S$ latent states has rank at most $\abs{S}$, and thus their model is captured by our \pref{ass:low_rank}. However, in our setup, we do not assume that the leaner has access to the class $\cG$. In fact, Example 2 above shows that the latent state map $g^*$ (the mapping $\phi_{\lambda}(x)$ in that case) could be arbitrarily complex even when $\Pi$ is small, and thus we can not hope to  learn $g^*$ from samples. 

\paragraph{Policy gradient methods.} Model free direct policy search algorithms that directly maximize the value function have shown tremendous empirical success \citep{kakade2001natural, kakade2002approximately, levine2013guided, schulman2015trust, schulman2017proximal}, and recently, have been analysed from a theoretical perspective \citep{agarwal2019theory, abbasi2019politex, bhandari2019global, liu2020improved, agarwal2020pc}. While these methods operate directly on a policy class $\Pi$, as we do in our work, they require additional modelling assumptions in order to succeed; foremost being that the policy class $\Pi$ exhibits a differentiable paraeterization. Further assumptions include that the policy class $\Pi$ contains the optimal policy $\pi^*$, the policy class $\Pi$ has a good coverage over the state space \citep{agarwal2019theory}, and  that the underling MDP has a linear factorization with known feature maps \citep{agarwal2020pc}. We do not require these assumptions. 

\paragraph{DICE/DualDICE algorithms.} Recent works of \cite{liu2018breaking} and \cite{nachum2019dualdice} provide estimators that do not suffer the curse of horizon, i.e. the factor of $A^H$, in off-policy estimation of expected policy rewards by applying importance sampling on average visitation distributions of single steps of state-action pairs, instead of the much higher dimensional distribution of the whole trajectories. However, their estimator requires access to a function class $\cH$ that contains the importance weights of the average visitation distribution. We do not require access to such a class $\cH$ in our estimator of expected policy rewards.   

\paragraph{POMDP with reactive policies.} 
We will show in the following that our theory and algorithm applies to partially observable Markov decision processes (POMDPs), as long as policies are reactive, that is, only take the current observation into account. Although existing works such as \citep{jiang2017contextual} show polynomial sample-complexity bounds for POMDPs with reactive policy classes, they require the \emph{optimal} policy to be reactive, which is not true in POMDPs in general. In contrast, we can handle the important scenario where reactive policies can achieve good but not necessarily close to optimal performance and we are interested in finding the best such policy.

A POMDP consists of a MDP with finite state space $\cS$, action space $\cA$ and horizon $H$ where observed rewards at each step are drawn from a distribution with mean $r(s_h, a_h)$ that depends on the current state $s_h$ and action $a_h$. Similarly, the next state is drawn fron a transition kernel $P(s_{h+1} | s_h, a_h)$. However, in a POMDP, the current state is not observable and the agent instead receives an observation $x_h \in \cX$. We consider the formulation where the observation is drawn from a distribution $O(x_h | s_h)$ that depends on the current latent state $s_h$. Unlike in, e.g., Block MDP models, $x_h$ does not need to be sufficient to decode $s_h$ and this model does not need to be an MDP over the observation space $\cX$. As a consequence, the optimal actions do in general depend on \emph{all} previous observations.
Nonetheless, reactive policies which are of the form $\cX \rightarrow \cA$ and  only take the current observation into account, often achieve good performance and are of particular interest in practice due to their simplicity. 

Since a POMDP may not be a MDP over observations, such models are formally outside of our scope. However, as our technique never explicitly accesses observations except through the policy, we can cast a POMDP problem as follows in our framework.
For any policy $\pi \colon \cX \rightarrow \cA$ in our policy class $\Pi$ we define a stochastic policy $\pi'$ over latent states as $\pi'(a | s) = \sum_{x \in \cX} \indicator{\pi(x) = a} O(x | s)$ and denote the class of these policies by $\Pi' \subseteq \cS \rightarrow \Delta(\cA)$. Running our algorithms on a POMDP with policy class $\Pi$ is equivalent to running them on an MDP with direct access to latent states $\cS$ and policy class $\Pi'$.
Since an MDP with finite state space $\cS$ has rank at most $|\cS|$, our guarantees apply to POMDPs with a reactive policy class and we can set $d = |\cS|$. 

\paragraph{Exponential lower bounds for planning and offline RL.}

Several publications
\citep{wang2020statistical, zanette2020exponential, weisz2021exponential} recently provide exponential lower bounds for learning the optimal policy with access to a realizable linear Q-function class $\cF$ of dimension $d$ in several settings. Most related is \citet{wang2020statistical}, wo study offline RL where the agent has only access to a dataset of transition samples and show even if the dataset has good coverage of the features of $\cF$, a sample complexity that is exponential in $d$ or $H$ is unavoidable. In contrast, we allow the agent to collect samples arbitrarily by interacting with the MDP and although our algorithms first collect a dataset non-adaptively, the uniform action choices ensure good state coverage as opposed to just feature coverage which avoids the existing lower bounds. 

\clearpage  
\section{Cayley-Hamilton theorems} 

The following result holds for any matrix $A$ with rank $d$.

\begin{lemma}[Cayley-Hamilton Theorem for rank $d$ matrices  \citep{segercrantz1992improving}] 
\label{lem:cayley_hamilton_basic}  
Let $A \in \bbC^{N \times N}$ be a matrix with rank at most $d$, where $d \leq N$, and let $\lambda = \prn*{ \lambda_1, \ldots, \lambda_d} \in \bbC^d$ denote the set of eigenvalues of $A$. Then, $A$ satisfies the relation 
\begin{align*} 
   A^{d + 1} &= \sum_{k=1}^{d} (-1)^{k+1} \alpha_{k}(\lambda) A^{d + 1- k}, 
\end{align*} 
where the coefficient $\alpha_k(\lambda)$ are given by the sum of degree $k$ monomials:
\begin{align*} 
   \alpha_{k}(\lambda) = \sum_{x \in \crl{0, 1}^d \text{~s.t.~} \nrm{x}_1 = k}  \lambda_1^{x_1}\lambda_2^{x_2} \ldots \lambda_d^{x_d}.  
\end{align*} 
\end{lemma}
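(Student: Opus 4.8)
The plan is to reduce this $N\times N$ identity to the ordinary Cayley--Hamilton theorem applied to a $d\times d$ matrix, by passing through a (zero-padded) rank factorization of $A$. Since $\rank(A)\le d$, we may write $A = UW$ with $U\in\bbC^{N\times d}$ and $W\in\bbC^{d\times N}$: take any full-rank factorization $A=U_0W_0$ with $U_0\in\bbC^{N\times r}$, $W_0\in\bbC^{r\times N}$ where $r=\rank(A)$, and pad the factors with zero columns/rows so that the inner dimension is exactly $d$. Set $M \ldef WU \in \bbC^{d\times d}$. The elementary identity $A^{m} = (UW)^{m} = U(WU)^{m-1}W = U M^{m-1} W$ holds for every integer $m\ge 1$.

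Next, apply the classical Cayley--Hamilton theorem to the $d\times d$ matrix $M$. Writing its characteristic polynomial as $\chi_M(t) = \prod_{i=1}^d (t-\mu_i) = \sum_{k=0}^d (-1)^k \alpha_k(\mu)\,t^{d-k}$, where $\mu=(\mu_1,\dots,\mu_d)\in\bbC^d$ are the eigenvalues of $M$ and $\alpha_k$ is the $k$-th elementary symmetric polynomial --- which is exactly the sum of degree-$k$ square-free monomials in \pref{eq:alpha_k_coeffs_main} --- Cayley--Hamilton gives $M^{d} = \sum_{k=1}^d (-1)^{k+1}\alpha_k(\mu)\,M^{d-k}$. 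Multiplying this equation on the left by $U$ and on the right by $W$, and using $A^{m} = U M^{m-1} W$ on both sides (note $d+1-k\ge 1$ for $k\le d$), turns it into $A^{d+1} = \sum_{k=1}^d (-1)^{k+1}\alpha_k(\mu)\,A^{d+1-k}$.

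It remains to check that $\alpha_k(\mu) = \alpha_k(\lambda)$ for all $k$, i.e.\ that the multiset $\mu$ of eigenvalues of $WU$ equals the multiset $\lambda$ of ``eigenvalues of $A$'' --- its nonzero eigenvalues counted with algebraic multiplicity, padded with zeros to length $d$. This is Sylvester's determinant identity: for $t\neq 0$, $\det(tI_N - UW) = t^{N-d}\det(tI_d - WU)$, and hence $\chi_A(t) = t^{N-d}\chi_M(t)$ as polynomials. Therefore $\chi_A$ and $\chi_M$ have the same nonzero roots with the same multiplicities, and $\chi_M$ contributes precisely $d-(\#\text{ nonzero eigenvalues of }A)$ additional zero roots, which is exactly the zero-padding used to define $\lambda$. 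Thus $\alpha_k(\mu)=\alpha_k(\lambda)$, which yields the claimed relation. I expect the only slightly delicate point to be the bookkeeping in this last step --- making sure that the informal phrase ``the set of eigenvalues $\lambda\in\bbC^d$'' of a rank-$\le d$ matrix is interpreted consistently, and that Sylvester's identity matches that zero-padding exactly. An alternative route that avoids the factorization is to restrict $A$ to its invariant subspace $V=\mathrm{im}(A)$: choosing a complement makes $A$ block upper-triangular with the induced map on $\bbC^N/V$ equal to $0$ (since $\mathrm{im}(A)\subseteq V$), so $\chi_A(t)=t^{N-r}\chi_{A|_V}(t)$; applying Cayley--Hamilton to $A|_V$ and using $\mathrm{im}(A)=V$ gives $A\cdot p(A)=0$ with $p(t)=t^{d-r}\chi_{A|_V}(t)=\prod_{i=1}^d(t-\lambda_i)$, which is the same conclusion after expanding $p$ in the $\alpha_k$.
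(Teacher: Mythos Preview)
Your proof is correct. The paper itself does not give a detailed proof of this lemma: it is stated with a citation to \citet{segercrantz1992improving} and accompanied only by the one-line remark that ``the proof of the above follows from the characteristic polynomial for rank $d$ matrices, which allows us to express $d+1$-th power for any matrix $A$ in terms of the lower powers.'' Your argument supplies exactly that missing content, and does so cleanly: the rank factorization $A=UW$ together with $A^{m}=U(WU)^{m-1}W$ reduces the question to the classical Cayley--Hamilton theorem for the $d\times d$ matrix $M=WU$, and Sylvester's determinant identity $\det(tI_N-UW)=t^{N-d}\det(tI_d-WU)$ identifies the eigenvalue multiset $\mu$ of $M$ with the zero-padded eigenvalue multiset $\lambda$ of $A$. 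The alternative route you sketch at the end---restricting $A$ to $\mathrm{im}(A)$ and applying Cayley--Hamilton there---is essentially the argument in Segercrantz's paper and is closer in spirit to the paper's one-sentence hint. Both approaches are equivalent in difficulty; your factorization route has the minor advantage of making the eigenvalue bookkeeping (in particular the zero-padding that matches the paper's convention $\lambda\in\bbC^d$) explicit via Sylvester.
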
 

The proof of the above follows from the characteristic polynomial for rank $d$ matrices, which allows us to express $d + 1$-th power for any matrix $A$ in terms of the lower powers. 

We will soon provide an extension of the above result which allows us to express the $n$-th power of the matrix $A$ in terms of the lower powers. Before doing so, we need to define some additional notation.

\subsection{Coefficients $\alpha_{m, k}$} 

For any $m \geq 0$ and $k \geq 0$, we first define the coefficients $\alpha_{m, k}$. 
 
\begin{definition}
\label{def:alpha_definition} 
For any $k \geq 0$ and $\lambda = (\lambda_1, \ldots, \lambda_d) \in \bbC^{d}$, define $\alpha_{m, k}(\lambda)$ to denote the quantity 
\begin{align*} 
   \alpha_{m, k}(\lambda) \ldef{} \sum_{y \in \crl*{0, \ldots, m}^d}  \mathbbm{1}\crl[\Big]{\sum_{j=1}^d \indicator{y_j > 0} = k ~ {\text{and}} ~ \sum_{j=1}^{d} y_j = m}  \prod_{j=1}^d \lambda_j^{y_j}.  \numberthis \label{eq:alpha_defn_appx} 
\end{align*} 
whenever $m \geq k$ and $\alpha_{m, k} = 0$ when $m \leq k$ or $k > d$. Further, for the ease of notation, for any $k \in [d]$, we define $\alpha_k(\lambda)$ to denote the quantity $\alpha_{k, k}(\lambda)$. 
\end{definition} 

The following lemma provides a useful technical relation between the coefficients $\alpha_{m, k}$ defined above. 
\begin{lemma} 
\label{lem:alpha_recursive_relation} 
For any $m \geq 0$, $k \in [d]$ and $\lambda \in \bbC^d$, the quantities $\prn{\alpha_{m, k}}_{k \in [d], m \geq 0}$ given in \pref{def:alpha_definition} satisfy 
\begin{align*} 
\sum_{j=1}^{m \wedge d} \alpha_{m, j}(\lambda) \cdot \alpha_{k, k}(\lambda) &= \sum_{j'=k + 1}^{(m + k) \wedge d} {j' \choose k} \alpha_{m + k, j'}(\lambda). 
\end{align*}
\end{lemma}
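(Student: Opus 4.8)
The plan is to interpret each coefficient $\alpha_{m,j}(\lambda)$ combinatorially as a weighted sum over monomials in $\lambda_1,\dots,\lambda_d$ of total degree $m$ whose support has size $j$, and then to prove the identity by a double-counting argument on monomials of total degree $m+k$. Concretely, fix a monomial $\lambda^{z} = \prod_{j} \lambda_j^{z_j}$ with $\sum_j z_j = m+k$ and support size $\lvert \mathrm{supp}(z)\rvert = j'$; I will count, in two ways, the number of ordered pairs $(y, w)$ with $y + w = z$, $\sum_j y_j = m$, $\sum_j w_j = k$, and $\lvert \mathrm{supp}(w)\rvert = k$ (i.e.\ $w$ is squarefree, since $\alpha_{k,k}$ only involves $w \in \{0,1\}^d$ with $\|w\|_1 = k$). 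The left-hand side $\sum_{j=1}^{m\wedge d}\alpha_{m,j}(\lambda)\cdot\alpha_{k,k}(\lambda)$, when expanded, is exactly $\sum_{z}\big(\#\{(y,w) : y+w=z,\ \|y\|_1=m,\ w\in\{0,1\}^d,\ \|w\|_1=k\}\big)\lambda^z$, where the sum ranges over all $z$ with $\|z\|_1 = m+k$.

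The key combinatorial step is then: for a fixed $z$ with support size $j'$, count the decompositions $z = y+w$ with $w$ squarefree of weight $k$. Such a $w$ is determined by choosing a $k$-subset $S$ of $\mathrm{supp}(z)$ on which to subtract $1$; the resulting $y = z - \mathbbm{1}_S$ automatically has $\|y\|_1 = m$, and its support is $\mathrm{supp}(z)$ minus those indices of $S$ where $z$ equals $1$. For the count to match the right-hand side I will group the monomials $z$ by their support size $j'$: the number of squarefree weight-$k$ subsets $S \subseteq \mathrm{supp}(z)$ is $\binom{j'}{k}$, and since every $z$ with support size $j'$ and $\|z\|_1 = m+k$ contributes the same count $\binom{j'}{k}$, summing $\lambda^z$ over all such $z$ gives exactly $\alpha_{m+k, j'}(\lambda)$ — wait, it gives $\binom{j'}{k}\alpha_{m+k,j'}(\lambda)$ once we note $\alpha_{m+k,j'}$ is precisely $\sum_{z : \|z\|_1 = m+k,\ \lvert\mathrm{supp}(z)\rvert = j'}\lambda^z$. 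Summing over the admissible range of $j'$, namely $j' \geq k+1$ (since $y$ has positive weight $m \geq 1$, support must strictly exceed the $k$ indices removed... more carefully, $j' > k$ because after subtracting $\mathbbm{1}_S$ at least... I should double check this boundary) and $j' \leq (m+k)\wedge d$, yields the right-hand side $\sum_{j'=k+1}^{(m+k)\wedge d}\binom{j'}{k}\alpha_{m+k,j'}(\lambda)$.

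I expect the main obstacle to be getting the boundary of the summation index $j'$ exactly right, i.e.\ verifying that the lower limit is genuinely $k+1$ rather than $k$, and reconciling the edge cases where $\alpha_{m,k}$ is defined to be $0$ (when $m \leq k$ or $k > d$) with the combinatorial count — in particular checking that terms which ``should'' vanish by the support-size argument coincide with terms set to zero by \pref{def:alpha_definition}. The lower limit $j' = k+1$ arises because $y = z - \mathbbm{1}_S$ must have $\|y\|_1 = m \geq 1$, and moreover if $\mathrm{supp}(z) = S$ exactly (size $k$) then $z - \mathbbm{1}_S$ could still be nonzero only if some coordinate of $z$ in $S$ exceeded $1$, which would force $\|z\|_1 > k$; I will need to argue that such $z$ have support size $\leq k < k+1$ yet can have the right total degree, so they genuinely must be excluded or absorbed — this case analysis is the delicate part. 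Once the bookkeeping is pinned down, matching coefficients of $\lambda^z$ on both sides completes the proof; I would present it cleanly by first stating the monomial expansion of both sides and then invoking the subset-counting identity $\binom{j'}{k}$ as the single combinatorial fact doing the work.
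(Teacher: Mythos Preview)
Your approach is essentially the same as the paper's: both expand $\sum_{j}\alpha_{m,j}(\lambda)\cdot\alpha_{k,k}(\lambda)$ as a sum of monomials $\lambda^{z}$ with $\|z\|_1=m+k$, and count for each $z$ the number of decompositions $z=y+w$ with $w\in\{0,1\}^d$, $\|w\|_1=k$; this count is $\binom{j'}{k}$ where $j'=|\mathrm{supp}(z)|$, since $w=\mathbbm{1}_S$ for some $S\subseteq\mathrm{supp}(z)$ of size $k$.

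Your hesitation about the lower limit $j'=k+1$ versus $j'=k$ is not mere bookkeeping --- it points to an actual error in the stated identity. If $|\mathrm{supp}(z)|=k$ and $m\geq 1$, then taking $S=\mathrm{supp}(z)$ gives $y=z-\mathbbm{1}_S$ with $\|y\|_1=m\geq 1$ and support contained in $S$, so $\lambda^z$ \emph{does} appear on the left-hand side with coefficient $\binom{k}{k}=1$. Concretely, for $d=2$, $m=1$, $k=2$ the left-hand side is $(\lambda_1+\lambda_2)\lambda_1\lambda_2=\lambda_1^2\lambda_2+\lambda_1\lambda_2^2$, while the right-hand side as stated is the empty sum $\sum_{j'=3}^{2}(\cdots)=0$. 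The correct identity has lower limit $j'=k$:
\[
\sum_{j=1}^{m\wedge d}\alpha_{m,j}(\lambda)\cdot\alpha_{k,k}(\lambda)=\sum_{j'=k}^{(m+k)\wedge d}\binom{j'}{k}\alpha_{m+k,j'}(\lambda).
\]
The paper's own informal argument (``the total number of distinct variables can range from $k+1$ to $\min\{j+k,d\}$'') makes the same slip; in fact the range starts at $\max(j,k)$, which can equal $k$. Reassuringly, when the paper \emph{uses} this lemma in the induction step of \pref{lem:rel_beta_alpha}, it silently writes the sum starting at $j=k$, so the downstream results are unaffected. Your plan, carried out as you describe it, will naturally produce the correct lower limit --- so do not try to ``exclude or absorb'' the $j'=k$ terms; they are genuine.
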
 
\begin{proof} For the sake of the proof, we will be interpreting $\alpha_{m, j}(\lambda)$ and $\alpha_{k, k}(\lambda)$ as symmetric polynomials with $\lambda$ as the formal variables. The value of these quantities can be computed by plugging in the value of $\lambda_1, \ldots, \lambda_d$ for $\lambda$.  

Thus, $\alpha_{m, j}$ denotes a symmetric sum of monomials, where each monomial term has $j$ variables with sum of all the powers in that monomial being $m$. Similarly, $\alpha_{k, k}$ denotes a symmetric sum of monomials, where each monomial term has $k$ variables each with the power of $1$. Subsequently, when we take the product $\alpha_{m, j} \cdot \alpha_{k, k}$, we will get monomial terms, where in each term the sum of all the powers is $m + k$, but the total number of distinct variables can range from $k + 1$ to $\min\crl*{j + k, d}$. Since, the polynomials $\alpha_{m, j}$ and $\alpha_{k, k}$ are symmetric in $\lambda$, the resultant polynomial that we will get after taking their product will also be symmetric.  Furthermore, each of the monomial terms with $j'$ distinct variables can be generated through ${j' \choose k}$ different splits with $k$ variables that go into $\alpha_{k, k}(\lambda)$ and the rest $j' - k$ variables that go into $\alpha_{m, j'-k}(\lambda)$. Hence, the coefficient of $\alpha_{m + k, j'}$ would be exactly ${j' \choose k}$. We formalize this in the following: 

\begin{align*}
	\sum_{j=1}^{m \wedge d} \alpha_{m, j}(\lambda) \cdot \alpha_{k, k}(\lambda) &=  \sum_{j=1}^{m \wedge d} \Bigg( \sum_{y \geq 0} \mathbbm{1}\crl[\Big]{\sum_{i=1}^d \indicator{y_i > 0} = j \text{~and~} \sum_{i=1}^{d} y_i = m}  \prod_{i=1}^d \lambda_i^{y_i} ~ \times \\ & \qquad \qquad \qquad \qquad \sum_{y' \geq 0}  \mathbbm{1}\crl[\Big]{\sum_{i=1}^d \indicator{y'_i > 0} = k \text{~and~} \sum_{i=1}^{d} y'_i = k}  \prod_{i=1}^d \lambda_i^{y'_i} \Bigg)  \\
	&= \sum_{j=1}^{m \wedge d} \sum_{\substack{y, y' \geq 0}} \mathbbm{1}\crl[\Big]{\sum_{i=1}^d \indicator{y_i > 0} = j \wedge \sum_{i=1}^d \indicator{y'_i > 0} = k \wedge \sum_{i=1}^{d} y_i = m \wedge \sum_{i=1}^{d} y'_i = k}  \prod_{i=1}^d \lambda_i^{y_i + y'_i}  \\ 
	&= \sum_{j'=k+1}^{(m + k) \wedge d} {j' \choose k} \sum_{y, y' \geq 0} \mathbbm{1}\crl[\Big]{\sum_{i=1}^d \indicator{y_i + y'_i > 0} = j' \text{~and~} \sum_{i=1}^{d} y_i + y'_i = m + k} \prod_{i=1}^d \lambda_i^{y_i + y'_i} \\ 
	&= \sum_{j'=k+1}^{(m + k) \wedge d} {j' \choose k} \sum_{y'' \geq 0} \mathbbm{1}\crl[\Big]{\sum_{i=1}^d \indicator{y''_i > 0} = j' \text{~and~} \sum_{i=1}^{d} y''_i = m + k} \prod_{i=1}^d \lambda_i^{y''_i} \\
	&= \sum_{j'=k+1}^{(m + k) \wedge d} {j' \choose k} \cdot \alpha_{m + k, j'}, 
\end{align*} where $y''_i \ldef{} y_i + y'_i$ and the third equality in the above follows by rearranging the terms while satisfying the constraints inside the indicator. 
\end{proof} 

We next provide a bound on the value of $\alpha_{m, k}$ as a function of $m$ and $k$. 
\begin{lemma} 
\label{lem:alpha_independent_bound} 
For any $d \geq 1$, $m \geq 0$, $k \leq \min\crl*{d, m}$ and $\lambda \in \bbC^d$, that satisfies $\abs*{\lambda_{j}} \leq 1$ for all $j \in [d]$, the quantities $\alpha_{m, k}(\lambda)$ given in \pref{def:alpha_definition} satisfy the bound 
\begin{align*}
\abs*{\alpha_{m, k}(\lambda)} \leq \prn[\Big]{\frac{4 e \max\crl{m,  d}}{d}}^d. 
\end{align*} 
Furthermore, for $k= m \leq d$, we have that  $\alpha_{k}(\lambda) = \alpha_{k, k}(\lambda) \leq {4}^d$.     
\end{lemma}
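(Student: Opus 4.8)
The plan is to bound $\abs*{\alpha_{m,k}(\lambda)}$ by the number of monomials in its defining sum and then estimate that count with elementary binomial inequalities. Since $\abs*{\lambda_j} \le 1$ for every $j \in [d]$, each monomial $\prod_{j=1}^d \lambda_j^{y_j}$ has modulus at most $1$, so by the triangle inequality $\abs*{\alpha_{m,k}(\lambda)}$ is at most the number of tuples $y \in \crl*{0,\dots,m}^d$ with $\sum_j \indicator{y_j > 0} = k$ and $\sum_j y_j = m$. (If $k = 0$ there are no such tuples when $m \geq 1$ and $\alpha_{m,0}(\lambda) = 0$ by convention, so we may assume $k \ge 1$.)

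First I would count these tuples: such a $y$ is specified by choosing the $k$ nonzero coordinates --- $\binom{d}{k}$ ways --- and then placing positive integers on them that sum to $m$ --- a stars-and-bars count giving $\binom{m-1}{k-1}$ ways. Hence $\abs*{\alpha_{m,k}(\lambda)} \le \binom{d}{k}\binom{m-1}{k-1}$. Next I would bound each factor. Always $\binom{d}{k} \le 2^d$. For the second factor I would split on the size of $m$: if $m \le d$ then $\binom{m-1}{k-1} \le 2^{m-1} \le 2^{d}$, so the product is at most $4^d \le (4e)^d = \prn*{4e\max\crl*{m,d}/d}^d$; if $m > d$ then $\binom{m-1}{k-1} \le \binom{m}{k} \le (em/k)^k$, and since $k \mapsto k\ln(em/k)$ has derivative $\ln(m/k) > 0$ on $[1,d]$ (using $k \le d < m$), the quantity $(em/k)^k$ is nondecreasing in $k$ there and is thus bounded by $(em/d)^d$; combining gives a product at most $2^d (em/d)^d = (2em/d)^d \le \prn*{4e\max\crl*{m,d}/d}^d$. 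Either way this yields the claimed bound. The ``furthermore'' statement is then immediate: when $k = m \le d$ the only composition of $k$ into $k$ positive parts is the all-ones one, so $\alpha_k(\lambda) = \alpha_{k,k}(\lambda)$ is a sum of exactly $\binom{d}{k} \le 2^d \le 4^d$ terms, each of modulus at most $1$.

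I expect the only mildly delicate point to be the monotonicity argument used to remove the $k$-dependence of $(em/k)^k$; everything else is a direct count together with the textbook inequalities $\binom{d}{k} \le 2^d$ and $k! \ge (k/e)^k$ (the latter underlying $\binom{m}{k} \le (em/k)^k$).
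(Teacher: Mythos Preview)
Your proof is correct and follows essentially the same approach as the paper: both reduce to counting monomials via the triangle inequality, obtain the count $\binom{d}{k}\binom{m-1}{k-1}$, bound the first factor by $2^d$, and then split on $m \le d$ versus $m > d$, using the inequality $\binom{n}{r} \le (en/r)^r$ together with monotonicity of $k \mapsto (em/k)^k$ in the latter case. Your route through $\binom{m-1}{k-1} \le \binom{m}{k}$ before applying the binomial bound is a slight streamlining that even yields a marginally tighter constant than the paper's intermediate step, but the structure is the same.
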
 
\begin{proof}
Starting from the definition of $\alpha_{m, k}(\lambda)$, we note that
\begin{align*}
\abs*{\alpha_{m, k}(\lambda)} &= \abs[\Big]{  \sum_{y \in \crl*{0, \ldots, m}^d}  \mathbbm{1}\crl[\Big]{\sum_{j=1}^d \indicator{y_j > 0} = k ~ {\text{and}} ~ \sum_{j=1}^{d} y_j = m}  \prod_{j=1}^d \lambda_j^{y_j}} \\
&\leq  \sum_{y \in \crl*{0, \ldots, m}^d}  \mathbbm{1}\crl[\Big]{\sum_{j=1}^d \indicator{y_j > 0} = k ~ {\text{and}} ~ \sum_{j=1}^{d} y_j = m} \abs[\Big]{ \prod_{j=1}^d \lambda_j^{y_j}} \\ 
&=  \sum_{y \in \crl*{0, \ldots, m}^d}  \mathbbm{1}\crl[\Big]{\sum_{j=1}^d \indicator{y_j > 0} = k ~ {\text{and}} ~ \sum_{j=1}^{d} y_j = m} \prod_{j=1}^d \abs*{ \lambda_j^{y_j}} \\ 
&\leq  \sum_{y \in \crl*{0, \ldots, m}^d}  \mathbbm{1}\crl[\Big]{\sum_{j=1}^d \indicator{y_j > 0} = k ~ {\text{and}} ~ \sum_{j=1}^{d} y_j = m}, 
\end{align*}
where the inequality in the second line follows from an application of Triangle inequality. The last line holds because $|\lambda_j|\leq1$, and thus $|\prod \lambda_j^{y_j}|\leq 1$ for any $y$. We note that the right hand side in the above expression denotes the number of ways of distributing $m$ balls into $d$ bins such that exactly $k$ of them are non-empty. If $m = k = 1$, we get that $\abs*{\alpha_{m, k}(\lambda)} \leq 1$. Otherwise, a simple counting argument implies that 
\begin{align*}
\abs*{\alpha_{m, k}(\lambda)} &\leq {d \choose k} {m - 1 \choose k - 1} \leq 2^d \cdot {m - 1 \choose k - 1}. 
\end{align*} 
When $m \leq d$ or $k = 1$, we can simply upper bound the above as $$\abs*{\alpha_{m, k}(\lambda)} \leq 2^d \cdot 2^m \leq 4^d.$$  Next, when $m > d$ and $k\geq 2$, using the fact that ${N \choose n} \leq \prn*{e N /  n}^n$ for any $0 < n \leq N$, we get that 
\begin{align*} 
\abs*{\alpha_{m, k}(\lambda)} &\leq 2^d \cdot \prn*{ \frac{e(m - 1)}{(k -1)}}^k \\
&\leq 2^d \cdot \prn*{\frac{2 e m}{k}}^k \\
&\leq 2^d \cdot \prn*{\frac{2 e m}{d}}^d,  
\end{align*}  where the inequality in the second line above holds because $(m - 1)/ (k -1) \leq 2 m / k$ for $k \geq 2$, and the inequality in the last line holds because the function $\prn*{x/y}^y$ is an increasing function of $y$ when $x \geq e y$. 

Considering the above two bounds together implies that: 
\begin{align*}
\abs*{\alpha_{m, k}(\lambda)} &\leq  \prn[\Big]{\frac{4 e \max\crl{m,  d}}{d}}^d. 
\end{align*} 
\end{proof}

\subsection{Coefficients $\beta_{m, k}$} 
We next define the coefficients $\beta_{m,k}$ which will be useful in our upper bound analysis. 
\begin{definition} 
\label{def:P_beta} 
For any $d \geq 1$, $\lambda  \in \bbC^d$ and $m \geq 0$, define the vector $\beta_m(\lambda) \in \bbC^d$ using the following recursion:  
\begin{enumerate}[label=(\alph*)]
    \item $\beta_0(\lambda) \ldef{}  (\alpha_1(\lambda), \ldots, \alpha_d(\lambda))^\top$, and, 
    \item For $m \geq 1$,  define $\beta_{m}(\lambda) \ldef{} \prn*{\beta_{m, 1}(\lambda), \ldots, \beta_{m, d}(\lambda)}^\top$ as
\begin{align*} 
\beta_{m, k}(\lambda) &= \begin{cases} \beta_{m-1, 1}(\lambda) \cdot \alpha_k(\lambda) - \beta_{m-1, k+1}(\lambda) &\text{for}~ 1 \leq k \leq d -1 \\ 
\beta_{m-1, d}(\lambda) \cdot \alpha_d(\lambda) &\text{for}~ {k = d}
\end{cases}, 
\end{align*} 
\end{enumerate}
where $\alpha_k(\lambda)$ is as defined in \pref{eq:alpha_defn_appx} , and $\beta_{0, k}$ denotes the $k$-th coordinate of the vector $\beta_0$. 
\end{definition} 

The next technical lemma provides a relation between the $\beta$ and $\alpha$ values defined above. 
\begin{lemma}
\label{lem:rel_beta_alpha} 
 For any $m \geq 0$ and $k \in [d]$, 
	\begin{align*}
	\beta_{m, k}(\lambda) = \sum_{j=k}^{(m + k) \wedge d} {j - 1 \choose k -1} \alpha_{m + k, j}(\lambda). \numberthis \label{eq:bounded_coefficients1}  
\end{align*} 
\end{lemma}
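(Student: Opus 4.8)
The plan is to prove the identity by induction on $m$, using only the recursion defining $\beta_m$ in \pref{def:P_beta}, the algebraic identity of \pref{lem:alpha_recursive_relation}, and Pascal's rule $\binom{j}{k}-\binom{j-1}{k}=\binom{j-1}{k-1}$. The base case $m=0$ is immediate: the claimed right-hand side $\sum_{j=k}^{k\wedge d}\binom{j-1}{k-1}\alpha_{k,j}(\lambda)$ collapses to the single term $j=k$, namely $\binom{k-1}{k-1}\alpha_{k,k}(\lambda)=\alpha_k(\lambda)=\beta_{0,k}(\lambda)$ by the definition of $\beta_0$.

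For the inductive step, assume the identity holds at $m-1$ for every $k\in[d]$. Fix $k\le d-1$ and expand $\beta_{m,k}(\lambda)=\beta_{m-1,1}(\lambda)\,\alpha_k(\lambda)-\beta_{m-1,k+1}(\lambda)$ via \pref{def:P_beta}. By the induction hypothesis at $(m-1,1)$ one has $\beta_{m-1,1}(\lambda)=\sum_{j=1}^{m\wedge d}\alpha_{m,j}(\lambda)$ (the coefficient $\binom{j-1}{0}$ is $1$), so the product $\beta_{m-1,1}(\lambda)\,\alpha_k(\lambda)$ is exactly the left-hand side of \pref{lem:alpha_recursive_relation} (using $\alpha_k=\alpha_{k,k}$); that lemma rewrites it as a weighted sum of the $\alpha_{m+k,j'}(\lambda)$ with weights $\binom{j'}{k}$ over the relevant range of $j'$. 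The induction hypothesis at $(m-1,k+1)$ expresses $\beta_{m-1,k+1}(\lambda)$ as a weighted sum of the $\alpha_{m+k,j}(\lambda)$ with weights $\binom{j-1}{k}$ over a matching range. Subtracting, every $\alpha_{m+k,j}(\lambda)$ with $j\ge k+1$ picks up the combined coefficient $\binom{j}{k}-\binom{j-1}{k}=\binom{j-1}{k-1}$ by Pascal's rule, while the smallest index contributes its term with coefficient $\binom{k-1}{k-1}=1$; collecting these gives precisely $\sum_{j=k}^{(m+k)\wedge d}\binom{j-1}{k-1}\alpha_{m+k,j}(\lambda)$. The boundary case $k=d$ is treated identically starting from the $k=d$ branch of \pref{def:P_beta} (equivalently, from $\beta_{m-1,1}(\lambda)\alpha_d(\lambda)-\beta_{m-1,d+1}(\lambda)$ with the convention $\beta_{m-1,d+1}:=0$): \pref{lem:alpha_recursive_relation} with $k=d$ collapses the sum to the single term $j'=d$, which matches the right-hand side.

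I expect the only real difficulty to be bookkeeping rather than any substantive estimate: one must carefully track the $\wedge d$ truncations appearing in all the summation limits, confirm that the index ranges produced by \pref{lem:alpha_recursive_relation} and by the induction hypothesis applied to $\beta_{m-1,k+1}$ line up so that Pascal's rule can be applied term by term, and check that the lowest index (and the distinction between the regimes $m+k\le d$ and $m+k>d$) contributes exactly the leftover term. With that accounting in place, the proof is a routine symmetric-function computation with no analytic content.
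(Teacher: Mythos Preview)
Your proposal is correct and follows essentially the same approach as the paper's proof: induction on $m$, the base case via the definition of $\beta_0$, and the inductive step by expanding $\beta_{m,k}$ through \pref{def:P_beta}, applying the induction hypothesis at $(m-1,1)$ and $(m-1,k+1)$, invoking \pref{lem:alpha_recursive_relation} to rewrite the product, and then collapsing the difference via Pascal's rule. The paper handles the $k=d$ boundary implicitly by the same convention $\beta_{m-1,d+1}=0$ you mention, so your treatment of that case is also aligned.
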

\begin{proof}
We prove the desired relation via induction over $m$. For the base case, when $m = 0$, from the definition of $\beta_{0, k }$, we note that 
\begin{align*}
\beta_{0, k } &= \alpha_{k, k}(\lambda) = \sum_{j=k}^{k} {k - 1 \choose k-1} \alpha_{k, j}(\lambda). 
\end{align*}
Now, we proceed to the induction step. Assume that the relation \pref{eq:bounded_coefficients1} holds for all $m' < m$. Thus, for any $k \in [d]$, from the definition of $\beta_{m, k}(\lambda)$, we have that 
\begin{align*} 
\beta_{m, k}(\lambda) &= \beta_{m-1, 1}(\lambda) \cdot \beta_{0, k}(\lambda) - \beta_{m-1, k+1}(\lambda) \\ 
&= \prn[\Big]{\sum_{j=1}^{m \wedge d} \alpha_{m, j}(\lambda)} \cdot \alpha_{k, k}(\lambda)  - \sum_{j=k+1}^{(m + k) \wedge d} {j - 1 \choose k} \cdot \alpha_{m + k, j}(\lambda), 
\end{align*} 
where the equality in the second line follows from using the relation \pref{eq:bounded_coefficients1} for time step $m-1$. Using the identity in \pref{lem:alpha_recursive_relation} in the above, we get that 
\begin{align*}
\beta_{m, k}(\lambda) &= \sum_{j=k}^{(m + k) \wedge d} {j \choose k} \cdot \alpha_{m + k, j}(\lambda)  - \sum_{j=k+1}^{(m + k) \wedge d} {j - 1 \choose k} \cdot \alpha_{m + k, j}(\lambda) \\ 
&= \sum_{j=k}^{(m + k) \wedge d} {j - 1\choose k - 1} \cdot \alpha_{m + k, j}(\lambda), 
\end{align*}
where the last line uses the relation ${j \choose k} = {j - 1 \choose k -1 } + {j - 1 \choose k}$. This completes the induction step. Thus, proving that the relation \pref{eq:bounded_coefficients1} holds for all $m \geq 0$ and $k \in [d]$. 	
\end{proof}

We next provide a bound on the value of the coefficients $\beta_{m, k}$ as a function of $m$ and $k$. 

\begin{lemma}  
\label{lem:bounded_coefficients} 
For any $d \geq 1$, $m \geq 0$, $k \leq d$ and $\lambda \in \bbC^d$, such that $\abs*{\lambda_{j}} \leq 1$ for all $j \in [d]$, the quantities $\beta_{m, k}(\lambda)$ defined in \pref{def:P_beta} satisfy the bound
\begin{align*}
\abs{\beta_{m, k}(\lambda)} \leq \prn[\Big]{\frac{8 e \max\crl{m + k,  d}}{d}}^d. 
\end{align*}

\end{lemma}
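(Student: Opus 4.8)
The plan is to bound $\beta_{m,k}$ directly through its closed-form expansion in the $\alpha$ coefficients, combining \pref{lem:rel_beta_alpha} with the magnitude estimate from \pref{lem:alpha_independent_bound}.

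First I would apply \pref{lem:rel_beta_alpha}, which expresses
\begin{align*}
\beta_{m,k}(\lambda) = \sum_{j=k}^{(m+k)\wedge d} \binom{j-1}{k-1}\, \alpha_{m+k,j}(\lambda),
\end{align*}
and then take absolute values and use the triangle inequality to get $\abs{\beta_{m,k}(\lambda)} \leq \sum_{j=k}^{(m+k)\wedge d} \binom{j-1}{k-1}\,\abs{\alpha_{m+k,j}(\lambda)}$. Every index $j$ in this range satisfies $k \leq j \leq \min\crl{m+k,d}$, so \pref{lem:alpha_independent_bound} applies with its ``$m$'' taken to be $m+k$ and its ``$k$'' taken to be $j$; this gives $\abs{\alpha_{m+k,j}(\lambda)} \leq \prn{4e\max\crl{m+k,d}/d}^d$ for every such $j$, a bound independent of $j$ that can therefore be pulled outside the sum.

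It then remains to estimate $\sum_{j=k}^{(m+k)\wedge d}\binom{j-1}{k-1}$. Reindexing by $i = j-1$, $r = k-1$ and invoking the hockey-stick identity $\sum_{i=r}^{N-1}\binom{i}{r} = \binom{N}{r+1}$, this sum equals $\binom{(m+k)\wedge d}{k}$, which is at most $\binom{d}{k} \leq 2^d$ since $(m+k)\wedge d \leq d$; note the sum is nonempty precisely because $k \leq d$. Putting the pieces together yields
\begin{align*}
\abs{\beta_{m,k}(\lambda)} \leq 2^d\cdot\prn[\Big]{\frac{4e\max\crl{m+k,d}}{d}}^d = \prn[\Big]{\frac{8e\max\crl{m+k,d}}{d}}^d,
\end{align*}
which is exactly the claimed bound.

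There is no genuine obstacle in this argument; the only places needing care are (i) checking that the summation range from \pref{lem:rel_beta_alpha} keeps each $j$ inside the validity range $j \leq \min\crl{m+k,d}$ required by \pref{lem:alpha_independent_bound}, and (ii) the index bookkeeping in the hockey-stick identity. Both are routine. An alternative, slightly more self-contained route would be to prove the bound by induction on $m$ directly from the recursion in \pref{def:P_beta}, using $\abs{\alpha_k(\lambda)} \leq 4^d$ at each step, but the closed-form expansion makes the constants transparent and avoids tracking the recursion.
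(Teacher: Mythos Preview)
Your proof is correct and essentially identical to the paper's: both invoke \pref{lem:rel_beta_alpha}, bound each $\abs{\alpha_{m+k,j}}$ via \pref{lem:alpha_independent_bound}, and collapse the remaining binomial sum to at most $\binom{d}{k}\leq 2^d$ using the hockey-stick identity. The only cosmetic difference is that the paper first extends the summation range to $j\leq d$ before applying the identity, whereas you apply it directly and then bound $\binom{(m+k)\wedge d}{k}\leq\binom{d}{k}$.
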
 
\begin{proof} As a consequence of \pref{lem:rel_beta_alpha}, we have that for any $m \geq 0$ and $k \in [d]$, 
\begin{align*}
	\beta_{m, k}(\lambda) = \sum_{j=k}^{(m + k) \wedge d} {j - 1 \choose k -1} \cdot \alpha_{m + k, j}(\lambda). 
\end{align*} Thus, using Triangle inequality, we have that
\begin{align*} 
	\abs{\beta_{m, k}} &= \abs[\Big]{ \sum_{j=k}^{(m + k) \wedge d}  {j - 1 \choose k -1} \cdot  \alpha_{m + k, j}(\lambda)} \\ 
	&\leq \sum_{j=k}^{(m + k) \wedge d} {j - 1 \choose k -1} \cdot \abs{\alpha_{m + k, j}(\lambda)}.  
\end{align*}
Plugging in the bound on $\abs{\alpha_{m+k, j}(\lambda)}$ from  \pref{lem:alpha_independent_bound} in the above, we get that 
\begin{align*}
		\abs{\beta_{m, k}}  &\leq  \sum_{j=k}^{d} {j - 1 \choose k -1} \cdot \prn[\Big]{\frac{4 e \max\crl{m + k,  d}}{d}}^d \\ 
		&\overleq{\proman{1}} {d \choose k} \cdot \prn[\Big]{\frac{4 e \max\crl{m + k,  d}}{d}}^d \\ 
		&\overleq{\proman{2}} \prn[\Big]{\frac{8 e \max\crl{m + k,  d}}{d}}^d, 
\end{align*} where the inequality in $\proman{1}$ is given by the fact that any $N$ and $n$, we have $\sum_{j=n}^{N} {j \choose n} = {N + 1 \choose n + 1}$, and the inequality in $\proman{2}$ holds because for any $k \leq d$, ${d \choose k} \leq 2^d$. 
\end{proof}

\subsection{Extension of the Cayley-Hamilton theorem} 

The following result is an extension of the Cayley-Hamilton theorem (\pref{lem:cayley_hamilton_basic}) for rank $d$ matrices, and relies on the coefficients $\beta_{m, k}$ defined above.  
\begin{lemma}[Cayley-Hamilton Theorem extension]   
\label{lem:cayley_hamilton_extension} 
Let $A \in \bbC^{N \times N}$ be a matrix with rank at most $d$, where $d \leq N$, and let $\lambda = \prn*{ \lambda_1, \ldots, \lambda_d} \in \bbC^d$ denote the set of eigenvalues of $A$. Then, for any $m \geq 0$, 
\begin{align*}
	A^{d + m + 1} &= \sum_{k=1}^{d} (-1)^{k+1} \beta_{m, k}(\lambda) A^{d + 1- k} \numberthis \label{eq:CH_beta_relation} 
\end{align*}
where the coefficients vector $\beta_m(\lambda) \ldef{} \prn*{\beta_{m, 1}(\lambda), \ldots, \beta_{m, k}(\lambda)}$ are given in \pref{def:P_beta}. 
\end{lemma}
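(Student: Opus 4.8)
The plan is to prove the identity \pref{eq:CH_beta_relation} by induction on $m \geq 0$, bootstrapping from the rank-$d$ Cayley--Hamilton theorem (\pref{lem:cayley_hamilton_basic}) together with the defining recursion for the coefficients $\beta_{m,k}$ in \pref{def:P_beta}.

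\textbf{Base case ($m=0$).} By \pref{def:P_beta}(a) we have $\beta_{0,k}(\lambda) = \alpha_k(\lambda)$ for every $k \in [d]$, so the claimed relation reads $A^{d+1} = \sum_{k=1}^d (-1)^{k+1}\alpha_k(\lambda)\, A^{d+1-k}$, which is precisely \pref{lem:cayley_hamilton_basic}.

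\textbf{Inductive step.} Assume \pref{eq:CH_beta_relation} holds for $m-1$, i.e. $A^{d+m} = \sum_{k=1}^d (-1)^{k+1}\beta_{m-1,k}(\lambda)\, A^{d+1-k}$. Multiplying both sides by $A$ gives $A^{d+m+1} = \sum_{k=1}^d (-1)^{k+1}\beta_{m-1,k}(\lambda)\, A^{d+2-k}$. The only term whose exponent exceeds $d$ is the $k=1$ term $\beta_{m-1,1}(\lambda)\, A^{d+1}$; I rewrite $A^{d+1}$ using the base-case identity, so this term becomes $\beta_{m-1,1}(\lambda)\sum_{j=1}^d (-1)^{j+1}\alpha_j(\lambda)\, A^{d+1-j}$. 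For the terms with $k = 2,\dots,d$ I re-index by $\ell = k-1$, which converts $A^{d+2-k}$ into $A^{d+1-\ell}$ with $\ell$ ranging over $1,\dots,d-1$ and flips the sign (since $(-1)^{\ell+2} = -(-1)^{\ell+1}$), so those terms contribute $-\sum_{\ell=1}^{d-1}(-1)^{\ell+1}\beta_{m-1,\ell+1}(\lambda)\, A^{d+1-\ell}$.

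\textbf{Collecting coefficients.} Matching the coefficient of $(-1)^{k+1} A^{d+1-k}$: for $1 \leq k \leq d-1$ it equals $\beta_{m-1,1}(\lambda)\alpha_k(\lambda) - \beta_{m-1,k+1}(\lambda)$, and for $k = d$ it equals $\beta_{m-1,1}(\lambda)\alpha_d(\lambda)$ (the re-indexed sum contributes nothing to the $k=d$ slot since $\ell \leq d-1$). By \pref{def:P_beta}(b) these are exactly $\beta_{m,k}(\lambda)$, which establishes \pref{eq:CH_beta_relation} for $m$ and closes the induction. There is no real obstacle; the only care needed is in the sign bookkeeping of the re-indexing and in treating the $k=d$ boundary of the $\beta$-recursion separately. (As an aside, one could instead derive this non-inductively by plugging the closed form of \pref{lem:rel_beta_alpha} into a direct expansion of $A^{d+m+1}$, but the induction above is shorter and self-contained.)
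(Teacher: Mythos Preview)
Your proof is correct and follows essentially the same induction as the paper's: both establish the base case via \pref{lem:cayley_hamilton_basic}, multiply the inductive hypothesis by $A$, re-expand the resulting $A^{d+1}$ term, and then identify the collected coefficients with the recursion of \pref{def:P_beta}. One minor remark: for $k=d$ you (like the paper's own proof) obtain $\beta_{m-1,1}(\lambda)\alpha_d(\lambda)$, whereas \pref{def:P_beta}(b) as written has $\beta_{m-1,d}(\lambda)\alpha_d(\lambda)$; this is evidently a typo in the definition (the $k=d$ case should read $\beta_{m-1,1}$, consistent with setting $\beta_{m-1,d+1}=0$), and the paper treats it the same way you do.
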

\begin{proof} 

We give a proof by induction over $m$. For the base case, when $m = 0$,  \pref{lem:cayley_hamilton_basic} implies that 
\begin{align*} 
   A^{d + 1} &= \sum_{k=1}^{d} (-1)^{k+1} \alpha_{k}(\lambda) A^{d + 1- k} = \sum_{k=1}^{d} (-1)^{k+1} \beta_{0, k}(\lambda) A^{d + 1- k}, \numberthis \label{eq:CH_beta_relation1} 
\end{align*} where the second equality follows form the definition of the vector $\beta_0(\lambda)$. We next prove the induction step. 

Assume that the relation \pref{eq:CH_beta_relation} holds for all $m' < m$. We note that 
\begin{align*}	
A^{d +1+m} &= A A^{d + 1 + (m - 1)} \\ 
				    &\overeq{\proman{1}} A \prn[\Big]{ \sum_{k=1}^{d} (-1)^{k+1} \beta_{m-1, k}(\lambda) \cdot A^{d + 1- k}}  \\ 
					  &= \beta_{m-1, 1}(\lambda) A^{d+1}  + \sum_{k=2}^{n} (-1)^{k+1} \beta_{m-1, k}(\lambda) \cdot A^{d + 2 - k}  \\ 
					  &= \beta_{m-1, 1}(\lambda) A^{d+1}  + \sum_{k=1}^{n - 1} (-1)^{k} \beta_{m-1, k + 1}(\lambda) \cdot A^{d + 1 - k}. 
\end{align*} 
where the equality in $\proman{1}$ following from using the relation \pref{eq:CH_beta_relation} for time step $m - 1$. Plugging in the expansion for $A^{d+1}$  from  \pref{eq:CH_beta_relation1} in the above, we get that 
\begin{align*}
A^{d+ 1 + m} &= \beta_{m-1, 1}(\lambda) \prn[\Big]{\sum_{k=1}^{d} (-1)^{k+1} \beta_{0, k} (\lambda)\cdot A^{d + 1  - k} }  + \sum_{k=1}^{d - 1} (-1)^{k} \beta_{m-1, k + 1}(\lambda) \cdot A^{d + 1- k} \\ 
   					&= \sum_{k=1}^{d} (-1)^{k+1} \prn*{\beta_{m-1, 1}(\lambda) \cdot \beta_{0, k}(\lambda) - \beta_{m-1, k+1}(\lambda) } A^{d +1 - k}.  \numberthis \label{eq:CH_beta_relation2} 
\end{align*} where in the second line, we defined $\beta_{m-1, d + 1} = 0$. We next note that for any $ k \in [d]$, 
\begin{align*}
	\beta_{m-1, 1}(\lambda) \cdot \beta_{0, k}(\lambda) - \beta_{m-1, k+1}(\lambda) &= \beta_{m-1, 1}(\lambda) \cdot \alpha_{k}(\lambda) - \beta_{m-1, k+1}(\lambda) \\ 
&= \beta_{m, k}(\lambda), 
\end{align*} where the second line above follows from the definition of $\beta_{m, k}$. Using this relation in \pref{eq:CH_beta_relation1}, we get that 
\begin{align*}
A^{d +1+m} &=  \sum_{k=1}^{d} (-1)^{k+1} \beta_{m, k}(\lambda) A^{d + 1- k},
\end{align*} hence completing the induction step. Thus, the relation \pref{eq:CH_beta_relation} holds for all $m \geq 0$. 
\end{proof} 

\section{Missing proofs from \pref{sec:basic_upper_bounds}}   
\subsection{Proof of \pref{lem:basic_recurrence_relation}} \label{app:basic_recurrence_relation} 
\autoregressionlemma*

\begin{proof} For any time step $h \geq 1$, let $\mu\ind{\pi}_{h}$ denote the distribution over the observation space $\cX$ at time step $h$ when starting from the initial distribution $\mu_0$ and taking actions according to the policy $\pi$. Using the definition of the transition matrix $T \ind{\pi}$, we note that 
\begin{align*}
	\mu\ind{\pi}_{h} = T\ind{\pi} \mu\ind{\pi}_{h-1},  \numberthis \label{eq:basic_recurrence}
\end{align*}
where $\mu\ind{\pi}_0$ is defined as $\mu_0$. Further, let $\nu\ind{\pi} \in \bbR^\cX$ denotes  the vector of expected rewards under policy $\pi$ on the observation space, i.e., for any observation $x \in \cX$, 
\begin{align*}
\nu\ind{\pi}(x) \ldef{}  \En_{a \sim \pi(x)} \brk*{r(s, a)}. 
\end{align*}
Thus, for any $h \leq H$, the expected reward $R\ind{\pi}_h$ is given by the expression
\begin{align*}
	R\ind{\pi}_h &= \tri{\nu\ind{\pi}, \mu_{h} \ind{\pi} } 
	                    = \tri{\nu\ind{\pi}, \prn{T \ind{\pi}}^{d+1} \mu_{h- d - 1} \ind{\pi} }, \numberthis \label{eq:basic_recurrence1}
\end{align*} 
where the second equality follows from recursively using the relation \pref{eq:basic_recurrence}. Using the Cayley-Hamilton theorem (\pref{lem:cayley_hamilton_basic}) for the matrix $T\ind{\pi}$, with rank at most $d$, we get that
\begin{align*}
	\prn{T\ind{\pi}}^{d +1} &= \sum_{k=1}^d (-1)^{k+1} \alpha_k(\lambda) \prn{T\ind{\pi}}^{d +1 - k}, 
\end{align*}
where $\lambda = \prn*{\lambda\ind{\pi}_1, \ldots, \lambda\ind{\pi}_d}$ denotes the set of eigenvalues of $T\ind{\pi}$. Plugging the above in relation \pref{eq:basic_recurrence1} for $h \geq d + 1$, we get that 
\begin{align*}
	R\ind{\pi}_h &= \tri{\nu\ind{\pi}, {\sum_{k=1}^d \alpha_k(\lambda) \prn{T\ind{\pi}}^{d + 1 - k} } \mu_{h- d - 1} \ind{\pi} } \\
	&=  \sum_{k=1}^d (-1)^{k+1} \alpha_k(\lambda)  \tri{\nu\ind{\pi},  \mu_{h- k} \ind{\pi} } = \sum_{k=1}^d (-1)^{k+1} \alpha_k(\lambda) R\ind{\pi}_{h - k }, 
\end{align*} 
where the last equality follows from plugging back the expression for $R\ind{\pi}_{h - k }$ from \pref{eq:basic_recurrence1}. 
\end{proof} 

\subsection{Proof of \pref{lem:ss_ub}} 
The following result provides an upper bound on the error in our estimates for the expected reward for any policy $\pi \in \Pi$.  

\ISbound*
\begin{proof} 
First fix any $h \in [3d]$ and $\pi \in \Pi$. The expected policy reward estimate is given by 

\begin{align*}
   \wh R\ind{\pi}_h &= \frac{1}{n} \sum_{i =1}^n r_h\ind{t} \prod_{h' \leq h} \prn*{K\indicator{\pi(x\ind{t}_{h'}) = a_{h'}\ind{t}}}
\end{align*}
Clearly, $\wh R\ind{\pi}_h$ is an unbiased estimate of $R\ind{\pi}_h$ as 
\begin{align*}
\bbE\ind{\bar \pi} 
    \brk{\wh R\ind{\pi}_h} &= \frac{1}{n} \sum_{t=1}^n \En\ind{\bar{\pi}} \brk[\Big]{ r_h\ind{t} \prod_{h' \leq h} \prn*{K\indicator{\pi(x\ind{t}_{h'}) = a_{h'}\ind{t}}}} \\ 
    &= \frac{1}{n} \sum_{t=1}^n \bbE\ind{\bar \pi} 
    \brk[\Big]{ r_h\ind{t} \prod_{h' \leq h} \frac{\pi(a\ind{t}_{h'} | x\ind{t}_{h'})}{\bar \pi(a\ind{t}_{h'} | x\ind{t}_{h'})}
    } \\
    &= \frac{1}{n} \sum_{t=1}^n \bbE\ind{\pi} 
    \brk{r_h^t
    } = \frac{1}{n} \sum_{t=1}^n R\ind{\pi}_h = R\ind{\pi}_h,
\end{align*}
where $\bar \pi$ denotes the stochastic policy that picks actions uniformly at random and is used to draw the trajectory $(x_h\ind{t}, a_h\ind{t}, r_h\ind{t})_{h=1}^H$ for $t \in [n]$. The equality in the second line above follows from the definition of $\bar \pi$ and the last line follows by a change of measure to the case where the trajectories are sampled using the policy $\pi$. We next consider the second moment of each individual term in the estimator 
\begin{align*}
    \bbE\ind{\bar \pi} 
    \brk[\Big]{
    (r_h^t)^2 \prod_{h' \leq h} \prn*{K\indicator{\pi(x\ind{t}_{h'}) = a_{h'}\ind{t}}}^2 
    }
    &\overset{\proman{1}}{\leq} K^{2h} \prod_{h' \leq h} \bbP\ind{\bar \pi} \prn*{\pi(x^t_{h'}) = a^t_{h'}} \\
    &\overset{\proman{2}}=K^{2h} \cdot \frac{1}{K^h} = K^h,
\end{align*}
where the inequality $\proman{1}$ uses that $0 \leq r^i_h \leq 1$, and the inequality $\proman{2}$ holds because $\bar \pi$ draws actions uniformly at random which implies that $
\bbP\ind{\bar \pi}\prn*{a^i_{h'} = \pi(x^i_{h'}) \mid x^i_{h'}} = 1/ K$. Therefore the variance for the $t$th sample, 
\begin{align*}
 \bbV\ind{\bar \pi}\brk[\Big]{ r_h\ind{t} \prod_{h' \leq h} \prn*{K\indicator{\pi(x\ind{t}_{h'}) = a_{h'}\ind{t}}}} \leq K^h.
\end{align*}

Since all episodes are i.i.d., an application of Bernstein's inequality implies that with probability at least $1 - \delta$
\begin{align*}
 \abs*{\wh R\ind{\pi}_h - R\ind{\pi}_h}
 &\leq \sqrt{\frac{2 \bbV\ind{\bar \pi}\brk*{K^h \indicator{a^i_{1:h} = \pi(a^i)_{1:h}} r_h^i} \log(2 / \delta)}{n}} + \frac{4K^h}{3}\frac{\log(2 / \delta)}{n}\\
 & \leq 
 \sqrt{\frac{2 K^h \log(2 / \delta)}{n}} + \frac{2K^h\log(2 / \delta)}{n}. 
\end{align*}

Taking a union bound, we get that with probability at least $1 - \delta$, for all $h \in [3d]$ and $\pi \in \Pi$, 
\begin{align*}
 \abs*{\wh R\ind{\pi}_h - R\ind{\pi}_h}
 &\leq  \sqrt{\frac{2 K^{3d} \log(6 d \abs*{\Pi} / \delta)}{n}} + \frac{2K^{3d}\log(6 d \abs*{\Pi} / \delta)}{n}. 
\end{align*}

\end{proof} 

\subsection{Proof of \pref{lem:basic_ep_bound}}  \label{app:appendix_basic_ep_bound} 
Before providing the proof of \pref{lem:basic_ep_bound}, we first introduce the matrix $P(\lambda)$ that depends on the eigenvalues $\lambda \in \bbC^d$, and establish a technical result about the eigenspectrum of $P(\lambda)$. 

\begin{definition} \label{def:P_defn_app} 
 For any $d \geq 1$, $\lambda = (\lambda_1, \ldots, \lambda_d) \in \bbC^d$, define the matrix $\transfer(\lambda) \in \bbC^{d \times d}$ such that 
\begin{align*}
	\brk{\transfer(\lambda)}_{i, k} &= \begin{cases}
		(-1)^{k+1} \alpha_{k}(\lambda) & \text{when} ~ i = 1 ~ \text{and} ~ 1 \leq k \leq d \\ 
		1 & \text{when} ~ 2 \leq i \leq d - 1 ~ \text{and~}  k = i - 1 \\ 
		0 & \text{otherwise}  
	\end{cases}, 
\end{align*} 
where the value of $\alpha_k(\lambda)$ is given in \pref{def:alpha_definition}. 
\end{definition}

The following technical result considers the eigenspectrum of the matrix $P(\lambda)$. 

\begin{lemma}
\label{lem:P_eigenvalues} 
For any $\lambda \in \bbC^d$,  the eigenvalues of the matrix $\transfer(\lambda)$ are given by $\prn*{\lambda_1, \ldots, \lambda_k}$. 
\end{lemma}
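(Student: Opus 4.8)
The plan is to recognize $\transfer(\lambda)$ as (the transpose of) a \emph{companion matrix} and compute its characteristic polynomial directly. From \pref{def:P_defn_app}, $\transfer(\lambda)$ has only one nontrivial row — the first, with entries $(-1)^{k+1}\alpha_k(\lambda)$ for $k=1,\dots,d$ — while rows $2,\dots,d$ form a shift block: row $i$ carries a single $1$ in column $i-1$. Hence $xI_d-\transfer(\lambda)$ is lower bidiagonal except for its first row, and I would compute $\chi(x)\ldef\det\!\big(xI_d-\transfer(\lambda)\big)$ by Laplace expansion along the first column, whose only nonzero entries are $x-\alpha_1(\lambda)$ in position $(1,1)$ and $-1$ in position $(2,1)$. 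Expanding and recursing on $d$ (the minor after deleting row $1$ and column $1$ is $x^{d-1}$; the minor after deleting row $2$ and column $1$ reproduces the $(d-1)$-dimensional instance with the coefficient list shifted by one), or equivalently citing the standard fact that the companion matrix of a monic polynomial $p$ has characteristic polynomial $p$, yields
\begin{align*}
\chi(x) \;=\; x^d - \sum_{k=1}^d (-1)^{k+1}\alpha_k(\lambda)\,x^{d-k} \;=\; x^d + \sum_{k=1}^d (-1)^{k}\alpha_k(\lambda)\,x^{d-k}.
\end{align*}

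Next I would identify the coefficients with elementary symmetric polynomials: by \pref{eq:alpha_k_coeffs_main}, $\alpha_k(\lambda)=\sum_{x\in\{0,1\}^d,\ \nrm*{x}_1=k}\lambda_1^{x_1}\cdots\lambda_d^{x_d}=e_k(\lambda_1,\dots,\lambda_d)$, the $k$-th elementary symmetric polynomial. Vieta's formulas then give
\begin{align*}
\prod_{i=1}^d (x-\lambda_i) \;=\; \sum_{k=0}^d (-1)^k e_k(\lambda)\,x^{d-k} \;=\; x^d + \sum_{k=1}^d (-1)^k \alpha_k(\lambda)\,x^{d-k} \;=\; \chi(x).
\end{align*}
Therefore $\det\!\big(xI_d-\transfer(\lambda)\big)=\prod_{i=1}^d(x-\lambda_i)$, and the eigenvalues of $\transfer(\lambda)$ are exactly $\lambda_1,\dots,\lambda_d$, as claimed.

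The computation is essentially routine; the one place requiring genuine care is the cofactor expansion producing $\chi(x)$ — tracking the alternating signs that come both from the $(-1)^{k+1}$ in the first row of $\transfer(\lambda)$ and from the cofactor signs of the Laplace expansion, and getting the inductive step right. Everything else is bookkeeping plus the identity $\alpha_k=e_k$ and Vieta. (I would also flag that the index range in the shift block of \pref{def:P_defn_app} should read $2\le i\le d$, not $2\le i\le d-1$: otherwise row $d$ of $\transfer(\lambda)$ vanishes, forcing $\rank\transfer(\lambda)\le d-1$, which is inconsistent with the conclusion unless some $\lambda_i=0$.)
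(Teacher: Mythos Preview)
Your argument is correct and essentially the same as the paper's: both compute $\det(zI_d-\transfer(\lambda))$ directly (the paper expands along the first \emph{row} rather than the first column), obtain $z^d+\sum_{k=1}^d(-1)^k\alpha_k(\lambda)z^{d-k}$, and then factor it as $\prod_{k=1}^d(z-\lambda_k)$ via the identification $\alpha_k=e_k$. Your side remark about the index range in \pref{def:P_defn_app} is also on point---the displayed matrix in the paper's proof (with $-1$'s all the way down the subdiagonal) makes clear the intended definition is $2\le i\le d$.
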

\begin{proof} 
For the ease of notation, define $\alpha_{k}$ to denote $\alpha_{k, k}(\lambda)$ for $k \in [d]$. We start by computing the characteristic polynomial of the matrix $\transfer(\lambda)$, which is given by 
\begin{align*} 
\det(z I- \transfer(\lambda)) &= \det{ \begin{bmatrix}  (z - \alpha_{1}) & \alpha_{2}  & - \alpha_{3} & \cdots & (-1)^{d} \alpha_{d} \\  
- 1  & z & 0 & \cdots & 0 \\ 
0 & -1 & z & \cdots & 0 \\ 
& \vdots & & \cdots & \vdots & \\ 
0 & 0 & 0 & \cdots & z \\ 
 \end{bmatrix}}. 
\end{align*}

Computing the determinant by expansing along the first row, we get that 
\begin{align*}
	\det(z I- \transfer(\lambda)) &= (z - \alpha_{1})z^{d-1} + \sum_{k=2}^{d} (-1)^{k+1} \cdot ( (-1)^{k} \alpha_{k}) \cdot (-1)^{k-1} \cdot z^{d-k} \\ 
	&= z^d - \alpha_{1} z^{d-1} + \alpha_{k} z^{d-2} + \ldots + (-1)^{d} \alpha_{d}.  
\end{align*}

Using the definition of $\alpha_k$ from \pref{def:alpha_definition}, we can factorize the above polynomial as
\begin{align*}
	\det(z I- \transfer(\lambda))  &= \prod_{k=1}^d (z - \lambda_k). 
\end{align*}

Since, the eigenvalues of any matrix are given by the roots of its characteristic polynomial, the above computation shows that the eigenvalues of the matrix $\transfer(\lambda)$ are given by $( \lambda_1, \ldots, \lambda_d)$. 
\end{proof}

The following structural lemma shows that for any autoregression with coefficients $(\alpha_1(\lambda), \ldots, \alpha_k(\lambda))$, the $(m + d)$-th term can be expressed using the $m$-th power of the matrix $P(\lambda)$. Recall that the expected rewards for any policy satisfy such an autoregression whenever the underlying MDP has low rank (see \pref{lem:basic_recurrence_relation}).

\begin{lemma}
\label{lem:recurrence_extension}
Let $\lambda \in \bbC^d$ and $R_1, \ldots, R_d \in \bbR$. For any $h \geq d + 1$, let $R_h$ be given by 
\begin{align*} 
R_h = \sum_{k=1}^{d} (-1)^{k+1} \alpha_k(\lambda) R_{h - k}, \numberthis \label{eq:recurrence_extension1}
\end{align*}
where the coefficient $\alpha_k(\lambda)$ are defined in \pref{def:alpha_definition}. Then, for any $m \geq 0$,  
\begin{align*}
R_{m + d} = \tri*{U, P(\lambda)^{m} V} \numberthis \label{eq:recurrence_extension}
\end{align*} 
where the vector $U \ldef{} (1, 0, \ldots, 0)^\top \in \bbR^d$, the vector $V \ldef{} \prn{R_d, R_{d-1}, \ldots, R_1}^\top \in \bbR^d$ and the matrix $P(\lambda) \in \bbC^d$ is defined in \pref{def:P_beta}. 
\end{lemma}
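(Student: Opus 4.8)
I would prove this by induction on $m \ge 0$, establishing the stronger statement that
\begin{align*}
P(\lambda)^{m} V = \prn{R_{m+d},\, R_{m+d-1},\, \ldots,\, R_{m+1}}^\top ,
\end{align*}
from which \pref{eq:recurrence_extension} is immediate: since $U = (1,0,\ldots,0)^\top$, the inner product $\tri*{U, P(\lambda)^{m} V}$ just extracts the first coordinate of $P(\lambda)^m V$, which is $R_{m+d}$. The only ingredient beyond the recurrence \pref{eq:recurrence_extension1} is the explicit shape of $P(\lambda)$ from \pref{def:P_defn_app} (equivalently, the matrix displayed in the proof of \pref{lem:P_eigenvalues}): its first row is $\prn{\alpha_1(\lambda), -\alpha_2(\lambda), \alpha_3(\lambda), \ldots, (-1)^{d+1}\alpha_d(\lambda)}$, and for $2 \le i \le d$ its $i$-th row has a single nonzero entry, a $1$ in column $i-1$. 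So left-multiplication by the bottom $d-1$ rows of $P(\lambda)$ performs a downward shift of the state vector, while the top row applies the autoregression coefficients.

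For the base case $m = 0$ one has $P(\lambda)^0 V = V = \prn{R_d, R_{d-1}, \ldots, R_1}^\top$ by the definition of $V$, matching the claim. For the inductive step, assuming $P(\lambda)^m V = W$ with $W \ldef{} \prn{R_{m+d}, R_{m+d-1}, \ldots, R_{m+1}}^\top$, i.e.\ $W_j = R_{m+d+1-j}$ for $j \in [d]$, I would write $P(\lambda)^{m+1} V = P(\lambda) W$ and read off its coordinates. The first coordinate is $\sum_{k=1}^{d} (-1)^{k+1}\alpha_k(\lambda) W_k = \sum_{k=1}^{d}(-1)^{k+1}\alpha_k(\lambda) R_{m+d+1-k}$, which equals $R_{m+d+1}$ by \pref{eq:recurrence_extension1} (valid since $m+d+1 \ge d+1$). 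For $2 \le i \le d$, the downward-shift structure gives $(P(\lambda) W)_i = W_{i-1} = R_{m+d+2-i}$. Hence $P(\lambda)^{m+1} V = \prn{R_{m+d+1}, R_{m+d}, \ldots, R_{m+2}}^\top = \prn{R_{(m+1)+d}, R_{(m+1)+d-1}, \ldots, R_{(m+1)+1}}^\top$, which is the claim at $m+1$; this closes the induction, and evaluating $\tri*{U, P(\lambda)^{m}V}$ yields \pref{eq:recurrence_extension}.

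I do not expect a genuine obstacle here: the lemma is a routine check that $P(\lambda)$ is the (signed) companion matrix of the order-$d$ recurrence \pref{eq:recurrence_extension1}. The two points that call for mild care are (i) carrying the whole vector $P(\lambda)^m V$ through the induction rather than only its first coordinate, since the shift rows of $P(\lambda)$ need the lower entries of the state vector, and (ii) keeping the alternating signs $(-1)^{k+1}$ and the index shifts consistent, so that $W_{i-1} = R_{m+d+2-i}$ lines up with the required $R_{(m+1)+d+1-i}$. As a consistency check, this realization has characteristic polynomial $\prod_{k=1}^{d}(z-\lambda_k)$ by \pref{lem:P_eigenvalues}, matching the order-$d$ recurrence whose associated polynomial has roots $\lambda_1, \ldots, \lambda_d$ via Cayley--Hamilton (\pref{lem:cayley_hamilton_basic}).
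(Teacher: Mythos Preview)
Your proposal is correct and follows essentially the same approach as the paper: the paper defines the state vector $U_h \ldef (R_h, R_{h-1}, \ldots, R_{h-d+1})^\top$, verifies the one-step relation $U_h = P(\lambda) U_{h-1}$ from the shift-plus-recurrence structure of $P(\lambda)$, iterates to obtain $U_{m+d} = P(\lambda)^m U_d = P(\lambda)^m V$, and then extracts the first coordinate. Your explicit induction on $m$ for the identity $P(\lambda)^m V = (R_{m+d}, \ldots, R_{m+1})^\top$ is exactly this argument written out, and your handling of the sign pattern and the index shifts is clean.
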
 
\begin{proof} For any $h \geq d$, define the vector $U_h \in \bbR^d$ such that
\begin{align*}
	U_h \ldef{} \prn*{R_h, R_{h-1}, \ldots, R_{h - d + 1}}^\top.
\end{align*} 
We first note that for any $j \in [d]$ such that $j \geq 2$, 
\begin{align*}
U_h[j] = R_{h - (j) - 1} = U_{h-1}[j - 1]. 
\end{align*}
Further, using the recurrence relation \pref{eq:recurrence_extension1}, we get that for any $h \geq d + 1$, 
\begin{align*}
U_h[1] = R_{h} = \sum_{k=1}^{d} (-1)^{k+1} \alpha_k(\lambda) R_{h - k} = \sum_{k=1}^{d} (-1)^{k+1} \alpha_k(\lambda) U_{h - 1}[k].   
\end{align*}
The above two relation imply that for any $h \geq d + 1$, 
\begin{align*}
U_{h} = P(\lambda) U_{h-1}, \numberthis \label{eq:recurrence_extension3}
\end{align*}
where the matrix $P(\lambda)$ is defined such that
\begin{align*}
	\brk{\transfer(\lambda)}_{j, k} &= \begin{cases}
		(-1)^{k+1} \alpha_{k}  & \text{when} ~ j = 1 ~ \text{and} ~ 1 \leq k \leq d \\ 
		1 & \text{when} ~ 2 \leq j \leq d ~ \text{and~}  k = j + 1 \\ 
		0 & \text{otherwise} 
	\end{cases}. 
\end{align*}
Setting $h = d + m $ in  relation \pref{eq:recurrence_extension3}, we get that for any $m$
\begin{align*}
	U_{m + d} = P(\lambda) U_{m - 1 + d} = \cdots = P(\lambda)^{m} U_d. 
\end{align*} 
Finally, we note that for any $m \geq 0$, 
\begin{align*} 
R_{m + d} = \tri*{V, U_{m + d}} = \tri*{V, P(\lambda)^m U_d}, 
\end{align*} where the vector $V = \prn*{1, 0, \ldots, 0} \in \bbR^d$ and the vector $U_d = \prn*{R_d, \ldots, R_1}^\top \in \bbR^d$.   
\end{proof}

We are finally ready to prove  \pref{lem:basic_ep_bound}. The following proof is based on the  extension of the Cayley-Hamilton theorem for rank $d$ matrices (see \pref{lem:cayley_hamilton_extension}) and uses \pref{lem:recurrence_extension}.  

\firstEPbound* 

\begin{proof} Using \pref{lem:recurrence_extension} for the sequences $\crl{R_{h}}$ and $\crl{\wt R_{h}}$ respectively, we get that for any $m \geq 0$, 
\begin{align*}
R_{d + m} = \tri{U, \transfer(\lambda)^m V} \qquad \text{and} \qquad 
\wt R_{d + m} = \tri{\wt U, \transfer(\wh \lambda)^m \wt V}, 
\end{align*}
where the matrices $\transfer(\lambda), \transfer(\wh \lambda) \in \bbR^{d \times d}$ are defined according to \pref{def:P_beta} and the vectors $U, \wt U, V, \wt V \in \bbR^d$ are independent of $\lambda$ and $m$. Thus, for any $m \geq 0$, 
\begin{align*} 
\abs{R_{m + d} - \wt R_{m + d}} &=  \abs{\tri{U, \transfer(\lambda)^m V} - \tri{\wt U, \transfer(\wh \lambda)^m V}} \\ 
&= \abs{\tri{ \bar{U}, \bar{\transfer}^m \bar{V}}},  \numberthis \label{eq:basic_ep_bound3} 
\end{align*} where the vectors $\bar{U}, \bar{V} \in \bbR^{2d}$ and the block diagonal matrix $\bar{P} \in \bbR^{2d \times 2d}$ are defined as 
\begin{align*}
\bar{V} \ldef{} \begin{bmatrix} V \\ - \wt V \end{bmatrix}, \quad \bar{U} \ldef{} \begin{bmatrix} U  \\ - \wt U \end{bmatrix} \quad \text{and} \quad \bar{P} \ldef{} \begin{bmatrix} \transfer(\lambda) & 0 \\ 
0 & \transfer(\wh \lambda) 
\end{bmatrix}. 
\end{align*} 
An application of \pref{lem:P_eigenvalues} implies that the eigenvalues of the matrix $P(\lambda)$ and the matrix $P(\wh \lambda)$ are given by $\lambda$ and $\wh \lambda$ respectively. Since the matrix $\bar{P}$ is block-diagonal, we note that the set of eigenvalues of the matrix $\bar{P}$ is given by $\bar{\lambda} = \prn{\lambda_1, \wh \lambda_1, \ldots, \lambda_d, \wh \lambda_d}$. Note that the vector $\bar{\lambda}$ is not sorted except for the first two coordinates, however $\abs{\bar{\lambda}_{k}} \leq 1$ for all $k \in [2d]$. Using \pref{lem:cayley_hamilton_extension} for $2d \times 2d$ matrix $\bar{P}$, we get that for any   $m \geq 2d + 1$, 
\begin{align*}
	\bar{P}^{2d + m + 1} = \sum_{k=1}^{2d} \beta_{m, k} (\bar{\lambda}) \cdot  \bar{P}^{2d + 1 - k}. 
\end{align*} 
Using the above relation with \pref{eq:basic_ep_bound3} and setting $m = h - 3d - 1$, we get that for any $h \geq 3d + 1$,
\begin{align*} 
\abs{R_{h} - \wt R_{h}}  = \abs*{\tri{ \bar{U}, \bar{\transfer}^{h -d} \bar{V}}} 
&= \abs[\Big]{\tri[\big]{\bar{U}, \sum_{k=1}^{2d} \beta_{h - 3d - 1, k}(\bar{\lambda}) \cdot \bar{P}^{2d + 1 - k} \bar{V} }} \\ 
&= \abs[\Big]{\sum_{k=1}^{2d} \tri{\bar{U},  \beta_{h - 3d - 1, k}(\bar{\lambda}) \cdot \bar{P}^{2d + 1 - k} \bar{V}}}. 
\end{align*}  
Using the triangle inequality on the right-hand side in the above, we obtain:
\begin{align*}
\abs{R_{h} - \wt R_{h}}  
& \leq \sum_{k=1}^{2d} \abs*{\beta_{h - 3d - 1, k} (\bar{\lambda})} \cdot \abs{ \tri*{\bar{U}, \bar{P}^{2d + 1 - k} \bar{V}}} \\ 
&\overeq{\proman{1}} \sum_{k=1}^{2d} \abs{\beta_{h - 3d - 1, k} (\bar{\lambda})} \cdot \abs{ R_{3d + 1 - k} - \wt R_{3d + 1 - k}} \\ 
&\overleq{\proman{2}}  2d \cdot  \prn[\Big]{\frac{4 e \max\crl{h - 3d - 1 + k,  2d}}{d}}^{2d} \cdot \max_{h' \leq 3d} ~ \abs{ R_{h'} - \wt R_{h'}} \\
&\leq  2d \cdot  \prn[\Big]{\frac{16 e \max\crl{h,  d}}{d}}^{2d} \cdot \max_{h' \leq 3d} ~ \abs{ R_{h'} - \wt R_{h'}}, \\
&= 2d \cdot  \prn[\Big]{\frac{16 e h}{d}}^{2d} \cdot \max_{h' \leq 3d} ~ \abs{ R_{h'} - \wt R_{h'}},
\end{align*}
where the equality in $\proman{1}$ holds due to relation \pref{eq:basic_ep_bound3} and the inequality $\proman{2}$ is given by the bound on $\abs{\beta_{h - 3d - 1, k} (\bar{\lambda})}$ from \pref{lem:bounded_coefficients}. The last line is due to the fact that $h > 3d$. 
\end{proof} 

\subsection{Supporting technical results for the proof of \pref{thm:basic_sample_complexity_bound}}   
\begin{lemma}
\label{lem:initial_conditions}  
Let $\lambda \in \bbC^d$ be such that $\abs{\lambda_k} \leq 1$ for all $k \in [d]$. Using the initial values $R_1, \ldots, R_{d}$, let $R_{h}$ be defined as 
\begin{align*} 
R_{h} \ldef{} \sum_{k=1}^d (-1)^{k+1} \alpha_k(\lambda) R_{h-k}.  \numberthis \label{eq:initial_error_bound1} 
\end{align*} 
Further, let  $\wh R_1, \wh R_2, \ldots, \wh R_{3d}$  denote the estimates for $R_1, \ldots, R_{3d}$ respectively, such that 
\begin{align*}
\max_{h \leq 3d} ~ \abs{\wh R_{h} - R_{h}} \leq \eta. \numberthis \label{eq:initial_error_bound2}    
\end{align*}
Then, 
\begin{enumerate}[label=(\alph*), leftmargin=8mm]
\item The optimization problem \pref{eq:optimization_problem} in \pref{alg:value_estimation_extrapolation} has a solution $(\wh \lambda, \wh \Delta)$ such that $$\abs{\wh \Delta} \leq 2d \cdot 4^d \cdot \eta.$$    
\item Further, let $\wt R_{h}$ be predictions according to \pref{line:reward_prediction1} in  \pref{alg:value_estimation_extrapolation} using the solution $\wh \lambda$. Then, 
\begin{align*}
\max_{h \leq 3d} ~ \abs{\wt R_h - R_h} \leq 2d \cdot \prn{64 e}^{d} \cdot \eta. 
\end{align*}
\end{enumerate} 
\end{lemma}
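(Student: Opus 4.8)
The plan is to verify the two parts separately, using the observation that the ground-truth eigenvalues $\lambda$ themselves form a feasible point for the optimization problem \pref{eq:optimization_problem}, and then tracking how the estimation error $\eta$ propagates through the constraints and through one application of the error-propagation bound. For part (a), I would first plug $\lambda = \wh\lambda$ into the constraint in \pref{eq:optimization_problem}: by \pref{lem:basic_recurrence_relation}'s recurrence \pref{eq:initial_error_bound1}, for each $h \in \{d+1,\dots,3d\}$ we have $R_h = \sum_{k=1}^d (-1)^{k+1}\alpha_k(\lambda) R_{h-k}$ exactly, so
\begin{align*}
\abs[\Big]{\sum_{k=1}^d (-1)^{k+1}\alpha_k(\lambda)\wh R_{h-k} - \wh R_h}
&\leq \sum_{k=1}^d \abs{\alpha_k(\lambda)}\,\abs{\wh R_{h-k} - R_{h-k}} + \abs{\wh R_h - R_h}.
\end{align*}
Using \pref{eq:initial_error_bound2} to bound each $\abs{\wh R_{h-k}-R_{h-k}}$ and $\abs{\wh R_h - R_h}$ by $\eta$, and the bound $\abs{\alpha_k(\lambda)} = \abs{\alpha_{k,k}(\lambda)} \leq 4^d$ from \pref{lem:alpha_independent_bound}, gives that the left-hand side is at most $(d\cdot 4^d + 1)\eta \leq 2d\cdot 4^d\cdot\eta$. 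Hence $(\lambda, 2d\cdot 4^d\cdot\eta)$ is feasible for \pref{eq:optimization_problem}, and since $(\wh\lambda,\wh\Delta)$ is the minimizer of $\Delta$, we get $\wh\Delta \leq 2d\cdot 4^d\cdot\eta$. (Also $\abs{\wh\lambda_k}\leq 1$ by feasibility, which is needed for part (b).)

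For part (b), the predicted rewards $\wt R_h$ satisfy $\wt R_h = \wh R_h$ for $h \leq d$ and $\wt R_h = \sum_{k=1}^d(-1)^{k+1}\alpha_k(\wh\lambda)\wt R_{h-k}$ for $h > d$. I want to control $\abs{\wt R_h - R_h}$ for $h \leq 3d$. For $h \leq d$ this is just $\abs{\wh R_h - R_h} \leq \eta$. For $d+1 \leq h \leq 3d$ I would unfold the recursion for $\wt R_h$ step by step and compare against the recursion \pref{eq:initial_error_bound1} for $R_h$ — but with two sources of discrepancy: the coefficients differ ($\alpha_k(\wh\lambda)$ vs $\alpha_k(\lambda)$) and the seed values $\wt R_{h-k}$ differ from $R_{h-k}$. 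The cleanest route: by feasibility of $\wh\lambda$, $\abs{\sum_k(-1)^{k+1}\alpha_k(\wh\lambda)\wh R_{h-k} - \wh R_h}\leq \wh\Delta \leq 2d\cdot 4^d\eta$ for $d+1 \leq h \leq 3d$, and $\abs{\wh R_h - R_h}\leq\eta$; then I would argue inductively on $h$ from $d+1$ to $3d$ that $\abs{\wt R_h - R_h}$ stays bounded, using $\wt R_h - R_h = (\wt R_h - \wh R_h) + (\wh R_h - R_h)$ and expanding $\wt R_h - \wh R_h = \sum_k (-1)^{k+1}\alpha_k(\wh\lambda)(\wt R_{h-k} - \wh R_{h-k}) + (\sum_k(-1)^{k+1}\alpha_k(\wh\lambda)\wh R_{h-k} - \wh R_h)$. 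Each induction step multiplies the accumulated error by at most $\sum_k\abs{\alpha_k(\wh\lambda)}\leq d\cdot 4^d$ and adds $O(d\cdot 4^d\eta)$; over at most $2d$ steps this yields a bound of the form $(d\cdot 4^d)^{O(d)}\eta$, which I would then massage into the stated $2d\cdot(64e)^d\cdot\eta$.

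The main obstacle I anticipate is getting the constant in part (b) down to exactly $2d(64e)^d$ rather than something like $(4^d)^{2d}$: a naive geometric accumulation over $2d$ steps gives a bound that is doubly exponential in $d$, which is far worse than claimed. The resolution must be to avoid step-by-step accumulation and instead invoke the structural machinery already developed — i.e., express $\wt R_h$ and $R_h$ (or rather $\wh R_h$) via the companion matrices $P(\wh\lambda)$, $P(\lambda)$ as in \pref{lem:recurrence_extension}, form the block-diagonal matrix $\bar P$, and apply the Cayley–Hamilton extension \pref{lem:cayley_hamilton_extension} together with the coefficient bound $\abs{\beta_{m,k}(\bar\lambda)} \leq (8e\max\{m+k,d\}/d)^d$ from \pref{lem:bounded_coefficients}, exactly as in the proof of \pref{lem:basic_ep_bound} but now for the range $2d+1 \leq h \leq 3d$ (so $m = h - 2d - 1 \leq d$, giving $\max\{m+k,2d\} \leq 4d$ and thus $\abs{\beta_{m,k}}\leq(16e)^{2d}$ — wait, need to recheck, more likely the range is set up so the bound is $(32e)^d$ or similar). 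This reduces the error at steps $2d{+}1,\dots,3d$ to a single multiplicative factor times $\max_{h'\le 2d}\abs{\wt R_{h'}-R_{h'}}$, and combined with the $h\le 2d$ estimates (handled by direct unfolding over only $d$ steps, contributing the $4^d$-type factors) yields the claimed $2d(64e)^d\eta$ after collecting constants. I would need to double-check the exact index bookkeeping so the factor lands on $(64e)^d$ and not $(16e)^{2d}$; these differ, so the lemma's proof presumably uses a slightly different decomposition than the one in \pref{lem:basic_ep_bound}, possibly bounding the first $2d$ terms crudely and only the last $d$ via the matrix argument with $m\le d$.
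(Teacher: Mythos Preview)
Your part~(a) is correct and matches the paper's argument exactly.

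For part~(b), you correctly isolate the right object: writing $Z_m \ldef \wt R_{d+m} - \wh R_{d+m}$, your expansion gives the inhomogeneous recursion
\[
Z_m \;=\; \sum_{k=1}^d (-1)^{k+1}\alpha_k(\wh\lambda)\,Z_{m-k} \;+\; \wh\Delta_m,
\qquad Z_0 = Z_{-1} = \cdots = 0,
\]
with forcing $\wh\Delta_m \ldef \sum_k (-1)^{k+1}\alpha_k(\wh\lambda)\wh R_{d+m-k} - \wh R_{d+m}$ bounded by $\wh\Delta \le 2d\cdot 4^d\eta$ from part~(a). You also correctly diagnose that naive step-by-step accumulation is too lossy. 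But your proposed fix does not work. The block-diagonal matrix $\bar P = \diag(P(\lambda),P(\wh\lambda))$ is $2d\times 2d$, so the Cayley--Hamilton extension (\pref{lem:cayley_hamilton_extension}) only expresses $\bar P^{n}$ for $n \ge 2d+1$ in terms of $\bar P^1,\dots,\bar P^{2d}$; translated back to $h = d + n$, it bounds $\abs{\wt R_h - R_h}$ for $h \ge 3d+1$ in terms of $\max_{h'\le 3d}\abs{\wt R_{h'}-R_{h'}}$, which is exactly what you are trying to bound---circular. Your fallback of handling $d+1 \le h \le 2d$ by ``direct unfolding over only $d$ steps'' still multiplies by $\sum_k\abs{\alpha_k(\wh\lambda)}\le d\cdot 4^d$ at each of up to $d$ steps, so it remains doubly exponential in $d$.

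The paper avoids the block-diagonal embedding entirely. It stays with the \emph{single} $d$-dimensional recursion in $\wh\lambda$ and exploits that $Z_m$ has \emph{zero initial conditions}: one proves by induction on $m$ (using only the recursive definition of $\beta_{m,k}$ in \pref{def:P_beta}) the explicit convolution formula
\[
Z_m \;=\; \wh\Delta_m \;+\; \sum_{i=1}^{m-1}\beta_{i-1,1}(\wh\lambda)\,\wh\Delta_{m-i}.
\]
This is just the discrete Duhamel/variation-of-parameters solution of the forced recursion, with $\beta_{\cdot,1}(\wh\lambda)$ playing the role of the impulse response. Now \pref{lem:bounded_coefficients} gives $\abs{\beta_{i-1,1}(\wh\lambda)} \le (8e\max\{i,d\}/d)^d \le (16e)^d$ for $i \le m \le 2d$, so $\abs{Z_m}$ is at most a sum of $m\le 2d$ terms each of size $\le (16e)^d\cdot\wh\Delta$; combining with $\wh\Delta \le 2d\cdot 4^d\eta$ and the extra $+\eta$ from $\abs{\wh R_{d+m}-R_{d+m}}$ yields the claimed $2d\cdot(64e)^d\eta$. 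The missing idea in your proposal is precisely this closed-form solution: it replaces the $2d$-fold product you feared with a single sum of $2d$ terms, each controlled by one $\beta_{i,1}$ coefficient.
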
 
\begin{proof} We prove the two parts separately below. 

\begin{enumerate}[label=$(\alph*)$, leftmargin=8mm] 
\item We first show that the optimization problem in \pref{eq:optimization_problem} is feasible. Specifically, we show that there exists a tuple $(\lambda', \Delta')$ that satisfies all the constraints in \pref{eq:optimization_problem} such that $\abs{\Delta'} \leq 2d 4^d \eta$. Set $\lambda' = \lambda$. We note that $\abs{\lambda'_1} = 1$ and  $\abs{\lambda'_k} \leq 1$ for all $k \leq d$ and thus all the constraints in \pref{eq:optimization_problem} are satisfied. Furthermore, for any $h \leq 3d$, 
\begin{align*}
\abs{\sum_{k=1}^d (-1)^{k+1} \alpha_{k}(\lambda) \wh R_{h - k} - \wh R_{h}} &\overeq{\proman{1}}   \abs{\sum_{k=1}^d (-1)^{k+1} \alpha_{k}(\lambda) \prn{\wh R_{h - k} - R_{h-k}} - (\wh R_{h} - R_h)} \\ 
&\overleq{\proman{2}}  \sum_{k=1}^d  \abs{\alpha_{k}(\lambda)} \cdot \abs{\wh R_{h - k} - R_{h-k}} + \abs{\wh R_{h} - R_h}  \\
&\overleq{\proman{3}} d \cdot 4^d \cdot \eta + \eta \\
&\leq 2d \cdot 4^d \cdot \eta \numberthis \label{eq:initial_error_bound3}   
\end{align*} where the equality $\proman{1}$ follows from the relation \pref{eq:initial_error_bound1} and the inequality $\proman{2}$ follows from Triangle inequality. The inequality $\proman{3}$ follows by plugging in the bound from \pref{lem:alpha_independent_bound} for $\abs{\alpha_k(\lambda)}$ and using the bound in  $\pref{eq:initial_error_bound2}$. The above implies that $\abs{\Delta'} \leq 2d 4^d \eta$. 

Thus, any solution $(\wh \lambda, \wh \Delta)$ of the optimization problem  in \pref{eq:optimization_problem} must satisfy  
\begin{align*}
\abs{\wh \Delta} \leq 2d \cdot 4^d \cdot \eta. \numberthis \label{eq:initial_error_bound9}   
\end{align*}

\item Let us first define some additional notation. For any $m \leq 2d$, define $\Delta_m$ as the error for $m$th expected reward when plugging in the minimizer solution $\wh \lambda$, i.e.,
\begin{align*}
\wh \Delta_m \ldef{} \sum_{k=1}^d (-1)^{k+1} \alpha_{k}(\wh \lambda) \cdot \wh R_{d + m - k} - \wh R_{d + m}. \numberthis \label{eq:initial_error_bound0}
\end{align*} 
Further, define $Z_m$ as the error in our prediction for the expected reward at $(m + d)$th time step, i.e. 
\begin{align*} 
Z_{m} &\ldef{} \wt R_{m + d} - \wh R_{m + d}.    \numberthis \label{eq:initial_error_bound3}
\end{align*} 

In the following, we will show that for all $m \geq 1$, 
\begin{align*} 
Z_m = \wh \Delta_m + \sum_{i=1}^{m - 1} \beta_{i - 1, 1}(\wh \lambda) \cdot \wh \Delta_{m - i}  \numberthis \label{eq:initial_error_bound}, 
\end{align*} where the coefficients $\beta_{i-1, 1}$ are given in \pref{def:P_beta}. 

Our desired bound follows as a direct consequence of \pref{eq:initial_error_bound}. For any $1 \leq m \leq 2d$,  
\begin{align*}
\abs{\wt R_{m + d} - R_{m + d}}  &\leq \abs{\wt R_{m + d} - \wh R_{m + d}} + \abs{\wh R_{m + d} - R_{m + d}} \\ 
&\overleq{\proman{1}}  \abs{ \wh \Delta_m + \sum_{i=1}^{m - 1} \beta_{i - 1, 1}(\wh \lambda) \cdot \wh \Delta_{m - i} } + \eta \\ 
&\overleq{\proman{2}} \abs{\wh \Delta_m} + \sum_{i=1}^{m-1} \abs{\beta_{i-1, 1} (\wh \lambda)} \abs{\wh \Delta_{m-i}} + \eta \\
&\overleq{\proman{3}} \abs{\wh \Delta} + \sum_{i=1}^{m-1} \abs{\beta_{i-1, 1} (\wh \lambda)} \abs{\wh \Delta} + \eta \\ 
&\overleq{\proman{4}} 2d \cdot 4^d \cdot \prn[\Big]{\frac{8e \max \crl{m, d}}{d}}^d \cdot \eta \\
&\leq   2d \cdot \prn{64 e}^{d} \cdot \eta, 
\end{align*} 
where the inequality $\proman{1}$ follows from the definition of $Z_m$ in \pref{eq:initial_error_bound3} and by using the bound in \pref{eq:initial_error_bound2}, and the inequality $\proman{2}$ above is due to Triangle inequality. The inequality $\proman{3}$ above follows by using the fact that $\abs{\wh \Delta_m} \leq \abs{\wh \Delta}$ for all $m \leq 2d$. Finally, the inequality $\proman{4}$ follows by plugging in the bound in \pref{eq:initial_error_bound9} and by using \pref{lem:bounded_coefficients} to bound $\abs{\beta_{i-1, 1}(\wh \lambda)}$.  
\end{enumerate}

\paragraph{Proof of relation \pref{eq:initial_error_bound}.} We prove this by induction over $m$. For the base case, when $m = 1$, 
\begin{align*} 
Z_{d + 1} &= \wt R_{d + 1}  - \wh R_{d + 1} 
\overeq{\proman{1}}  \sum_{k=1}^{d} (-1)^{k+1} \alpha_{k}(\wh \lambda) \cdot \wh R_{k} - \wh R_{d + 1}  \overeq{\proman{2}} \wh \Delta_{1}, 
\end{align*} where the equality in $\proman{1}$ follows from the definition of $\wt R_{d + 1}$ holds due to \pref{eq:initial_error_bound1} and $\proman{2}$ follows from the definition of $\wh \Delta_1$. 

We next show the induction step. For any $m \geq 2$, suppose that the relation \pref{eq:initial_error_bound} holds for all times $m' < m$. We note that  
\begin{align*} 
	Z_{d + m} &= \wt R_{d + m} - \wh R_{d + m} \\ 
			   &\overeq{\proman{1}} \sum_{k=1}^{d} (-1)^{k + 1} \alpha_{k}(\wh \lambda) \cdot \wt R_{d + m-k} - \wh R_{d + m} \\ 
			   &= \sum_{k=1}^{d} (-1)^{k + 1} \alpha_{k}(\wh \lambda)  \cdot \wt R_{d + m-k} - \sum_{k=1}^d (-1)^{k + 1}\alpha_{k}(\wh \lambda) \cdot \wh R_{d + m - k}  + \sum_{k=1}^d (-1)^{k + 1} \alpha_{k}(\wh \lambda)  \cdot \wh R_{d + m - k}  - \wh R_{d + m} \\ 
			   &\overeq{\proman{2}} \sum_{k=1}^{d} (-1)^{k + 1} \alpha_{k}(\wh \lambda) \cdot \wt R_{d + m-k} - \sum_{k=1}^d (-1)^{k + 1}\alpha_{k}(\wh \lambda) \cdot \wh R_{d + m - k}  + \wh \Delta_m \\ 
			   &= \sum_{k=1}^{d} (-1)^{k + 1} \alpha_{k}(\wh \lambda) \cdot \prn*{\wt R_{d + m-k} - \wh R_{d + m - k}} + \wh \Delta_{m} \\   
			   &\overeq{\proman{3}} \sum_{k=1}^{d} (-1)^{k + 1} \beta_{0, k}(\wh \lambda) \cdot \prn*{\wt R_{d + m-k} - \wh R_{d + m - k}} + \wh \Delta_{m} \\   
			   &\overeq{\proman{4}} \sum_{k=1}^{d} (-1)^{k + 1} \beta_{0, k} \cdot Z_{m - k} + \wh \Delta_{m}.
\end{align*} where $\proman{1}$ follows from the definition of $\wt R_{d + m}$ (see \pref{eq:reward_prediction_autoregression}) and $\proman{2}$ follows by the definition of $\wh \Delta_m$ in \pref{eq:initial_error_bound0}. The equality $\proman{3}$ above is due to the fact that $\beta_{0, k}(\wh \lambda) = \alpha_{k}(\wh \lambda)$ (by definition) and finally, the equality $\proman{4}$ follows from the definition of $Z_{m-k}$ in \pref{eq:initial_error_bound3}. Plugging in the induction hypothesis for $Z_{m -k}$ in the above, we get that   
\begin{align*} 
	Z_{d + m} &= \sum_{k=1}^{d} (-1)^{k + 1} \beta_{0, k}(\wh \lambda) \cdot \prn[\Big]{\wh \Delta_{m - k} + \sum_{j=1}^{m - k - 1} \beta_{j - 1, 1}(\wh \lambda) \cdot \wh \Delta_{m - k - j}} + \wh \Delta_{m} \\ 
	&= \wh \Delta_m + \sum_{i=1}^{m-1} \wh \Delta_{m - i} \cdot \prn[\Big]{(-1)^{i+1} \beta_{0, i}(\wh \lambda) +  \sum_{j=1}^{i-1} (-1)^{i-j-1} \beta_{j - 1, 1}(\wh \lambda) \cdot  \beta_{0, i-j}} \\ 
	&= \wh \Delta_m + \sum_{i=1}^{m-1}  \beta_{i - 1, 1}(\wh \lambda) \cdot \wh \Delta_{m - i},   
\end{align*} where the second line above follows by rearranging the terms and using the fact that $\beta_{0, k}(\wh \lambda) = 0$ whenever $k > d$, and the equality in the last line holds by using the fact that  $\beta_{0, k}(\wh \lambda) \cdot \beta_{h-1,1}(\wh \lambda) = \beta_{h-1, k + 1}(\wh \lambda) + \beta_{h, k}(\wh \lambda)$ for all $h, k \geq 0$ (see \pref{def:P_beta}). This completes the induction step, hence proving \pref{eq:initial_error_bound} for all $m \geq 1$. 
\end{proof} 

\subsection{Proof of \pref{thm:basic_sample_complexity_bound}} \label{app:basic_sample_complexity_bound}

We finally provide the proof of \pref{thm:basic_sample_complexity_bound} that characterizes the performance guarantee for the policy $\wt \pi$ returned by \pref{alg:policy_search}. 

\noindent
\begin{proof}[{Proof of \pref{thm:basic_sample_complexity_bound}} ]
Starting from \pref{lem:ss_ub}, we get that with probability at least  $1 - \delta$, for every policy $\pi \in \Pi$, our estimate $\wh R\ind{\pi}_h$ computed in  \pref{line:reward_estimation1} of \pref{alg:value_estimation_extrapolation}  satisfies the error bound 
\begin{align*} 
\max_{h' \leq 3d} ~ \abs{\wh R\ind{\pi}_{h'} - R\ind{\pi}_{h'}} &\leq \min \crl[\bigg]{ \sqrt{\frac{8 K^{3d} \log(6 d  \abs*{\Pi}/ \delta)}{n}}, \frac{4 K^{3d} \log(6 d \abs*{\Pi}/ \delta)}{n} }.  \numberthis  \label{eq:basic_autoregression4} 
\end{align*} 

Now, consider any policy $\pi \in \Pi$, and let $\wh \lambda\ind{\pi}$, $\wh \Delta\ind{\pi}$, $\wt R\ind{\pi}_h$ and $\wt V \ind{\pi}$ denote the corresponding local variables in the procedure $\valestimate$ when invoked in \pref{alg:policy_search} for the policy $\pi$. Further, let $\lambda\ind{\pi}$ denote the eigenvalues of the transition matrix $T\ind{\pi}$. As a consequence of \pref{lem:basic_recurrence_relation}, the expected rewards $R_h\ind{\pi}$ satisfy an autoregression where the coefficients are determined by $\lambda\ind{\pi}$. Specifically, for any $h \geq d +1$, 
\begin{align*} 
R\ind{\pi}_{h} = \sum_{k=1}^d (-1)^{k+1} \alpha_k(\lambda \ind{\pi}) \cdot R\ind{\pi}_{h-k}.  
\end{align*} 
Furthermore, by definition (see \pref{line:reward_prediction1} of \pref{alg:value_estimation_extrapolation}), the predicted rewards $\wt R\ind{\pi}_h$ also satisfy a similar autoregression where the coefficients are determined by $\wh \lambda \ind{\pi}$, the solution of the optimization problem in \pref{eq:optimization_problem} for the policy $\pi$. We have, for any $h \geq d + 1$, 
\begin{align*}
 	\wt R_{h}\ind{\pi} =   \sum_{k=1}^d (-1)^{k+1} \alpha_{k} (\wh \lambda \ind{\pi}) \cdot \wt R_{h - k}\ind{\pi} 
\end{align*}
where $\wt R_{h'} \ldef{} \wh R_{h'}$ for $h' \leq d$. Additionally, also note that $T\ind{\pi}$  is a stochastic matrix and thus $\abs{\lambda_k\ind{\pi}} \leq 1$ for all $k \in [d]$. By definition, we also have that $\abs{\wh \lambda_k\ind{\pi}} \leq 1$. Thus, using the error propagation bound in \pref{lem:basic_ep_bound} for the sequences $\crl{R_{h}\ind{\pi}}$ and $\crl{\wt R\ind{\pi}_h}$ we get that for any  $h \geq 3d + 1$, 
\begin{align*} 
	\abs{\wt R\ind{\pi}_h - R\ind{\pi}_h} &\leq 2d \cdot  \prn[\Big]{\frac{16 e h}{d}}^{2d} \max_{h' \leq 3d} ~ \abs{ \wt R_{h'} - R_{h'}}.  
\end{align*} 
The above bound implies that for any $h \geq 1$,  
\begin{align*} 
	\abs{\wt R\ind{\pi}_h - R\ind{\pi}_h} &\leq 2d \cdot  \prn[\Big]{\frac{16 e (h \vee d)}{d}}^{2d} \max_{h' \leq 3d} ~ \abs{\wt R_{h'} - R_{h'}} \numberthis \label{eq:general_sc1} 
\end{align*}
We note that an application of  \pref{lem:initial_conditions} implies that the predicted rewards $\wt R_{h'}\ind{\pi}$ satisfy the error bound 
\begin{align*}
\qquad  \max_{h' \leq 3d} ~ \abs{\wt R_{h'}\ind{\pi} - R_{h'}\ind{\pi}} &\leq 2d \cdot (64 e)^d \cdot \max_{h' \leq 3d} ~ \abs{\wh R\ind{\pi}_{h'} - R\ind{\pi}_{h'}} \\
&\leq  2d \cdot (64 e)^d \cdot \eta,   
\end{align*}
where $\eta$ denotes the right hand side of \pref{eq:basic_autoregression4}. 
Plugging the above in \pref{eq:general_sc1}, we get that 
\begin{align*} 
	\abs{\wt R\ind{\pi}_h - R\ind{\pi}_h} &\leq 4d^2 \cdot  \prn[\Big]{\frac{128 e^2 (h \vee d)}{d}}^{2d} \cdot \eta. \numberthis \label{eq:general_sc5}
\end{align*} for any $h \geq 1$. Thus, the error in the estimated value $\wt V\ind{\pi}$ for the policy $\pi$ is bounded by 
\begin{align*}
\abs{\wt V\ind{\pi} - V \ind{\pi}} 
&= \abs{\sum_{h=1}^{H} \prn{\wt R_h\ind{\pi} - R_h \ind{\pi}} }   \\
&\leq \sum_{h = 1}^H \abs{{\wt R_h\ind{\pi} - R_h \ind{\pi}}} \\
&\leq \sum_{h=1}^H 4d^2 \cdot  \prn[\Big]{\frac{128 e^2 (h \vee d)}{d}}^{2d} \cdot \eta \\
&\leq 4 d^3 \cdot  \prn[\Big]{\frac{128 e^2 H}{d}}^{2d} \cdot \eta,   \numberthis \label{eq:general_sc4} 
\end{align*} where the inequality in the second last line follows by using the bound in \pref{eq:general_sc5}, and the inequality in the last line holds because $H \geq d$. 

Since $\pi$ is arbitrary in the above chain of arguments, the error bound in \pref{eq:general_sc4} holds for all policies $\pi \in \Pi$. Thus, for any $\pi \in \Pi$, the policy $\wt \pi$ returned in \pref{line:choose_optimal_policy} of \pref{alg:policy_search} satisfies 
\begin{align*}
V\ind{\wt \pi} - V\ind{\pi} &= \prn{\wt V\ind{\pi} - V\ind{\pi}}  +  \prn{\wt V\ind{\wt \pi} - \wt V \ind{\pi}} + \prn{V\ind{\wt \pi} - \wt V\ind{\wt \pi}} \\
&\geq \prn{\wt V\ind{\pi} - V\ind{\pi}} + \prn{V\ind{\wt \pi} - \wt V\ind{\wt \pi}} \\
&\geq - \abs{\wt V\ind{\pi} - V\ind{\pi}} - \abs{V\ind{\wt \pi} - \wt V\ind{\wt \pi}}, 
\end{align*} where the inequality in the second line follows from the fact that $\wt V\ind{\wt \pi} \geq \wt V\ind{\pi}$ for every $\pi \in \Pi$ by the definition of the policy $\wt \pi$. Using the bound from \pref{eq:general_sc4} for policies $\pi$ and $\wt \pi \in \Pi$ in the above, we get that 
\begin{align*}
V\ind{\wt \pi} &\geq V\ind{\pi} - 4 d^3 \cdot  \prn[\Big]{\frac{128 e^2 H}{d}}^{2d} \cdot \eta \\
&\geq V\ind{\pi} - 4 d^3 \cdot  \prn[\Big]{\frac{128 e^2 H}{d}}^{2d} \cdot \min \crl[\bigg]{ \sqrt{\frac{8 K^{3d} \log(6 d \abs*{\Pi}/ \delta)}{n}}, \frac{4 K^{3d} \log(6 d \abs*{\Pi}/ \delta)}{n} } \\ 
&\geq V\ind{\pi} - 4 d^3 \cdot  \prn[\Big]{\frac{128 e^2 H}{d}}^{2d} \sqrt{\frac{8 K^{3d} \log(6 d \abs*{\Pi}/ \delta)}{n}} 
\end{align*} 
where the inequality in the second line above follows by plugging in the value of $\eta$ as the right hand side of \pref{eq:basic_autoregression4}, and the inequality in the last line holds due to the fact that $-\min\crl{a, b} \geq -a$ for any $a, b \geq 0$. 

Since the above holds for any $\pi \in \Pi$, we have that 
\begin{align*}
V\ind{\wt \pi} &\geq \max_{\pi \in \Pi} V\ind{\pi} - 4 d^3 \cdot  \prn[\Big]{\frac{128 e^2 H}{d}}^{2d} \sqrt{\frac{8 K^{3d} \log(6 d \abs*{\Pi}/ \delta)}{n}}, 
\end{align*} 
hence proving the desired statement. 
\end{proof} 

\section{Adaptive upper bounds}
In this section, we present \pref{alg:adaptive_policy_search} whose performance guarantee adapts to the unknown eigenspectrum of the underlying transition matrix. We then proceed to the proof of our adaptive upper bound \pref{thm:adaptive_upper_bound_main}. 

\subsection{Adaptive policy search algorithm} 
\begin{algorithm}[H]
\caption{Adaptive policy search algorithm (Adaptivity to unknown eigenspectrum)}  
\begin{algorithmic}[1]
	\Require horizon $H$, rank $d$,  number of episodes $n$, finite policy class $\Pi$ 
	\State Collect the dataset $\cD = \crl{\prn{x\ind{t}_h, a\ind{t}_h, r\ind{t}_h}_{h=1}^H}_{t = 1}^n$ by sampling $n$ trajectories where actions are sampled from $\uniform(\cA)$.
	\For {\text{policy} $\pi \in \Pi$} 
\State \label{line:ada_call_val_estimate}Estimate $\wt V\ind{\pi}$ by calling \adavalestimate($H, d, \cD, \pi$).
\EndFor  
\State\label{line:ada_choose_optimal_policy}\textbf{Return: } policy $\wt \pi$ with best estimated value $\wt \pi \in \argmax_{\pi \in \Pi} \wt V\ind{\pi}$.  
\end{algorithmic} 
\label{alg:adaptive_policy_search}  
\end{algorithm} 

\begin{algorithm}[H] 
\caption{Adaptive value estimation by autoregressive extrapolation} 
\begin{algorithmic}[1] 
\Function{\adavalestimate}{$H, d, \cD, \pi$}: 
\State Set $\Delta = 2d 4^d \min\crl[\Big]{\sqrt{\frac{8 K^{3d} \log(6d \abs*{\Pi} / \delta )}{n}}, {\frac{4 K^{3d} \log(6d \abs*{\Pi} / \delta )}{n}}}$.  
\For {\text{time step} $h= 1, \ldots, 3d$}   
\State\label{line:adaptive_reward_estimation1} Estimate expected rewards by importance sampling $$\wh R_h = \frac{1}{n} \sum_{i =1}^n r_h\ind{i} \prod_{h' \leq h} \prn*{K\indicator{\pi(x\ind{i}_{h'}) = a_{h'}\ind{i}}}$$   
\EndFor 
\State\label{line:adaeig_optimization} Estimate eigenvalues of the autoregression by solving the optimization problem: 
\begin{align*} 
   \wh \lambda \leftarrow \argmin_{\lambda \in \mathbb C^d} &~ 
   \prod_{k=2}^d\prn[\big]{\sum_{h=0}^{H-1} \abs{\lambda_k}^h}
   \numberthis \label{eq:adaptive_optimization_problem} \\ 
   \text{s.t.} ~~ &|\lambda_{1}| = 1, |\lambda_{k}| \leq 1 &\textrm{for } 2 \leq k \leq  d, \\  
   & \abs[\Big]{\sum_{k=1}^d (-1)^{k+1} \alpha_{k}(\lambda) \wh R_{h - k} - \wh R_{h} } \leq \Delta &\textrm{for } d + 1 \leq h \leq 3d.    
\end{align*} 
\State\label{line:adaptive_reward_prediction} Predict $\wt R_{h}$ as: 
\begin{align*} 
 	\wt R_{h} = 
\begin{cases}
\wh R_{h} & \text{for ~} 1 \leq h \leq d \\ 
\sum_{k=1}^d (-1)^{k+1} \alpha_{k}(\wh \lambda) \wt R_{h - k} & \text{for ~} d + 1 \leq h \leq H 
\end{cases}. 
\end{align*} 
\State\label{line:adaptive_value_prediction}\textbf{return:} Estimate of the value function $\wt V = \sum_{h=1}^H \wt R_h$.
\EndFunction
\end{algorithmic} 
\label{alg:adaptive_extrapolation} 
\end{algorithm} 

\subsection{Adaptive error propagation bound} 
The main technical innovation that leads to the adaptive upper bound in \pref{thm:adaptive_upper_bound_main} is the following bound on the propagated error in the $h$th step prediction. The bound in \pref{eq:adaptive_upper_bound_ep_main} adapts to the eigenvalues $\lambda$ and $\wh{\lambda}$, which define the auto-regressions for $\crl{R_h}$ and $\crl{\wt R_h}$ respectively. 

\begin{lemma}[Adaptive error propagation bound] 
\label{lem:adaptive_ep_bound_appx} 
Let $\lambda, \wh \lambda \in \bbC^d$ be such that $\max\crl{\abs*{\lambda_1}, \abs{\wh \lambda_1}} \leq 1$. 
Further, with the initial values  $\crl{R_1, \ldots, R_d}$ and $\crl{\wt R_1, \ldots, \wt R_{d}}$, let the sequence $\crl{\wt R_h}$ and $\crl*{R_h}$ be given by 
\begin{align*} 
	R_h = \sum_{k=1}^d (-1)^{k+1} \alpha_{k}(\lambda) \cdot R_{h - k } \qquad \text{and} \qquad  \wt R_h = \sum_{k=1}^d (-1)^{k+1} \alpha_{k}(\wh \lambda) \cdot \wt R_{h - k},  
\end{align*} 
where the coefficients $\alpha_{k}(\wh \lambda)$ and $\alpha_{k}(\lambda)$ are given in \pref{def:alpha_definition}. Then, for all $h \geq 1$, 
\begin{align*} 
	\abs{\wt R_h - R_h} &\leq 2^{2d}  h  \cdot \prod_{k=2}^{d} \prn[\Big]{\sum_{j=0}^{h-1} \abs{\lambda_k}^{j}} \cdot \prod_{k=2}^{d} \prn[\Big]{\sum_{j=0}^{h-1} \abs{\wh \lambda_k}^{j}} \cdot \max_{h' \leq 3d} ~ \abs{R_{h'} - \wt R_{h'}}.  \numberthis \label{eq:adaptive_upper_bound_ep_main}
\end{align*} 
\end{lemma}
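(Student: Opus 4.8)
The plan is to follow the same template as the proof of \pref{lem:basic_ep_bound}, replacing the crude coefficient estimate of \pref{lem:bounded_coefficients} by one that is sensitive to the magnitudes of the eigenvalues. When $h \leq 3d$ the claim is immediate, since every factor $\sum_{j=0}^{h-1}\abs*{\lambda_k}^{j}$ and $\sum_{j=0}^{h-1}\abs*{\wh\lambda_k}^{j}$ is at least $1$ and $2^{2d}h\geq1$, so the right-hand side already dominates $\max_{h'\leq 3d}\abs*{R_{h'}-\wt R_{h'}}$; hence assume $h\geq 3d+1$. As before, I would invoke \pref{lem:recurrence_extension} to write $R_{d+m}=\tri*{U,\transfer(\lambda)^m V}$ and $\wt R_{d+m}=\tri*{\wt U,\transfer(\wh\lambda)^m\wt V}$ with $U=\wt U=(1,0,\dots,0)^\top$, $V=(R_d,\dots,R_1)^\top$, $\wt V=(\wt R_d,\dots,\wt R_1)^\top$, and then pass to the block-diagonal matrix $\bar P\in\bbC^{2d\times2d}$ with diagonal blocks $\transfer(\lambda)$ and $\transfer(\wh\lambda)$ together with sign vectors $\bar U,\bar V$ chosen so that $\tri*{\bar U,\bar P^m\bar V}=R_{d+m}-\wt R_{d+m}$. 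By \pref{lem:P_eigenvalues} the eigenvalue multiset of $\bar P$ is $\bar\lambda=\prn*{\lambda_1,\wh\lambda_1,\dots,\lambda_d,\wh\lambda_d}$, all of modulus at most $1$.

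Applying the Cayley--Hamilton extension \pref{lem:cayley_hamilton_extension} to the rank-$(\leq 2d)$ matrix $\bar P$ with $m=h-3d-1\geq0$ gives $\bar P^{\,h-d}=\sum_{k=1}^{2d}\beta_{h-3d-1,k}(\bar\lambda)\,\bar P^{\,2d+1-k}$, hence $R_h-\wt R_h=\sum_{k=1}^{2d}\beta_{h-3d-1,k}(\bar\lambda)\prn*{R_{3d+1-k}-\wt R_{3d+1-k}}$, all the indices $3d+1-k$ lying in $\crl*{d+1,\dots,3d}$. By the triangle inequality it then suffices to show $\sum_{k=1}^{2d}\abs*{\beta_{h-3d-1,k}(\bar\lambda)}\leq 2^{2d}\,h\,\prod_{k=2}^{d}\prn*{\sum_{j=0}^{h-1}\abs*{\lambda_k}^{j}}\prod_{k=2}^{d}\prn*{\sum_{j=0}^{h-1}\abs*{\wh\lambda_k}^{j}}$.

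The heart of the argument is therefore an adaptive analogue of \pref{lem:bounded_coefficients}: for $\mu\in\bbC^{2d}$ sorted so $\abs*{\mu_1}\geq\cdots\geq\abs*{\mu_{2d}}$ with all $\abs*{\mu_i}\leq1$, I would prove $\sum_k\abs*{\beta_{m,k}(\mu)}\leq 2^{2d}(m+1)\prod_{i=3}^{2d}\prn*{\sum_{j=0}^{m+2d}\abs*{\mu_i}^{j}}$. Starting from \pref{lem:rel_beta_alpha}, which writes $\beta_{m,k}(\mu)$ as a binomial-weighted sum of the restricted symmetric polynomials $\alpha_{m+k,j}(\mu)$, it is enough to bound each $\abs*{\alpha_{M,j}(\mu)}$. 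Substituting $y_i=z_i+1$ on a size-$j$ support $S$ turns the defining sum into $\prod_{i\in S}\abs*{\mu_i}\cdot h_{M-j}\prn*{\crl*{\abs*{\mu_i}}_{i\in S}}$, where $h_r$ is the complete homogeneous symmetric polynomial; since all $\abs*{\mu_i}\leq1$, one has $h_r(x_1,\dots,x_j)\leq\prod_i\prn*{\sum_{l=0}^{r}x_i^{l}}$ (the product expansion contains every monomial of $h_r$, with nonnegative terms). The crucial — and main — obstacle is to extract only a \emph{single} factor linear in $m$ from the two largest coordinates of $\mu$: naively bounding every exponent on those two coordinates by $1$ would cost $(m+1)^2$, whereas the sharp statement is that $h_{M-j}$ restricted to any support loses at most $h_{M-j}(1,1)=M-j+1$ from its two largest entries, the extremal case $\mu_1=\mu_2=1$ being a double eigenvalue that grows like $\Theta(m)$, not $\Theta(m^2)$. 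Folding the binomial weights $\binom{j-1}{k-1}\binom{2d}{j}$ and the choice of $S$ into the constant $2^{2d}$ and summing over $k$ yields the displayed bound.

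Finally I would specialize $\mu=\bar\lambda$ and $m=h-3d-1$, so $m+2d\leq h$ and $m+1\leq h$. The multiset $\crl*{\bar\lambda_3,\dots,\bar\lambda_{2d}}$ (all but the two largest entries of $\bar\lambda$) is, after sorting, coordinatewise dominated by $\crl*{\lambda_2,\dots,\lambda_d}\cup\crl*{\wh\lambda_2,\dots,\wh\lambda_d}$: the globally largest eigenvalue of $\bar\lambda$ lies in $\crl*{\lambda_1,\wh\lambda_1}$, and whenever $\wh\lambda_1$ (say) is not among the two largest, at least two of the $\abs*{\lambda_i}$ exceed $\abs*{\wh\lambda_1}$, so $\sum_{j}\abs*{\wh\lambda_1}^{j}\leq\sum_j\abs*{\lambda_2}^{j}$, and iterating this gives the comparison. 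Since each geometric factor is $\geq1$, this produces $\sum_k\abs*{\beta_{h-3d-1,k}(\bar\lambda)}\leq 2^{2d}\,h\,\prod_{k=2}^{d}\prn*{\sum_{j=0}^{h-1}\abs*{\lambda_k}^{j}}\prod_{k=2}^{d}\prn*{\sum_{j=0}^{h-1}\abs*{\wh\lambda_k}^{j}}$, and combining with the expansion $R_h-\wt R_h=\sum_k\beta_{h-3d-1,k}(\bar\lambda)(R_{3d+1-k}-\wt R_{3d+1-k})$ finishes the proof. I expect the adaptive coefficient bound — specifically keeping the leading $h$ to the first rather than the second power — to be the delicate step; everything else parallels the proof of \pref{lem:basic_ep_bound}.
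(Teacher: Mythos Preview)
Your approach is a genuine alternative to the paper's. The paper does \emph{not} route the adaptive bound through the Cayley--Hamilton extension and the $\beta$ coefficients at all; instead it proves a direct matrix inequality (\pref{lem:adaptive_power_bound_general}): for any $d\times d$ matrix $A$ with $|\lambda_1|=1$ and all $|\lambda_k|\leq 1$,
\begin{align*}
|u^\top A^m v| \;\leq\; 2^d \prod_{k=2}^d\Bigl(\sum_{j=0}^{m-d}|\lambda_k|^j\Bigr)\cdot \max_{1\leq i\leq d}|u^\top A^i v|\,.
\end{align*}
This is established by a telescoping scheme $z_{m,k}=z_{m,k-1}-\lambda_{d+1-k}\,z_{m-1,k-1}$ that peels off one eigenvalue at a time (first for diagonalizable $A$, then extended by a density argument). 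Applied to the $2d\times 2d$ block matrix $\bar P$ with the eigenvalue list $\bar\lambda=(\lambda_1,\wh\lambda_1,\lambda_2,\wh\lambda_2,\dots)$, one of the $2d-1$ factors in the product is $\sum_j|\wh\lambda_1|^j\leq m$, which is where the single linear-in-$h$ term comes from; the remaining $2d-2$ factors are already exactly $\prod_{k=2}^d(\sum_j|\lambda_k|^j)\prod_{k=2}^d(\sum_j|\wh\lambda_k|^j)$, so no sorting or majorization step is needed.

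Your route via \pref{lem:cayley_hamilton_extension} and \pref{lem:rel_beta_alpha} is viable and has the nice feature of sidestepping the diagonalizability/density machinery entirely (the $\beta$'s are polynomials in $\bar\lambda$). Your key observation that a support containing both largest coordinates contributes only $h_r(1,1)=r+1$ rather than $(r+1)^2$ is correct and does deliver the linear-in-$h$ factor; the case split over supports containing both/one/neither of $\mu_1,\mu_2$ can be made rigorous as you indicate. The one place you over-promise is the constant: folding the binomial weights gives $\sum_k\sum_{j\geq k}\binom{j-1}{k-1}\binom{2d}{j}=\sum_j\binom{2d}{j}2^{j-1}=\tfrac12\,3^{2d}$, so your argument naturally produces a constant of order $3^{2d}$ (or larger once the support sums are included) rather than the stated $2^{2d}$. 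This is harmless for \pref{thm:adaptive_upper_bound_main}, whose constants absorb it, but it yields a slightly weaker form of the lemma than stated.
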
 

We defer the proof to \pref{app:adaptive_ep_bound_proof}. 

\subsubsection{Supporting technical results for the proof of \pref{lem:adaptive_ep_bound_appx}}  

\begin{lemma} 
\label{lem:z_md_zero} 
Given any vectors $u, v \in \bbR^d$ and a diagonalizable matrix $A \in \bbR^{d\times d}$ with eigenvalues $\lambda_1, \ldots, \lambda_d$ such that $\abs{\lambda_1} = 1$, let $z_{m, k}$ be defined such that 
\begin{align*} 
z_{m, k} & = \begin{cases}
	u^\top A^m v & \text{when} \quad k = 0 \\ 
	z_{m, k-1} - \lambda_{d + 1 - k} \cdot z_{m-1, k-1}  & \text{when} \quad 1 \leq k \leq d, 
\end{cases}. 
\end{align*} 
where $k \leq \min\crl*{d, m}$. 
Then, $z_{m, d} = 0$ for all $m \geq d$. Furthermore, for any $k \leq d$, 
the following inequality holds:
\begin{align*} 
\abs{z_{d, k}} \leq 2^k\max_{1\leq i \leq d}~ \abs{u^\top A^i v}.  
\end{align*} 
\end{lemma}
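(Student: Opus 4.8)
The plan is to first write $z_{m,k}$ in closed form in terms of the spectral data of $A$, which makes the vanishing claim $z_{m,d}=0$ transparent, and then to establish the magnitude bound by a separate induction performed directly on the recursion that defines $z_{m,k}$ (rather than through the closed form). Since $A$ is diagonalizable, the sequence $m \mapsto u^\top A^m v$ is a linear combination of the geometric sequences $(\lambda_j^m)_{m\ge 0}$; that is, $u^\top A^m v = \sum_{j=1}^d c_j\lambda_j^m$ for suitable $c_j \in \bbC$ (eigenvalues listed with multiplicity). I would then prove by induction on $k$ that, for every $0\le k\le d$ and every $m\ge k$,
\begin{align*}
z_{m,k} = \sum_{j=1}^{d} c_j \prn*{\, \prod_{i=d-k+1}^{d}(\lambda_j - \lambda_i)\,}\lambda_j^{\,m-k}.
\end{align*}
The base case $k=0$ is the expansion of $u^\top A^m v$ above (empty product). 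For the inductive step, $z_{m,k} = z_{m,k-1} - \lambda_{d+1-k}z_{m-1,k-1}$ turns each term $c_j(\cdots)\lambda_j^{\,m-(k-1)}$ into $c_j(\cdots)\lambda_j^{\,m-k}(\lambda_j - \lambda_{d+1-k})$, i.e.\ it appends exactly the factor $(\lambda_j - \lambda_{d+1-k}) = (\lambda_j - \lambda_{d-k+1})$ to the running product and lowers the power by one.

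From this closed form both conclusions follow quickly. For the first, set $k=d$: the product $\prod_{i=1}^{d}(\lambda_j-\lambda_i)$ contains the factor $(\lambda_j-\lambda_j)=0$ for \emph{every} $j\in[d]$, so $z_{m,d}=0$ for all $m\ge d$. For the second, the case $k=d$ is immediate ($z_{d,d}=0$), and for $k<d$ I would instead prove the slightly stronger statement that $\abs*{z_{m,k}} \le 2^k\max_{m-k\le i\le m}\abs*{u^\top A^i v}$ for all $m\ge k$, again by induction on $k$. The base case $k=0$ is trivial; the step uses the recursion, the triangle inequality, the bound $\abs*{\lambda_{d+1-k}}\le 1$, and the inductive hypothesis applied to both $z_{m,k-1}$ and $z_{m-1,k-1}$, after which one enlarges both index ranges to $[m-k,m]$. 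Specializing $m=d$ and using $d-k\ge 1$ (so that $[d-k,d]\subseteq[1,d]$) gives the stated inequality. Here we use $\abs*{\lambda_{d+1-k}}\le 1$ for $k=1,\dots,d$, i.e.\ that every eigenvalue of $A$ has modulus at most $1$; this follows from $\abs{\lambda_1}=1$ together with the convention that the eigenvalues are indexed in order of non-increasing modulus.

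\textbf{Main obstacle.} None of the steps is genuinely hard — the work is index bookkeeping — but one point deserves emphasis: the magnitude bound in the second part cannot be obtained merely by taking absolute values in the closed form, because $\sum_j\abs{c_j}$ is not controlled by $\max_i\abs{u^\top A^i v}$ when the eigenbasis of $A$ is ill-conditioned (nearly coincident eigenvalues force large, nearly cancelling coefficients $c_j$). This is precisely why that bound has to be proved directly from the recursion, and also why it genuinely needs $\abs{\lambda_j}\le 1$ for all $j$, not only $\abs{\lambda_1}=1$.
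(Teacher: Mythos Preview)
Your proposal is correct and takes essentially the same approach as the paper: for part (a) the paper proves the same closed form but in matrix notation (writing $z_{m,k} = u^\top Q D_{m,k} Q^{-1} v$ for an explicit diagonal $D_{m,k}$ whose entries are precisely your $\lambda_j^{m-k}\prod_{i=d-k+1}^{d}(\lambda_j-\lambda_i)$), and for part (b) the paper runs the identical triangle-inequality induction on the recursion using $\abs{\lambda_{d+1-k}}\le 1$. Your remark that the magnitude bound cannot be read off the closed form (because $\sum_j\abs{c_j}$ is uncontrolled) is a correct and worthwhile observation that the paper leaves implicit.
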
 
\begin{proof} We prove the two statements separately below. 

\begin{enumerate}[label=(\alph*)]
\item We first show that $z_{m, d} = 0$ for all $m \geq d$. Since, the matrix $A$ is diagonalizable, we have    
\begin{align*} 
A = Q \Lambda Q^{-1}, 
\end{align*}
where $\Lambda = \diag \prn*{\lambda_1, \ldots, \lambda_d}$ and 
where $Q \in \bbR^{d \times d}$ is the matrix whose $k$th column is an eigenvector 
$q_k$ corresponding to the eigenvalue $\lambda_k$.   
In order to prove this, we will show that for any $m \geq 0$ and $k \leq \min(d, m)$, 
\begin{align*}
	z_{m, k} &= u^T Q D_{m, k} Q^{-1} v, \numberthis \label{eq:supporting_result_diagA1} 
\end{align*} where the matrix $D_{m, k} \in \bbR^{d \times d}$ is diagonal with entries given by 
\begin{align*}
\brk{D_{m, k}}_{i, i} &= \begin{cases}
	0 & \text{if} \quad d + 1 - k \leq i \leq d \\ 
	\lambda_i^{m - k} \prod_{k'=1}^{k} (\lambda_i - \lambda_{d + 1 - k'}) & \text{otherwise}.
\end{cases} \numberthis \label{eq:supporting_result_diagA2} 
\end{align*}
Specifically, the diagonal entry $\brk*{D_{m, k}}_{i, i} = 0$ for $i \geq d + 1 - k$.  Observe that for $m, k \geq 1$ and $i < d + 1 - k$, the following relation holds:
\begin{align*}
\brk{D_{m, k}}_{i, i}
& = \lambda_i^{m - k} \prod_{k' = 1}^{k} (\lambda_i - \lambda_{d + 1 - k'})\\
& = (\lambda_i - \lambda_{d + 1 - k}) \lambda_i^{m - k} \prod_{k' = 1}^{k - 1} (\lambda_i - \lambda_{d + 1 - k'})\\
& = \lambda_i^{m - k + 1} \prod_{k' = 1}^{k - 1} (\lambda_i - \lambda_{d + 1 - k'})  
- \lambda_{d + 1 - k} \, \lambda_i^{m - k} \prod_{k' = 1}^{k - 1} (\lambda_i - \lambda_{d + 1 - k'}) \\
\numberthis \label{eq:DkRelation}
    & = [D_{m, k - 1}]_{i, i} - \lambda_{d + 1 - k} \, [D_{m - 1, k - 1}]_{i, i}.
\end{align*}
Also, for $m, k \geq 1$ and $i = d + 1 - k$, the following relation holds:
\begin{align*}
\brk{D_{m, k}}_{i, i}
& = 0 \\  
& = \lambda_{d + 1 - k}^{m - k + 1} \prod_{k' = 1}^{k - 1} (\lambda_i - \lambda_{d + 1 - k'})
- \lambda_{d + 1 - k} \, \lambda_{d + 1 - k}^{m - k} \prod_{k' = 1}^{k - 1} (\lambda_i - \lambda_{d + 1 - k'})\\
\numberthis \label{eq:DkRelation2}
    & = [D_{m, k - 1}]_{d + 1 - k, d + 1 - k} - \lambda_{d + 1 - k} \, [D_{m - 1, k - 1}]_{d + 1 - k, d + 1 - k}.
\end{align*}
For $i > d + 1 - k$, by definition, we have
\begin{align*}
[D_{m, k}]_{i, i} 
& = [D_{m - 1, k}]_{i, i} = [D_{m - 1, k - 1}]_{i, i} = 0. 
\numberthis \label{eq:DkRelation3}
\end{align*}

We prove \pref{eq:supporting_result_diagA1} by an induction over the set of tuples $(m, k)$. The induction proceeds in a row-first manner by first keeping $m$ fixed and increasing $k$ from $1$ to $d$; we then increase $m$ to $m+1$ and proceed with the next row in the set of tuples $(m, k)$. For the base case, for $k =0$ and any $m \geq 0$,   
\begin{align*}
z_{m, 0} &= u^T A^m v \overeq{\proman{1}} u^T Q \Lambda^m Q^{-1} v \overeq{\proman{2}} u^T Q D_{m, 0} Q^{-1} v, 
\end{align*} where the equality $\proman{1}$ follows by using the fact that $A = Q \Lambda Q^{-1}$ and the inequality in $\proman{2}$ is given by the definition of the matrix $D_{m, 0}$. 

We next prove the induction step. For any $m \geq 0$ and $k \leq d$, suppose that \pref{eq:supporting_result_diagA1} holds for every tuple $(m', k')$ where  $m' < m$ 
and $k' \leq \min(m', d)$, and for every tuple $(m, k')$ where $0 \leq k' < k$. In the following, we will show that the relation \pref{eq:supporting_result_diagA1} will hold for the tuple $(m, k)$ as well. Using the definition of $z_{m, k}$, we get that 
\begin{align*}
z_{m, k} &= z_{m, k-1} - \lambda_{d + 1 - k} \, z_{m-1, k-1}  \\ 
&= u^T Q D_{m, k - 1} Q^{-1} v  - \lambda_{d + 1 - k}  u^T Q D_{m - 1, k - 1} Q^{-1} v  
\tag{Equation~\ref{eq:supporting_result_diagA1}}\\
&= u^T Q \prn{ D_{m, k - 1} - \lambda_{d + 1 - k}  D_{m - 1, k - 1} } Q^{-1} v \\
&=  u^T Q \prn{ D_{m, k}} Q^{-1} v.
\tag{Equations~\ref{eq:DkRelation}, \ref{eq:DkRelation2}, and \ref{eq:DkRelation3}}
\end{align*} 
This completes the induction step, thereby proving that  
\pref{eq:supporting_result_diagA1} holds for all $m \geq 0$ and $k \leq \min(d, m)$. 
Setting $k = d$ in relation \pref{eq:supporting_result_diagA1} gives
$D_{m, d} = \diag(0, \ldots, 0)$ for any $m \geq d$ and thus the following:
\begin{align*} 
z_{m, d} 
& =  u^T Q D_{m, d} Q^{-1} v 
= 0.  
\end{align*} 

\item In the following, we will show that for any $m \geq d$ and $k \leq \min\crl*{d, m}$, 
\begin{align*} 
\abs*{z_{m, k}} \leq 2^{k} \Delta,    \numberthis \label{eq:adaptive_gp_bound2}   
\end{align*} 
where $\Delta \ldef{} \max\crl{ \abs{ u^T A v}, \ldots,  \abs{ u^T A^d v}}$.  

We prove \pref{eq:adaptive_gp_bound2} by an induction over the set of tuples $(m, k)$. The induction proceeds in a row-first manner by first keeping $m$ fixed and increasing $k$ from $1$ to $d$; we then increase $m$ to $m+1$ and proceed with the next row in the set of tuples $(m, k)$. For the base case, we note that for $k = 0$ and any $m \leq d$, 
\begin{align*} 
\abs*{z_{m, 0}} &\leq u^T A^m v \leq \max\crl{ \abs{ u^T A v}, \ldots,  \abs{ u^T A^d v}} = \Delta.  
\end{align*}

We next show the induction step. Given any $m$ and $k$ such that $k \leq \min\crl*{d, m}$, assume that \pref{eq:adaptive_gp_bound2} holds for every tuple $(m', k')$ where  $m' < m$ and $k' \leq \min(m', d)$, and for every tuple $(m, k')$ where $0 \leq k' < k$. In the following, we will show that the relation \pref{eq:adaptive_gp_bound2} holds  for the tuple $(m, k)$ as well. Using the definition of $z_{m, k}$, we get that 
\begin{align*} 
\abs*{z_{m, k}} 
& = \abs*{z_{m, k-1} - \lambda_{d + 1 - k} \cdot z_{m-1, k-1}} \\
  					     & \leq \abs*{z_{m, k-1}} + \abs*{\lambda_{d + 1 - k}} \abs*{z_{m-1, k-1}} \\ 
  					     & \leq \abs*{z_{m, k-1}} + \abs*{z_{m - 1, k-1}}, 
\end{align*} 
where the last line holds because $\abs*{\lambda_{d + 1 - k}} \leq 1$.  Using the bound of  \pref{eq:adaptive_gp_bound2} for the tuples $(m, k - 1)$ and $(m - 1, k - 1)$, we obtain:
\begin{align*}
\abs*{z_{m, k}} &\leq 2^{k-1} \Delta + 2^{k-1} \Delta \leq 2^k \Delta. 
\end{align*}
This completes the induction step, hence proving  \pref{eq:adaptive_gp_bound2}  for all $m \geq d$ and $k \leq \min\crl*{d, m}$.

Finally, setting $m = d$ in \pref{eq:adaptive_gp_bound2} gives us the desired result. 
\end{enumerate}
\end{proof}

\begin{lemma} 
\label{lem:adaptive_power_bound_general} 
Let $A \in \bbR^{d \times d}$ be a matrix with eigenvalues 
$(\lambda_1, \ldots, \lambda_d)$ such that $\abs{\lambda_1} = 1$ 
and $\abs{\lambda_k} \leq 1$, for all $k \in [d]$. Then, for any
two vectors $u \in \bbR^d$ and $v \in \bbR^d$ and any $m \geq d +1 $,
the following inequality holds:
\begin{align*}
\abs{ u^T A^m v} &\leq 2^d \cdot \prod_{k=2}^d \prn[\Big]{ \sum_{j=0}^{m - d} \abs{\lambda_{k}}^j} \cdot \max\crl*{ \abs{ u^T A v}, \ldots,  \abs{ u^T A^d v}}.
\end{align*}
\end{lemma}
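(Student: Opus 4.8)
The plan is to run the telescoping recursion of \pref{lem:z_md_zero} all the way up to level $d$, where it collapses to zero, and then unwind it back down to level $0$, picking up one geometric factor $\sum_j|\lambda_i|^j$ for each of $\lambda_2,\dots,\lambda_d$ along the way. Concretely, set $\Delta := \max\{|u^\top A v|,\dots,|u^\top A^d v|\}$ and, for $m\ge d$ and $0\le k\le d$, define $z_{m,0} := u^\top A^m v$ and $z_{m,k} := z_{m,k-1} - \lambda_{d+1-k}\,z_{m-1,k-1}$ exactly as in \pref{lem:z_md_zero}. I will use two facts. First, $z_{m,d}=0$ for every $m\ge d$: expanding the telescoping recursion identifies $z_{m,d}$ with $u^\top A^{m-d}\big(\prod_{i=1}^d (A-\lambda_i I)\big)v$, and $\prod_{i=1}^d(x-\lambda_i)=x^d+\sum_{k=1}^d(-1)^k\alpha_k(\lambda)x^{d-k}$ is the characteristic polynomial of $A$, so $\prod_{i=1}^d(A-\lambda_i I)=0$ by Cayley--Hamilton (this is the first assertion of \pref{lem:z_md_zero} when $A$ is diagonalizable, and the Cayley--Hamilton argument removes that restriction). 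Second, $|z_{d,k}|\le 2^k\Delta\le 2^{d-1}\Delta$ for all $0\le k\le d-1$, which is the second assertion of \pref{lem:z_md_zero} (whose proof uses only $|\lambda_i|\le1$ and the triangle inequality, so again no diagonalizability is needed).

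The heart of the argument is the following claim, proved by downward induction on $k$ from $d-1$ to $0$: for all $m\ge d$,
\[
|z_{m,k}|\;\le\;2^{d-1}\,\Delta\,\prod_{i=2}^{d-k}\Big(\sum_{j=0}^{m-d}|\lambda_i|^{\,j}\Big),
\]
with the empty product (the case $k\ge d-1$) read as $1$. For the base case $k=d-1$, the recursion at level $d$ together with $z_{m,d}=0$ gives $z_{m,d-1}=\lambda_1 z_{m-1,d-1}$, hence $z_{m,d-1}=\lambda_1^{m-d}z_{d,d-1}$ and $|z_{m,d-1}|=|z_{d,d-1}|\le 2^{d-1}\Delta$ using $|\lambda_1|=1$. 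For the inductive step, suppose the bound holds at level $k$ (for some $1\le k\le d-1$), rewrite the recursion as $z_{m,k-1}=z_{m,k}+\lambda_{d+1-k}z_{m-1,k-1}$, and unwind it in $m$ down to the level $m=d$:
\[
z_{m,k-1}\;=\;\sum_{j=0}^{m-d-1}\lambda_{d+1-k}^{\,j}\,z_{m-j,k}\;+\;\lambda_{d+1-k}^{\,m-d}\,z_{d,k-1}.
\]
Writing $P:=\prod_{i=2}^{d-k}\big(\sum_{j'=0}^{m-d}|\lambda_i|^{j'}\big)\ge1$ and using $|\lambda_{d+1-k}|\le1$, the inductive bound on $|z_{m-j,k}|$ (the geometric factors only shrink when $m$ is replaced by $m-j$, so each is $\le P$), and $|z_{d,k-1}|\le 2^{d-1}\Delta$, we get
\[
|z_{m,k-1}|\;\le\;2^{d-1}\Delta\,P\Big(\sum_{j=0}^{m-d-1}|\lambda_{d+1-k}|^{j}+|\lambda_{d+1-k}|^{m-d}\Big)\;=\;2^{d-1}\Delta\,P\sum_{j=0}^{m-d}|\lambda_{d+1-k}|^{j}.
\]
Since $d+1-k=d-(k-1)$, the new factor $\sum_j|\lambda_{d+1-k}|^j$ is exactly the missing top term of $\prod_{i=2}^{d-(k-1)}$, and as $k$ runs over $d-1,\dots,1$ the index $d+1-k$ runs over $2,\dots,d$, so each of $\lambda_2,\dots,\lambda_d$ is picked up exactly once. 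This closes the induction. Taking $k=0$ yields, for all $m\ge d$ (in particular all $m\ge d+1$),
\[
|u^\top A^m v|=|z_{m,0}|\;\le\;2^{d-1}\Delta\prod_{i=2}^{d}\Big(\sum_{j=0}^{m-d}|\lambda_i|^{j}\Big)\;\le\;2^d\,\Delta\prod_{k=2}^{d}\Big(\sum_{j=0}^{m-d}|\lambda_k|^{j}\Big),
\]
which is the claimed inequality.

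The main obstacle here is not analytic but combinatorial bookkeeping: one must keep the eigenvalue indices aligned so that the geometric factor peeled off when passing from level $k$ to level $k-1$ is precisely $\lambda_{d+1-k}$ and so that these factors exhaust $\{\lambda_2,\dots,\lambda_d\}$, and one must check that shrinking the upper summation limit from $m-d$ to $(m-j)-d$ only weakens the bound, so that the level-$k$ hypothesis can be applied at argument $m-j$ with the coarser ``$m$''-indexed geometric factors. The genuinely analytic inputs --- Cayley--Hamilton to annihilate $z_{m,d}$, and the $2^k$ blow-up accrued by running the recursion up to level $k$ --- are already supplied by \pref{lem:z_md_zero}.
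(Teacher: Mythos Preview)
Your proof is correct and the downward induction on $k$ is the same as the paper's. The one real difference is in how you establish $z_{m,d}=0$ for arbitrary (not necessarily diagonalizable) $A$. The paper's \pref{lem:z_md_zero} proves this only under a diagonalizability hypothesis, and the paper then reaches general $A$ through a density/continuity argument: perturb $A$ to a nearby matrix $B^{(\epsilon)}$ with distinct eigenvalues (invoking results of Hartfiel), control the eigenvalue drift via a matrix perturbation bound (Bhatia), and take $\epsilon\to 0$. Your observation that the telescoping identity unfolds as $z_{m,k}=u^\top A^{m-k}\prod_{i=d-k+1}^d(A-\lambda_i I)\,v$, so that $z_{m,d}=u^\top A^{m-d}p_A(A)v=0$ by Cayley--Hamilton, bypasses this machinery entirely and works for every $A$. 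This is a genuine simplification: it replaces an external perturbation argument with a two-line algebraic identity, and it also makes transparent why the second part of \pref{lem:z_md_zero} never needed diagonalizability in the first place. The price is nil; the bound you obtain is even the slightly sharper $2^{d-1}$ before you relax it to $2^d$.
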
 
\begin{proof} We will first prove the result  when the matrix $A$ has distinct eigenvalues. We will later extend the proof for general matrices $A$. 

\paragraph{Simpler setting: When $A$ has distinct eigenvalues.} 
We first introduce some notation to be used in the proof. Fix the vectors $u, v \in \bbR^d$. 
For any $m \geq 1$, and $k \leq \min\crl*{d, m}$, define $z_{m, k} \in \bbR$ 
as follows:
\begin{align*} 
z_{m, k} &= \begin{cases} 
	u^T A^m v & \text{when} \quad k = 0 \\ 
	z_{m, k-1} - \lambda_{d + 1 - k} \cdot z_{m-1, k-1}  & \text{when} \quad 1 \leq k \leq d.
\end{cases}.  \numberthis \label{eq:adaptive_gp_bound1}  
\end{align*} 
Further, define $\Delta \ldef{} 
\max\crl*{ \abs{ u^T A v}, \ldots,  \abs{ u^T A^d v}}$. Since the matrix $A$ has distinct eigenvalues, $A$ is diagonalizable and thus by \pref{lem:z_md_zero}, the following inequality holds for any $k \leq d$:
\begin{align*}
\abs{z_{d, k}} & \leq 2^k \Delta \leq 2^d \Delta. \numberthis \label{eq:adaptive_gp_bound4}  
\end{align*}

\noindent 
In order to prove the desired result, we first show that for any $k \leq d-1$, and any $m \geq d$, 
\begin{align*}
\abs{z_{m, k}} 
&\leq 2^d \Delta \cdot \prod_{k' = 2}^{d - k} \prn[\Big]{ \sum_{j=0}^{m - d} \abs{\lambda_{k'}}^j}. \numberthis \label{eq:adaptive_gp_bound6}  
\end{align*}
\pref{eq:adaptive_gp_bound6} can be shown by induction over $k$.  
For the base case, when $k = d - 1$, we note that for any $m > d$, 
\begin{align*}
\abs{z_{m, d - 1}} 
& = \abs{z_{m, d} + \lambda_{1} \cdot z_{m - 1, d - 1}} \\ 
& = \abs{\lambda_1} \abs{z_{m - 1, d  -1}} \\
& \leq \abs{z_{m - 1, d - 1}},
\end{align*} 
where the first line follows from the definition of $z_{m, d}$, 
and the equality in the second line holds because $z_{m, d} = 0$ 
for all $m \geq d$ (see \pref{lem:z_md_zero}). 
The inequality in the last line above is given by the fact 
that $\abs{\lambda_d} \leq 1$. Repeating the above $m - d$ times, we get that
\begin{align*} 
\abs{z_{m, d-1}} \leq \abs{z_{d, d-1}} &\leq 2^{d-1} \Delta, 
\end{align*} 
where the second inequality above follows from the bound \pref{eq:adaptive_gp_bound4}.

We next show the induction step. For any $k \leq d - 2$, suppose \pref{eq:adaptive_gp_bound6} holds for all $k' > k$ and all $m \geq d$. In the following, we will show that \pref{eq:adaptive_gp_bound6} also holds for $k$. Using the definition of $z_{m, k + 1}$ 
from \pref{eq:adaptive_gp_bound1}, we obtain: 
\begin{align*} 
	\abs{z_{m, k}} 
	& = \abs{z_{m, k + 1} + \lambda_{d - k} \cdot z_{m-1, k}} 
	\leq \abs{z_{m, k + 1}} +  \abs{ \lambda_{d - k}} \abs{z_{m-1, k}}.
\end{align*} 
Reiterating the above $m - d$ times by upper-bounding $\abs{z_{m-1, k}}$ yields: 
\begin{align*}
\abs{z_{m, k}} &\leq \sum_{j=0}^{m - d - 1} \abs{\lambda_{d  - k}}^j \abs{z_{m - j, k+1}} +  \abs{\lambda_{d - k}}^{m-d} \abs{z_{d, k}}.   
\end{align*} 
Plugging in the bound \pref{eq:adaptive_gp_bound6} for $\abs{z_{m - j, k + 1}}$ and the bound \pref{eq:adaptive_gp_bound4} for $\abs{z_{d, k}}$ in the above, we get that 
\begin{align*} 
\abs{z_{m, k}} &\leq \sum_{j=0}^{m - d - 1} \abs{\lambda_{d - k}}^j  2^d \Delta \cdot \prod_{k' = 2}^{d - k-1} \prn[\big]{ \sum_{j'=0}^{m - j - d} \abs{\lambda_{k'}}^{j'}}
 +  \abs{\lambda_{d - k}} ^{m-d} 2^d \Delta \\ 
& \leq 2^d \Delta \prn[\big]{  \sum_{j=0}^{m - d - 1} \abs{\lambda_{d - k}}^j  + \abs{\lambda_{d - k}}^{m-d}}  \cdot \prod_{k' = 2}^{d - k-1} \prn[\big]{ \sum_{j'=0}^{m - d} \abs{\lambda_{k'}}^{j'}} \\  
& = 2^d \Delta \cdot \prod_{k' = 2}^{d - k} \prn[\Big]{\sum_{j=0}^{m - d} \abs{\lambda_{k'}}^{j}}, 
\end{align*}
where the inequality in the second line follows from the fact that $\sum_{j'=0}^{m - d} \abs{\lambda_{k'}}^{j'} \geq 1$. This completes the induction step, thereby proving that \pref{eq:adaptive_gp_bound6}  
holds for all $k \leq d - 1$. The final statement follows by setting $k = 0$ in \pref{eq:adaptive_gp_bound6}. 

\paragraph{Extension to general matrices $A$.} We now prove the result for a general matrix $A$ by using the fact that matrices with distinct eigenvalues are dense in the space of $d \times d$ matrices. From \pref{thm:distinct_reduction}, we note that for every $\epsilon > 0$, there exists a matrix $B\ind{\epsilon}$ with distinct eigenvalues, denoted by $\lambda\ind{\epsilon} \in \bbC^d$, such that: 
\begin{enumerate}[label=(\alph*)]
    \item $\nrm{A^m - (B\ind{\epsilon})^m} \leq \epsilon$ for all $m \geq 1$.
    \item $\abs{\lambda\ind{\epsilon}_1} = 1$ and $\abs{\lambda_{k}\ind{\epsilon}} \leq  1$ for all $k \in [d]$. 
    \item $\nrm{B\ind{\epsilon}}_\infty \leq \nrm*{A}_\infty$. 
\end{enumerate} 

Using the above proof for the matrix $B\ind{\epsilon}$ which has distinct eigenvalues, we get that for all $m \geq d + 1$, 
\begin{align*}
\abs{u^T (B\ind{\epsilon})^m v} &\leq 2^d \prod_{k=2}^d \prn[\big]{\sum_{j=0}^{m-d} \abs{\lambda_k^\epsilon}^j } \cdot \max\crl*{ \abs{ u^T (B\ind{\epsilon}) v}, \ldots,  \abs{ u^T (B \ind{\epsilon})^d v}}.  \numberthis \label{eq:large_rank_case-1} 
\end{align*}

Furthermore, an application of \pref{thm:B_eigenvalue_bound} implies that the eigenvalues of the matrix $A$ and $B\ind{\epsilon}$ are related as: 
\begin{align*}
	\max_j \min_i ~ \abs{\lambda_i - \lambda\ind{\epsilon}_j} &\leq \prn{\nrm{A} + \nrm{B \ind{\epsilon}}}^{1 - 1/d} \nrm{A - B}^{1/d} \\ 
	&\leq \prn{d^2 \nrm{A}_\infty + d^2 \nrm{B \ind{\epsilon}}_\infty}^{1 - 1/d} \nrm{A - B}^{1/d} \\ 
	&\leq (2 d^2\nrm*{A}_\infty)^{( 1 - 1/d)} \cdot \epsilon^{1/d}, 
\end{align*} where the inequality in the second line above follows from the fact that for any matrix $B$, $\nrm*{B} \leq \nrm*{B}_{F} \leq d^2 \nrm*{B}_\infty$. The inequality in the third line above is given by the fact that $\nrm{B \ind{\epsilon}}_\infty \leq \nrm{A}_\infty$. Thus, if $\epsilon \leq  \frac{1}{2d^2 \nrm{A}_\infty} \cdot \min_{\lambda_i \neq \lambda_j} \abs{\lambda_i - \lambda_j}$, the above bound implies that the eigenvalues of $B\ind{\epsilon}$ are such that 
\begin{align*}
\abs{\lambda_k - \lambda\ind{\epsilon}_k} \leq (2 d^2\nrm*{A}_\infty)^{( 1 - 1/d)} \cdot \epsilon^{1/d},  \numberthis \label{eq:large_rank_case-2} 
\end{align*} for all $k \in [d]$.

Finally, using the fact that $\nrm{A^m - (B\ind{\epsilon})^m} \leq \epsilon$ for all $m \geq 1$ and the bound on the deviation in eigenvalues from \pref{eq:large_rank_case-2} in the relation \pref{eq:large_rank_case-1}, and taking the limit as $\epsilon$ approaches $0$, we get that, 
\begin{align*}
\abs{ u^T A^m v} &\leq 2^d \prod_{k=2}^d \prn[\big]{\sum_{j=0}^{m-d} \abs{\lambda_k}^j} \cdot \max\crl*{ \abs{ u^T A v}, \ldots,  \abs{ u^T A^d v}}.
\end{align*} 
This completes the proof of the lemma for general $d \times d$ matrices $A$. 
\end{proof} 

\begin{theorem}[Modification of Corollary 1 in \cite{hartfiel1992tracking}; Theorem 1 in \citep{hartfiel1995dense}] 
\label{thm:distinct_reduction} 
Let $A$ be a $d \times d$ matrix with eigenvalues $\lambda \in \bbC^d$ such that $\abs{\lambda_1} = 1$ and $\abs{\lambda_k} \leq 1$ for all $k \in [d]$. Then, for every $\epsilon > 0$, there exists a matrix $B\ind{\epsilon}$ such that: 
\begin{enumerate}[label=(\alph*)]
    \item $B\ind{\epsilon}$ has distinct eigenvalues. 
    \item $\nrm*{A^m - (B\ind{\epsilon})^m} \leq \epsilon$ for all $m \geq 1$. 
    \item $\abs{\lambda_1(B\ind{\epsilon})} = 1$ and  $\abs{\lambda_k(B\ind{\epsilon})} \leq 1$ for all $k \in [d]$.
    \item $\nrm{B \ind{\epsilon}}_{\infty} \leq \nrm{A}_{\infty}$.  
\end{enumerate} 
\end{theorem}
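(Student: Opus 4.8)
The plan is to prove the statement by a Jordan-form perturbation argument: keep the spectral data on the unit circle untouched and split apart any coincident eigenvalues lying strictly inside the disk, choosing the size of the perturbation small enough that the powers are tracked uniformly in $m$.

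First I would fix a Jordan decomposition $A = S J S^{-1}$, where $J = \bigoplus_j J_{n_j}(\mu_j)$ collects the Jordan blocks and $\mu_j$ ranges over the distinct eigenvalues of $A$. Since $\abs{\lambda_1}=1$ is the spectral radius and we need $\nrm{A^m-(B\ind{\epsilon})^m}$ to stay bounded for all $m$ (whereas $B\ind{\epsilon}$, having distinct eigenvalues in the closed unit disk, is automatically power-bounded), I would work under the condition — generically satisfied, and satisfied in the Perron-type situation that arises here — that the eigenvalues of modulus $1$ are simple; the corresponding blocks are then $1\times1$ and I leave them exactly as they are. For each Jordan block $J_{n_j}(\mu_j)$ with $\abs{\mu_j}<1$ I would replace it by a diagonal matrix $D_j(\delta)$ with $n_j$ distinct entries in a disk of radius $O(\delta)$ around $\mu_j$, chosen so that (i) all of them still have modulus $<1$ and (ii) all perturbed interior eigenvalues are pairwise distinct and distinct from the untouched peripheral ones (a generic choice of $\delta$ achieves this). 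Assembling these into $J(\delta)$ and setting $B\ind{\epsilon} := S\, J(\delta)\, S^{-1}$ then already yields properties (a) and (c) by construction.

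The bulk of the work is property (b), the uniform-in-$m$ tracking. Writing $\nrm{A^m-(B\ind{\epsilon})^m} \le \kappa(S)\,\nrm{J^m-J(\delta)^m}$ and $\nrm{J^m-J(\delta)^m} = \max_j \nrm{J_{n_j}(\mu_j)^m - D_j(\delta)^m}$, the peripheral blocks contribute $0$, so it remains to control the interior blocks. Here I would pick $\rho<1$ with $\abs{\mu_j}<\rho$ for every interior $\mu_j$ and, for $\delta$ small, also $\nrm{D_j(\delta)}\le\rho$; then both $\nrm{J_{n_j}(\mu_j)^m}$ and $\nrm{D_j(\delta)^m}$ are at most $C\,m^{d}\rho^m$, which is below $\epsilon/2$ once $m\ge M$ for a fixed $M=M(\epsilon)$. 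For $m\le M$ I would use that $X\mapsto X^m$ is Lipschitz on a bounded set with a constant depending only on $M$ and the operator norms involved, so shrinking $\delta$ (hence $\nrm{J-J(\delta)}$) forces $\nrm{J_{n_j}(\mu_j)^m - D_j(\delta)^m}\le\epsilon$ in that range as well. Combining the two regimes gives (a). Property (d) follows because $\nrm{B\ind{\epsilon}-A}_\infty = \nrm{S(J(\delta)-J)S^{-1}}_\infty = O(\delta)$, so $\delta$ can be shrunk until $\nrm{B\ind{\epsilon}}_\infty$ does not exceed $\nrm{A}_\infty$ — using that $\nrm{A}_\infty\ge 1$ and that the perturbation can be steered into rows whose row-sum is strictly below $\nrm{A}_\infty$ (which is the case for the companion-type matrices to which this lemma is applied).

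The step I expect to be the real obstacle is exactly this uniform-in-$m$ bound together with the hypothesis on the peripheral spectrum: a repeated or non-semisimple eigenvalue of modulus $1$ would make $\nrm{A^m}$ unbounded or would force a genuinely nontrivial perturbation on the unit circle, after which $\abs{\mu^m-\mu'^m}$ is no longer uniformly small, so the statement would fail. The argument therefore really relies on the eigenvalue(s) of modulus $1$ being simple and left fixed; once that is in place, everything else — splitting the interior eigenvalues, controlling the Lipschitz behaviour of $X\mapsto X^m$ for bounded $m$, and shrinking $\delta$ for (d) — is routine bookkeeping.
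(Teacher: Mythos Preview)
The paper does not give a proof of this statement; it is quoted without argument as a modification of Hartfiel's density and tracking results for stochastic matrices, so there is no paper-side argument to compare against and your sketch is the only proof on the table.

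Your Jordan-form perturbation approach is the natural one, and under the extra hypothesis you impose --- that the eigenvalues of modulus $1$ are simple --- the argument is sound: leaving the peripheral $1\times 1$ blocks fixed and splitting the strictly-interior blocks gives (a) and (c) immediately, and your two-regime treatment of (b) (geometric decay of interior blocks for $m\ge M$, Lipschitz continuity of $X\mapsto X^m$ on bounded sets for $m\le M$) is exactly right. Your handling of (d) is hand-wavy, but since the only downstream use of (d) in the paper is to bound $\nrm{B\ind{\epsilon}}$ uniformly as $\epsilon\to 0$, the weaker conclusion $\nrm{B\ind{\epsilon}}_\infty \le \nrm{A}_\infty + O(\delta)$ that you actually establish would already suffice there.

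You have also correctly identified a genuine issue with the statement itself: as written it cannot hold whenever $A$ has a repeated (even semisimple) eigenvalue on the unit circle. For $A=I_2$, any $B\ind{\epsilon}$ with two distinct eigenvalues in the closed disk has powers that either converge to a rank-one projector (if one eigenvalue lies strictly inside) or oscillate along the circle (if both have modulus $1$), and in neither case does $\nrm{I-(B\ind{\epsilon})^m}$ stay below a fixed $\epsilon$ for all $m$. This is not academic for the paper either, since in its application the block-diagonal matrix $\bar{P}$ can carry the eigenvalue $1$ with multiplicity two. Fortunately, the limiting argument in that proof only uses $\nrm{A^k-(B\ind{\epsilon})^k}$ small for finitely many exponents $k\in\{1,\dots,d,m\}$ at a time (with $m$ fixed before sending $\epsilon\to 0$), and that weaker statement does follow from ordinary density of matrices with distinct eigenvalues plus continuity of the power map --- no uniform-in-$m$ tracking is required. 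So your diagnosis is correct: the quoted theorem is over-strong, the restricted version you prove is the right one, and the paper's application survives on the weaker lemma.
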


\begin{theorem}[Theorem 8.1.1. in \cite{bhatia2013matrix}]
\label{thm:B_eigenvalue_bound} 
Let $A, B$ be $d \times d$ with eigenvalues $\lambda_1, \ldots, \lambda_d$ and $\lambda'_1, \ldots, \lambda'_d$ respectively. Then, 
\begin{align*}
	\max_j \min_i ~ \abs{\lambda_i - \lambda'_j} \leq \prn{\nrm{A} + \nrm{B}}^{1 - 1/d} \nrm{A - B}^{1/d}. 
\end{align*} 
\end{theorem}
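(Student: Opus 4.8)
The plan is to prove this by Elsner's classical spectral-variation argument, which reduces the bound to a single determinant estimate. Fix an index $j$, let $\mu \ldef \lambda'_j$ be the corresponding eigenvalue of $B$, and let $v \in \bbC^d$ be a unit eigenvector, so that $Bv = \mu v$. The starting point is the identity $\prn*{A - \mu I} v = \prn*{A - B} v$, which immediately gives $\nrm*{\prn*{A - \mu I} v} \leq \nrm*{A - B}$.

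Next I would invoke the elementary determinant inequality (Elsner's lemma): for any $d \times d$ matrix $M$ and any unit vector $w \in \bbC^d$, $\abs*{\det M} \leq \nrm*{M w} \cdot \nrm*{M}^{d-1}$. This follows by extending $w$ to an orthonormal basis $w = w_1, w_2, \ldots, w_d$, observing that $\det M$ equals (up to a unit-modulus factor) the determinant of the matrix whose columns are $M w_1, \ldots, M w_d$, and then applying Hadamard's inequality for columns together with the bound $\nrm*{M w_i} \leq \nrm*{M}$ for $i \geq 2$. Applying this with $M = A - \mu I$ and $w = v$ yields
\[
\abs*{\det\prn*{A - \mu I}} \;\leq\; \nrm*{A - B}\cdot\nrm*{A - \mu I}^{d-1}.
\]

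To finish, I would lower-bound the left-hand side by $\abs*{\det\prn*{A - \mu I}} = \prod_{i=1}^d \abs*{\lambda_i - \mu} \geq \prn*{\min_i \abs*{\lambda_i - \mu}}^d$, and upper-bound the right-hand side using $\nrm*{A - \mu I} \leq \nrm*{A} + \abs*{\mu} \leq \nrm*{A} + \nrm*{B}$, where the last step uses that $\mu$ is an eigenvalue of $B$. Combining these two estimates and taking $d$-th roots gives $\min_i \abs*{\lambda_i - \lambda'_j} \leq \nrm*{A - B}^{1/d}\prn*{\nrm*{A} + \nrm*{B}}^{1 - 1/d}$, and since $j$ was arbitrary, taking the maximum over $j$ yields the claim. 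The only point requiring care is consistency of the matrix norm: the argument needs sub-multiplicativity (so that $\nrm*{M w_i} \leq \nrm*{M}$) and the spectral-radius bound $\abs*{\mu} \leq \nrm*{B}$, both of which hold for the operator/spectral norm used elsewhere in the paper; beyond that the argument is entirely routine.
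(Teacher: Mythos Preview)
Your argument is correct and is precisely Elsner's proof of this spectral-variation bound, which is also the proof given in Bhatia's book. Note, however, that the paper does not prove this statement at all: it is quoted verbatim as Theorem~8.1.1 of \cite{bhatia2013matrix} and used as a black box in the proof of \pref{lem:adaptive_power_bound_general}, so there is no ``paper's own proof'' to compare against. Your write-up is exactly the standard argument and would serve as a self-contained replacement for the citation.
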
 

\subsubsection{Proof of \pref{lem:adaptive_ep_bound_appx} }  \label{app:adaptive_ep_bound_proof}

We are finally ready to prove the adaptive error propagation bound given in \pref{lem:adaptive_ep_bound_appx}. 
\begin{proof}[{Proof of \pref{lem:adaptive_ep_bound_appx} } ] Using \pref{lem:recurrence_extension} for the sequences $\crl{R_{h}}$ and $\crl{\wt R_{h}}$ respectively, we get that for any $m \geq 0$, 
\begin{align*}
R_{d + m} = \tri{U, \transfer(\lambda)^m V}  
\intertext{and}
\wt R_{d + m} = \tri{\wt U, \transfer(\wh \lambda)^m \wt V}, 
\end{align*}
where the matrices $\transfer(\lambda), \transfer(\wh \lambda) \in \bbR^{d \times d}$ are defined according to \pref{def:P_beta} and the vectors $U, \wt U, V, \wt V \in \bbR^d$ are independent of $\lambda$ and $m$. Thus, for any $m \geq 0$, 
\begin{align*} 
\abs{R_{m + d} - \wt R_{m + d}} &=  \abs{\tri{U, \transfer(\lambda)^m V} - \tri{\wt U, \transfer(\wh \lambda)^m V}} \\ 
&= \abs{\tri{ \bar{U}, \bar{\transfer}^m \bar{V}}},  \numberthis \label{eq:adaptive_ep_bound3} 
\end{align*} where the vectors $\bar{U}, \bar{\beta} \in \bbR^{2d}$ and the block diagonal matrix $\bar{P} \in \bbR^{2d \times 2d}$ are defined as 
\begin{align*}
\bar{V} \ldef{} \begin{bmatrix} V \\ - \wt V \end{bmatrix}, \quad \bar{U} \ldef{} \begin{bmatrix} U  \\ - \wt U \end{bmatrix} \quad \text{and} \quad \bar{P} \ldef{} \begin{bmatrix} \transfer(\lambda) & 0 \\ 
0 & \transfer(\wh \lambda) 
\end{bmatrix}. 
\end{align*} 

An application of \pref{lem:P_eigenvalues} implies that the eigenvalues of the matrix $P(\lambda)$ and the matrix $P(\wh \lambda)$ are given by $\lambda$ and $\wh \lambda$ respectively. Since the matrix $\bar{P}$ is block-diagonal, we note that the set of eigenvalues of the matrix $\bar{P}$ is given by $\bar{\lambda} = \prn{\lambda_1, \wh \lambda_1, \ldots, \lambda_d, \wh \lambda_d}$. Note that the vector $\bar{\lambda}$ is not sorted except for the first two coordinates, however $\abs{\bar{\lambda}_{k}} \leq 1$ for all $k \in [2d]$.  Using \pref{lem:adaptive_power_bound_general} for the $2d \times 2d$ matrix $\bar{P}$ and the vectors $\bar{U}$ and $\bar{\beta}$, we get that for any $m \geq 2d + 1$, 
\begin{align*}
\abs{\tri{ \bar{U}, \bar{P}^{m} \bar{V}}} &\leq 2^{2d} \cdot \prod_{k=2}^{2d} \prn[\big]{ \sum_{j=0}^{m - 2d} \abs{\bar{\lambda}_k}^j} \cdot \max\crl{ \abs{\tri{ \bar{U}, \bar{P} \bar{V}}}, \ldots,  \abs{\tri{ \bar{U}, \bar{P}^{2d} \bar{V}}}} \\
&\leq 2^{2d}  \cdot m \cdot \prod_{k=2}^{d} \prn[\big]{ \sum_{j=0}^{m-1} \abs{\lambda_k}^j } \cdot  \prod_{k=2}^{d} \prn[\big]{ \sum_{j=0}^{m-1} \abs{\wh \lambda_k}^j }  \cdot \max_{m' \leq 2d} ~ \abs{\tri{ \bar{U}, \bar{P}^{m'} \bar{V}}},  \numberthis \label{eq:adaptive_ep_bound4} 
\end{align*}
where the inequality in the last line uses the fact that $\abs{\lambda_k} \leq 1$ and $\abs{\wh \lambda_k} \leq 1$ for all $k \in [d]$, and from thus $\sum_{j=0}^{m-1} \abs{\wh \lambda_k}^j \leq m$. Using the bound \pref{eq:adaptive_ep_bound4} in the relation \pref{eq:adaptive_ep_bound3}, we get that for any $h \geq 3d + 1$, 
\begin{align*}
	\abs{R_{h} - \wt R_{h}} &\leq \abs{\tri{ \bar{U}, \bar{\transfer}^{h - d} V }} \\
	&\leq  2^{2d}  h  \prod_{k=2}^{d} \prn[\big]{ \sum_{j=0}^{h-1} \abs{\lambda_k}^j } \cdot  \prod_{k=2}^{d} \prn[\big]{ \sum_{j=0}^{h-1} \abs{\wh \lambda_k}^j }  \cdot \max_{m' \leq 2d} ~ \abs{\tri{ \bar{U}, \bar{P}^{m'} \bar{V}}} \\
	 	&\overeq{\proman{1}} 2^{2d}  h \cdot \prod_{k=2}^{d} \prn[\big]{ \sum_{j=0}^{h-1} \abs{\lambda_k}^j } \cdot  \prod_{k=2}^{d} \prn[\big]{ \sum_{j=0}^{h-1} \abs{\wh \lambda_k}^j }   \cdot \max_{m' \leq 2d} ~ \abs{R_{d + m'} - \wt R_{d + m'}} \\
		&\leq 2^{2d}  h \cdot  \prod_{k=2}^{d} \prn[\big]{ \sum_{j=0}^{h-1} \abs{\lambda_k}^j } \cdot  \prod_{k=2}^{d} \prn[\big]{ \sum_{j=0}^{h-1} \abs{\wh \lambda_k}^j }   \cdot \max_{h' \leq 3d} ~ \abs{R_{h'} - \wt R_{h'}}, 
\end{align*} where the equality $\proman{1}$ follows due to relation \pref{eq:adaptive_ep_bound3}. 
\end{proof}

\subsection{Proof of \pref{thm:adaptive_upper_bound_main}} \label{app:adaptive_upper_bound_main} 
Before delving into the proof of \pref{thm:adaptive_upper_bound_main}, we first note the following technical lemma which concerns with the feasability and properties of the solutions of optimization problem \pref{eq:adaptive_optimization_problem} in \pref{alg:adaptive_extrapolation}. 

\begin{lemma} 
\label{lem:adaptive_initial_conditions}  
Let $\lambda \in \bbC^d$ such that $\abs{\lambda_k} \leq 1$ for all $k \in [d]$. Using the initial values $R_1, \ldots, R_{d}$, let $R_{h}$ be defined as 
\begin{align*} 
R_{h} = \sum_{k=1}^d (-1)^{k+1} \alpha_k(\lambda \ind{\pi}) R_{h-k}.  \numberthis \label{eq:initial_error_boundad2}    
\end{align*} 
Further, let  $\wh R_1, \wh R_2, \ldots, \wh R_{3d}$  denote the estimates for $R_1, \ldots, R_{3d}$ respectively, such that 
\begin{align*}
\max_{h \leq 3d} ~ \abs{\wh R_{h} - R_{h}} \leq \eta,  \numberthis \label{eq:initial_error_boundad1}    
\end{align*}
where $\eta \ldef{} \min\crl[\Big]{\sqrt{\frac{8 K^{3d} \log(6d \abs*{\Pi} / \delta )}{n}}, {\frac{4 K^{3d} \log(6d \abs*{\Pi} / \delta )}{n}}}$. Then, 
\begin{enumerate}[label=(\alph*), leftmargin=8mm]
\item The optimization problem \pref{eq:adaptive_optimization_problem} in  \pref{alg:adaptive_extrapolation} has a solution $\wh \lambda \in \bbC^d$ such that $\abs{\wh \lambda_1} = 1$ and 
\begin{align*}
\prod_{k=2}^d  \prn[\big]{\sum_{j=0}^{H-1} \abs{\wh \lambda_k}^j}  &\leq \prod_{k=2}^d  \prn[\big]{\sum_{j=0}^{H-1} \abs{\lambda_k}^j}. 
\end{align*} 
\item Further, let $\wt R_{h}$ be predictions according to \pref{line:adaptive_value_prediction} in \pref{alg:adaptive_extrapolation} using the solution $\wh \lambda$. Then, 
\begin{align*}
\max_{h' \leq 3d} ~ \abs{\wt R_h - R_h} \leq 2d \cdot \prn{64 e}^{d} \cdot \eta.
\end{align*}
\end{enumerate} 
\end{lemma}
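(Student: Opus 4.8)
The plan is to follow the two-part template of \pref{lem:initial_conditions}, of which \pref{lem:adaptive_initial_conditions} is the exact adaptive analogue: part (a) is a feasibility statement for the optimization problem \pref{eq:adaptive_optimization_problem} together with a bound on the objective value attained by its minimizer, and part (b) is an error-propagation bound through the $\wh\lambda$-autoregression that is word-for-word identical to \pref{lem:initial_conditions}(b) once part (a) is in hand.

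For part (a), I would exhibit the true eigenvalue vector $\lambda$ itself as a feasible point of \pref{eq:adaptive_optimization_problem}. Since the induced transition matrix $T\ind{\pi}$ is (column-)stochastic, its eigenvalue vector has $\abs{\lambda_1} = 1$ and $\abs{\lambda_k}\le 1$ for all $k$, so the constraints $\abs{\lambda_1}=1$ and $\abs{\lambda_k}\le1$ for $2\le k\le d$ hold at $\lambda$. For the data-fitting constraints I would reuse the computation in the proof of \pref{lem:initial_conditions}(a): for each $d+1\le h\le 3d$, substituting the autoregression \pref{eq:initial_error_boundad2} satisfied by $\crl{R_h}$ and then applying the triangle inequality,
\begin{align*}
\abs[\Big]{\sum_{k=1}^d (-1)^{k+1}\alpha_k(\lambda)\,\wh R_{h-k} - \wh R_h}
&= \abs[\Big]{\sum_{k=1}^d (-1)^{k+1}\alpha_k(\lambda)\prn{\wh R_{h-k} - R_{h-k}} - \prn{\wh R_h - R_h}} \\
&\le \sum_{k=1}^d \abs{\alpha_k(\lambda)}\,\abs{\wh R_{h-k} - R_{h-k}} + \abs{\wh R_h - R_h}
\le d\cdot 4^d\cdot\eta + \eta \le 2d\cdot 4^d\cdot\eta,
\end{align*}
where I used $\abs{\alpha_k(\lambda)}\le 4^d$ from \pref{lem:alpha_independent_bound} and the estimation bound \pref{eq:initial_error_boundad1}; since \pref{alg:adaptive_extrapolation} sets $\Delta = 2d\,4^d\,\eta$, the point $\lambda$ satisfies every constraint, so the feasible set is nonempty, and being closed and bounded it contains a minimizer $\wh\lambda$. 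The objective of \pref{eq:adaptive_optimization_problem} evaluated at $\lambda$ equals $\prod_{k=2}^d\prn{\sum_{j=0}^{H-1}\abs{\lambda_k}^j}$, hence the minimizer $\wh\lambda$ has objective value at most this, which is exactly the claimed inequality; and $\abs{\wh\lambda_1}=1$ is imposed by the feasible set.

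For part (b), the plan is to transcribe the argument of \pref{lem:initial_conditions}(b) without change. Writing $\wh\Delta_m \ldef \sum_{k=1}^d(-1)^{k+1}\alpha_k(\wh\lambda)\,\wh R_{d+m-k} - \wh R_{d+m}$ and $Z_m \ldef \wt R_{m+d} - \wh R_{m+d}$, and noting that the prediction step in \pref{line:adaptive_reward_prediction} of \pref{alg:adaptive_extrapolation} is exactly the $\wh\lambda$-autoregression with initial values $\wt R_h = \wh R_h$ for $h\le d$ (identical in form to \pref{alg:value_estimation_extrapolation}), the same induction on $m$ as in \pref{lem:initial_conditions} establishes
\begin{align*}
Z_m = \wh\Delta_m + \sum_{i=1}^{m-1}\beta_{i-1,1}(\wh\lambda)\,\wh\Delta_{m-i},
\end{align*}
with $\beta$ as in \pref{def:P_beta}; crucially this derivation uses only the recursion structure, not which objective was optimized. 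The data-fitting constraints of \pref{eq:adaptive_optimization_problem} range over $d+1\le h\le 3d$, i.e.\ $\abs{\wh\Delta_m}\le\Delta$ for all $1\le m\le 2d$, which is precisely the range the induction requires. Then, for $1\le m\le 2d$, combining the triangle inequality with $\abs{\wh R_{m+d}-R_{m+d}}\le\eta$, $\abs{\wh\Delta_m}\le\Delta=2d\,4^d\,\eta$, and $\abs{\beta_{i-1,1}(\wh\lambda)}\le\prn{8e\max\crl{i,d}/d}^d\le(16e)^d$ from \pref{lem:bounded_coefficients} (valid since $\abs{\wh\lambda_k}\le1$) yields $\max_{h\le 3d}\abs{\wt R_h - R_h}\le 2d(64e)^d\,\eta$, exactly as in \pref{lem:initial_conditions}(b).

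I do not expect a genuine obstacle here: the only two features that differ from the non-adaptive case --- the new objective $\prod_{k=2}^d\prn{\sum_{j}\abs{\lambda_k}^j}$ and the tightened constraint $\abs{\lambda_1}=1$ (in place of $\abs{\lambda_1}\le1$) --- both only help. The objective plays no role whatsoever in part (b), so the only thing to verify there is that $\wh\Delta_m$ is controlled by the fixed tolerance $\Delta$ on the same range $1\le m\le 2d$ as before, which holds by construction of \pref{eq:adaptive_optimization_problem}. The single point deserving care is the observation that $\abs{\lambda_1}=1$ for the true eigenvalue vector, so that $\lambda$ stays feasible for the equality-constrained problem; this is automatic because $T\ind{\pi}$ is a stochastic matrix.
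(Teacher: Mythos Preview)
Your proposal is correct and matches the paper's proof essentially line for line: the paper likewise establishes part~(a) by checking that the true $\lambda$ is feasible for \pref{eq:adaptive_optimization_problem} via the same triangle-inequality computation, and for part~(b) it simply refers back to the proof of \pref{lem:initial_conditions}(b), exactly as you propose.
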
 
\begin{proof} In the following, we provide the proof for part-(a) of the lemma. The proof of part-(b) follows exactly along the lines of a similar statement proven in \pref{lem:initial_conditions}.

\paragraph{Proof of part-(a).}  We prove this by showing that the vector $\lambda \in \bbR^d$ satisfies all the constraints of the optimization problem in \pref{eq:adaptive_optimization_problem}. First note that $\abs{\lambda_1} = 1$ and $\abs{\lambda_k} \leq  1$ for all $k \leq d$, by definition. Furthermore, for any $h \leq 3d$, 
 \begin{align*}
\abs{\sum_{k=1}^d (-1)^{k+1} \alpha_{k}(\lambda) \wh R_{h - k} - \wh R_{h}} &\overeq{\proman{1}}   \abs{\sum_{k=1}^d (-1)^{k+1} \alpha_{k}(\lambda) \prn{\wh R_{h - k} - R_{h-k}} - (\wh R_{h} - R_h)} \\ 
&\overleq{\proman{2}}  \sum_{k=1}^d  \abs{\alpha_{k}(\lambda)} \cdot \abs{\wh R_{h - k} - R_{h-k}} +  \abs{\wh R_{h} - R_h}  \\
&\overleq{\proman{3}} d \cdot 4^d \cdot \eta + \eta \\
&\leq 2d \cdot 4^d \cdot \eta. 
\end{align*}

where the equality $\proman{1}$ follows from the relation \pref{eq:initial_error_bound1} and the inequality $\proman{2}$ follows from Triangle inequality. The inequality $\proman{3}$ follows by plugging in the bound from \pref{lem:alpha_independent_bound} for $\abs{\alpha_k(\lambda)}$ and using the bound in \pref{eq:initial_error_boundad1}. Plugging in the value of $\eta = \min\crl[\Big]{\sqrt{\frac{8 K^{3d} \log(6d \abs*{\Pi} / \delta )}{n}}, {\frac{4 K^{3d} \log(6d \abs*{\Pi} / \delta )}{n}}}$ in the above bound, we get that 

 \begin{align*} 
\abs{\sum_{k=1}^d (-1)^{k+1} \alpha_{k}(\lambda) \wh R_{h - k} - \wh R_{h}} &\leq 2d \cdot 4^d \cdot \min\crl[\Big]{\sqrt{\frac{8 K^{3d} \log(6d \abs*{\Pi} / \delta )}{n}}, {\frac{4 K^{3d} \log(6d \abs*{\Pi} / \delta )}{n}}}. \numberthis \label{eq:initial_error_boundad3}   
\end{align*}

Thus, the vector $\lambda \in \bbC$ is a feasible solution to the optimization problem in \pref{eq:adaptive_optimization_problem}. Next, noting the fact that \pref{eq:adaptive_optimization_problem} is a minimization problem, we get that for the returned solution $\wh \lambda$ must satisfy  
\begin{align*}
\prod_{k=2}^d  \prn[\big]{\sum_{j=0}^{H-1} \abs{\wh \lambda_k}^j}  &\leq \prod_{k=2}^d  \prn[\big]{\sum_{j=0}^{H-1} \abs{\lambda_k}^j}. 
\end{align*} 
\end{proof}

We are now ready to prove our adaptive upper bound in \pref{thm:adaptive_upper_bound_main}. The proof is very similar to the proof of \pref{thm:basic_sample_complexity_bound} given in \pref{app:basic_sample_complexity_bound}. The main technical difference is that we use an adaptive error propagation bound, given in \pref{lem:adaptive_ep_bound_appx}, instead of the error propagation bound from \pref{lem:basic_ep_bound} to control the  error in the predicted rewards. 

\begin{proof}[{Proof of \pref{thm:adaptive_upper_bound_main}} ]
Starting from \pref{lem:ss_ub}, we get that with probability at least  $1 - \delta$, for every policy $\pi \in \Pi$, our estimate $\wh R\ind{\pi}_h$ computed in  \pref{line:adaptive_reward_estimation1} of \pref{alg:adaptive_extrapolation} satisfies the error bound 
\begin{align*} 
\max_{h' \leq 3d} ~ \abs{\wh R\ind{\pi}_{h'} - R\ind{\pi}_{h'}} &\leq \min\crl[\Bigg]{\sqrt{\frac{8 K^{3d} \log(6d \abs*{\Pi} / \delta )}{n}}, {\frac{4 K^{3d} \log(6d \abs*{\Pi} / \delta )}{n}}}.  \numberthis  \label{eq:basic_autoregression4_adaptive} 
\end{align*}

Now, consider any policy $\pi \in \Pi$, and let $\wh \lambda\ind{\pi}$, $\wh \Delta\ind{\pi}$, $\wt R\ind{\pi}_h$ and $\wt V \ind{\pi}$ denote the corresponding local variables in the $\adavalestimate$ when invoked in \pref{alg:adaptive_extrapolation} for the policy $\pi$.  Further, let $\lambda\ind{\pi}$ denote the eigenvalues of the transition matrix $T\ind{\pi}$. As a consequence of \pref{lem:basic_recurrence_relation}, the expected rewards $R_h\ind{\pi}$ satisfy an autoregression where the coefficients are determined by $\lambda\ind{\pi}$. Specifically, for any $h \geq d +1$, 
\begin{align*} 
R\ind{\pi}_{h} = \sum_{k=1}^d (-1)^{k+1} \alpha_k(\lambda \ind{\pi}) \cdot R\ind{\pi}_{h-k}.  
\end{align*} 
Furthermore, by definition (see \pref{line:adaptive_reward_prediction} of \pref{alg:adaptive_extrapolation}), the predicted rewards $\wt R\ind{\pi}_h$ also satisfy a similar autoregression where the coefficients are determined by $\wh \lambda \ind{\pi}$, the solution of the optimization problem in \pref{eq:adaptive_optimization_problem} for the policy $\pi$. We have, for any $h \geq d + 1$, 
\begin{align*}
 	\wt R_{h}\ind{\pi} =   \sum_{k=1}^d (-1)^{k+1} \alpha_{k} (\wh \lambda \ind{\pi}) \cdot \wt R_{h - k}\ind{\pi} 
\end{align*}
where $\wt R_{h'} \ldef{} \wh R_{h'}$ for $h' \leq d$. Additionally, also note that $T\ind{\pi}$  is a stochastic matrix and thus $\abs{\lambda_k\ind{\pi}} \leq 1$ for all $k \in [d]$. By definition, we also have that $\abs{\wh \lambda_k\ind{\pi}} \leq 1$. Thus, using the error propagation bound in \pref{lem:adaptive_ep_bound_appx} for the sequences $\crl{R_{h}\ind{\pi}}$ and $\crl{\wt R\ind{\pi}_h}$, we get that for any  $h \geq 1$, 
\begin{align*} 
	\abs{\wt R\ind{\pi}_h - R\ind{\pi}_h} &\leq 4^{d}  h  \cdot \prod_{k=2}^{d} \prn[\Big]{\sum_{j=0}^{h-1} \abs{\lambda\ind{\pi}_k}^{j}} \cdot \prod_{k=2}^{d} \prn[\Big]{\sum_{j=0}^{h-1} \abs{\wh \lambda\ind{\pi}_k}^{j}} \cdot \max_{h' \leq 3d} ~ \abs{\wt R_{h'} - R_{h'}} \\ 
	&\leq 4^{d}  h  \cdot \prod_{k=2}^{d} \prn[\Big]{\sum_{j=0}^{h-1} \abs{\lambda\ind{\pi}_k}^{j}}^2 \cdot \max_{h' \leq 3d} ~ \abs{\wt R_{h'} - R_{h'}},   \numberthis \label{eq:general_sc1_adaptive}
\end{align*} where the inequality in the second line above follows from the fact that 
\begin{align*}
\prod_{k=2}^d  \prn[\big]{\sum_{j=0}^{H-1} \abs{\wh \lambda\ind{\pi}_k}^j}  &\leq \prod_{k=2}^d  \prn[\big]{\sum_{j=0}^{H-1} \abs{\lambda\ind{\pi}_k}^j}  
\end{align*} as a consequence of \pref{lem:adaptive_initial_conditions}-(a) for the policy $\pi$. Next, \pref{lem:adaptive_initial_conditions}-(b) for the policy $\pi$ implies that the predicted rewards $\wt R_{h'}\ind{\pi}$ satisfy the error bound 
\begin{align*}
\qquad  \max_{h' \leq 3d} ~ \abs{\wt R_{h'}\ind{\pi} - R_{h'}\ind{\pi}} &\leq 2d \cdot (64 e)^d \cdot \max_{h' \leq 3d} ~ \abs{\wh R\ind{\pi}_{h'} - R\ind{\pi}_{h'}} \\
&\leq  2d \cdot (64 e)^d \cdot \eta,   
\end{align*}
where $\eta$ denotes the right hand side of \pref{eq:basic_autoregression4_adaptive}. 
Plugging the above in \pref{eq:general_sc1_adaptive}, we get that 
\begin{align*} 
	\abs{\wt R\ind{\pi}_h - R\ind{\pi}_h} &\leq 2d h (256 e)^d \cdot \prod_{k=2}^{d} \prn[\Big]{\sum_{j=0}^{h-1} \abs{\lambda\ind{\pi}_k}^{j}}^2 
	\cdot \eta. \numberthis \label{eq:general_sc5_adaptive}
\end{align*} for any $h \geq 1$. Thus, the error in the estimated value $\wt V\ind{\pi}$ for the policy $\pi$ is bounded by 
\begin{align*}
\abs{\wt V\ind{\pi} - V \ind{\pi}} 
&= \abs{\sum_{h=1}^{H} \prn{\wt R_h\ind{\pi} - R_h \ind{\pi}} }   \\
&\leq \sum_{h = 1}^H \abs{{\wt R_h\ind{\pi} - R_h \ind{\pi}}} \\
&\leq \sum_{h=1}^H 2d h (256 e)^d \cdot \prod_{k=2}^{d} \prn[\Big]{\sum_{j=0}^{h-1} \abs{\lambda\ind{\pi}_k}^{j}}^2 
	\cdot \eta \\ 
&\leq 2d H^2 (256 e)^d \cdot \prod_{k=2}^{d} \prn[\Big]{\sum_{j=0}^{H-1} \abs{\lambda\ind{\pi}_k}^{j}}^2 
	\cdot \eta \\ 
	&\leq 2d H^2 (256 e)^d \cdot \max_{\pi' \in \Pi} \prod_{k=2}^{d}  \prn[\Big]{\sum_{j=0}^{H-1} \abs{\lambda\ind{\pi'}_k}^{j}}^2 
	\cdot \eta,  \numberthis \label{eq:general_sc4_adaptive} 
\end{align*} where the inequality in the second last line follows by using the bound in \pref{eq:general_sc5_adaptive}. 

Since $\pi$ is arbitrary in the above chain of arguments, the error bound in \pref{eq:general_sc4_adaptive} holds for all policies $\pi \in \Pi$. Thus, for any $\pi \in \Pi$, the policy $\wt \pi$ returned in \pref{line:ada_choose_optimal_policy} of \pref{alg:adaptive_policy_search} satisfies 
\begin{align*}
V\ind{\wt \pi} - V\ind{\pi} &= \prn{\wt V\ind{\pi} - V\ind{\pi}}  +  \prn{\wt V\ind{\wt \pi} - \wt V \ind{\pi}} + \prn{V\ind{\wt \pi} - \wt V\ind{\wt \pi}} \\
&\geq \prn{\wt V\ind{\pi} - V\ind{\pi}} + \prn{V\ind{\wt \pi} - \wt V\ind{\wt \pi}} \\
&\geq - \abs{\wt V\ind{\pi} - V\ind{\pi}} - \abs{V\ind{\wt \pi} - \wt V\ind{\wt \pi}}, 
\end{align*} where the inequality in the second line follows from the fact that $\wt V\ind{\wt \pi} \geq \wt V\ind{\pi}$ for every $\pi \in \Pi$ by the definition of the policy $\wt \pi$. Using the bound from \pref{eq:general_sc4_adaptive} for policies $\pi$ and $\wt \pi \in \Pi$ in the above, we get that  
\begin{align*} 
V\ind{\wt \pi} &\geq V\ind{\pi} - 4 d H^2 (256 e)^d \cdot \max_{\pi' \in \Pi} \prod_{k=2}^{d}  \prn[\Big]{\sum_{j=0}^{H-1} \abs{\lambda\ind{\pi'}_k}^{j}}^2
	\cdot \eta \\ 
&\geq V\ind{\pi} -  4 d H^2 (256 e)^d \cdot \max_{\pi' \in \Pi} \prod_{k=2}^{d}  \prn[\Big]{\sum_{j=0}^{H-1} \abs{\lambda\ind{\pi'}_k}^{j}}^2 \cdot  \min\crl[\Bigg]{\sqrt{\frac{8 K^{3d} \log(6d \abs*{\Pi} / \delta )}{n}}, {\frac{4 K^{3d} \log(6d \abs*{\Pi} / \delta )}{n}}} \\ 
&\geq V\ind{\pi} -  4 d H^2 (256 e)^d \cdot \max_{\pi' \in \Pi} \prod_{k=2}^{d}  \prn[\Big]{\sum_{j=0}^{H-1} \abs{\lambda\ind{\pi'}_k}^{j}}^2  \sqrt{\frac{8 K^{3d} \log(6 d \abs*{\Pi}/ \delta)}{n}} 
\end{align*} 
where the inequality in the second line above follows by plugging in the value of $\eta$ as the right hand side of \pref{eq:basic_autoregression4_adaptive}, and the inequality in the last line holds due to the fact that $-\min\crl{a, b} \geq -a$ for any $a, b \geq 0$. 

Since the above holds for any $\pi \in \Pi$, we have that 
\begin{align*} 
V\ind{\wt \pi} &\geq \max_{\pi \in \Pi} V\ind{\pi} - 4 d H^2 (256 e)^d \cdot \max_{\pi' \in \Pi} \prod_{k=2}^{d}  \prn[\Big]{\sum_{j=0}^{H-1} \abs{\lambda\ind{\pi'}_k}^{j}}^2  \sqrt{\frac{8 K^{3d} \log(6 d \abs*{\Pi}/ \delta)}{n}}, 
\end{align*} 
hence proving the desired statement. 
\end{proof}

\subsection{Adaptivity to rank}  \label{app:rank_adaptivity}
We now describe how the learner can find the best policy in the class $\Pi$, that satisfies \pref{ass:low_rank}, without knowing the value of the rank parameter. Let us denote the unknown rank parameter by $d^*$. Our adaptive algorithm, given in \pref{alg:adaptive_policy_search_rank}, follows from standard techniques in the model selection literature. For every $d \in [H]$, we compute an optimal policy $\widetilde{\pi}_{d}$ assuming that the rank $d^* = d$. Then, for each $d \in [H]$, we estimate the value function for the policy $\widetilde{\pi}_d$ by drawing $n/2H$ fresh trajectories using that policy. Finally, we return the policy $\wt \pi$ from the set $\crl*{\widetilde{\pi}_d}_{d  \in [H]}$ with the highest estimated value. The returned policy $\wt \pi$ satisfies, with probability at least $1- \delta$, 
\begin{align*} 
V \ind{\wt \pi} \geq \max_{\pi \in \Pi} ~ V\ind{\pi}   - O\prn[\Big]{\prn[\Big]{\frac{H}{d^*}}^{2d^*} \sqrt{ \frac{(8K)^{3d^*} \log(6d \abs{\Pi} / \delta )}{n}} - 2 \sqrt{\frac{\log(H) \log(1/\delta)}{n}}}.  \numberthis \label{eq:rank_adaptivity} 
\end{align*} 

\begin{algorithm}[h] 
\caption{Adaptive policy search algorithm (adaptivity to rank)}   
\begin{algorithmic}[1] 
	\Require horizon $H$, rank $d$,  number of episodes $n$, finite policy class $\Pi$ 
	\State Collect a dataset $\cD = \crl{\prn{x\ind{t}_h, a\ind{t}_h, r\ind{t}_h}_{h=1}^H}_{t = 1}^{n/2}$ by sampling $n/2$ trajectories where actions are sampled from $\uniform(\cA)$. 
	\For {$d \in \crl*{1, 2, \ldots, H}$}
	\For {\text{policy} $\pi \in \Pi$} 
\State Estimate $\wt V_{d}\ind{\pi}$ by calling \valestimate($H, d, \cD, \pi$).
\EndFor  
    \State Compute the policy $\widetilde{\pi}_{d} \in \argmax_{\pi \in \Pi} \wt V_{d}\ind{\pi}$. 
    \State Collect $n/2H$ more episodes using the policy $\widetilde{\pi}_{d}$ and estimate the value  $\bar{V}\ind{\widetilde{\pi}_d}$ using the empirical average of the returned rewards. 
\EndFor 
\State\label{line:ada_choose_optimal_policy} \textbf{Return: } policy $\wt \pi$ with best estimated value $\wt \pi \in \argmax_{d \in [H]} \bar{V} \ind{\wt \pi_d}$. 
\end{algorithmic} 
\label{alg:adaptive_policy_search_rank}   
\end{algorithm}

Note that, in \pref{alg:adaptive_policy_search_rank}, we cap the value of $d^*$ by $H$. In the case, when $d^* > H$, we can directly estimate the expected reward for each policy by importance sampling upto $H$ steps, and thus compute the optimal policy in $\Pi$. 

Finally, we can get an algorithm that adapts to both the unknown rank $d^*$ and the eigenspectrum simultaneously by using the procedure \adavalestimate~(given in \pref{alg:adaptive_extrapolation}) instead of the procedure \valestimate~in \pref{alg:adaptive_policy_search_rank}. This implies the following adaptive bound for well mixing MDPs.  
\begin{corollary}[Well mixing MDP] Given $\delta \in (0, 1)$, horizon $H$, a policy class $\Pi$ and a MDP M. 
\begin{enumerate}[label=(\alph*)]
    \item If for every policy $\pi \in \Pi$, the transition matrix $T\ind{\pi}$ has at most $d^*$ non-zero eigenvalues such that the second largest eigenvalue $\abs{\lambda\ind{\pi}_2} \leq 1 - \gamma~$, where $K$ and $\gamma$ are not known to the learner. Then, \pref{alg:adaptive_policy_search_rank} (run win $\adavalestimate$ instead of $\valestimate$) returns a policy $\wt \pi$ such that, with probability at least $ 1- \delta$, 
\begin{align*} 
V\ind{\wt \pi} \geq \max_{\pi \in \Pi} V \ind{\pi} - \widetilde{O}\prn[\Big]{\prn[\Big]{\frac{K}{\gamma }}^{2d^*} \frac{1}{\sqrt{n}}}.  \end{align*} 
    \item If for every policy $\pi \in \Pi$, the mixing time of the transition matrix $T\ind{\pi}$ is bounded by $\tau$, where $\tau$ is not known to the learner. Then, \pref{alg:adaptive_policy_search_rank} (run win $\adavalestimate$ instead of $\valestimate$) returns a policy $\wt \pi$ such that, with probability at least $ 1- \delta$,
\begin{align*} 
V\ind{\wt \pi} \geq \max_{\pi \in \Pi} V \ind{\pi} - \widetilde{O}\prn[\Big]{\frac{K^{2 \tau}}{\sqrt{n}}}. \end{align*} 
\end{enumerate}  
\end{corollary}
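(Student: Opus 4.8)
The plan is to derive both parts as consequences of \pref{thm:adaptive_upper_bound_main} together with the model-selection wrapper of \pref{alg:adaptive_policy_search_rank}. First I would record the ``doubly adaptive'' guarantee obtained by running \pref{alg:adaptive_policy_search_rank} with $\adavalestimate$ in place of $\valestimate$: repeating the proof of \pref{thm:adaptive_upper_bound_main} (using \pref{lem:ss_ub} for the importance-sampling error and \pref{lem:adaptive_ep_bound_appx} for the propagated error, at each guessed order $d$) and then adding the standard model-selection term behind \pref{eq:rank_adaptivity}, one gets that with probability at least $1-\delta$ the returned policy obeys
\begin{align*}
V\ind{\wt{\pi}} \;\geq\; \max_{\pi\in\Pi} V\ind{\pi} \;-\; \widetilde{O}\prn[\Big]{\max_{\pi'\in\Pi}\prod_{k=2}^{d^*}\prn[\Big]{\sum_{j=0}^{H-1}\abs{\lambda\ind{\pi'}_k}^{j}}^{2}\sqrt{\frac{(8K)^{3d^*}\log(\abs{\Pi}/\delta)}{n}}},
\end{align*}
where $d^*$ is the true rank and the $\widetilde{O}$ absorbs the $\mathrm{poly}(d^*,H,\log(1/\delta))$ and $O(1)^{d^*}$ prefactors of \pref{thm:adaptive_upper_bound_main} and the extra $\sqrt{\log H\,\log(1/\delta)/n}$. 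Everything then reduces to bounding the geometric sums $\sum_{j=0}^{H-1}\abs{\lambda\ind{\pi}_k}^{j}$ for $k\geq 2$ under each hypothesis.

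For part (a), the eigenvalues are ordered by decreasing magnitude, so $\abs{\lambda\ind{\pi}_k}\leq\abs{\lambda\ind{\pi}_2}\leq 1-\gamma$ for all $k\geq 2$ and all $\pi\in\Pi$, whence $\sum_{j=0}^{H-1}\abs{\lambda\ind{\pi}_k}^{j}\leq\sum_{j\geq 0}(1-\gamma)^{j}=1/\gamma$ and $\max_{\pi'}\prod_{k=2}^{d^*}(\sum_j\abs{\lambda\ind{\pi'}_k}^{j})^{2}\leq\gamma^{-2(d^*-1)}\leq\gamma^{-2d^*}$. Plugging this into the display and using $(8K)^{3d^*/2}\leq K^{2d^*}\cdot O(1)^{d^*}$ (absorbed by $\widetilde{O}$) gives $V\ind{\wt{\pi}}\geq\max_{\pi}V\ind{\pi}-\widetilde{O}((K/\gamma)^{2d^*}/\sqrt{n})$.

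For part (b), I would convert the mixing-time hypothesis into the same type of control. A transition matrix with mixing time $\tau$ has $(T\ind{\pi})^{h}$ converging geometrically (at rate $\asymp 2^{-h/\tau}$) to its limiting behavior, which has two consequences: (i) $\abs{R\ind{\pi}_h-R\ind{\pi}_{h-1}}$ and $\abs{R\ind{\pi}_h-R\ind{\pi}_{h'}}$ are at most $1/H$ once $h,h'\gtrsim\tau\log H$; and (ii) for such a guessed order $d$ the ``constant'' choice $\lambda=(1,0,\dots,0)$ is feasible for \pref{eq:adaptive_optimization_problem} (its constraint reduces to $\abs{\wh R_{h-1}-\wh R_h}\leq\Delta$, valid on the mixed part of the trajectory) and attains the global minimum $\prod_{k=2}^{d}(\sum_j\abs{\lambda_k}^j)=1$ of the objective. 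Hence, at the guess $d=\Theta(\tau\log H)$, $\adavalestimate$ outputs an estimate $\wt V\ind{\pi}$ that equals, up to the $\wh R$-versus-$R$ discrepancy, the first $\approx d$ importance-sampling estimates plus $H-d$ copies of $\wh R_d$, whose deviation from $V\ind{\pi}$ is at most $\widetilde{O}\bigl(H\sqrt{K^{3d}\log(\abs{\Pi}/\delta)/n}\bigr)$ by \pref{lem:ss_ub} and the $1/H$ per-step mixing error; since the wrapper tries this $d$ and returns a policy no worse, hiding the $\mathrm{poly}(H,\tau,\log(1/\delta))$ factors yields $\widetilde{O}(K^{2\tau}/\sqrt{n})$.

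The step I expect to be the main obstacle is part (b): making the translation from ``mixing time $\tau$'' to a sharp, quantitative statement about the matrices $T\ind{\pi}$ rigorous for general, possibly non-reversible, stochastic matrices (where the link between mixing time and the eigenvalue magnitudes $\abs{\lambda_k}$ is weaker than in the reversible case), and then carrying out the constant/exponent bookkeeping — including the $\log H$ coming from sharpening the mixing threshold to $1/H$ — carefully enough to land exactly on $\widetilde{O}(K^{2\tau}/\sqrt{n})$. Part (a) is essentially immediate once the doubly adaptive guarantee of the first paragraph is established.
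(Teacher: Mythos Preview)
The paper states this corollary without proof, as an immediate consequence of running \pref{alg:adaptive_policy_search_rank} with $\adavalestimate$. Your part (a) is correct and matches that implied route: bounding each geometric sum by $\sum_{j\geq 0}(1-\gamma)^j=1/\gamma$ and plugging into the doubly-adaptive guarantee at the true rank $d=d^*$ yields the claimed $(K/\gamma)^{2d^*}/\sqrt{n}$ up to the hidden factors.

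Part (b), however, has a gap that is more basic than the reversibility issue you flag. You choose the guessed order $d=\Theta(\tau\log H)$ so that the chain is mixed to accuracy $1/H$ by step $d$, but then the importance-sampling cost from \pref{lem:ss_ub} scales as $\sqrt{K^{3d}/n}=K^{\Theta(\tau\log H)}/\sqrt{n}$. Since $K^{\tau\log H}=H^{\tau\log K}$, this is \emph{not} of the form $K^{O(\tau)}$ times a polynomial in $H$, so the final step ``hiding the $\mathrm{poly}(H,\tau,\log(1/\delta))$ factors yields $\widetilde{O}(K^{2\tau}/\sqrt{n})$'' does not go through: the $\log H$ sits in the exponent of $K$ and cannot be absorbed by a $\widetilde{O}$ that hides only polynomial factors. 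Conversely, if you shrink to $d=O(\tau)$ to keep the $K$-exponent right, the constant extrapolation $\wt R_h=\wh R_d$ accumulates additive error roughly $(H-d)\,|R_d-R_\infty|=\Theta(H)$ in $\wt V\ind{\pi}$, and the candidate $\lambda=(1,0,\dots,0)$ need not even be feasible for large $n$ (since $|R_h-R_{h-1}|$ is $\Theta(1)$ at $h\approx\tau$ while $\Delta\to 0$). The paper gives no further detail here; to close (b) you would need either a stronger structural hypothesis (e.g., a spectral-gap bound for \emph{all} $k\geq 2$ that lets part (a) apply with an effective rank $O(\tau)$) or a genuinely different extrapolation argument that avoids the $\log H$ blow-up in the exponent.
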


The exponential dependence in the mixing time in the above performance guaranee is unavoidable without further assumptions as illustrated by our lower bounds construction in \pref{sec:lower_bounds}. 
\clearpage 
\section{Lower bounds} \label{app:lower_bounds} 

\subsection{Lower bound construction}
\label{app:lower_bound_construction}
We start by describing the lower bound construction, consisting of the policy class $\Pi$ and the family $\mdpfamily$ of Markov decision processes with rank $2d+2$. All MDPs in the family $\mdpfamily$ have the observation space $\cX$ of finite (but very large) size $N = |\cX|$ and action space $\cA = \{0, 1\}$, but have different transition dynamics. 

\paragraph{Policy class $\Pi$.} The policy class $\Pi \subset \crl*{\cX \mapsto \cA}$ consists of $K = \prn*{H/d}^d$ deterministic policies that are sufficiently distinct from each other. Specifically, for any two distinct policies $\prn*{\pi, \pi'} \in \Pi^2$, 
\begin{align*}
\sum_{x \in \cX} \indicator{ \pi(x) \neq \pi'(x)} \geq \frac{N}{4}. 
\end{align*}
Existence of such a policy class follows from the Gilbert-Varshamov bound  (\pref{lem:gilbert_varshamov}) when $8 d \log \prn*{H/d} \leq~N$. 

\paragraph{Family of MDPs $\mdpfamily$.} 
Each MDP $M_{\pi, \obsmap} \in \mdpfamily$ is indexed by a policy $\pi \in \Pi$ (which will be optimal in that MDP) and a function $\obsmap \colon \cX \mapsto \cS$ that maps each observation in $x \in \cX$ to one of the $2d+2$ hidden states given by $S = \crl*{(1, g), (1, b), \ldots, (d, g), (d, b), +, -}$.  In the following, we describe the transition dynamics and the reward function for the MDP $M_{\pi^*, \phi}$. 

\paragraph{Transition dynamics of $M_{\pi, \phi}$.} The transition dynamics of $M_{\pi, \phi}$ is governed by the mapping $\phi$ and the dynamics in the $2d + 2$ latent states. The dynamics in the latent states $S$ is given by two parallel chains, depicted in \pref{fig:lower_bound_structure}.
Each latent state, except for the final states $+$ and $-$, have the form $(i, g)$ or $(i, b)$ where $i \in [d]$ denotes the index in the chain, and the notation $g$ and $b$ denotes good states and bad states respectively. The initial observation $x_0$ always corresponds to the hidden state $(1, g)$. At each time step, independent of the action taken, the chain index $i$ increases by $1$ with probability $p_i$ (defined later) or stays the same with probability $(1 - p_i)$. As long as the agent follows actions according to $\pi$, the next latent state remains a good state (with the second component $g$). However, as soon as the agent takes an action that $\pi$ would not have taken, the second component is set to $b$ and then stays $b$ forever. If the agent reaches latent state $(d, g)$ it transitions to the latent state $+$ with probability $\frac{1}{2} + \epsilon$ and to the latent state $-$ with probability $\frac{1}{2} - \epsilon$. From $(d, b)$, the agent transitions to both the latent states $+$ or $-$ with equal probability. Finally, from the hidden state $+$, the agent transitions to $-$ in the next step with probability $1$. The state $-$ always transitions back to itself independent of the action taken. 

We next describe, how the above dynamics in the latent state space defines the transition dynamics for the MDP $M_{\pi, \obsmap}$ in the observation space. Define $\cX_s \ldef{} \{ x \in \cX  ~|~ \phi(x) = s\}$ as the set of observations from $\cX$ that are mapped to latent state $s \in \cS$ by the feature mapping $\phi$, and define $D_s \ldef{} \uniform\prn*{\cX_s}$ to denote the uniform distribution over the set $\cX_s$. The initial observation $x_0$ is sampled independently from $\mu_0 = D_{(1, g)}$. The two parameters $\pi$ and $\obsmap$ of  MDP $M_{\pi, \obsmap}$ define the transition dynamics $T_{\pi, \obsmap}$ as follows:

\begin{enumerate}[label=\alph*)]
\item For any observations $x \in \cX_{(i, g)}$ of \textbf{good latent states}, where $i \in [d-1]$,
\begin{align*}
    T_{\pi, \obsmap}(x, a) &= \begin{cases}
    p_i D_{(i+1, g)} + (1 - p_i)D_{(i,g)}  &\textrm{if }  a = \pi(x)\\
    p_i D_{(i+1, b)} + (1 - p_i)D_{(i,b)} &\textrm{else } \\
    \end{cases}, 
\end{align*}
where the value of $p_i \in (0, 1)$ is set later and $T_{\pi, \obsmap}(x, a)$ denotes the probability distribution over the next observation $x'$ when taking action $a$ at observation $x$.
\item For any observations $x \in \cX_{(i, b)}$ of \textbf{bad latent states}, where $i \in [d-1]$ and all $a \in \cA$, 
\begin{align*}
    T_{\pi, \obsmap}(x, a) &= p_i D_{(i+1, b)} + (1 - p_i)D_{(i,b)}. 
\end{align*}
\item For any observations $x \in \cX_{(d, g)}$ of the \textbf{good goal state} and all $a \in \cA$, 
\begin{align*}
    T_{\pi, \obsmap}(x, a) &=  \prn*{\frac{1}{2} + \epsilon} D_{+} +
    \prn*{\frac{1}{2} - \epsilon} D_{-}, 
\end{align*}
where the bias $\epsilon \in (0, 1/2)$ is set later. 
\item For any observation $x \in \cX_{(d, b)}$ of the \textbf{bad goal state}, and all $a \in \cA$,
\begin{align*}
    T_{\pi, \obsmap}(x, a) &=  \frac{1}{2} D_{+} +
    \frac{1}{2} D_{-}. 
\end{align*}

\item For any observation $x \in \cX_- \cup \cX_+$ of  \textbf{latent states $-$ and $+$}, and all $a \in \cA$, 
\begin{align*}
    T_{\pi, \obsmap}(x, a) &= D_- .
\end{align*}
\end{enumerate}

\paragraph{Reward function for $M_{\phi, \pi}$.} For any observation $x$, the reward is $0$ unless the latent state correspond to $x$ is $+$, in which case, the reward is $1$. Specifically, 
\begin{align*}
    r(x, a) = \indicator{x \in \cX_{+}}~.
\end{align*}

\begin{figure}
    \centering
\includegraphics[width=0.8\linewidth]{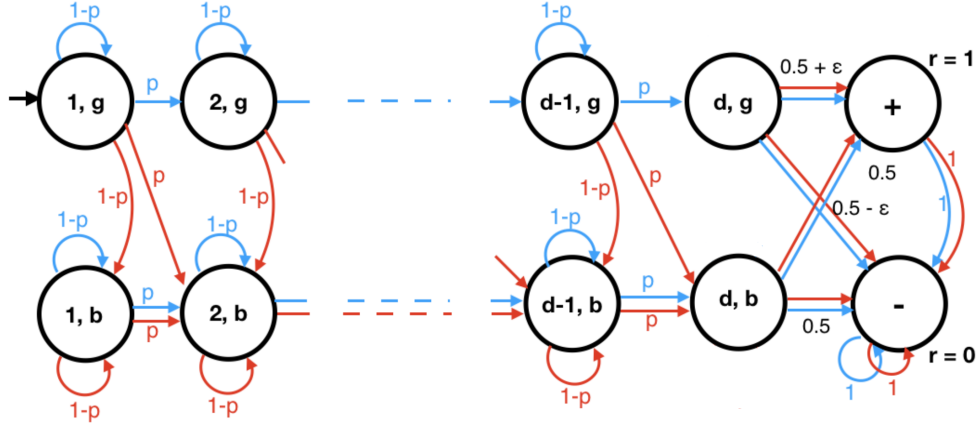}
\caption{Latent state construction: contextual combination lock. As long as the agent follows actions of the policy $\pi$ that characterizes the MDP $M_{\phi, \pi}$ (blue arrows), the agent remains in good states $(i, g)$ and receives a Bernoulli($1/2 + \epsilon$) reward but otherwise transits to bad states $(i,b)$ and receives a Bernoulli($1/2$) reward. 
    } 
    \label{fig:lower_bound_structure}
\end{figure}

\paragraph{Initial observation in $M_{\phi, \pi}$.} The initial observation $x_0$ is sampled uniformly at random from $\cX_{(1, g)}$. 

\paragraph{Additional MDP $M_{0, \obsmap}$.} In addition to the above defined MDPs $M_{\pi, \obsmap}$, we define the MDP $M_{0, \obsmap}$ for every $\obsmap$ in the MDP where latent states with $b$ and $g$ behave exactly the same. Specifically, the transition dynamics is given by 
\begin{align*}
    T_{0, \obsmap}(x, a) = \begin{cases}
    \frac{1}{2} D_{+} +
    \frac{1}{2} D_{-} & \textrm{if } x \in X_{(d, b)} \cup X_{(d, g)} \\
    \frac{p}{2} D_{(i+1, b)} + \frac{1 - p}{2} D_{(i,b)} +
    \frac{p}{2} D_{(i+1, g)} + \frac{1 - p}{2} D_{(i,g)}
    &\textrm{if } x \in X_{(i, b)} \cup X_{(i, g)}  \textrm{ for } i \in [d-1]\\
    D_{-}  & \textrm{if } x \in X_{+} \cup X_{-}
    \end{cases}. 
\end{align*}
Note that the actions taken do not affect the rewards or observations received in MDPs $M_{0, \phi}$ and thus every policy is an optimal policy. 

The family of MDPs $\mdpfamily$ is finally defined as 
\begin{align*}
    \mdpfamily \ldef{} \{M_{\pi, \obsmap} \mid \pi \in \Pi, ~ \obsmap \in \cX \mapsto \cS\} \cup
    \{M_{0, \obsmap} \mid{} \obsmap \in \cX \mapsto \cS\}.
\end{align*}
We note that the rank of each MDP in the class $\mdpfamily$ is $O(d)$ as show in the following lemma. 

\begin{lemma}[Rank bound for MDPs in $\mdpfamily$]  \label{lem:mdp_lb_rank}
Let $M_{\pi^*, \phi}$ be an MDP in $\mdpfamily$. Let $\pi \in \Pi$ be any policy and let $T^{\pi}_{\pi^*, \phi}$ denote the induced transition matrix of the policy $\pi$ in the MDP $M_{\pi^*, \phi}$. Then, the rank of the matrix $T^{\pi}_{\pi^*, \phi}$ is bounded as 
\begin{align*}
    \rank(T^\pi_{\pi^*, \obsmap}) \leq 2d - 1 
    \qquad 
    \textrm{and}
    \qquad \rank(T^\pi_{0, \obsmap}) \leq 2d - 1~.
\end{align*}
Further, the non-zero eigenvalues of $T^\pi_{\pi^*, \obsmap}$ and $T^\pi_{0, \obsmap}$
are given by \begin{align*}
    \begin{cases}
    1 - p_i & \textrm{ for } s = (i, b) \textrm{ where } i \in [d-1]\\ 
    (1 - p_i) \Pr_{x \sim \textrm{Unif}(\cX_s)}(\pi^*(x) = \pi(x) )
    & \textrm{ for } s = (i, g) \textrm{ where } i \in [d-1]\\
    1 & \textrm{ for } s = -.
    \end{cases}
\end{align*}
    and

\begin{align*}
    \begin{cases}
    \frac{1 - p_i}{2} & \textrm{ for } s =  (i, g) \textrm{ where } i \in [d-1]\\ 
        \frac{1 - p_i}{2} & \textrm{ for } s = (i, b)  \textrm{ where } i \in [d-1]\\ 
    0 & \textrm{ for } s \in \{+, (d, g), (d, b))\}\\
    1 & \textrm{ for } s = -.
    \end{cases}
\end{align*}
respectively.
\end{lemma}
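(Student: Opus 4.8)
The claim is about the rank and eigenvalues of the induced transition matrices $T^\pi_{\pi^*,\phi}$ and $T^\pi_{0,\phi}$ for the lower-bound MDPs. The key structural fact is that in each of these MDPs the transition dynamics is \emph{block-structured through the latent states}: from any observation $x$, the next-observation distribution $T_{\pi^*,\phi}(x,\pi(x))$ depends only on the latent state $\phi(x)$ (and whether $\pi$ agrees with $\pi^*$ on $x$), and moreover it is always a \emph{mixture of the fixed distributions $D_s = \uniform(\cX_s)$} over latent states $s \in \cS$. So the plan is: first observe that the induced transition matrix $T^\pi$ factors as $T^\pi = \Psi\, C^\pi\, \Phi$ where $\Phi \in \bbR^{|\cS| \times |\cX|}$ is the ``latent aggregation'' map whose rows indicate which observations lie in $\cX_s$ (equivalently, the appropriate reweighting so that applying $\Phi$ to a distribution over $\cX$ yields the induced distribution over $\cS$), $\Psi \in \bbR^{|\cX| \times |\cS|}$ is the ``emission'' map whose columns are the vectors $D_s$, and $C^\pi \in \bbR^{|\cS| \times |\cS|}$ is the induced transition matrix \emph{on the latent states} when following $\pi$. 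This immediately gives $\rank(T^\pi) \le |\cS| = 2d+2$; to sharpen to $2d-1$ one notes that the states $+$ and $(d,g),(d,b)$ contribute redundant/zero rows to $C^\pi$ (all of $(d,g),(d,b),+$ transition into $\mathrm{span}\{D_+,D_-\}$ and $+$ transitions deterministically to $-$), so the effective latent transition operator has rank at most $2d-1$.

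\textbf{Computing the eigenvalues.} Since $T^\pi = \Psi C^\pi \Phi$ and one checks $\Phi \Psi = I_{|\cS|}$ (applying the aggregation map to the emission distributions is the identity on latent space, because $D_s$ is supported precisely on $\cX_s$), the nonzero eigenvalues of $T^\pi = \Psi C^\pi \Phi$ coincide with the nonzero eigenvalues of $C^\pi \Phi \Psi = C^\pi$. So it suffices to compute the eigenvalues of the small matrix $C^\pi$ acting on $\bbR^{2d+2}$. From the transition description, $C^\pi$ is (up to reordering of the latent states) block lower-triangular along the chain index $i$: within the ``bad'' chain, latent state $(i,b)$ self-loops with probability $1-p_i$ and advances to $(i+1,b)$ with probability $p_i$; within the ``good'' chain, $(i,g)$ advances in-chain only on the event $\pi = \pi^*$, i.e. the self-loop coefficient on $(i,g)$ is $(1-p_i)\Pr_{x\sim\uniform(\cX_{(i,g)})}(\pi^*(x)=\pi(x))$ and the remaining mass leaks into the bad chain; and the terminal block $\{(d,g),(d,b),+,-\}$ has $-$ as an absorbing state with eigenvalue $1$ and the rest feeding forward into $\mathrm{span}\{D_+,D_-\}$ with $+\to -$ deterministically, contributing no further nonzero eigenvalues. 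Reading the diagonal of this triangular form gives exactly the stated eigenvalue list for $T^\pi_{\pi^*,\phi}$; the computation for $T^\pi_{0,\phi}$ is identical except that the ``good/bad'' distinction is erased and each chain state $(i,\cdot)$ advances with probability $p/2$ per copy and self-loops with probability $(1-p)/2$, yielding eigenvalues $(1-p_i)/2$, which matches the second list.

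\textbf{Main obstacle.} The conceptual content is routine once the factorization $T^\pi = \Psi C^\pi \Phi$ with $\Phi\Psi = I$ is in hand; the genuine work is (i) setting up the aggregation/emission maps with the correct normalization so that $\Phi\Psi = I$ holds exactly (one must be careful that $\Phi$ encodes \emph{expectations under $\uniform(\cX_s)$}, not raw indicator sums, so that it left-inverts $\Psi$), and (ii) verifying that the induced \emph{observation}-space transition really does factor through latent states for a \emph{policy} $\pi$ that may disagree with $\pi^*$ — this is where the ``good chain leaks uniformly into the bad chain at rate $1-\Pr(\pi=\pi^*)$'' statement must be checked observation-by-observation, using that all bad-state and terminal-state transitions are action-independent and that $D_{(i,b)},D_{(i,g)}$ are the canonical conditional distributions. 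Establishing the sharper bound $\rank \le 2d-1$ rather than the trivial $2d+2$ requires identifying the three redundant latent directions $\{(d,g),(d,b),+\}$; I expect pinning down that reduction cleanly (rather than the eigenvalue bookkeeping) to be the fiddliest step.
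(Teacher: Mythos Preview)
Your overall strategy---reduce to a small latent-space matrix $C^\pi$, observe it is triangular in the chain ordering, and read off the diagonal---is exactly right, and your description of $C^\pi$ and its diagonal entries agrees with the paper. But the factorization $T^\pi = \Psi\, C^\pi\, \Phi$ on which you base the argument is \emph{false} for the MDPs $M_{\pi^*,\phi}$, and this is precisely the step you flag as ``the genuine work'' in (ii). Take two observations $x,x' \in \cX_{(i,g)}$ for some $i < d$ with $\pi(x) = \pi^*(x)$ but $\pi(x') \neq \pi^*(x')$. The columns $[T^\pi]_{\cdot,x}$ and $[T^\pi]_{\cdot,x'}$ differ (the first is $(1-p_i)D_{(i,g)} + p_i D_{(i+1,g)}$, the second is $(1-p_i)D_{(i,b)} + p_i D_{(i+1,b)}$), whereas $\Psi C^\pi \Phi$ forces them to coincide since $\Phi e_x = \Phi e_{x'} = e_{(i,g)}$. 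The ``leaks at rate $1-\Pr(\pi=\pi^*)$'' statement is true only \emph{on average} over $\uniform(\cX_{(i,g)})$, not observation-by-observation, so the verification you plan in (ii) cannot succeed as written.

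The fix---and this is what the paper does---is to factor only on the left: every column of $T^\pi$ is a mixture of the $D_s$'s, so $T^\pi = \Psi M$ for some $M \in \bbR^{|\cS|\times|\cX|}$ that \emph{does} depend on individual observations (concretely $M = P_{\text{good}}\Phi I_{\text{good}} + P_{\text{bad}}\Phi I_{\text{bad}}$, where $I_{\text{good}}, I_{\text{bad}} \in \bbR^{|\cX|\times|\cX|}$ are the diagonal indicators of $\{x:\pi(x) = \pi^*(x)\}$ and its complement, and $P_{\text{good}},P_{\text{bad}}$ are the latent-state kernels for good and bad actions). The Weinstein--Aronszajn identity then gives that the nonzero eigenvalues of $\Psi M$ equal those of $M\Psi$, and \emph{now} the averaging happens in the right place: $\Phi I_{\text{good}} \Psi$ is the diagonal $|\cS|\times|\cS|$ matrix with entries $\Pr_{x\sim\uniform(\cX_s)}(\pi(x)=\pi^*(x))$, so $M\Psi$ is exactly your $C^\pi$. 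From that point on, your triangular-diagonal reading and the identification of the zero-eigenvalue latent directions $\{(d,g),(d,b),+\}$ go through unchanged. (For $M_{0,\phi}$ your original three-term factorization $T^\pi = \Psi C^\pi \Phi$ is in fact correct, since there the transitions are action-independent.)
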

\begin{proof} 
We can write the transition probability from observation $x$ to $x'$ as
\begin{align*}
    T^{\pi}_{\pi^*, \obsmap}(x' | x) = &\indicator{\pi(x) = \pi^*(x)} P_{\textrm{good}}(\obsmap(x') | s = \obsmap(x)) \frac{1}{|\cX_{\obsmap(x')}|}\\
    &+ \indicator{\pi(x) \neq \pi^*(x)} P_{\textrm{bad}}(\obsmap(x') | s = \obsmap(x)) \frac{1}{|\cX_{\obsmap(x')}|}
\end{align*}
where $P_{\textrm{good}} \in \bbR^{\cS \times \cS}$ and $P_{\textrm{bad}} \in \bbR^{\cS \times \cS}$  are the latent state transition kernels when the agent follows a good action and bad action respectively.
Without loss of generality, we can assume that latent states are ordered as
\begin{align*}
    (1, g), (2, g), \dots, (d, g), (1, b), (2, b), \dots, (d, b), +, -
\end{align*}
in which case the agent can only move forward (or stay in the same state) in this order. Thus, when writing $P_{\textrm{good}}$ and $P_{\textrm{bad}}$ as matrices over $\cS \times \cS$ in this order, they are upper-triangular matrices. Their eigenvalues correspond to the entries on the diagonal and hence, the probability of staying in each latent state is an eigenvalue, including $1 - p_i$ for states $(i, b)$ all $i \in [d-1]$ for both $P_{\textrm{good}}$ and $P_{\textrm{bad}}$.
In matrix form, the transition matrix over observations can be written as
\begin{align*}
    T^{\pi}_{\pi^*, \obsmap} = \frac{|\cS|}{|\cX|}I_{\textrm{good}} \Phi^T P_{\textrm{good}} \Phi + \frac{|\cS|}{|\cX|}I_{\textrm{bad}} \Phi^T P_{\textrm{bad}} \Phi,
\end{align*}
where $\Phi \in \bbR^{\cS \times \cX}$ with $\Phi_{s, x} = \indicator{\obsmap(x) = s}$ is a matrix form of $\obsmap$ and $I_{\textrm{good}} \in \bbR^{\cX \times \cX}$ and $I_{\textrm{bad}} \in \bbR^{\cX \times \cX}$ are diagonal matrices with entries $[I_{\textrm{good}}]_{x, x} = \indicator{\pi(x) = \pi^*(x)}$ and $[I_{\textrm{bad}}]_{x, x} = \indicator{\pi(x) \neq \pi^*(x)}$, respectively.
By the Weinstein–Aronszajn identity, the eigenvalues of $T^{\pi}_{\pi^*, \obsmap}$ are identical to the eigenvalues of 
\begin{align*}
    \frac{|\cS|}{|\cX|} \Phi I_{\textrm{good}} \Phi^T P_{\textrm{good}}  + \frac{|\cS|}{|\cX|} \Phi I_{\textrm{bad}} \Phi^T P_{\textrm{bad}} = I_{\cS, \textrm{good}}P_{\textrm{good}} + I_{\cS, \textrm{bad}}P_{\textrm{bad}},
\end{align*}
where $I_{\cS, \textrm{good}} \in \bbR^{\cS \times \cS}$ and $I_{\cS, \textrm{bad}} \in \bbR^{\cS \times \cS}$ are diagonal matrices that contain for each $s \in \cS$ the probability that policy $\pi$ matches $\pi^*$ or does not match $\pi^*$ on observations of $s$, respectively. Finally, $I_{\cS, \textrm{good}}P_{\textrm{good}} + I_{\cS, \textrm{bad}}P_{\textrm{bad}}$ is also an upper triangular matrix whose eigenvalues are the entries on the diagonal.
Therefore, the eigenvalues of this matrix and $T^{\pi}_{\pi^*, \obsmap}$ are
\begin{align*}
    \begin{cases}
    1 - p_i & \textrm{ for } s = (i, b) \textrm{ where } i \in [d-1]\\ 
    (1 - p_i) \Pr_{x \sim \textrm{Unif}(\cX_s)}(\pi^*(x) = \pi(x) )
    & \textrm{ for } s = (i, g) \textrm{ where } i \in [d-1]\\
    0 & \textrm{ for } s \in \{+, (d, g), (d, b))\}\\
    1 & \textrm{ for } s = -.
    \end{cases}
\end{align*}
Thus, the rank of $T^{\pi}_{\pi^*, \obsmap}$ is at most $2d - 1$.
Analogously, we can show that the eigenvalues of $T^{\pi}_{0, \obsmap}$ are
\begin{align*}
    \begin{cases}
    \frac{1 - p_i}{2} & \textrm{ for } s = (i, b) \textrm{ and } (i, g) \textrm{ where } i \in [d-1]\\ 
    0 & \textrm{ for } s \in \{+, (d, g), (d, b))\}\\
    1 & \textrm{ for } s = -.
    \end{cases}
\end{align*}
Thus, the rank of $T^{\pi}_{\pi^*, \obsmap}$ is at most $2d - 1$.
\end{proof}

\begin{lemma}[Gilbert-Varshamov bound \citep{massart2007concentration}]\label{lem:gilbert_varshamov}
Let $N > 1$. There exists a subset $\cV$ of $\{0, 1\}^N$ of size $|\cV| \geq \exp(N / 8)$ such that 
\begin{align}
    \sum_{i = 1}^N\indicator{v_i \neq v_i'} \geq \frac{N}{4}
\end{align}
for all $v, v' \in \cV$. 
\end{lemma}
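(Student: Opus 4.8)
The plan is to prove this by a greedy maximal-packing argument combined with a volume bound on Hamming balls. First I would let $\cV \subseteq \crl{0,1}^N$ be a \emph{maximal} subset with the property that every pair of distinct $v, v' \in \cV$ satisfies $\sum_{i=1}^N \indicator{v_i \neq v_i'} \geq N/4$ (equivalently $\geq \ceil{N/4}$, since the left-hand side is an integer). Such a maximal set exists because $\crl{0,1}^N$ is finite: start from any singleton and keep adding points that are far enough from all previously chosen ones, until no further point can be added. The key consequence of maximality is a covering statement: for every $x \in \crl{0,1}^N$ there is some $v \in \cV$ with Hamming distance $d_H(x,v) < \ceil{N/4}$, since otherwise $x$ could be adjoined to $\cV$. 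Hence the Hamming balls of radius $r \ldef \ceil{N/4} - 1$ centered at the points of $\cV$ cover the whole cube, which gives the counting inequality $\abs{\cV} \cdot \abs{B_r} \geq 2^N$, where $\abs{B_r} = \sum_{k=0}^{r}\binom{N}{k}$ is the (vertex-independent) size of a radius-$r$ Hamming ball.

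Next I would bound the ball volume. Since $r \leq \floor{N/4} \leq N/2$, the standard binary-entropy estimate $\sum_{k=0}^{\floor{\lambda N}}\binom{N}{k} \leq 2^{N H_2(\lambda)}$, valid for $0 \leq \lambda \leq 1/2$ with $H_2(\lambda) = -\lambda\log_2\lambda - (1-\lambda)\log_2(1-\lambda)$, yields $\abs{B_r} \leq 2^{N H_2(1/4)}$. Combining with the counting inequality, $\abs{\cV} \geq 2^{N(1 - H_2(1/4))}$. It then only remains to verify the numerical inequality $1 - H_2(1/4) \geq \tfrac{1}{8\ln 2}$: indeed $H_2(1/4) = \tfrac12 + \tfrac34 \log_2 \tfrac43 \approx 0.8113$, so $1 - H_2(1/4) \approx 0.1887 > 0.1804 \approx \tfrac{1}{8\ln 2}$, whence $\abs{\cV} \geq 2^{N/(8\ln 2)} = \exp(N/8)$, as required.

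There is no genuine obstacle here; the only things to watch are (i) getting the constants to land exactly at $\exp(N/8)$ rather than at $\exp(N/8)/c$, which is why I prefer this deterministic packing route over the alternative probabilistic one (draw $M$ i.i.d.\ uniform strings, bound $\Pr[d_H(v^{(i)},v^{(j)}) < N/4] \leq e^{-N/8}$ by Hoeffding, then delete one endpoint of each too-close pair) — that argument only gives a set of size $\Omega(\exp(N/8))$ and would need a larger target $M$ and an extra cleanup step to reach the stated bound; and (ii) the rounding when $N$ is not a multiple of $4$, which is handled exactly as above by using $\ceil{N/4}$ in the packing condition and noting $r = \ceil{N/4}-1 \leq \floor{N/4} \leq N/2$, so the entropy bound still applies with $\lambda = 1/4$.
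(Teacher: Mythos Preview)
Your proof is correct and is the standard Gilbert--Varshamov argument: take a maximal packing, use maximality to get a covering, and bound the Hamming-ball volume by the binary-entropy estimate. The paper does not supply its own proof of this lemma at all; it is simply quoted as a known result from Massart (2007). Your write-up would serve as a perfectly adequate self-contained proof, and your care with the constants (checking numerically that $1 - H_2(1/4) > 1/(8\ln 2)$ so that the bound lands exactly at $\exp(N/8)$) and with the rounding $r = \ceil{N/4}-1 \leq \floor{N/4}$ when $4 \nmid N$ is appropriate.
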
 

\subsection{Proof of \pref{thm:basic_lower_bound}} \label{app:basic_lower_bound_proof} 
In the following, we provide the lower bound which states that the factor of $\Omega(H^d)$ in unavoidable without making further assumptions. We restate \pref{thm:basic_lower_bound} here with explicit constants:  
\begin{theorem} 
Let $\wt \epsilon \in (0, 1/26)$, $\delta \in (0, 1/2)$, $d \geq 4$ and $H \geq 219 d$. There exists a  realizable policy class of size $(H/d)^d$ and a family of MDPs with rank at most $2d$, finite observation space, horizon $H$ and two actions such that: Any algorithm that returns an $\wt \epsilon$-optimal policy in any MDP in this family with probability at least $1 - \delta$ has to collect at least
\begin{align*} 
    \frac{1}{12168 \cdot H\wt \epsilon^2}  \prn[\Big]{\frac{H}{41 d }}^{d/2} \log\prn[\Big]{\frac{1}{2 \delta}} 
\end{align*}
episodes in expectation in some MDP in this family. 
\end{theorem}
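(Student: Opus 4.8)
The plan is to establish the lower bound via a reduction to a bias-detection problem, using the standard information-theoretic machinery (Le Cam / Assouad-style two-point or multi-point arguments combined with a change-of-measure calculation). The key structural fact, already established in \pref{lem:mdp_lb_rank}, is that every MDP $M_{\pi^*,\phi}$ in the family $\mdpfamily$ has rank $O(d)$, and that in $M_{\pi^*,\phi}$ the only way to accumulate a reward advantage is to stay in the "good" chain, which requires playing exactly $\pi^*(x)$ at every visited observation. Since the policies in $\Pi$ are pairwise $N/4$-separated and the observation space is taken enormous, no information about $\phi$ leaks, so identifying an $\wt\epsilon$-optimal policy is essentially equivalent to identifying $\pi^*$ among the $|\Pi| = (H/d)^d$ candidates.

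\textbf{Step 1: Set the parameters.} Choose the progression probability $p_i = p = d/H$ for all $i$, so that reaching the goal state $(d,g)$ before the horizon $H$ ends happens with constant probability (a Chernoff/Markov computation on a sum of $H$ Bernoulli$(d/H)$ variables clearing $d$ progressions). Set the reward bias $\epsilon = \Theta(\wt\epsilon)$ so that the value gap between $\pi^*$ and any other policy in $\Pi$ (which is forced off the good chain on a constant fraction of observations, losing a constant fraction of the $\epsilon$ advantage) is at least $2\wt\epsilon$; this is where the separation condition $\sum_x \indicator{\pi(x)\neq\pi'(x)}\geq N/4$ and the per-step progression structure combine. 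Verify the hypotheses: $H \geq 219d$ ensures $p < 1$ and leaves room for the Chernoff slack; $d \geq 4$ and the Gilbert--Varshamov bound (\pref{lem:gilbert_varshamov}) guarantee the policy class of size $(H/d)^d$ exists once $N$ is large enough, which we are free to pick.

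\textbf{Step 2: Change of measure.} Fix $\phi$ and compare $M_{\pi^*,\phi}$ against the null MDP $M_{0,\phi}$ in which all actions are irrelevant. Let $\tau$ denote the (random) number of episodes the algorithm collects. In $M_{0,\phi}$, no algorithm can tell which $\pi^*$ it is (the transcript is independent of $\pi^*$), so by an averaging argument over a uniformly random $\pi^*\in\Pi$, there is some $\pi^*$ that the algorithm, when run on $M_{0,\phi}$, outputs with probability at most $1/|\Pi|$. The KL divergence between the distribution of a single episode under $M_{\pi^*,\phi}$ and under $M_{0,\phi}$ is controlled: the two measures differ only (a) in the transition from good-chain observations where the algorithm plays $\pi^*(x)$ — contributing a per-step KL of order $p\cdot\mathrm{KL}(\mathrm{Bern}(\tfrac12)\,\|\,\mathrm{Bern}(\tfrac12+\epsilon))\approx p\epsilon^2$ only while the agent is still in the good chain and plays correctly — and (b) the final reward at $+$ which is Bernoulli$(\tfrac12\pm\epsilon)$ versus Bernoulli$(\tfrac12)$, contributing $O(\epsilon^2)$. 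Crucially, the probability that an episode ever reaches the "informative" region while staying in the good chain is at most the probability of $d$ successful progressions out of $\leq H$ steps with probability $p=d/H$ each, which is $O\!\big((pH)^d / d! \cdot \text{stuff}\big)$ — more carefully, of order $(Hp/d)^{d}\cdot$(constant)$^{-d} = \Theta(1)$ is too crude; the correct scaling giving the $(H/41d)^{d/2}$ in the theorem comes from the fact that to extract the bias one must reach $(d,g)$, and the probability of doing so while the chain index is exactly tracked is polynomially small in a way that the $d$-fold product structure turns into the $(H/d)^{-\Theta(d)}$ factor. So the single-episode KL is at most $\rho \wt\epsilon^2$ with $\rho = \Theta\big((41d/H)^{d/2}\big)$ (after optimizing constants).

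\textbf{Step 3: Assemble.} By the chain rule for KL and Wald's identity, the KL between the full interaction transcripts is at most $\En[\tau]\cdot\rho\wt\epsilon^2$. Applying the standard lower-bound lemma (e.g.\ the high-probability Le Cam / Garivier--Ménard--Stoltz change-of-measure inequality for sequential tests): if the algorithm is $(1-\delta)$-correct on $M_{\pi^*,\phi}$ and outputs $\pi^*$ with probability $\leq 1/|\Pi| \leq \delta$ on $M_{0,\phi}$ (for the bad $\pi^*$ identified in Step 2), then $\En[\tau]\cdot\rho\wt\epsilon^2 \gtrsim \mathrm{kl}(1-\delta,\delta) \geq \log\frac{1}{2\delta}$ (using $\delta<1/2$ and $|\Pi|$ large). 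Rearranging gives $\En[\tau] \gtrsim \frac{1}{\rho\wt\epsilon^2}\log\frac{1}{2\delta} = \Omega\!\big(\frac{1}{\wt\epsilon^2}(H/41d)^{d/2}\log\frac{1}{2\delta}\big)$; tracking the remaining $1/H$ factor through the progression-probability normalization and the constant bookkeeping yields the stated $\frac{1}{12168\,H\wt\epsilon^2}(H/41d)^{d/2}\log\frac{1}{2\delta}$.

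\textbf{The main obstacle} I anticipate is Step 2 — precisely quantifying how the per-episode KL shrinks by the factor $(H/d)^{-\Theta(d)}$. One has to argue that the only statistically useful events are those in which the agent, playing the candidate $\pi^*$ faithfully, actually advances all the way to $(d,g)$, and that the probability of this is geometrically small in $d$ because each of the $d$ progressions succeeds only with probability $p=d/H$ within the available horizon; the product of these small probabilities, after combining with the $\epsilon^2$ from the bias, produces $\rho$. Getting the exponent right (why $d/2$ and not $d$) requires care: it comes from the square root in the KL-to-total-variation / Pinsker step, or equivalently from balancing the number of policies against the detection probability in the Assouad/Fano argument. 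The rest — the Chernoff bound for Step 1, Wald's identity, and the final algebra — is routine. The detailed construction and this argument are carried out in \pref{app:basic_lower_bound_proof}.
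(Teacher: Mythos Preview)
Your proposal has a genuine gap in Step~2, which propagates to Step~3 and leaves the argument unable to produce the $(H/d)^{d/2}$ factor.

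\textbf{The per-episode KL is not small.} You claim the single-episode KL between $M_{\pi^*,\phi}$ and $M_{0,\phi}$ is $\rho\,\wt\epsilon^2$ with $\rho = \Theta\big((41d/H)^{d/2}\big)$, arguing that ``the probability of doing so while the chain index is exactly tracked is polynomially small.'' This is false for the construction at hand: with $p = d/H$ the agent reaches the goal state $(d,g)$ or $(d,b)$ before the horizon with \emph{constant} probability (this is \pref{lem:G_ub}, and indeed the whole point of choosing $p=d/H$). Consequently, if the agent plays $\pi^*$, every episode contributes KL of order $\mathrm{kl}(1/2,1/2+\epsilon)\approx\epsilon^2$ --- there is no exponentially small factor in front. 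Your two-point argument in Step~3 then yields at best $\En[\tau]\gtrsim \mathrm{kl}(1-\delta,1/|\Pi|)/\epsilon^2 \approx \log|\Pi|/\epsilon^2 = d\log(H/d)/\epsilon^2$, which is exponentially weaker than the claimed bound.

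\textbf{What is actually needed.} The $(H/d)^{d/2}$ factor does not come from a small KL; it comes from a \emph{packing/counting} argument that you do not mention. The paper introduces the quantity $N_{\pi^*}^{\tau}$, the number of episodes in which the agent's played actions coincide with $\pi^*$ all the way to the goal, and shows two things: (i) for each $\pi^*$, the change-of-measure inequality forces $\En_0[N_{\pi^*}^{\tau}]\gtrsim \epsilon^{-2}\log(1/2\delta)$ (after handling an error term from averaging over $\phi$); and (ii) in any single episode, the expected number of policies $\pi^*\in\Pi$ whose actions the agent can simultaneously match is at most $C \approx (41\log(H/d))^d H$ (this is \pref{lem:packing_lb}, which uses both the $N/4$-separation of $\Pi$ and a bound on $\Pr(G\leq h)$). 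Summing (i) over all $\pi^*$ and combining with (ii) gives $C\,\En_0[\tau]\gtrsim \sum_{\pi^*}\En_0[N_{\pi^*}^{\tau}]\gtrsim |\Pi|\epsilon^{-2}\log(1/2\delta)$, hence $\En_0[\tau]\gtrsim (|\Pi|/C)\epsilon^{-2}\log(1/2\delta)$. The exponent $d/2$ arises purely algebraically when bounding $|\Pi|/C = (H/d)^d/\big(H(41\log(H/d))^d\big)$ via $\log y \leq 2(d-1)y^{1/(2(d-1))}$; it has nothing to do with Pinsker or a KL--TV square root, contrary to your ``main obstacle'' paragraph. Without this counting lemma your outline cannot close, and it is the core combinatorial idea of the proof.
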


\newcommand{\alg}{\mathscr{A}}

\begin{proof} Consider any $(\wt \epsilon, \delta)$-PAC RL algorithm $\alg$. Our lower bound is based on the policy class $\Pi$ and the family of MDPs $\mdpfamily$ constructed in \pref{app:lower_bound_construction}. We set $\abs{\Pi} = (H/d)^d$ and  $p_i = d/H$ for all $i \in [d]$. ~\\ 

We first define additional notation. Let the random variable $G$ denote the first time-step in an episode when an observation from the latent state $(d, g)$ or $(d, g)$ is observed. In order to reach these latent states, the agent is required to do $d-1$ latent state progressions, each happening with probability $p_1, \ldots, p_{d-1}$ respectively. Furthermore, for our constructions in \pref{app:lower_bound_construction} of the class $\cM$, we note that the distribution of $G$ only depends on $\{p_i\}_{i < d}$, $d$ and $H$, but is otherwise independent of the MDP instance, the parameter $\eps$ and the played policy. In fact, when $p_i = d/H$, an application of \pref{lem:G_ub} implies that 
\begin{align*}
\Pr(G \leq H-1) \geq 1 - \exp(- 2/5). \numberthis \label{eq:lb_basic_G_bound} 
\end{align*}
Now, consider any MDP $M_{\pi^*, \phi} \in \mdpfamily$ and let $V_{\pi^*, \obsmap}(\pi)$ denote the expected return of the policy $\pi$ in the MDP $M_{\pi^*, \phi}$. From our MDP construction, we note that for the optimal policy $\pi^*$, 
\begin{align*} 
V_{\pi^*, \obsmap}(\pi^*) 
= \prn[\Big]{\frac{1}{2} + \epsilon} \Pr\prn*{G \leq H-1}. 
\end{align*} 
Similarly, for any other policy $\pi \in \Pi$,  we have\footnote{Throughout the proof, for any random variable $Y$, we define the notation $\En^{\pi}_{\pi^*, \phi} \brk*{Y}$ to denote the expectation of $Y$ where the trajectory is drawn using the policy $\pi$ in the MDP $M_{\pi^*, \phi}$.} 
\begin{align*}
V_{\pi^*, \obsmap}(\pi) 
=\frac{1}{2} \Pr\prn*{G \leq H-1} 
+ \epsilon \bbE_{\pi^*, \obsmap}^\pi \brk[\big]{ \indicator{\pi(X_{1:G-1}) = \pi^*(X_{1:G-1})}} \Pr\prn*{G \leq H-1}, 
\end{align*} where $\{\pi(X_{1:G-1}) = \pi^*(X_{1:G-1})\}$ denotes the event that the action chosen by $\pi$ agrees with that chosen by $\pi^*$ on the observations $X_{1:G-1}$ up to time step $G - 1$. Hence, we have that the suboptimality gap for the policy $\pi$ is 
\begin{align*}
V_{\pi^*, \obsmap}(\pi^*) - V_{\pi^*, \obsmap}(\pi) 
    &= \epsilon \Pr\prn*{G \leq H-1} \Pr_{\pi^*, \obsmap}^\pi \prn*{ \exists h \leq G-1  \text{~ s.t.~} \pi(X_{h}) \neq \pi^*(X_{h})}~. \numberthis \label{eq:lb_basic_suboptimality_relation} 
\end{align*}

Next, define the random variable $\tau$ to denote the number of episodes after which the algorithm $\alg$ terminates and let $\wh \pi$ denote the policy returned on termination. Both, $\tau$ and $\wh \pi$, depend on the algorithm $\alg$ and the underlying MDP on which $\alg$ collects data from. Since the algorithm $\alg$ is $(\wt \epsilon, \delta)$-PAC, we have that for any MDP $M_{\pi^*, \phi}$, with probability at least $1 - \delta$, 
\begin{align*}
V_{\pi^*, \obsmap}(\pi^*) - V_{\pi^*, \obsmap}(\wh \pi)  \leq \wt \epsilon. 
\end{align*} 
Using the relation in \pref{eq:lb_basic_suboptimality_relation}, and plugging in the bound  in \pref{eq:lb_basic_G_bound}, in the above, we get that 
\begin{align*}
\epsilon \Pr\prn*{G \leq H-1} \Pr_{\pi^*, \obsmap}^{\wh \pi} \prn*{ \exists h \leq G-1  \text{~ s.t.~} {\wh \pi}(X_{h}) \neq \pi^*(X_{h})} \leq \wt \epsilon, \numberthis \label{eq:suboptimality_bound_1}  
\end{align*} must hold with probability at least $1 - \delta$ for any MDP $M_{\pi^*, \phi}$. For our lower bound constructions, we set 
\begin{align*}
\eps = \frac{4 \wt \epsilon}{\Pr \prn*{G \leq H-1}} \leq 13 \wt \eps,  \numberthis \label{eq:eps_value_lower_bound}  
\end{align*} and thus \pref{eq:suboptimality_bound_1} implies that $\Pr^{\hat \pi}_{\pi^*, \phi} \prn{ {\wh \pi}(X_{1:G-1}) = \pi^*(X_{1:G-1})} \geq 3/4$ must hold with probability at least $1 - \delta$. Define the event 
\begin{align*} 
\mathrm{Opt}_{\pi^*, \phi}^{\alg} \ldef{} \crl[\big]{ \Pr^{\hat \pi}_{\pi^*, \phi} \prn{ A_{1:G-1} = \pi^*(X_{1:G-1})} \geq 3/4 }.   
\end{align*} 
The above analysis suggests that for any $M_{\pi^*, \phi}$ 
\begin{align*}
\Pr_{\pi^*, \phi} \prn*{\mathrm{Opt}_{\pi^*, \phi}^{\alg}} \geq 1 - \delta. \numberthis \label{eq:suboptimality_bound_2} 
\end{align*}

Next, for any $\pi^* \in \Pi$, define the measure $\Pr_{\pi^*}(Y) = \frac{1}{|\Phi|} \sum_{\obsmap \in \Phi} \Pr_{\pi^*, \obsmap}(Y)$, i.e.  the probability measure induced by first picking $\obsmap$ uniformly at random from the set of all mappings $\Phi$ and then considering the distribution induced by $M_{\pi^*, \obsmap}$. The measure $\Pr_{0}(Y) = \frac{1}{|\Phi|} \sum_{\obsmap \in \Phi} \Pr_{0, \obsmap}(Y)$ is defined analogously for the MDP $M_{0, \phi}$. Thus, from \pref{eq:suboptimality_bound_2}, we have that for any $\pi^* \in \Pi$,  
\begin{align*}
\Pr_{\pi^*}(\mathrm{Opt}_{\pi^*, \phi}^{\alg}) \geq 1 - \delta. \numberthis \label{eq:suboptimality_bound_3} 
\end{align*} 

We are now ready to prove the desired lower bound. Let 
\begin{align*} 
    T_{\max} \ldef{} \frac{1}{\delta} \cdot \frac{1}{12168 H \wt \epsilon^2} \prn[\Big]{\frac{H}{41 d }}^{d/2} \cdot \log\prn*{1/2\delta}. 
\end{align*} 
There are two natural scenarios: either (a)  $\Pr_{\pi^*}(\tau > T_{\max} ) > \delta$ for some $\pi^* \in \Pi$, or (b)  $\Pr_{\pi^*}(\tau > T_{\max} ) \leq \delta$ for all $\pi^* \in \Pi$. We analyse the two cases separately below. 

\paragraph{Case-(a): $\mb{\Pr_{\pi^*}(\tau > T_{\max} ) > \delta}$ for some $\pi^* \in \Pi$.} The lower bound follows immediately in this case. Note that,  
\begin{align*}
    \max_{\phi \in \Phi} \bbE_{\pi^*, \phi} \brk*{\tau}
    &>  \bbE_{\pi^*} \brk*{\tau} \geq \Pr_{\pi^*}(\tau > T_{\max} ) \cdot T_{\max} \geq \delta T_{\max}~.
\end{align*}
Hence, there exists an MDP in $M_{\pi^*, \phi} \in \mdpfamily$ for which the expected number of episodes collected by the algorithm $\alg$ is at least $\delta T_{\max}$, which is the desired lower bound.

\paragraph{Case-(b): $\mb{\Pr_{\pi^*}(\tau > T_{\max} ) \leq \delta}$ for all $\pi^* \in \Pi$.} Due to \pref{eq:suboptimality_bound_3}, for any policy $\pi^* \in \Pi$, we have: 
\begin{align*}
    \Pr_{\pi^*}(\tau \leq T_{\max} \wedge \mathrm{Opt}_{\pi^*, \phi}^{\alg}) &= \Pr_{\pi^*}(\mathrm{Opt}_{\pi^*, \phi}^{\alg}) - \Pr_{\pi^*}(\tau > T_{\max} \wedge \mathrm{Opt}_{\pi^*, \phi}^{\alg})  \\
    &\geq 1 - 2 \delta. 
\end{align*}
The above condition intuitively states that the policy returned by the algorithm will, with high probability, match the actions of the optimal policy for $G-1$ time steps for any policy $\pi^* \in \Pi$. On the other hand, we show in \pref{lem:packing_lb} through a packing argument that the expected number of policies that can be matched for $G-1$ steps when observations are drawn uniformly is bounded, i.e. 
\begin{align*}
    \bbE_{unif} \brk[\big]{ 
    \sum_{\pi^*} \indicator{\pi^*(X_{1:G}) = \pi(X_{1:G})}
    } \leq  \prn{41 \log \prn{H/d}}^d H + 2 ~, 
\end{align*} where the notation $ \bbE_{unif}[\cdot]$ denotes that $X_{1:G}$ are drawn independently from $\text{uniform}(\cX)$.  We denote this bound by $C =  \prn*{41 \log \prn*{H/d}}^d H + 2$. We show in \pref{lem:stopping_time_bound} through a careful information-theoretic argument that the expected stopping time of the algorithm $\cA$ on instances $M_{0, \phi}$ is bounded from below as
\begin{align*} 
    \bbE_{0}\brk*{ \tau
     } 
      &\geq 
     \frac{1}{8 \epsilon^2} \prn[\Big]{\frac{|\Pi|}{C}  - 
     \frac{8}{3} 
     }\log\prn*{1/2 \delta}
      -  \prn[\Big]{2T_{\max} + \frac{7}{12 \epsilon^2}\log\prn*{1/2\delta}} \cdot \frac{|\Pi|}{C} \cdot \obserr{T_{\max}}~,
\end{align*} 
where $\obserr{T_{\max}} \ldef{} {4T^2_{\max} H^2 |\cS|}/{N}$ accounts for the differences in observation distributions in different instances of $\mdpfamily$.

Plugging in the value of $|\Pi|$ and $C$, we note that
\begin{align*}
    \frac{|\Pi|}{C}
    \geq \frac{\prn*{H/d}^d}{2H \prn*{41 \log \prn*{H/d}}^d}
    = \frac{1}{2H} \prn[\Big]{\frac{H}{41 d \log \prn*{H/d}}}^d 
    \geq \frac{1}{2H} \prn[\Big]{\frac{H}{41 d }}^{d/2}~.
\end{align*}

Additionally, for $d \geq 4$ and $H/d \geq 219$, we have $8/3 \leq  \prn*{{H}/{41 d }}^{d/2} / 4H$ . 
Combining these bounds yields
\begin{align*}
        \bbE_{0}\brk*{ \tau
     } &\geq 
     \frac{1}{6084 H\wt \epsilon^2} \prn[\Big]{\frac{H}{41 d }}^{d/2} \log\prn*{1/{2 \delta} }
      -  \prn[\Big]{2T_{\max} + \frac{7}{12 \epsilon^2}\log\prn*{1/2\delta}} \cdot  \frac{\abs{\Pi}}{C} \cdot \obserr{T_{\max}},
\end{align*} 
Finally this bound only depends on the number of observations $N$ through $\obserr{T_{\max}}$ which goes to zero as $N \rightarrow \infty$. Therefore, we can pick $N$ large enough such that the second term becomes small enough and thus
\begin{align*}
        \bbE_{0}\brk*{ \tau
     } &\geq 
     \frac{1}{12168 H\wt \epsilon^2} \prn[\Big]{\frac{H}{41 d }}^{d/2} \log\prn*{1/2\delta}~.
\end{align*}
Since this bound holds on average over all MDPs $M_{0, \obsmap} \in \cM$, this lower bound must also hold in at least one specific $M_{0, \obsmap} \in \cM$. This gives us the desired statement. 
 \end{proof}

For the rest of the section, we will build on the notation introduced in the above proof. The following technical lemma gives a lower bound on $\En_0 \brk*{\tau}$ for the case-(b) above. 
\begin{lemma}\label{lem:stopping_time_bound} Let $\alg$ be any $(\wt \epsilon, \delta)$-PAC RL algorithm. Let $T_{\max} \in \bbN$ and assume that $\Pr_{\pi^*}(\tau \leq T_{\max} \wedge \mathrm{Opt}_{\pi^*, \phi}^{\alg}) \geq 1 - 2 \delta$ holds for all $\pi^* \in \Pi$. 
Further, let $C > 0$ denote an upper-bound on the number of policy matches per episode, i.e., for all $\pi \in \Pi$, 
\begin{align*} 
    \bbE_{\textrm{unif}}\brk[\Big]{
    \sum_{\pi^*} \indicator{\pi^*(X_{1:G}) = \pi(X_{1:G})}
    } \leq C. 
\end{align*}
Then the expected stopping time $\tau$ for the algorithm $\alg$ over MDP instances $M_{0, \phi}$ where $\phi$ is drawn randomly from $\Phi$ is bounded from below as
\begin{align*} 
    \bbE_{0}\brk*{ \tau
     } 
      &\geq 
     \frac{1}{8 \epsilon^2} \prn[\Big]{\frac{|\Pi|}{C}  - 
     \frac{8}{3} 
     }\log\prn*{1/2 \delta}
      -  \prn[\Big]{2T_{\max} + \frac{7}{12 \epsilon^2}\log\prn*{1/2\delta}} \cdot \frac{|\Pi|}{C} \cdot \obserr{T_{\max}}~,
\end{align*}
where $\obserr{T_{\max}} = {4T_{\max}^2 H^2 |\cS|}/{N}$.
\end{lemma}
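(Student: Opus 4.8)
The plan is to prove the bound by an information-theoretic change-of-measure argument. For each candidate optimal policy $\pi^*\in\Pi$, I would compare the null instances $M_{0,\obsmap}$, averaged over a uniformly random relabeling $\obsmap$ (this gives the measure $\Pr_0$), against the instances $M_{\pi^*,\obsmap}$ (giving $\Pr_{\pi^*}$). The key structural observation is that, from the agent's point of view, these two families are statistically identical except through two channels: (i) the reward received at the step right after the goal region, which is $\mathrm{Ber}(1/2)$ under $M_{0,\obsmap}$ but $\mathrm{Ber}(1/2+\epsilon)$ under $M_{\pi^*,\obsmap}$ precisely in those episodes where the agent has played actions consistent with $\pi^*$ all the way to the goal (in every other episode it is $\mathrm{Ber}(1/2)$ in both, and if the goal is not reached by step $H-1$ there is no such reward); and (ii) rare ``observation collisions'', i.e.\ seeing an observation twice, which leak information about $\obsmap$ and whose cumulative effect over the first $T_{\max}$ episodes is controlled by $\obserr{T_{\max}} = 4 T_{\max}^2 H^2 \abs{\cS}/N$. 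Since the goal hitting time $G$ is governed by the pure-birth chain with rates $p_i$, its law is the same under $M_{0,\obsmap}$ and $M_{\pi^*,\obsmap}$, so I would define, for a run of $\alg$, the number of \emph{testing episodes} $N_{\pi^*}$ among the first $\tau\wedge T_{\max}$, namely episodes $t$ with $G_t\le H-1$ in which the agent's actions match $\pi^*$ on $X\ind{t}_{1:G_t-1}$; by the observations above, $r_{G_t+1}$ in such an episode is the only quantity whose law differs between the two instances.

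First I would prove a per-policy bound. Fix $\pi^*$ and set $E_{\pi^*}\ldef{}\crl{\tau\le T_{\max}}\cap\mathrm{Opt}_{\pi^*,\obsmap}^{\alg}$, so that the hypothesis gives $\Pr_{\pi^*}(E_{\pi^*})\ge 1-2\delta$. Performing the change of measure against an auxiliary ``collision-free'' process — one on which the laws of $M_{0,\obsmap}$ and $M_{\pi^*,\obsmap}$ coincide except through the goal-reward bias in testing episodes — and absorbing the residual into $\obserr{T_{\max}}$, the chain rule for KL divergence over the data collected in the first $\tau\wedge T_{\max}$ episodes gives
\[
\KL{\Pr_0}{\Pr_{\pi^*}} \;\le\; \bbE_0\brk*{N_{\pi^*}}\cdot \KL{\mathrm{Ber}(1/2)}{\mathrm{Ber}(1/2+\epsilon)} \;+\; \prn*{2T_{\max}+\tfrac{7}{12\epsilon^2}\log\prn*{1/2\delta}}\obserr{T_{\max}},
\]
where the coefficient of $\obserr{T_{\max}}$ absorbs the number of episodes (at most $T_{\max}$), an a priori bound on $N_{\pi^*}$, and the cost of truncating $\tau$ at $T_{\max}$. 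Since $\epsilon$ is bounded away from $1/2$ in the construction, $\KL{\mathrm{Ber}(1/2)}{\mathrm{Ber}(1/2+\epsilon)}\le 8\epsilon^2$; and by the data-processing inequality the left side is at least $\mathrm{kl}\prn*{\Pr_0(E_{\pi^*}),\,1-2\delta}\ge \prn*{1-\Pr_0(E_{\pi^*})}\log\prn*{1/2\delta}-\log 2$. Rearranging yields a lower bound on $\bbE_0\brk*{N_{\pi^*}}$.

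Next I would sum this over $\pi^*\in\Pi$ and feed in two budget constraints coming from the assumed policy-match bound $C$. On one hand, because $\alg$ returns a single policy $\wh\pi$ and a fixed action sequence is consistent with at most $C$ policies on freshly drawn observations, Markov's inequality applied to the $3/4$ threshold inside $\mathrm{Opt}_{\pi^*,\obsmap}^{\alg}$ gives $\sum_{\pi^*}\Pr_0(E_{\pi^*})\le\tfrac{4}{3}C$, up to an $\obserr{T_{\max}}$ correction. On the other hand, the same bound gives $\sum_{\pi^*}\bbE_0\brk*{N_{\pi^*}}\le C\,\bbE_0\brk*{\tau}$, again up to a collision correction, since each episode tests at most $C$ policies. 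Substituting both into the summed per-policy inequality, dividing by $8C\epsilon^2$, absorbing the lower-order $\log 2$ contributions into the $\log\prn*{1/2\delta}$ factor, and collecting the collision corrections into a single $\tfrac{\abs{\Pi}}{C}\obserr{T_{\max}}$ term produces exactly the claimed inequality, with the constants $\tfrac{8}{3}$ and $\tfrac{7}{12}$ emerging from this bookkeeping; the final claim follows using $\bbE_0\brk*{\tau}\ge\bbE_0\brk*{\tau\wedge T_{\max}}$.

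The step I expect to be the main obstacle is making the change of measure in the second paragraph fully rigorous: one must show that, after averaging over the random relabeling $\obsmap$, the \emph{only} likelihood contribution separating $M_{0,\obsmap}$ from $M_{\pi^*,\obsmap}$ is the goal-reward bias in testing episodes. This requires coupling the two observation processes with the collision-free auxiliary process and carefully bounding how much the residual — the event that some observation recurs, of probability $O(T_{\max}^2 H^2\abs{\cS}/N)$, together with the a priori unbounded tail of $\tau$ under $\Pr_0$ — can perturb the divergence inequality; this is exactly where the explicit $\obserr{T_{\max}}$ term and the eventual $N\to\infty$ limit enter. The remaining ingredients — the binary-entropy and $\mathrm{kl}$ manipulations and the two packing-based counting bounds — are routine.
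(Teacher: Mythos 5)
Your proposal follows essentially the same route as the paper's proof: introduce an auxiliary unbiased instance so that the only divergence contribution between the null and $\pi^*$ instances is the $\mathrm{Ber}(1/2)$ versus $\mathrm{Ber}(1/2+\epsilon)$ bias in episodes whose actions match $\pi^*$ up to the goal, lower bound that divergence via the PAC success event with a Garivier--Kaufmann-style $\mathrm{kl}$ inequality and Markov's inequality at the $3/4$ threshold, sum over $\pi^*$ using the two $C$-based budget bounds $\sum_{\pi^*}\Pr_0(\cdot)\lesssim \tfrac{4}{3}C$ and $\sum_{\pi^*}\bbE_0[N_{\pi^*}]\le C\,\bbE_0[\tau]+\text{corrections}$, and collect the observation-mismatch residuals into $\Delta(T_{\max})$. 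The only places the paper is more careful than your sketch are that the $\mathrm{kl}$/data-processing step is applied not to the event $\{\tau\le T_{\max}\}\cap\mathrm{Opt}_{\pi^*,\phi}^{\mathscr{A}}$ (which is not measurable with respect to the observed filtration, since $\mathrm{Opt}$ depends on the hidden $\phi$) but to the $\cF_{\tau\wedge T_{\max}}$-measurable conditional probability $Z_{\pi^*}=\Pr_{\pi^*}\prn{\tau\le T_{\max}\wedge\mathrm{Opt}\mid\cF_{\tau\wedge T_{\max}}}$, and that the exact constants ($\tfrac{1}{8\epsilon^2}$, $\tfrac{8}{3}$, $\tfrac{7}{12}$) come from $\mathrm{kl}(1/2,1/2+\epsilon)\le 4\epsilon^2$ together with $\log(1/2\delta)\ge 2\log 2$, so your $8\epsilon^2$ bookkeeping would need adjusting to reproduce the stated inequality verbatim.
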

\begin{proof} 

Let $G_i$ denote the first timestep when the agent reaches the latent state $(d, g)$ or $(d, b)$ in the $i$th episode collected by the algorithm $\alg$. We denote by
\begin{align*} 
    N_{\pi^*}^{\tau \wedge T_{\max}}
    = \sum_{i=1}^{\tau \wedge T_{\max}}
    \indicator{A_{i, 1:G_i-1} = \pi^*(X_{i, 1:G_i-1})}
\end{align*} 
 the number of episodes among the first $\tau \wedge T_{\max} = \min\{\tau, T_{\max}\}$ episodes where the actions $A_{i, 1:G_i - 1}$ played by $\alg$ in the $i$th episode matches those of $\pi^*$ on the corresponding observations, until the latent state $(d, g)$ or $(d, b)$ was reached. 
 We first lower-bound the expected value of $N_{\pi^*}^{\tau \wedge T_{\max}}$ under the measure induced by $\Pr_{0}$. To that end, we introduce auxiliary MDPs $M_{0, \pi^*, \obsmap}$ that are identical to $M_{\pi^*, \obsmap}$ on all latent states except for $(d, g)$. In $M_{0, \pi^*, \obsmap}$, we transition to both $+$ and $-$ with equal probability from the latent state $(d, g)$. \footnote{Note that the MDPs $M_{0, \pi^*, \obsmap}$ are only an analytical tool and do not belong to the class $\mdpfamily$.}
Analogous to $\Pr_{\pi^*}$, we define $\Pr_{0, \pi^*}$ to denote the law when $\obsmap$ is drawn uniformly from $\Phi$ beforehand and the underlying MDPS is $M_{0, \pi^*, \phi}$. We also define $\Pr_0$ as the law when $\pi^*$ is additionally drawn uniformly at random from $\Pi$ beforehand. Finally, $\En_{0, \pi^*} \brk*{\cdot}$ and $\En_0 \brk*{\cdot}$ are defined as the expectations under $\Pi_{0, \pi^*}$ and $\Pr_{0}$ respectively.  Following the standard machinery for lower-bounds \citep{garivier2019explore, domingues2021episodic}, we get that 
\begin{align*}
   \bbE_{0}\brk{N_{\pi^*}^{\tau \wedge T_{\max}}} 
   &\overgeq{\proman{1}} \bbE_{0, \pi^*}\brk{N_{\pi^*}^{\tau \wedge T_{\max}}} -  T_{\max} \obserr{T_{\max}}\\
   &\geq 
   \bbE_{0, \pi^*}\brk{N_{\pi^*}^{\tau \wedge T_{\max}}} \cdot 
   \frac{\operatorname{kl}(1/2, 1/2 + \epsilon)}{4 \epsilon^2}
   -  T_{\max} \obserr{T_{\max}}\\
   &= \frac{1}{4 \epsilon^2}
   \operatorname{KL}\prn[\Big]{\Pr_{0, \pi^*}^{\cF_{\tau \wedge T_{\max}}}, \Pr_{\pi^*}^{\cF_{\tau \wedge T_{\max}}}}
   -  T_{\max} \obserr{T_{\max}}, 
 \intertext{where the inequality $\proman{1}$ follow from an application of \pref{lem:unif_obs_1}. In the above, for any distributions $P$ and $Q$, the notation $\KL{P}{Q}$ denotes the KL-divergence between $P$ and $Q$, and the superscript $\cF_{\tau \wedge T_{\max}}$ denotes the conditioning w.r.t. the natural filtration generated by the first $\tau \wedge T_{\max}$ episodes. Further, define $\text{kl}(p, q)$ to denote the KL-divergence of two Bernoulli random variables with means $p$ and $q$ respectively. We now apply Lemma~1 of \citet{garivier2019explore} which gives that for any  $\cF_{\tau \wedge T_{\max}}$-measurable variable random variable $Z$ with values in $[0, 1]$, we have that } 
     \bbE_{0}\brk{N_{\pi^*}^{\tau \wedge T_{\max}}}  &\geq
      \frac{1}{4 \epsilon^2} 
   \operatorname{kl}\prn*{\bbE_{0, \pi^*}[Z], \bbE_{\pi^*}[Z]}
   -  T_{\max} \obserr{T_{\max}}\\
      &\overgeq{\proman{2}} 
      \frac{1}{4 \epsilon^2} \prn*{1 - \bbE_{0, \pi^*}[Z]}\log\prn[\Big]{\frac{1}{1 - \bbE_{\pi^*}[Z]}} 
      -\frac{1}{4 \epsilon^2} \log(2)
   - T_{\max} \obserr{T_{\max}}\\
         &\overgeq{\proman{3}}
      \frac{1}{4 \epsilon^2} \prn*{1  - \bbE_{0}[Z]}\log\prn[\Big]{\frac{1}{1 - \bbE_{\pi^*}[Z]}} 
      -\frac{\log(2)}{4 \epsilon^2}  -  \prn[\Big]{T_{\max} + \frac{1}{4 \epsilon^2}\log\prn[\Big]{\frac{1}{1 - \bbE_{\pi^*}[Z]}}} \obserr{T_{\max}} \numberthis \label{eq:lower_bound_2-1}
\end{align*} where the inequality $\proman{2}$ follows due to the fact that  $\text{kl}(p, q) \geq (1 - p) \log(1/(1-q)) - \log(2)$, and the inequality $\proman{3}$ holds from an application of \pref{lem:unif_obs_1}. 
Next, define the random variable $Z_{\pi*}$ as 
\begin{align*} 
    Z_{\pi^*} = \Pr_{\pi^*}\prn*{\tau \leq T_{\max} \wedge \mathrm{Opt}_{\pi^*, \phi}^{\cA}~|~ \cF_{\tau \wedge T_{\max}}}
\end{align*}
and note that $Z_{\pi^*}$ is $\cF_{\tau \wedge T_{\max}}$-measurable by construction.  Thus, plugging $Z = Z_{\pi^*}$ in \pref{eq:lower_bound_2-1} and using the fact that $\bbE_{\pi^*}\brk{Z_{\pi^*}} = \Pr_{\pi^*}\prn{\tau \leq T_{\max} \wedge \mathrm{Opt}_{\pi^*, \phi}^{\cA}} \geq 1 - 2 \delta$ (by assumption), we get that 
\begin{align*}
    \bbE_{0}\brk{N_{\pi^*}^{\tau \wedge T_{\max}}}  &\geq 
    \frac{1}{4 \epsilon^2} \prn*{1  - \bbE_{0}\brk*{Z_{\pi^*}}}\log\prn*{1/2\delta}
      -\frac{\log(2)}{4 \epsilon^2} 
   -  \prn[\Big]{T_{\max} + \frac{1}{4 \epsilon^2}\log\prn*{1/ 2\delta}} \obserr{T_{\max}},
\end{align*}
Summing the above for all policies $\pi^* \in \Pi$ yields that  
\begin{align*}
  \sum_{\pi^* \in \Pi} \bbE_{0}\brk{N_{\pi^*}^{\tau \wedge T_{\max}}} 
    &\geq 
    \frac{1}{4 \epsilon^2} \prn[\Big]{|\Pi|  - \sum_{\pi^* \in  \Pi}\bbE_{0}\brk*{Z_{\pi^*}}}\log\prn*{ 1/ 2 \delta }
      -\frac{|\Pi| \log(2)}{4 \epsilon^2} -  \prn[\Big]{T_{\max} + \frac{\log\prn{1 / 2 \delta }}{4 \epsilon^2}} |\Pi| \obserr{T_{\max}}. \numberthis \label{eq:tau_bound_raw} 
\end{align*} 

We further lower bound the above by deriving an upper bound on $\sum_{\pi^* \in \Pi} \En_0 \brk*{Z_{\pi^*}}$. Note that for any $\pi^* \in \Pi$, 
\begin{align*} 
    Z_{\pi^*} 
    &= \Pr_{\pi^*}\prn*{\tau \leq T_{\max} \wedge \mathrm{Opt}_{\pi^*, \phi}^{\alg} \mid \cF_{\tau \wedge T_{\max}}}\\
    &\overeq{\proman{1}} \bbE_{\pi^*}\brk*{
    \indicator{\tau \leq T_{\max}}
    \indicator{
    \Pr^{\hat \pi}_{\pi^*, \phi} \prn*{ \wh \pi(X_{1:G-1}) = \pi^*(X_{1:G-1})} \geq 3/4}
    ~|~ \cF_{\tau \wedge T_{\max}}}\\
    &\overleq{\proman{2}} \frac{4}{3} \bbE_{\pi^*}\brk*{
    \indicator{\tau \leq T_{\max}}
    \Pr^{\hat \pi}_{\pi^*, \phi} \prn*{ \wh \pi(X_{1:G-1}) = \pi^*(X_{1:G-1})} 
    ~|~ \cF_{\tau \wedge T_{\max}}}
\end{align*} 

where the equality $\proman{1}$ above follows from the definition of $\mathrm{Opt}_{\pi^*, \phi}^{\alg}$, and the inequality in $\proman{2}$ holds from an application of Markov's inequality and using the fact that  $\indicator{\tau \leq T_{\max}}$ is $\cF_{\tau \wedge T_{\max}}$-measurable (by construction). Note that when $\tau \leq T_{\max}$ (the only outcomes where the random variable inside the expectation can be non-zero), we also have that  $\hat \pi$ is $\cF_{\tau \wedge T_{\max}}$-measurable.\footnote{This assumes a deterministic algorithm but we can handle stochastic algorithms by simply conditioning on $\hat \pi$ (and therefore the internal randomness of the algorithm) as well.} Thus, the only randomness in $\Pr^{\hat \pi}_{\pi^*, \phi}$ above is due to $\obsmap$ which affects the distribution of the observations $X_{1:G-1}$ inside $\Pr^{\hat \pi}_{\pi^*, \phi}$. However, note that this distribution is exactly the distribution of observations in the $(\tau+1)$th episode if we assume (without loss of generality) that the algorithm plays $\hat \pi$ in that episode. Thus, we can write the right hand side in the above as 
\begin{align*}
    Z_{\pi^*} & \leq  \frac{4}{3} 
    \indicator{\tau \leq T_{\max}}
    \Pr_{\pi^*} \prn*{ \pi^*(X_{\tau+1,1:G_{\tau+1}-1}) = \hat \pi(X_{\tau+1,1:G_{\tau+1}-1}) 
    \mid \cF_{\tau \wedge T_{\max}}}~.
\end{align*}
An application of \pref{lem:uniform_observations_cond} in the above implies that \begin{align} \label{eqn:zrv_bound}
    Z_{\pi^*} & \leq  \frac{4}{3} 
    \indicator{\tau \leq T_{\max}} 
    \prn[\Big]{\Pr_{\textrm{unif}} \prn*{ \pi^*(X_{1:G-1}) = \hat \pi(X_{1:G-1}) } + \frac{2 |\cS| H^2 (T_{\max} + 1)}{N}} 
    ~, 
\end{align} which further implies that 
\begin{align*}
    \sum_{\pi^* \in \Pi}\bbE_{0}\brk*{Z_{\pi^*}} &\leq 
    \frac{4}{3}\bbE_0 \brk[\Big]{
    \sum_{\pi^* \in \Pi} 
    \Pr_{\textrm{unif}} \prn*{ \pi^*(X_{1:G-1}) = \hat \pi(X_{1:G-1}) }}
    + \frac{4}{3} |\Pi| \obserr{T_{\max}} \\
    &=     \frac{4}{3}C
    + \frac{4}{3} |\Pi| \obserr{T_{\max}},  \numberthis \label{eqn:zrv_bound2}
\end{align*} where the value of $C$ and $\obserr{T_{\max}}$ are given in the lemma statement. Plugging the above bound in \pref{eq:tau_bound_raw}, we get that  
\begin{align*}
   \sum_{\pi^* \in \Pi} \bbE_{0}\brk{N_{\pi^*}^{\tau \wedge T_{\max}}} 
    &\geq 
    \frac{1}{4 \epsilon^2} \prn[\Big]{|\Pi|  - \frac{4}{3} C}\log\prn*{ 1/ 2 \delta }
      -\frac{|\Pi| \log(2)}{4 \epsilon^2}  -  \prn[\Big]{T_{\max} + \frac{7\log\prn*{1 / 2 \delta }}{12 \epsilon^2}} |\Pi| \obserr{T_{\max}}. \numberthis \label{eq:tau_bound_raw2} 
\end{align*} 

\paragraph{Relating policy matches to stopping time: } In the following, we show an upper bound on $ \sum_{\pi^* \in \Pi} \bbE_{0}\brk{N_{\pi^*}^{\tau \wedge T_{\max}}} $ that, when taken together with the above lower bound, gives us the desired lower bound on $\En_0 \brk*{\tau}$. We note that 
\begin{align*} 
    \sum_{\pi^* \in \Pi} \bbE_{0}\brk{N_{\pi^*}^{\tau \wedge T_{\max}}} 
    &= \sum_{t=1}^{T_{\max}} \bbE_{0}\brk[\Big]{ \indicator{\tau > t-1}
    \sum_{\pi^* \in \Pi}  \indicator{A_{t, 1:G_t-1} = \pi^*(A_{t, 1:G_t-1})}}\\
    &= \sum_{t=1}^{T_{\max}} \bbE_{0}\brk[\Big]{ \indicator{\tau > t-1}
    \bbE_{0} \brk[\Big]{\sum_{\pi^* \in \Pi}  \indicator{\pi_t(X_{t, 1:G_t-1}) = \pi^*(X_{t, 1:G_t-1})} \mid \pi_{t}, \cF_{t-1}}}\\
    & \overset{\proman{1}}{\leq} 
    \sum_{t=1}^{T_{\max}} \bbE_{0}\brk[\Big]{ \indicator{\tau > t-1}
    \bbE_{\textrm{unif}} \brk[\Big]{\sum_{\pi^* \in \Pi}  \indicator{\pi_t(X_{1:G-1}) = \pi^*(X_{1:G-1})} }} + T_{\max}
    |\Pi| \obserr{T_{\max}}\\
    & \overset{\proman{2}}{\leq} 
    \sum_{t=1}^{T_{\max}} \bbE_{0}\brk*{ \indicator{\tau > t-1}
     C } 
    + T_{\max}
    |\Pi| \obserr{T_{\max}}
    \\& 
    \leq C  \bbE_{0}\brk*{ \tau \wedge T_{\max}
     } 
    + T_{\max}
    |\Pi| \obserr{T_{\max}} \\ 
    &\leq C  \bbE_{0}\brk*{ \tau
     } 
    + T_{\max} 
    |\Pi| \obserr{T_{\max}} \numberthis \label{eq:tau_bound_raw3}
\end{align*} 
where the inequality $\proman{1}$ follows from an application of \pref{lem:uniform_observations_cond} and the inequality $\proman{2}$ follows from the definition of $C$ given in the lemma statement.

Combining the lower bound in \pref{eq:tau_bound_raw2} with the upper bound in \pref{eq:tau_bound_raw3} and rearranging the terms yields that  
\begin{align*}
    \bbE_{0}\brk*{ \tau
     } &\geq 
     \frac{1}{4 \epsilon^2} \prn[\Big]{\frac{|\Pi|}{C}  - 
     \frac{4}{3}
     }\log\prn*{1/2\delta}
      -\frac{|\Pi|}{C}\frac{1}{4 \epsilon^2} \log(2)
      -  \prn[\Big]{2T_{\max} + \frac{7}{12 \epsilon^2}\log\prn*{1/2\delta}} \frac{|\Pi|}{C} \obserr{T_{\max}}
      \\
      &\geq 
     \frac{1}{8 \epsilon^2} \prn[\Big]{\frac{|\Pi|}{C}  - 
     \frac{8}{3}
     }\log\prn*{1/2\delta}
      -  \prn[\Big]{2T_{\max} + \frac{7}{12 \epsilon^2}\log\prn*{1/2\delta}} \frac{|\Pi|}{C} \obserr{T_{\max}}
\end{align*}
where the last inequality is due to the fact that $\delta \leq 1 \leq \exp(2) / 4$ and thus $\log (1/2\delta) \geq 2 \log(2)$. This concludes the desired statement. 
\end{proof}

\subsection{Proof of \pref{thm:adaptive_lower_bound} (eigenspectrum dependent lower bounds)}  \label{app:adaptive_lower_bound_proof} 
 We here restate \pref{thm:adaptive_lower_bound} with explicit constants:
\begin{theorem}[Adaptive lower bound]  \label{thm:adaptive_lower_bound_full}
Let $\wt \epsilon \in (0, \frac{1}{16})$, $\delta \in (0, \frac{1}{2})$, $d \geq 4$ and $(\lambda_i)_{i \in [d-1]} \in [0, 1]^{d-1}$
satisfy
\begin{align*}
    \prn[\Big]{\frac{16}{3}\prn{15(d-1)}^{d-1}}^2  \leq  \prod_{i=1}^{d-1} \frac{1}{1 - \lambda_i} &\leq \frac{8}{7}\exp\prn*{H/2} 
& \textrm{and}
&&\sum_{i=1}^{d-1} \frac{1}{1 - \lambda_i} \leq \frac{H}{4 \ln(4d)}~.
\end{align*}

Then, there is a realizable policy class and family of MDPs with rank at most $\Theta(d)$, finite observation space, horizon $H$ and two actions such that:  For each $i \in [d]$, policy $\pi$ and MDP $M$ in this class, there is an eigenvalue of the induced transition matrix $T^\pi_{M}$ in $[\lambda_i/2, \lambda_i]$. Furthermore, any algorithm that returns, with probability at least $1 - \delta$ an $\epsilon$-optimal policy for any MDP in this family, has to collect at least  
\begin{align*} 
         \frac{1}{1100 \wt \epsilon^2}     \prn[\Big]{\frac{1}{15(d-1)}}^{d-1}
    \sqrt{\prod_{i=1}^{d-1}\frac{1}{1 - \lambda_i}}  \log\prn*{1/2\delta}
\end{align*} 
episodes in expectation in some MDP in this family. 
\end{theorem}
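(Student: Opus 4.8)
The plan is to reuse the contextual combination-lock construction behind \pref{thm:basic_lower_bound} --- the policy class $\Pi$ and the MDP family $\mdpfamily$ of \pref{app:lower_bound_construction} --- but to tune the stage-progression probabilities $p_1,\dots,p_{d-1}$ to the prescribed spectrum instead of fixing them to $d/H$. Concretely, I would set $p_i = 1-\lambda_i$ for every $i\in[d-1]$. Then \pref{lem:mdp_lb_rank} gives, for every $\pi\in\Pi$, that $T^{\pi}_{\pi^*,\obsmap}$ has $1-p_i=\lambda_i$ among its eigenvalues and $T^{\pi}_{0,\obsmap}$ has $\tfrac{1-p_i}{2}=\lambda_i/2$; in either case an eigenvalue lands in $[\lambda_i/2,\lambda_i]$, which is the claimed spectral property, and the rank of every MDP in the family stays at most $2d-1=\Theta(d)$.

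Next I would control $G-1$, the number of steps needed to traverse the chain to the goal latent states: it is a sum of independent geometric random variables with means $1/p_i = 1/(1-\lambda_i)$, so its mean is $\sum_{i}1/(1-\lambda_i)\le H/(4\ln(4d))$ by hypothesis; a multiplicative Chernoff bound for sums of geometrics (the analogue of \pref{lem:G_ub}, with the $\ln(4d)$ slack in the hypothesis paying for the deviation) then yields $\Pr(G\le H-1)\ge 3/4$. Exactly as in the proof of \pref{thm:basic_lower_bound}, I would set the reward bias to $\epsilon = 4\wt\epsilon/\Pr(G\le H-1)\le \tfrac{16}{3}\wt\epsilon<\tfrac12$ (using $\wt\epsilon<1/16$); PAC-optimality then forces the returned policy to match $\pi^*$ on $X_{1:G-1}$ with probability at least $3/4$ in each $M_{\pi^*,\obsmap}$, and one defines the events $\mathrm{Opt}^{\,\alg}_{\pi^*,\obsmap}$ as before.

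From there the argument is the same two-case dispatch (whether or not $\Pr_{\pi^*}(\tau>T_{\max})>\delta$ for some $\pi^*$) feeding into \pref{lem:stopping_time_bound}, which is already stated generally enough to apply verbatim once we supply the policy-class size $|\Pi|$ and the per-episode policy-match bound $C$ with $\bbE_{\mathrm{unif}}\brk*{\sum_{\pi^*}\indicator{\pi^*(X_{1:G})=\pi(X_{1:G})}}\le C$. I would let $\Pi$ be a Gilbert--Varshamov code (\pref{lem:gilbert_varshamov}) of size on the order of $\prod_{i=1}^{d-1}\tfrac{1}{1-\lambda_i}$ --- a legitimate integer $\ge 2$ precisely because the hypothesis supplies $\prod_i\tfrac{1}{1-\lambda_i}\ge(\tfrac{16}{3}(15(d-1))^{d-1})^2$ --- and then re-derive the packing estimate (the analogue of \pref{lem:packing_lb}) for the non-uniform progression: since the learner sees only about $1/p_i = 1/(1-\lambda_i)$ distinct observations while it sits in stage $i$, a birthday/counting argument over the per-stage sub-codes should give $C$ of order $\mathrm{poly}(d,H)\cdot(15(d-1))^{d-1}\sqrt{\prod_i\tfrac{1}{1-\lambda_i}}$, so that $|\Pi|/C\gtrsim (15(d-1))^{-(d-1)}\sqrt{\prod_i\tfrac{1}{1-\lambda_i}}$, which is $\ge 16/3$ by the same hypothesis. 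Plugging $|\Pi|$, $C$, and $\epsilon$ into \pref{lem:stopping_time_bound}, using the upper hypothesis $\prod_i\tfrac{1}{1-\lambda_i}\le\tfrac{8}{7}e^{H/2}$ to keep $T_{\max}$ (hence $\epsilon$ and the observation-discrepancy term $\obserr{T_{\max}}=4T_{\max}^2H^2|\cS|/N$) in the valid range, and letting $N\to\infty$ so that $\obserr{T_{\max}}\to 0$ and $\tfrac83\le\tfrac12|\Pi|/C$, gives $\bbE_0\brk*{\tau}\gtrsim \tfrac{1}{\wt\epsilon^2}(15(d-1))^{-(d-1)}\sqrt{\prod_i\tfrac{1}{1-\lambda_i}}\log(1/2\delta)$ for some $M_{0,\obsmap}\in\mdpfamily$, and tracking constants through the chain yields the stated $1/1100$.

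The step I expect to be the genuine obstacle is this re-derivation of the packing bound $C$: obtaining a clean ``square-root of $|\Pi|$ up to a $(15(d-1))^{d-1}$ factor'' estimate requires building $\Pi$ as a product of per-stage codes whose sizes are individually calibrated to $1/p_i$ and then carefully accounting for how many distinct stage-$i$ observations the learner actually encounters before progressing; everything else is a faithful rescaling of the proof of \pref{thm:basic_lower_bound} and of \pref{lem:stopping_time_bound}.
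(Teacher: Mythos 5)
Your skeleton matches the paper's: set $p_i = 1-\lambda_i$ in the combination-lock family, read off the eigenvalues and rank from \pref{lem:mdp_lb_rank}, use the hypothesis $\sum_i 1/p_i \le H/(4\ln(4d))$ to get $\Pr(G\le H-1)$ bounded below by a constant (the paper gets $1/2$ via per-stage tail bounds and a union bound, so $\epsilon \le 8\wt\epsilon$; your $3/4$ vs.\ $1/2$ is immaterial), and then run the same two-case dispatch into \pref{lem:stopping_time_bound} with a suitable $|\Pi|$ and match bound $C$. But the step you yourself flag as the ``genuine obstacle'' --- the packing estimate --- is exactly where your proposal has a gap, and the route you sketch for it is both unnecessary and ill-posed. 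You cannot build $\Pi$ as ``a product of per-stage sub-codes calibrated to $1/p_i$'': the policy class must be fixed on $\cX$ before the latent mapping $\obsmap$ is chosen, and the family ranges over \emph{all} mappings $\obsmap\colon\cX\to\cS$, so no policy structure can be aligned with stages; moreover the change-of-measure machinery (\pref{lem:uniform_observations_cond}, \pref{lem:unif_obs_1}) relies on averaging over all $\obsmap$, which a stage-calibrated construction would break. Your target bound $C \approx \mathrm{poly}(d,H)\sqrt{|\Pi|}$ is also never derived, so the quantitative heart of the theorem is missing as written.

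The paper's resolution is much simpler than what you anticipate, and the $\sqrt{\prod_i 1/p_i}$ does not come from a refined packing argument at all. Keep the single Gilbert--Varshamov code on $\cX$, now of size $|\Pi| = \prod_{i=1}^{d-1} 1/p_i$. The only new ingredient is the elementary generalization $\Pr(G\le h) \le \prod_{i=1}^{d-1}\bigl(1-(1-p_i)^{h-1}\bigr) \le h^{d-1}\prod_i p_i$ (\pref{lem:G_lb_pi}: each of the $d-1$ progressions must occur within $h$ trials). Feeding this into the same matching argument with $h = \log|\Pi|/\log(8/7)$ (the upper hypothesis $\prod_i 1/p_i \lesssim \exp(H/2)$ is what keeps $h\le H$ here --- not, as you say, a constraint on $T_{\max}$) gives $C = 2 + |\Pi|\prod_i\bigl(p_i\log|\Pi|/\log(8/7)\bigr) = 2 + \bigl(\log|\Pi|/\log(8/7)\bigr)^{d-1}$ (\pref{lem:packing_lb_pi}), which is only polylogarithmic in $|\Pi|$ --- far smaller than the $\sqrt{|\Pi|}$ you aim for. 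The square root in the final rate then appears only through the crude estimate $\ln y \le 2(d-1)\,y^{1/(2(d-1))}$, which yields $|\Pi|/C \ge \sqrt{\prod_i 1/p_i}\,\bigl(\tfrac{1}{15(d-1)}\bigr)^{d-1}$, and the lower hypothesis $\prod_i 1/p_i \ge \bigl(\tfrac{16}{3}(15(d-1))^{d-1}\bigr)^2$ guarantees $|\Pi|/C \ge 16/3$ so the $-8/3$ term in \pref{lem:stopping_time_bound} can be absorbed; letting $N\to\infty$ kills $\obserr{T_{\max}}$ as you describe. So your plan would be completed not by the birthday/per-stage analysis you propose, but by \pref{lem:G_lb_pi} plus this logarithm-versus-power trick.
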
 
\begin{proof}
This theorem follows immediately from setting $p_i = 1 - \lambda_i$ with \pref{lem:P_eigenvalues} and \pref{lem:p_based_lb} below.
\end{proof}
\begin{lemma}\label{lem:p_based_lb} 
Let $\wt \epsilon \in (0,  1/16)$ and
let $\cM$ be the family of MDPs defined in the proof of \pref{thm:basic_lower_bound} but where the probability $p$ for progression in latent states $(i, g)$ and $(i,b)$ is set to $p_i \in (0, 1)$. If
\begin{align*}
     \prn[\Big]{\frac{16}{3}\prn{15(d-1)}^{d-1}}^2 \leq \prod_{i=1}^{d-1} \frac{1}{p_i} &\leq  \frac{8}{7}\exp\prn*{H/2} 
& \textrm{and} 
&&\sum_{i=1}^{d-1} \frac{1}{p_i} \leq \frac{H}{4 \ln(4d)}~.
\end{align*}
then any learner that returns an $\wt \epsilon$-optimal policy in every MDP in this class with probability at least $1- \delta$ has to collect at least
\begin{align*}
         \frac{1}{1100\wt \epsilon^2}   \prn[\Big]{\frac{1}{15(d-1)}}^{d-1}
    \sqrt{\prod_{i=1}^{d-1}\frac{1}{p_i}}  \log\prn*{1/2\delta}
\end{align*}
episodes in expectation in at least one MDP in the family. 
\end{lemma}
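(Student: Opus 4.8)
The plan is to re-run the proof of \pref{thm:basic_lower_bound} from \pref{app:basic_lower_bound_proof} essentially verbatim, but keeping the latent‑chain progression probabilities $p_i$ general and choosing the size of the policy class $\Pi$ as a function of $\crl{p_i}$. Concretely, I would instantiate the family $\mdpfamily$ and the policy class $\Pi$ of \pref{app:lower_bound_construction} with reward bias $\epsilon \ldef 4\wt\epsilon/\Pr(G\le H-1)$, progression probabilities $p_i$, and $\abs{\Pi} = \big\lceil 3^{-(d-1)} \prod_{i=1}^{d-1} 1/p_i \big\rceil$; this is admissible since the Gilbert--Varshamov bound (\pref{lem:gilbert_varshamov}) only demands $N \gtrsim \log\abs{\Pi}$, and $\abs{\Pi}\ge 2$ by the hypothesis $\prod_i 1/p_i \ge \big(\tfrac{16}{3}(15(d-1))^{d-1}\big)^2$. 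Every part of the proof of \pref{thm:basic_lower_bound} that does not reference the numerical value of $p_i$ then transfers unchanged: the value decomposition $V_{\pi^*,\obsmap}(\pi^*)-V_{\pi^*,\obsmap}(\pi) = \epsilon\,\Pr(G\le H-1)\,\Pr^\pi_{\pi^*,\obsmap}(\exists h\le G-1:\pi(X_h)\ne\pi^*(X_h))$; the deduction that an $\wt\epsilon$-PAC learner obeys $\Pr_{\pi^*}(\mathrm{Opt}^{\alg}_{\pi^*,\obsmap})\ge1-\delta$; the split on whether $\Pr_{\pi^*}(\tau>T_{\max})>\delta$ (case (a) being immediate); the information-theoretic bound of \pref{lem:stopping_time_bound} in case (b); and the fact that $\obserr{T_{\max}}=4T_{\max}^2H^2\abs{\cS}/N\to0$ as $N\to\infty$, so the error term in \pref{lem:stopping_time_bound} is negligible for $N$ large. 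Only two $p_i$-dependent inputs must be re-derived: a lower bound on $\Pr(G\le H-1)$, and an upper bound $C$ on the expected per-episode number of policy matches that feeds into \pref{lem:stopping_time_bound}.

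For the first input, $G$ is (up to an additive offset) $\sum_{i=1}^{d-1}T_i$ where the $T_i$ are independent geometric random variables with means $1/p_i$, whose joint distribution depends only on $\crl{p_i}$; since $\En[G] = O\big(\sum_i 1/p_i\big) \le O\big(H/\ln(4d)\big)$ by the hypothesis $\sum_i 1/p_i \le H/(4\ln 4d)$, Markov's inequality already gives $\Pr(G\le H-1)\ge\tfrac34$ for $d\ge4$. Hence $\epsilon\le\tfrac{16}{3}\wt\epsilon\le\tfrac13<\tfrac12$ using $\wt\epsilon<1/16$, so $\epsilon$ is a legitimate Bernoulli bias and $1/\epsilon^2=\Theta(1/\wt\epsilon^2)$, exactly as in the base construction. (The remaining hypotheses $\prod_i 1/p_i\le\tfrac87 e^{H/2}$ and the lower bound on $\prod_i 1/p_i$ serve to keep the construction well-posed --- $\log\abs{\Pi}=O(H)$ --- and, for $\prod_i 1/p_i$, to render the final bound non-vacuous.)

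For the second input, I would prove the $p_i$-analogue of the packing argument in \pref{app:basic_lower_bound_proof}. Under the uniform measure, for $\pi,\pi^*\in\Pi$ disagreeing on $\ge N/4$ observations, $\Pr_{\mathrm{unif}}(\pi^*(X_{1:G})=\pi(X_{1:G})) = \En_G\big[(1-\mathrm{dist}(\pi,\pi^*)/N)^G\big] \le \En_G\big[(3/4)^G\big] \le \prod_{i=1}^{d-1}\En\big[(3/4)^{T_i}\big] = \prod_{i=1}^{d-1}\tfrac{3p_i}{1+3p_i} \le 3^{d-1}\prod_i p_i$, so summing over $\pi^*\in\Pi$ gives $C \le 1+\abs{\Pi}\,3^{d-1}\prod_i p_i \le 2 + 3^{d-1}\prod_i p_i \le 3$ with the chosen $\abs{\Pi}$, since $\prod_i 1/p_i$ is large by hypothesis. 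Therefore $\abs{\Pi}/C \gtrsim 3^{-(d-1)}\prod_i 1/p_i$, which, invoking $\prod_i 1/p_i \ge \big(\tfrac{16}{3}(15(d-1))^{d-1}\big)^2$, is $\ge (15(d-1))^{-(d-1)}\sqrt{\prod_i 1/p_i}$. Substituting $\abs{\Pi}/C$, $1/\epsilon^2=\Theta(1/\wt\epsilon^2)$, and the vanishing $\obserr{T_{\max}}$-term into \pref{lem:stopping_time_bound} and then into the case-(b) argument of \pref{thm:basic_lower_bound} yields the claimed $\Omega\big(\wt\epsilon^{-2}(15(d-1))^{-(d-1)}\sqrt{\prod_i 1/p_i}\log(1/2\delta)\big)$ lower bound on the expected number of episodes in some $M_{0,\obsmap}\in\mdpfamily$, after fixing absolute constants to obtain the factor $1/1100$.

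The only genuinely new ingredient is the packing bound on $C$, and I expect the care to go into: (i) ensuring the bound on $C$ holds uniformly over all $\pi\in\Pi$, as \pref{lem:stopping_time_bound} requires; (ii) deciding between the crude single-code bound above (which in fact delivers the stronger $3^{-(d-1)}\prod_i 1/p_i$) and a block/product-code version whose per-stage ``resolution loss'' is $\Theta(d)$, matching the $(15(d-1))^{d-1}$ written in the statement; and (iii) bookkeeping the numerical constants and verifying the stated windows on $\crl{p_i}$ are exactly what the argument needs. None of these is conceptually hard --- the conceptual content is entirely inherited from the proof of \pref{thm:basic_lower_bound}, with \pref{lem:stopping_time_bound} doing the heavy lifting.
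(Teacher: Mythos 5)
Your proposal is correct in outline and reaches the stated bound, but it takes a genuinely different route through the one new ingredient. The paper keeps $|\Pi| = \prod_i 1/p_i$ and bounds the expected per-episode number of policy matches via \pref{lem:packing_lb_pi}: it splits on $\{G\le h\}$ versus $\{G>h\}$ at the threshold $h=\log|\Pi|/\log(8/7)$, getting $C \le 2+|\Pi|\prod_i\bigl(p_i\log|\Pi|/\log(8/7)\bigr)$, and then absorbs the parasitic $(\log|\Pi|)^{d-1}$ factor using $\ln y \le 2(d-1)y^{1/(2(d-1))}$ --- this is exactly where the $\sqrt{\prod_i 1/p_i}$ and the $(15(d-1))^{-(d-1)}$ in the statement originate; it also lower-bounds $\Pr(G\le H-1)\ge 1/2$ via the per-stage Chernoff bound of \pref{lem:G_ub}. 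You instead evaluate $\En[c^{G}]$ exactly as a product of geometric generating functions $\prod_i \frac{c\,p_i}{1-c(1-p_i)}$, which eliminates the logarithmic loss, and with the smaller choice $|\Pi|\approx 3^{-(d-1)}\prod_i 1/p_i$ you get the stronger ratio $|\Pi|/C \gtrsim 3^{-(d-1)}\prod_i 1/p_i$, which you then weaken to the stated $(15(d-1))^{-(d-1)}\sqrt{\prod_i 1/p_i}$ form using the lower-bound hypothesis on $\prod_i 1/p_i$; you also replace the Chernoff argument by a one-line Markov bound for $\Pr(G\le H-1)$. Two details you flag do need care, and they are exactly where the paper's lemmas earn their form: first, the match-count bound is invoked inside \pref{lem:stopping_time_bound} for the algorithm's played and returned policies, which need not lie in $\Pi$, so the per-step agreement rate is $7/8$ (via the triangle-inequality packing argument, with one exceptional codeword) rather than your $3/4$ --- this only changes $3^{d-1}$ to $7^{d-1}$ and the bookkeeping of $|\Pi|$; second, your generating-function identity is for the untruncated hitting time, while matches are only observed within the $H$-step episode, so an additive $|\Pi|(7/8)^{\Theta(H)}$ term appears that must be controlled by the upper bound on $\prod_i 1/p_i$ --- precisely the role the condition $|\Pi|\le\exp\bigl(\tfrac{H}{2}\log\tfrac{8}{7}\bigr)$ plays in \pref{lem:packing_lb_pi}. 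Neither issue threatens the conclusion at the stated level of constants, so your approach works and is, if anything, tighter than the paper's.
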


\begin{proof}
We follow the proof of \pref{thm:basic_lower_bound} but set
\begin{align*}
T_{\max} \ldef{}
    \frac{1}{\delta} \cdot \frac{1}{1100 \wt \epsilon^2}     \prn[\Big]{\frac{1}{15(d-1)}}^{d-1}
    \sqrt{\prod_{i=1}^{d-1}\frac{1}{p_i}}  \log\prn*{1/2\delta}~.
\end{align*}
Then one of two cases can happen: Either there is an MDP $M \in \cM$ in the class where the algorithm samples in expectation at least $\bbE_{M}[\tau] \geq \delta T_{\max}$ episodes,  or \begin{align*}
    \Pr_{\pi^*}(\tau \leq T_{\max} \wedge \mathrm{Opt}_{\pi^*, \phi}^{\alg}) \geq 1 - 2 \delta
\end{align*} holds for all $\pi^* \in \Pi$ where $\mathrm{Opt}_{\pi^*, \phi}^{\alg} = \{ \Pr^{\hat \pi}_{\pi^*, \phi} \prn*{ A_{1:G-1} = \pi^*(X_{1:G-1})} \geq 3/4 \}$ denote this event, where the policy returned by the algorithm $\hat \pi$ is $\wt \epsilon$-optimal in  $M_{\pi^*, \obsmap}$. Since the first case immediately gives us the desired lower bound, in the following, we consider the case that 
$\Pr_{\pi^*}(\tau \leq T_{\max} \wedge \mathrm{Opt}_{\pi^*, \phi}^{\alg}) \geq 1 - 2 \delta$ holds for every policy $\pi^* \in \Pi$. 

An application of \pref{lem:packing_lb_pi} gives us 
\begin{align*}
    \bbE_{\textrm{unif}}\brk[\Big]{
    \sum_{\pi^*} \indicator{\pi^*(X_{1:G} = \pi(X_{1:G})}
    } \leq  C  ~,
\end{align*}
for any policy $\pi$ with $C =  2 + |\Pi|  \cdot \prod_{i=1}^{d-1}\prn*{ p_i \frac{\log|\Pi|}{\log(8/7)} }$ as long as
\begin{align*}
  |\Pi| \leq \frac{8}{7}\exp\prn*{H/2}.
\end{align*}
Applying \pref{lem:stopping_time_bound}, the expected stopping time $\tau$ of  algorithm $\alg$ on instances $M_{0, \phi}$ is bounded from below as
\begin{align*}
    \bbE_{0}\brk*{ \tau
     } 
      &\geq 
     \frac{1}{8 \epsilon^2} \prn[\Big]{\frac{|\Pi|}{C}  - 
     \frac{8}{3}
     }\log\prn*{1/2\delta}
      -  \prn[\Big]{2T_{\max} + \frac{7}{12 \epsilon^2}\log\prn*{1/2\delta}} \frac{|\Pi|}{C} \obserr{T_{\max}}~.
\end{align*}
We now set $|\Pi| = \prod_{i=1}^{d-1} \frac{1}{p_i}$ and bound the ratio 
\begin{align*}
    \frac{|\Pi|}{C} &\geq \min \crl[\bigg]{
    \frac{1}{4}\prod_{i=1}^{d-1}\frac{1}{p_i},  
    \frac{\prod_{i=1}^{d-1}\frac{1}{p_i}}{\prn*{\ln \prod_{i=1}^{d-1}\frac{1}{p_i}}^{d-1}} \prn*{\ln \frac{8}{7}}^{d-1} 
    }\\
    &\geq \min \crl[\bigg]{
    \frac{1}{4}\prod_{i=1}^{d-1}\frac{1}{p_i},  
    \sqrt{\prod_{i=1}^{d-1}\frac{1}{p_i}} \prn*{\frac{\ln \frac{8}{7}}{2(d-1)}}^{d-1} 
    } \\
    &\geq   
    \prn[\Big]{\frac{1}{15(d-1)}}^{d-1}
    \sqrt{\prod_{i=1}^{d-1}\frac{1}{p_i}}  
\end{align*}
where the inequality in the second line follows from $\ln(y) \leq 2(d-1) y^{\frac{1}{2(d-1)}}$ and the last line above is due to the fact that $|\Pi| \geq 1$ and $d \geq 2$.
Now, 
\begin{align*}
   |\Pi| = \prod_{i=1}^{d-1} \frac{1}{p_i} \geq \prn[\Big]{\frac{16}{3}\brk{15(d-1)}^{d-1}}^2
\end{align*}
is sufficient for $\frac{|\Pi|}{C} \geq \frac{16}{3}$ which yields
\begin{align*}
    \bbE_{0}\brk*{ \tau
     } 
      &\geq 
     \frac{1}{16 \epsilon^2} \frac{|\Pi|}{C} \log\prn*{1/2\delta}
      -  \prn[\Big]{2T_{\max} + \frac{7}{12 \epsilon^2}\log\prn*{1/2\delta}} \frac{|\Pi|}{C} \obserr{T_{\max}}~.
\end{align*}
Since $\obserr{T_{\max}}$ is the only term that depends on $N$, we can pick $N$ large enough so that the first term dominates and
\begin{align*}
    \bbE_{0}\brk*{ \tau
     } 
      &\geq 
     \frac{1}{17 \epsilon^2} \frac{|\Pi|}{C} \log\prn*{1/2\delta}
     \geq
     \frac{1}{17 \epsilon^2}     \prn[\Big]{\frac{1}{15(d-1)}}^{d-1}
    \sqrt{\prod_{i=1}^{d-1}\frac{1}{p_i}}  \log\prn*{1/2\delta}
     .
\end{align*}
It only remains to resolve the $1/\epsilon^2$ to $1/\wt \epsilon^2$. To that end, 
we now bound the probability of reaching the goal state by the end of the episode by
\begin{align*}
    \Pr(G \leq H-1) \geq \Pr\prn[\Big]{G \leq \sum_{i=1}^{d-1} \frac{2}{p_i }\ln\frac{2d}{1/2}} \geq \frac{1}{2}~,
\end{align*}
because by \pref{lem:G_ub}, the probability that the agent spends more than $\frac{2}{p_i}\ln\frac{2d}{1/2}$ time steps in states $(i, b)$ or $(i, g)$ is bounded by $\frac{1}{2d}$. Thus,
\begin{align*}
    \epsilon = \frac{4 \wt \epsilon}{\Pr(G \leq H-1)} \leq 8 \wt \epsilon.
\end{align*}
which yields the final bound
\begin{align*}
    \bbE_{0}\brk*{ \tau
     } 
      &\geq 
     \frac{1}{1100 \wt \epsilon^2} \prn[\Big]{\frac{1}{15(d-1)}}^{d-1}
    \sqrt{\prod_{i=1}^{d-1}\frac{1}{p_i}}  \log\prn*{1/2\delta}
     .
\end{align*}
Since bound on the stopping time holds on average over instances $M_{0, \obsmap}$, there must be at least one MDP instance for which the expected stopping time adheres to this lower-bound. This proves the desired adaptive lower bound. 
\end{proof}

\begin{lemma}\label{lem:G_ub}
Let the progression probabilities $p_i = p$ for all $i \in [d-1]$ where $p \in (0, 1)$. Further, let $G$ denote the time step within the episode at which a goal step is reached. For any $\delta \in (0, 1)$
\begin{align*}
    \Pr\prn[\Big]{G \leq \frac{2d}{p} \ln  \frac{1}{\delta}} \geq 1 - \delta~.
\end{align*}
\end{lemma}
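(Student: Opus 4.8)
The plan is to reduce the statement to a standard large–deviation tail bound for a negative binomial random variable. By the construction of the MDP family in \pref{app:lower_bound_construction}, the latent chain index starts at $1$ and, at every step and independently of the action played, increments by $1$ with probability $p$ (since all $p_i = p$) or stays put; the goal latent states $(d,g)$ and $(d,b)$ are exactly those of index $d$. Hence $G$, the first step at which the index reaches $d$, is distributed as $\sum_{i=1}^{d-1} T_i$ with the $T_i$ independent and geometric with parameter $p$ (the number of steps spent at chain index $i$ before progressing). Equivalently, letting $N_t \sim \mathrm{Binomial}(t,p)$ count the increments during the first $t$ steps, one has $\crl{G \le t} = \crl{N_t \ge d-1}$, so that $\Pr(G>t) = \Pr(N_t \le d-2)$.

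The next step is to set $t = \tfrac{2d}{p}\ln\tfrac1\delta$, so that $\mu \ldef{} \En[N_t] = pt = 2d\ln\tfrac1\delta$, and bound $\Pr(N_t \le d-2)$ by a Chernoff bound. It is cleanest to tilt the geometric sum directly: choosing $s$ with $e^{s} = \tfrac{1-p/2}{1-p}$ gives $\En[e^{sT_i}] = 2e^{s}$ and $s \ge p/2$, so that $\Pr(G>t) \le 2^{d-1} e^{-s(t-(d-1))} \le 2^{d-1} e^{-\frac p2 (t - (d-1))}$. Substituting the value of $t$ turns the right-hand side into $\delta$ raised to a power that grows linearly in $d$, times a prefactor that is only exponential in $d$; hence for $\delta$ bounded away from $1$ the whole quantity is at most $\delta$. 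One can equivalently argue via the multiplicative form $\Pr(N_t \le (1-\gamma)\mu) \le e^{-\gamma^2\mu/2}$ with $1-\gamma = (d-2)/\mu$, valid when $\mu > d-2$, which yields $\Pr(G>t)\le \delta^{d/4}$ once $\gamma \ge \tfrac12$; the very small cases $d \le 2$ are immediate since then $G$ is a single geometric variable or identically $0$.

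The only genuinely delicate point is the final constant bookkeeping: one must check that the Chernoff bound collapses to exactly $\le \delta$, and this is precisely what the slack in the constant $2d$ (rather than $d$) in the threshold buys — it pays for the $2^{d-1}$ factor coming from the geometric moment generating function and for turning $\delta^{\Theta(d)}$ into $\delta$. I expect the worst case of this bookkeeping to be $\delta$ close to $1$, where the threshold $t$ is only a constant multiple of $\En[G] = (d-1)/p$ and the Chernoff exponent is no longer large; there one finishes by a direct comparison, using that $\Pr(G > t)$ is then bounded by a fixed constant strictly below $1$. No conceptual obstacle is anticipated; the steps are (i) identify $G$ as a negative binomial, (ii) apply a Chernoff bound to its upper tail, (iii) calibrate the constants.
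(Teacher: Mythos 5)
Your main line of attack is the same as the paper's: the paper also writes $\Pr(G \geq n+1) = \Pr\prn{\mathrm{Bin}(n,p) \leq d-2}$ and applies a multiplicative Chernoff bound, so your reduction to a negative-binomial/binomial tail plus a Chernoff bound (whether via the geometric moment generating function or via the multiplicative form) is essentially the published proof, and your calibration is correct in the regime it covers: with $t = \tfrac{2d}{p}\ln\tfrac1\delta$ your tilting route gives $\Pr(G>t) \leq 2^{d-1}e^{p(d-1)/2}\delta^{d} \leq (2\sqrt{e})^{d-1}\delta^{d}$, which is at most $\delta$ once $\delta \leq (2\sqrt{e})^{-1}$, and your multiplicative-Chernoff route needs $2d\ln\tfrac1\delta \geq 2(d-2)$ so that $\gamma \geq \tfrac12$. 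This covers every way the lemma is actually invoked in the paper (there $\delta$ is a fixed small constant or $1/(4d)$).

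The genuine gap is your fallback for $\delta$ close to $1$. You claim one can finish there "by a direct comparison, using that $\Pr(G>t)$ is then bounded by a fixed constant strictly below $1$," but that is false: since $G \geq d$ deterministically (the agent needs $d-1$ progressions), as soon as $\tfrac{2d}{p}\ln\tfrac1\delta < d$, i.e.\ $\delta > e^{-p/2}$, we have $\Pr\prn{G > \tfrac{2d}{p}\ln\tfrac1\delta} = 1 > \delta$, and more generally when the threshold sits below the mean $(d-1)/p$ the tail probability tends to $1$ with $d$ while the target $\delta$ may be only $1 - O(p)$ (e.g.\ $d=10$, $p=0.1$, $\delta=0.9$ gives a threshold of about $21$ steps versus a mean of $90$, and $\Pr(G\le 21)\ll 0.1$). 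So no bookkeeping can close that case; the constant $2d$ does not "pay" for it. To be fair, this is a defect of the lemma as stated rather than of your strategy: the paper's own proof has the same blind spot, since it applies the multiplicative Chernoff bound with deviation parameter $1 - \tfrac{d-2}{np}$ without checking it lies in $[0,1]$. A clean write-up should either restrict $\delta$ (say $\delta \leq e^{-1}$, which suffices for all applications) or note the constraint $2d\ln\tfrac1\delta \gtrsim d$ explicitly, rather than promise an argument for all $\delta \in (0,1)$.
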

\begin{proof}
The event that $G$ is at least $n+1$ is equivalent to at most $d-2$ state progressions within $n$ trials which each happen with probability $p$. Let $X_i \in \{0, 1\}$ be the indicator for a state progression at time $i$. Then 
\begin{align*}
    \Pr(G \geq n + 1) 
    &= \Pr\prn[\Big]{\sum_{i=1}^{n} X_i \leq d-2} =
    \Pr\prn[\Big]{\sum_{i=1}^{n} X_i \leq np \prn[\big]{1 - \prn[\big]{\frac{d-2}{np} - 1}}}\\
    &\leq \exp\prn[\Big]{ - \frac{np}{2} \prn[\Big]{\frac{d-2}{np} - 1}^2}
\end{align*}
by a multiplicative Chernoff bound. This yields for all $\delta \in (0, 1)$
\begin{align*}
    \Pr\prn[\Big]{G \geq \frac{2d}{ p }\ln  1 / \delta} \geq 1 - \delta~. 
\end{align*} 
\end{proof}

\subsection{Change of observation distributions}

\begin{lemma}\label{lem:uniform_observation_conditional}
Let $\cF_{i, h-1} = \sigma\prn*{\cF_{i-1}, \{X_{i, h'}, A_{i, h'}, R_{i, h'}\}_{h' \in [h-1]}}$ be the sigma-field of everything observable up to before the $h$'th observation in episode $i$.
Then
\begin{align*}
    \nrm{\Pr_{0, \pi^*}\prn*{X_{i, h} | \cF_{i, h-1}} - \Pr_{\textrm{unif}}(X_{i,h} )}_1 
    &\leq 2 \frac{|\cS| H i}{N}\\
        \nrm{\Pr_{\pi^*}\prn*{X_{i, h} | \cF_{i, h-1}} - \Pr_{\textrm{unif}}(X_{i,h} )}_1 
    &\leq 2 \frac{|\cS| H i}{N}\\
      \nrm{\Pr_{0}\prn*{X_{i, h} | \cF_{i, h-1}} - \Pr_{\textrm{unif}}(X_{i,h} )}_1 
    &\leq 2 \frac{|\cS| H i}{N}~,
\end{align*}
where $\Pr_{\textrm{unif}}(X_{i,h} )$ is the uniform distribution over all possible observations $\cX$.
\end{lemma}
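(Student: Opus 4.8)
The plan is to establish the first inequality; the other two then follow by a verbatim repetition, since the only property of the laws $\Pr_{0, \pi^*}$, $\Pr_{\pi^*}$, $\Pr_{0}$ that the argument uses is that the feature map $\obsmap$ is drawn uniformly at random (so in particular $\abs{\cX_s} = N/\abs{\cS}$ for every $s \in \cS$, as in \pref{app:lower_bound_construction}) and independently of all internal coin flips of the latent dynamics; the precise latent transition kernel never enters.

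First I would fix $(i, h)$, condition on $\cF_{i, h-1}$, and let $E \subseteq \cX$ be the set of \emph{distinct} observations recorded in $\cF_{i, h-1}$, so that $\abs{E} \leq H(i-1) + (h-1) < H i$. The conceptual core is a lazy-exposure reorganization of the generative process: reveal $\obsmap(x)$ only the first time observation $x$ is produced; to draw the next observation in latent state $s$ from $\uniform(\cX_s)$ one needs only the already-revealed observations lying in $\cX_s$ together with the \emph{deterministic} normalization $\abs{\cX_s} = N/\abs{\cS}$. This is a faithful coupling with the original process that, after $\cF_{i, h-1}$, has exposed $\obsmap$ only on $E$; hence $\obsmap$ restricted to $\cX \setminus E$ is still a uniformly random balanced assignment of the unused observations and is independent of $\cF_{i, h-1}$ and of $s_{i, h}$. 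I would then invoke permutation symmetry: any bijection of $\cX$ fixing $E$ pointwise is a measure-preserving relabelling of the whole process, leaves $\cF_{i, h-1}$ unchanged, and transforms $X_{i, h}$ by the bijection, so $\Pr_{0, \pi^*}(X_{i, h} = x \mid \cF_{i, h-1})$ is constant over $x \in \cX \setminus E$ and equals $\tfrac{1-p}{N - \abs{E}}$, where $p \ldef \Pr_{0, \pi^*}(X_{i, h} \in E \mid \cF_{i, h-1})$.

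It then remains to control the mass $p$ on $E$. Conditioning additionally on $\obsmap$ and $s_{i, h}$, the emission law is $\tfrac{\abs{\cS}}{N}\indicator{\obsmap(x) = s_{i, h}}$, so $\Pr_{0, \pi^*}(X_{i, h} = x \mid \cF_{i, h-1}) = \tfrac{\abs{\cS}}{N}\Pr_{0, \pi^*}(\obsmap(x) = s_{i, h} \mid \cF_{i, h-1}) \leq \tfrac{\abs{\cS}}{N}$ for every $x$, and therefore $p \leq \abs{E}\,\abs{\cS}/N < \abs{\cS} H i/N$. Assembling the $\ell_1$ distance from the flat value on $\cX \setminus E$ and the pointwise bound $\abs{\cS}/N$ on $E$ --- bounding the deficit on $\cX \setminus E$, which equals $\abs{p - \abs{E}/N}$, by the total excess on $E$ via the triangle inequality --- then yields $\nrm*{\Pr_{0, \pi^*}(X_{i, h} \mid \cF_{i, h-1}) - \Pr_{\textrm{unif}}}_1 \leq 2\abs{E}\cdot\tfrac{\abs{\cS}-1}{N} < \tfrac{2\abs{\cS} H i}{N}$. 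Repeating with $\Pr_{\pi^*}$ and $\Pr_0$ in place of $\Pr_{0, \pi^*}$ gives the remaining two bounds.

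The step I expect to be the main obstacle is making the lazy-exposure / symmetry argument fully rigorous: one must argue carefully that conditioning on the \emph{observable} filtration $\cF_{i, h-1}$ --- which records observations, actions, and rewards, but not the latent states the observations came from, nor whether the agent remained in the ``good'' chain --- leaks no information about $\obsmap$ beyond its restriction to $E$. The clean route is to present the lazy sampling as a measure-preserving coupling with the original generative process, after which the permutation symmetry (and hence flatness of $\Pr_{0, \pi^*}(X_{i, h} = \cdot \mid \cF_{i, h-1})$ on $\cX \setminus E$) is immediate; the balance condition $\abs{\cX_s} = N/\abs{\cS}$ is exactly what keeps the emission normalization deterministic and lets one avoid any concentration argument for the fibre sizes.
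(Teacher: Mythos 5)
Your route is correct in substance but genuinely different from the paper's. The paper sidesteps the delicate conditioning issue by enlarging the filtration: it conditions on $\cF'_{i,h-1}$, which additionally reveals all latent states up to $S_{i,h}$, writes the emission law in closed form (already-seen observations of the current latent state get mass $\abs{\cS}/N$, other seen observations get $0$, unseen ones share the rest uniformly), bounds its $\ell_1$ distance to uniform pointwise, and then recovers the bound for $\cF_{i,h-1}$ by the tower property together with convexity of the $\ell_1$ norm. You instead work directly with the observable filtration, arguing that the conditional law of $X_{i,h}$ is flat off the set $E$ of previously seen observations and that each point of $E$ carries at most $\abs{\cS}/N$ mass, and then do the mass accounting (the deficit off $E$ equals the excess on $E$), which gives the same $2\abs{\cS}Hi/N$ bound. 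Your reliance on $\abs{\cX_s}=N/\abs{\cS}$ being deterministic matches what the paper's own closed-form expression implicitly assumes, so that is not a discrepancy.

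One sub-claim in your sketch is false as stated and needs the repair you yourself hint at: it is not true that ``any bijection of $\cX$ fixing $E$ pointwise is a measure-preserving relabelling of the whole process.'' The measure $\Pr_{0,\pi^*}$ is indexed by a \emph{fixed} policy $\pi^*$ defined on all of $\cX$ (and the interaction involves the learner's decision rules, also defined on all of $\cX$); relabelling observations by $\sigma$ maps the process to one driven by $\pi^*\circ\sigma$ and $\obsmap\circ\sigma$, which is a different law. What is true, and all you need, is that the joint law of the observable history together with $X_{i,h}$ is exchangeable over $x\in\cX\setminus E$, and the correct justification is exactly your lazy-exposure coupling: with balanced fibres the likelihood of the observable history (and of the current latent state) depends on $\obsmap$ only through its restriction to $E$, so conditionally on $\cF_{i,h-1}$, $\obsmap\vert_{E}$ and $S_{i,h}$, the completion of $\obsmap$ on $\cX\setminus E$ remains a uniform balanced assignment, whence $\Pr_{0,\pi^*}(X_{i,h}=x\mid\cF_{i,h-1})$ is constant on $\cX\setminus E$ and at most $\abs{\cS}/N$ everywhere. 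Alternatively, you can avoid the exchangeability argument entirely by adopting the paper's device of conditioning on the latent states first and marginalizing. With that fix, your argument goes through, and the extension to $\Pr_{\pi^*}$ and $\Pr_{0}$ is indeed verbatim.
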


\begin{proof}
We prove the statement for $\Pr_{0, \pi^*}$ but the others can be proven analogously.

Let $\cF_{i, h-1} = \sigma\prn*{\cF{i-1}, \{X_{i, h'}, A_{i, h'}, R_{i, h'}\}_{h' \in [h-1]}}$ be the sigma-field of everything observable up to before the $h$'th observation in episode $i$. Further,  $\cF_{i, h-1}' = \sigma\prn*{\cF_{i,h-1}, \{S_{k, l}\}_{k \in [i-1], l \in [H]},
 \{S_{i, l}\}_{l \in [h]}}$ is the sigma-field that in addition includes all latent state labels up to $S_{i,h}$.
 
 Since $\cF'_{i,h}$ determines the latent state mapping $\obsmap$ for the observations encountered so far but all assignment of the remaining observations remains equally likely, we can write the conditional distribution of observation $X_{i,h}$ in closed form as
\begin{align*}
    \Pr_{0, \pi^*}\prn*{X_{i, h} = x | \cF'_{i, h-1}} 
    = \begin{cases}
    \frac{1}{N/ |\cS|} & \textrm{ if } x \in \cX_{\textrm{obs}}^{s}\\
    0 & \textrm{ if } x \in \cX_{\textrm{obs}} \setminus \cX_{\textrm{obs}}^{s}\\
    \prn[\big]{1 - \frac{|\cX_{\textrm{obs}}^{s}|}{N/ |\cS|}} \frac{1}{N - |\cX_{\textrm{obs}}|} & \textrm{ if } x \in \cX \setminus \cX_{\textrm{obs}}\\
    \end{cases}
\end{align*}
where $\cX_{\textrm{obs}}$ are all observations encountered so far and 
$\cX_{\textrm{obs}}^{s}$ are all observations encountered in $S_{i,h}$ so far.
Now
\begin{align*}
    \abs*{\Pr_{0, \pi^*}\prn*{X_{i, h} = x | \cF'_{i, h-1}} - \Pr_{\textrm{unif}}(X_{i,h} = x)} = 
    \begin{cases}\frac{1}{N/ |\cS|} - \frac{1}{N} 
     & \textrm{ if } x \in \cX_{\textrm{obs}}^{s}\\
    \frac{1}{N} & \textrm{ if } x \in \cX_{\textrm{obs}} \setminus \cX_{\textrm{obs}}^{s}\\
    \abs[\Big]{\frac{1}{N} - \prn[\big]{1 - \frac{|\cX_{\textrm{obs}}^{s}|}{N/ |\cS|}} \frac{1}{N - |\cX_{\textrm{obs}}|}} & \textrm{ if } x \in \cX \setminus \cX_{\textrm{obs}}\\
    \end{cases}
\end{align*}
and thus
\begin{align*}
    \nrm*{\Pr_{0, \pi^*}\prn*{X_{i, h} | \cF'_{i, h-1}} - \Pr_{\textrm{unif}}(X_{i,h} )}_1 
    \leq  \frac{2\max\{|\cS||\cX_{\textrm{obs}}^{s}| , |\cX_{\textrm{obs}}|\}}{N}
    \leq \frac{2|\cS|T_{\max} H}{N}~.
\end{align*}
Since $\Pr_{0, \pi^*}(X_{i,h} | \cF_{i, h-1}) = \bbE_{0, \pi^*}(\Pr_{0, \pi^*} (X_{i,h}  | \cF_{i, h-1}') | \cF_{i, h-1})$ by marginalization, we also have 
\begin{align*}
    \nrm*{\Pr_{0, \pi^*}\prn*{X_{i, h} | \cF_{i, h-1}} - \Pr_{\textrm{unif}}(X_{i,h} )}_1 
    \leq \frac{2 |\cS|T_{\max} H}{N}
\end{align*}
which means that as long as $T_{\max} \ll N$, the conditional distribution of the current observations remains close to $\textrm{Uniform}(\cX)$.
\end{proof}

\begin{lemma}\label{lem:uniform_observations_cond}
Let $\cF_{i} = \sigma\prn[\big]{\{X_{k, h'}, A_{k, h'}, R_{k, h'}\}_{h' \in [H], k \in [i]}}$ denote the natural filtration at the end of episode $i$.
Then
\begin{align*}
        \nrm*{\Pr_{\pi^*}\prn*{X_{i, 1:H} | \cF_{i-1}} - \Pr_{\textrm{unif}}(X_{i,1:H} )}_1 
    &\leq  \frac{2|\cS| H^2 i}{N}\\
    \nrm*{\Pr_{0}\prn*{X_{i, 1:H} | \cF_{i-1}} - \Pr_{\textrm{unif}}(X_{i,1:H} )}_1 
    &\leq  \frac{2|\cS| H^2 i}{N}
    ~,
\end{align*}
where $\Pr_{\textrm{unif}}(X_{i,1:H})$ is the product distribution of uniform distributions over all possible observations $\cX$.
\end{lemma}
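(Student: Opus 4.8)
The plan is to lift the single–step estimate of \pref{lem:uniform_observation_conditional} to a whole episode by a standard hybrid (chain–rule) argument for the $\ell_1$ distance between two laws on a sequence. I will carry out the argument for $\Pr_{\pi^*}$; the bound for $\Pr_0$ follows verbatim, since \pref{lem:uniform_observation_conditional} also supplies the matching single–step estimate for $\Pr_0$. Throughout I fix a realization of $\cF_{i-1}$ and establish the claimed bound uniformly in it.

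First I would record the chain rule: if $P$ and $Q$ are two distributions of a sequence $(Z_1,\dots,Z_H)$, then
\begin{align*}
\nrm{P(Z_{1:H}) - Q(Z_{1:H})}_1 \;\le\; \sum_{h=1}^{H} \En_{Z_{1:h-1}\sim P}\brk*{\nrm{P(Z_h \mid Z_{1:h-1}) - Q(Z_h \mid Z_{1:h-1})}_1}.
\end{align*}
This follows by interpolating through the $H+1$ hybrid measures whose first $h$ conditionals are those of $P$ and whose remaining conditionals are those of $Q$, and telescoping via the triangle inequality: two consecutive hybrids differ only in their $h$-th conditional, so their $\ell_1$ distance is exactly $\En_{Z_{1:h-1}\sim P}\nrm{P(Z_h\mid Z_{1:h-1}) - Q(Z_h\mid Z_{1:h-1})}_1$. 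I apply this with $Z_h = X_{i,h}$, $P = \Pr_{\pi^*}(X_{i,1:H}\mid \cF_{i-1})$ and $Q = \Pr_{\textrm{unif}}(X_{i,1:H})$.

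Two facts make each summand small. First, $Q$ is a product of uniform distributions on $\cX$, hence $Q(X_{i,h}\mid X_{i,1:h-1}) = \Pr_{\textrm{unif}}(X_{i,h})$ for every history. Second, for a deterministic learner, conditioning on $\cF_{i-1}$ together with the partial trajectory $X_{i,1:h-1}$ determines the actions $A_{i,1:h-1}$ and, since in the construction the reward at an observation $x$ equals $\indicator{x\in\cX_{+}}$, also the rewards $R_{i,1:h-1}$; thus $\sigma(\cF_{i-1},X_{i,1:h-1})$ coincides with the field $\cF_{i,h-1}$ of \pref{lem:uniform_observation_conditional}, and therefore $P(X_{i,h}\mid X_{i,1:h-1}) = \Pr_{\pi^*}(X_{i,h}\mid \cF_{i,h-1})$ pointwise in the realized history. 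Since that lemma holds for every realization of the conditioning and, by episode $i$, at most $iH$ observations have been revealed so far, each summand is at most $2|\cS|Hi/N$. A randomized learner is reduced to the deterministic case by additionally conditioning on its internal randomness, exactly as in the footnote accompanying the proof of \pref{thm:basic_lower_bound}. Summing the $H$ copies of this bound yields $\nrm{P-Q}_1 \le 2|\cS|H^2 i/N$, which is the claim.

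The only point requiring care is the bookkeeping in the hybrid argument — tracking which (hybrid) measure each telescoping expectation is taken under, and verifying that the conditioning $\sigma$-fields are nested so that \pref{lem:uniform_observation_conditional} applies literally — but once the deterministic-learner reduction is in place this is routine, and no estimate beyond \pref{lem:uniform_observation_conditional} is needed.
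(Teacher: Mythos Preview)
Your proposal is correct and follows essentially the same route as the paper: both arguments lift the single-step bound of \pref{lem:uniform_observation_conditional} to the whole episode via a hybrid/telescoping argument (the paper simply says ``analogously to \pref{lem:unif_obs_1}''), and your version spells out the chain-rule inequality and the measurability bookkeeping that the paper leaves implicit.
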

\begin{proof}
The random variables $X_{i,1:H}$ are $\cF_{i}$-measurable. We can therefore consider any event $A \in \cF_{i}$ and show that
\begin{align*}
    \abs*{\Pr_{\pi^*}\prn*{A | \cF_{i-1}} - \Pr_{\textrm{unif}}(A)} &\leq  \frac{|\cS| H^2 i}{N}\\
     \abs*{\Pr_{0}\prn*{A | \cF_{i-1}} - \Pr_{\textrm{unif}}(A)} &\leq  \frac{|\cS| H^2 i}{N}
\end{align*}
analogously to \pref{lem:unif_obs_1} below. The result then follows immediately from the identity of $\ell_1$ norm and total variation. 
\end{proof}

\begin{lemma}\label{lem:unif_obs_1}
Let $A \in \cF_{T_{\max}}$ be any event that is $\cF_{T_{\max}}$-measurable, where $\cF_{T_{\max}}$ is the sigma-field induced by everything up to $T_{\max}$ episodes.
Then
\begin{align*}
    \abs*{\Pr_{0}\prn*{A} - \Pr_{0, \pi^*}\prn*{A}} &\leq \obserr{T_{\max}}
    = \frac{4T_{\max}^2 H^2 |\cS|}{N}
    ~.
\end{align*}
\end{lemma}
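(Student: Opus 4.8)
The plan is to bound $\abs{\Pr_0(A) - \Pr_{0,\pi^*}(A)}$ by a sum of per-observation change-of-measure errors, each of which is already controlled by \pref{lem:uniform_observation_conditional}; the only real work is handling the fact that the rewards are not an innocuous side-channel. As a preliminary step I would condition on the learner's internal randomness (as in the footnote to \pref{lem:stopping_time_bound}) and thereafter treat the algorithm as deterministic, so that each action $A_{i,h}$ is a fixed function of the observable prefix $\cF_{i,h-1}$.

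The key structural fact to isolate is that in the auxiliary MDPs $M_{0,\pi^*,\obsmap}$ the reward stream does not depend on $\pi^*$ or on the actions played. By construction the chain index advances at rate $p_i$ independently of the action, and from both $(d,g)$ and $(d,b)$ one reaches $+$ with probability $\tfrac12$; hence whether the latent state equals $+$ at a given step, and therefore the realized reward $R_{i,h} = \indicator{X_{i,h}\in\cX_+}$, is governed by a ``reward skeleton'' (the chain-index progressions together with the $\pm$ coin at the goal) whose law depends only on $\{p_i\}$, $d$, $H$ and is identical under $\Pr_0$ and $\Pr_{0,\pi^*}$. Consequently, given the prefix $\cF_{i,h-1}$ and the current observation $X_{i,h}$, the conditional law of $R_{i,h}$ is the same function of the prefix under both measures, up to a discrepancy that can arise only when $X_{i,h}$ repeats an observation seen earlier --- an event of probability at most $|\cS|\cdot(\#\text{ earlier observations})/N \le |\cS| T_{\max} H / N$ by the randomness of $\obsmap$.

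With these two facts I would write $\indicator{A}$ as a function of the history $(X_{i,h},A_{i,h},R_{i,h})_{i\le T_{\max},\, h\in[H]}$, factor the joint law of the history (under either measure) along the natural ordering of its $T_{\max} H$ one-step blocks $(X_{i,h},A_{i,h},R_{i,h})$, and invoke the chain rule for total variation: $\abs{\Pr_0(A) - \Pr_{0,\pi^*}(A)}$ is at most the sum over blocks of the expected (over the prefix) total-variation distance between that block's conditional laws under the two measures. For a fixed block the action coordinate is deterministic and contributes $0$; the reward coordinate contributes at most the collision probability $|\cS| T_{\max} H/N$ by the previous paragraph; and the observation coordinate contributes at most $\nrm{\Pr_{0,\pi^*}(X_{i,h}\mid\cF_{i,h-1}) - \Pr_{\textrm{unif}}(X_{i,h})}_1 + \nrm{\Pr_0(X_{i,h}\mid\cF_{i,h-1}) - \Pr_{\textrm{unif}}(X_{i,h})}_1 \le 4|\cS|Hi/N$ by \pref{lem:uniform_observation_conditional} and the triangle inequality through the common reference. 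Summing over the $H$ steps of each of the $T_{\max}$ episodes gives $\sum_{i=1}^{T_{\max}} H\cdot 4|\cS|Hi/N + T_{\max}H\cdot|\cS|T_{\max}H/N = O(|\cS|H^2T_{\max}^2/N)$; carrying the constants through yields the stated $\obserr{T_{\max}} = 4T_{\max}^2 H^2 |\cS|/N$.

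The main obstacle is exactly this coupling between the rewards and the hidden map $\obsmap$. Since $R_{i,h} = \indicator{\obsmap(X_{i,h}) = +}$ one cannot condition on $\obsmap$ (for a fixed $\obsmap$ the observation stream is far from uniform and \pref{lem:uniform_observation_conditional} is vacuous), and one cannot pretend the rewards are produced from uniformly drawn observations (their marginal is the reward-skeleton law, not $\mathrm{Bernoulli}(1/|\cS|)$ per step). Recognizing that the reward stream is a policy- and action-independent functional --- so its conditional law is common to both measures except on observation-repeat events --- is what promotes the observation-only estimate of \pref{lem:uniform_observation_conditional} to a bound valid for an arbitrary $\cF_{T_{\max}}$-measurable event. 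A minor additional point to verify is that observations received in latent state $+$ lie in the fixed part $\cX_+$ rather than in all of $\cX$; this is harmless because $\cX_+$ is a uniformly random part, so such observations remain marginally uniform on $\cX$ exactly as in \pref{lem:uniform_observation_conditional}, and their conditional law in any case carries no $\pi^*$-dependence.
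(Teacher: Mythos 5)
Your proposal is correct and rests on the same core machinery as the paper's proof: a step-by-step hybridization over the $T_{\max}H$ observation draws, with each step's discrepancy controlled by the conditional bound of \pref{lem:uniform_observation_conditional}, summed to give $O(|\cS|H^2T_{\max}^2/N)$. The bookkeeping differs slightly. The paper never compares $\Pr_0$ and $\Pr_{0,\pi^*}$ directly step by step: it telescopes each of them separately toward a common fully-uniformized measure $\Pr^{1,0,\textrm{unif}}$ (replacing one observation at a time by a uniform draw), observes that at that endpoint the law no longer depends on $\pi^*$ so the two uniformized measures coincide, and applies the triangle inequality once at the end — this is where the factor $4 = 2+2$ comes from. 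You instead telescope directly between the two measures via the chain rule for total variation, routing each per-block observation comparison through the uniform reference, and you handle the reward coordinate explicitly via the action-independent ``reward skeleton'' plus the collision-probability term. That last point is extra care rather than a flaw: in the paper's argument the analogous issue is absorbed into the (implicit) claim that adjacent hybrids share the same conditional law of everything downstream of the current observation, whereas your decomposition forces you to argue that the conditional reward law is common to $M_{0,\obsmap}$ and $M_{0,\pi^*,\obsmap}$ up to observation repeats; your skeleton argument does exactly that and your constants still land within the stated $4T_{\max}^2H^2|\cS|/N$. So the two proofs buy essentially the same bound; the paper's common-reference route is marginally cleaner to state, while yours makes the reward/observation coupling explicit.
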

\begin{proof}
Denote by $\Pr_{0, \pi^*}^{t, h, \textrm{unif}}$ the distribution that matches $\Pr_{0, \pi^*}$ but where all observations after the $h$'th observation in episode $t$ are drawn uniformly random from $\cX$.
First, since $A \in \cF_{T_{\max}}$ and $\Pr_{0, \pi^*}(B) = \Pr_{0, \pi^*}^{T_{\max}, \textrm{unif}}(B)$ for all events $B \in \cF_{T_{\max}}$, we have
\begin{align*}
    \Pr_{0, \pi^*}(A) &= \Pr_{0, \pi^*}^{T_{\max}, H, \textrm{unif}}(A)~.
\end{align*}
We now peel off one time step at a time by showing that $\abs*{\Pr_{0, \pi^*}^{t, h, \textrm{unif}}(A) - \Pr_{0, \pi^*}^{t, h+1, \textrm{unif}}(A)} \leq \frac{2|\cS|H t}{N}$. By the definition of these probabilities, the following chain of equations holds:
\begin{align*}
     &\Pr_{0, \pi^*}^{t, h, \textrm{unif}}(A)\\
     &= \bbE_{0, \pi^*}^{t, h, \textrm{unif}}\brk*{\Pr_{0, \pi^*}^{t, h, \textrm{unif}}(A | X_{t, h}, \cF_{t, h-1})}
     = \bbE_{0, \pi^*}^{t, h, \textrm{unif}}\brk[\Big]{\Pr_{0, \pi^*}^{t, h-1, \textrm{unif}}(A | X_{t, h}, \cF_{t, h-1})}\\
     &
     = \bbE_{0, \pi^*}^{t, h, \textrm{unif}}\brk[\Big]{\sum_{x \in \cX} \Pr_{0, \pi^*}^{t, h-1, \textrm{unif}}(A | X_{t, h} = x, \cF_{t, h-1}) 
     \Pr_{0, \pi^*}^{t, h, \textrm{unif}}(X_{t, h} = x | \cF_{t, h-1})
     }\\
          &
     = \bbE_{0, \pi^*}^{t, h, \textrm{unif}}\brk[\Big]{\sum_{x \in \cX} \Pr_{0, \pi^*}^{t, h-1, \textrm{unif}}(A | X_{t, h} = x, \cF_{t, h-1}) 
     \Pr_{0, \pi^*}^{t, h-1, \textrm{unif}}(X_{t, h} = x | \cF_{t, h-1})
     }\\
     &\qquad + \bbE_{0, \pi^*}^{t, h, \textrm{unif}}\brk[\Big]{\sum_{x \in \cX} \Pr_{0, \pi^*}^{t, h-1, \textrm{unif}}(A | X_{t, h} = x, \cF_{t, h-1}) 
     \prn*{\Pr_{0, \pi^*}^{t, h, \textrm{unif}}(X_{t, h} = x | \cF_{t, h-1}) - \Pr_{0, \pi^*}^{t, h-1, \textrm{unif}}(X_{t, h} = x | \cF_{t, h-1})}
     }\\
               &
     = \Pr_{0, \pi^*}^{t, h-1, \textrm{unif}}(A) + \bbE_{0, \pi^*}^{t, h, \textrm{unif}}\brk[\Big]{\sum_{x \in \cX} \Pr_{0, \pi^*}^{t, h-1, \textrm{unif}}(A | X_{t, h} = x, \cF_{t, h-1}) 
     \prn*{\Pr_{0, \pi^*}(X_{t, h} = x | \cF_{t, h-1}) - \Pr_{\textrm{unif}}(X_{t, h} = x)}
     }~.
\end{align*}
Thus, by rearranging terms, we have
\begin{align*} 
    \abs{\Pr_{0, \pi^*}^{t, h, \textrm{unif}}(A) - \Pr_{0, \pi^*}^{t, h-1, \textrm{unif}}(A)} \leq 
    \bbE_{0, \pi^*}^{t, h, \textrm{unif}}\brk*{
     \nrm*{\Pr_{0, \pi^*}(X_{t, h}  | \cF_{t, h-1}) - \Pr_{\textrm{unif}}(X_{t, h})}_1
     }
     \leq 
    \frac{2|\cS|H t}{N}~,
\end{align*}
where the last inequality follows from \pref{lem:uniform_observation_conditional}.
We now consider
\begin{align*}
    \Pr_{0, \pi^*}\prn*{A} - \Pr_{0, \pi^*}^{1, 0, \textrm{unif}}(A)
    &= \Pr_{0, \pi^*}^{T_{\max}, H, \textrm{unif}}(A) - \Pr_{0, \pi^*}^{1, 0, \textrm{unif}}(A)\\
    &= \sum_{h=1}^H \sum_{t =1}^{T_{\max}}
    \Pr_{0, \pi^*}^{t, h, \textrm{unif}}(A) - \Pr_{0, \pi^*}^{t, h-1, \textrm{unif}}(A)
\end{align*}
where $\Pr_{0, \pi^*}^{t, 0, \textrm{unif}} = \Pr_{0, \pi^*}^{t-1, H, \textrm{unif}}$ and
apply the bound to each term to arrive at
\begin{align*}
    \abs{\Pr_{0, \pi^*}\prn*{A} - \Pr_{0, \pi^*}^{1, 0, \textrm{unif}}(A)} 
    \leq \frac{2T_{\max}^2 H^2 |\cS|}{N}~.
\end{align*}
Note that for the distribution $\Pr_{0, \pi^*}^{1, 0, \textrm{unif}}$ all observations are drawn uniformly at random and the rewards do not depend on the actions. Thus,  $\Pr_{0, \pi^*}^{1, 0, \textrm{unif}} = \Pr_{0}^{1, 0, \textrm{unif}}$ and we can derive analogously to above that
\begin{align*}
\abs{\Pr_{0}\prn*{A} - \Pr_{0}^{1, 0, \textrm{unif}}(A)} 
   = \abs{\Pr_{0}\prn*{A} - \Pr_{0, \pi^*}^{1, 0, \textrm{unif}}(A)} 
    \leq \frac{2T_{\max}^2 H^2 |\cS|}{N}~.
\end{align*}
Combining both bounds using the triangle inequality yields
the desired statement 
\begin{align*}
    \abs*{\Pr_{0}\prn*{A} - \Pr_{0, \pi^*}\prn*{A}} &\leq \frac{4T_{\max}^2 H^2 |\cS|}{N}~.
\end{align*}
\end{proof} 

\subsection{Bounds on expected policy matches per episode}
\begin{lemma}[Bound on expected policy matches with equal $p_i$]
\label{lem:packing_lb}
Let $\pi \colon \cX \mapsto \cA$ any policy (that does not need to be in the given policy class $\Pi$) and $H \geq 62 d$. Further, set
$    |\Pi| = \prn*{H/d}^d,$ and $p = \frac{d}{H}~.$ 
Then
\begin{align*}
    \bbE_{\textrm{unif}}\brk[\Big]{
    \sum_{\pi^*} \indicator{\pi^*(X_{1:G} = \pi(X_{1:G})}
    } \leq  \prn[\Big]{41 \log \prn*{H/d}}^d H + 2 ~,
\end{align*}
where $\Pr_{\textrm{unif}}$ draws all $H$ observations $X_{1:H}$ i.i.d. from $\operatorname{Uniform}(\cX)$ and $G$ as usual.
\end{lemma}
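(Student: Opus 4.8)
The plan is to condition on the realized value of the stopping time $G$, reduce the expected count to a sum over $\Pi$ of powers of agreement fractions, use the packing property of $\Pi$ to show that at most one policy can agree with $\pi$ on more than a $7/8$-fraction of $\cX$, and finally average over $G$, exploiting that under $\Pr_{\textrm{unif}}$ the variable $G$ is, up to an additive shift, a sum of $d-1$ i.i.d.\ geometric random variables with success probability $p = d/H$.

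First I would observe that in the construction of \pref{app:lower_bound_construction} the latent-state progressions are independent of both the chosen actions and the emitted observations, so under $\Pr_{\textrm{unif}}$ the stopping time $G$ is independent of $X_{1:H}$ and satisfies $G = 1 + \sum_{i=1}^{d-1} Y_i$ with $Y_i \overset{\mathrm{iid}}{\sim} \mathrm{Geometric}(p)$ on $\{1,2,\dots\}$. Writing $m(\pi^*) \ldef \tfrac1N\abs{\{x \in \cX : \pi^*(x) = \pi(x)\}}$ for the agreement fraction of $\pi^*$ with $\pi$, independence of the $H$ uniform observations gives $\Pr_{\textrm{unif}}(\pi^*(X_{1:G}) = \pi(X_{1:G}) \mid G = g) = m(\pi^*)^{g \wedge H}$, and hence $\bbE_{\textrm{unif}}[\sum_{\pi^*}\indicator{\pi^*(X_{1:G}) = \pi(X_{1:G})} \mid G = g] = \sum_{\pi^* \in \Pi} m(\pi^*)^{g \wedge H}$.

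Next, the packing property of $\Pi$ --- any two distinct policies disagree on at least $N/4$ observations --- implies that at most one policy $\pi^* \in \Pi$ has $m(\pi^*) > 7/8$: two distinct such policies would disagree with $\pi$ on fewer than $N/8$ observations each, hence disagree with one another on fewer than $N/4$, contradicting the packing. Bounding that single policy's contribution by $1$ and every other policy's by $(7/8)^{g \wedge H}$, and then averaging over $G$ while splitting on $\{G \le H\}$ versus $\{G > H\}$, one gets $\bbE_{\textrm{unif}}[\sum_{\pi^*}\indicator{\pi^*(X_{1:G}) = \pi(X_{1:G})}] \le 1 + \abs{\Pi}\,\bbE[(7/8)^G] + \abs{\Pi}(7/8)^H$.

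It remains to bound the two tail terms. From the geometric m.g.f., $\bbE[(7/8)^G] = \tfrac78\bigl(\tfrac{7p}{1+7p}\bigr)^{d-1} \le \tfrac78(7p)^{d-1} = \tfrac78(7d/H)^{d-1}$, so $\abs{\Pi}\,\bbE[(7/8)^G] = (H/d)^d\cdot\tfrac78(7d/H)^{d-1} = \tfrac{7^d}{8}\cdot\tfrac Hd \le 7^d H \le (41\log(H/d))^d H$, the last step using $H \ge 62d$ (hence $41\log(H/d) \ge 7$); likewise $\abs{\Pi}(7/8)^H = \exp(d\log(H/d) + H\log(7/8)) \le 1$, again because $H \ge 62d$ makes the exponent negative. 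Adding the leftover constants yields $(41\log(H/d))^d H + 2$. The only delicate part is this last paragraph of constant bookkeeping: one must check that the threshold $7/8$, the geometric-tail estimate, and the hypothesis $H \ge 62d$ mesh to produce exactly the constant $41$ and that the residual $O(1)$ terms sum to at most $2$; no step is conceptually hard, but the inequalities must be chained carefully. (Alternatively one could invoke \pref{lem:G_ub} for a high-probability lower bound on $G$ and split the expectation there, but the direct m.g.f.\ route is cleaner.)
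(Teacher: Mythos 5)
Your proof is correct, and it diverges from the paper's argument in the key probabilistic step. The paper fixes a deterministic cutoff $h = \log\abs{\Pi}/\log(8/7)$, writes the expectation as $\abs{\Pi}\Pr(G\le h) + \bbE_{\textrm{unif}}\brk{\sum_{\pi^*}\indicator{\pi^*(X_{1:h})=\pi(X_{1:h})}}$, and controls $\Pr(G\le h)$ through the negative-binomial tail count of \pref{lem:G_lb}; the choice of $h$ and that tail bound are precisely where the $(41\log(H/d))^d$ factor and the requirement $h\le H-1$ (hence $H\ge 62d$) come from. You instead condition on $G$, use that under $\Pr_{\textrm{unif}}$ the observations are i.i.d.\ uniform and independent of the latent progression so that each far policy contributes $m(\pi^*)^{G\wedge H}\le (7/8)^{G\wedge H}$, and then evaluate $\bbE\brk{(7/8)^G}$ exactly via the geometric moment generating function, $G = 1+\sum_{i=1}^{d-1}Y_i$ with $Y_i\sim\mathrm{Geom}(p)$, giving $\abs{\Pi}\,\bbE\brk{(7/8)^G}\le \tfrac{7^d}{8d}H$; the packing step (at most one $\pi^*\in\Pi$ agreeing with $\pi$ on more than $7N/8$ observations) is identical to the paper's. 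Your route buys a cleaner and strictly stronger bound, $7^dH+2\le (41\log(H/d))^dH + 2$, uses $H\ge 62d$ only to kill the $\abs{\Pi}(7/8)^H$ term, and avoids tuning the cutoff $h$; the paper's cutoff argument has the advantage of transferring verbatim to the heterogeneous-probability variant \pref{lem:packing_lb_pi}, though your m.g.f.\ computation would also adapt there since $\prod_i \tfrac{7p_i}{1+7p_i}\le \prod_i 7p_i$. The only point worth making explicit in a write-up is that the equality $\Pr_{\textrm{unif}}(\pi^*(X_{1:G})=\pi(X_{1:G})\mid G=g)=m(\pi^*)^{g\wedge H}$ relies on $G$ being independent of the observations under the product measure $\Pr_{\textrm{unif}}$ declared in the lemma (only the inequality $\le$ is actually needed), and that the $\pm 1$ ambiguity in the offset of $G$ is harmless since it changes the bound by at most a factor $8/7$, which your slack absorbs.
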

\begin{proof}For any $h \in \bbN$ with $d \leq h \leq H$, the following holds:
\begin{align*}
    &\bbE_{\textrm{unif}}\brk[\Big]{
    \sum_{\pi^*} \indicator{\pi^*(X_{1:G} = \pi(X_{1:G})}
    }\\ 
    &= 
    \Pr(G \leq  h) \bbE_{\textrm{unif}}\brk[\Big]{
    \sum_{\pi^*} \indicator{\pi^*(X_{1:G} = \pi(X_{1:G})} ~ \mid G \leq h
    }\\
    &\quad+\Pr(G > h) \bbE_{\textrm{unif}}\brk[\Big]{
    \sum_{\pi^*} \indicator{\pi^*(X_{1:G} = \pi(X_{1:G})} ~ \mid G > h
    } \\
    &\leq 
    |\Pi| \Pr(G \leq h)  + \bbE_{\textrm{unif}}\brk[\Big]{
    \sum_{\pi^*} \indicator{\pi^*(X_{1:h} = \pi(X_{1:h})}}
    \\
    & \leq
    (h-d+1) |\Pi| \prn[\Big]{\frac{2peh}{d}}^{d-1} + 
    \bbE_{\textrm{unif}}\brk[\Big]{
    \sum_{\pi^*} \indicator{\pi^*(X_{1:h} = \pi(X_{1:h})}}
    ~,
\end{align*}
where the last inequality applies \pref{lem:G_lb}.
Note that by the construction of $\Pi$, there can only be one policy in $\Pi$ which agrees with $\pi$ on more than $\frac{7}{8}N$ observations in $\cX$. 
With all other policies, $\pi$ has to disagree on at least $1/8$ fraction of all possible observations.
To see this, assume that there were two policies $\pi_1 \neq \pi_2$ in $\Pi$ for which $\nrm*{\pi - \pi_i} < N/8$. Then by triangle inequality $\nrm*{\pi_1 - \pi_2} < N/4$ which contradicts the construction of $\Pi$.
Thus, we can further bound the quantity of interest as
\begin{align}
    \bbE_{\textrm{unif}}\brk[\Big]{
    \sum_{\pi^*} \indicator{\pi^*(X_{1:G} = \pi(X_{1:G})}
    }
    &\leq (h-d+1) |\Pi| \prn[\Big]{\frac{2peh}{d}}^{d-1} + 1 + (|\Pi| - 1) \prn*{\frac{7}{8}}^h\nonumber
    \\
    &\leq  |\Pi| h\prn[\Big]{\frac{2peh}{d}}^{d-1} + 1 + |\Pi| \exp\prn*{- h \log(8/7)}~.\label{eqn:expr_bound1}
\end{align}
We use the worst-case choices of $|\Pi|$, $p$ and $h$ as
\begin{align*}
    |\Pi| &= \prn*{\frac{H}{d}}^d,
    &
    h &= \frac{\log|\Pi|}{\log(8/7)},
    &
        p &= \frac{d}{H}~.
\end{align*}
Since the RHS of \pref{eqn:expr_bound1} is non-decreasing in $p$ and $\Pi$, a bound with these exact values is also valid when $p$ and $\Pi$ are smaller.
Note that under these choices $H \geq 62 d$ which implies $\frac{H}{d} \geq 15 \ln \frac{H}{d}$ is sufficient for $h \leq H-1$.

With these choices, the last term of \pref{eqn:expr_bound1} is bounded by $1$, i.e.,  $|\Pi| \exp\prn*{- h \log\frac{8}{7}} = 1$, and the first term is bounded as 
\begin{align*} 
    |\Pi| h\prn[\big]{\frac{2peh}{d}}^{d-1} 
    &= \prn[\big]{\frac{2e}{\log(8/7)}}^d 
    \prn[\big]{\frac{p}{d}}^{d-1} |\Pi| (\log |\Pi|) ^d 
    \\ 
    &\leq 41^d \prn[\big]{\frac{Hd}{d} \log\frac{H}{d} }^d  \prn[\big]{\frac{p}{d}}^{d-1} 
    \leq \prn[\big]{41 \log \prn*{H/d}}^d H~. 
\end{align*}
\end{proof}

\begin{lemma}[Bound on expected policy matches with arbitrary $p_i$]
\label{lem:packing_lb_pi}
Let the probability for progressions be $p_1, \dots, p_{d-1}$ and let $\pi \colon \cX \mapsto \cA$ any policy (that does not need to be in the given policy class $\Pi$). Further, assume that 
\begin{align*}
    |\Pi| \leq \exp\prn[\Big]{\frac{H}{2}\log \frac{8}{7}}.
\end{align*}

Then
\begin{align*}
    \bbE_{\textrm{unif}}\brk[\Big]{
    \sum_{\pi^*} \indicator{\pi^*(X_{1:G} = \pi(X_{1:G})}
    } \leq   2 + |\Pi|  \prod_{i=1}^{d-1}\prn[\Big]{ p_i \frac{\log|\Pi|}{\log(8/7)} } ~,
\end{align*}
where $\Pr_{\textrm{unif}}$ draws all $H$ observations $X_{1:H}$ i.i.d. from $\operatorname{Uniform}(\cX)$ and $G$ as usual.
\end{lemma}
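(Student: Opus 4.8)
The plan is to reuse the two-regime decomposition from the proof of \pref{lem:packing_lb}, but to replace the homogeneous progression estimate by a combinatorial tail bound that keeps the individual progression probabilities $p_i$ explicit. Fix an integer threshold $h$ to be chosen later. For every realization of the observations $X_{1:H}$ and of $G$, one of two things happens: either $G \le h$, in which case $\sum_{\pi^*}\indicator{\pi^*(X_{1:G}) = \pi(X_{1:G})} \le \abs{\Pi}$ trivially; or $G > h$, in which case agreeing with $\pi$ on the prefix $X_{1:G}$ forces agreement on the shorter prefix $X_{1:h}$. Taking expectations, this yields
\begin{align*}
\bbE_{\textrm{unif}}\Big[\sum_{\pi^*}\indicator{\pi^*(X_{1:G}) = \pi(X_{1:G})}\Big] \le \abs{\Pi}\Pr(G \le h) + \bbE_{\textrm{unif}}\Big[\sum_{\pi^*}\indicator{\pi^*(X_{1:h}) = \pi(X_{1:h})}\Big].
\end{align*}

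For the second term I would argue exactly as in \pref{lem:packing_lb}: by the triangle inequality for Hamming distance, at most one policy in $\Pi$ can agree with $\pi$ on more than a $\tfrac78$ fraction of $\cX$ (two such policies would agree with each other on more than a $\tfrac34$ fraction, contradicting the separation built into $\Pi$), so every other policy agrees with $\pi$ at each coordinate with probability at most $\tfrac78$; since the $X_j$ are i.i.d.\ uniform, the term is at most $1 + (\abs{\Pi}-1)(7/8)^h \le 1 + \abs{\Pi}\exp(-h\log(8/7))$. For the first term I would prove the heterogeneous bound $\Pr(G \le h) \le \binom{h}{d-1}\prod_{i=1}^{d-1} p_i$: writing $G$ (up to an additive constant) as a sum $\sum_{i=1}^{d-1} T_i$ of independent geometric waiting times, where $T_i$ has parameter $p_i$, the event $\{G \le h\}$ is contained in $\{\sum_i T_i \le h\}$, whose probability equals $\sum (1-p_1)^{t_1-1}p_1 \cdots (1-p_{d-1})^{t_{d-1}-1}p_{d-1} \le \binom{h}{d-1}\prod_i p_i$, the sum being over the $\binom{h}{d-1}$ tuples $(t_1,\dots,t_{d-1})$ of positive integers with $\sum_i t_i \le h$. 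Bounding $\binom{h}{d-1}\le h^{d-1}/(d-1)!$ gives $\abs{\Pi}\Pr(G\le h) \le \tfrac{1}{(d-1)!}\abs{\Pi}\, h^{d-1}\prod_i p_i$.

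Combining the two estimates and choosing $h$ to be (the integer nearest to) $\log\abs{\Pi}/\log(8/7)$ makes $\abs{\Pi}\exp(-h\log(8/7))$ at most a constant and turns $\abs{\Pi}\,h^{d-1}\prod_i p_i$ into $\abs{\Pi}\prod_{i=1}^{d-1}\big(p_i\tfrac{\log\abs{\Pi}}{\log(8/7)}\big)$, which gives the stated bound $2 + \abs{\Pi}\prod_{i=1}^{d-1}(p_i\log\abs{\Pi}/\log(8/7))$; the factor $1/(d-1)!$ with $d \ge 4$ provides enough slack to absorb the integer rounding of $h$ both in the additive constant and in the product. The hypothesis $\abs{\Pi}\le\exp(\tfrac{H}{2}\log\tfrac87)$ enters only here: it guarantees that the chosen $h$ is at most the horizon $H$, so that the prefix $X_{1:h}$ is well defined and the prefix-matching step is legitimate.

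Conceptually the argument is straightforward. The step that needs genuine care is the heterogeneous progression tail bound $\Pr(G \le h) \le \binom{h}{d-1}\prod_i p_i$ — getting the composition count right for the $d-1$ latent-state progressions and their independent geometric durations — together with the final bookkeeping of constants in the optimization over $h$, where one must check that rounding $h$ to an integer preserves both the clean product $\prod_i(p_i\log\abs{\Pi}/\log(8/7))$ and the additive constant $2$. I expect that constant-tracking, rather than any new idea, to be the main obstacle.
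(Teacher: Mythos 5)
Your proposal is correct and follows essentially the same route as the paper: the same split on $\{G \le h\}$ versus $\{G > h\}$, the same packing argument showing at most one policy in $\Pi$ can agree with $\pi$ on more than a $7/8$ fraction of $\cX$ (so the prefix-match term is at most $1 + |\Pi|(7/8)^h$), and the same choice $h \approx \log|\Pi|/\log(8/7)$, with the hypothesis on $|\Pi|$ ensuring $h \le H$. The only difference is your tail bound $\Pr(G\le h) \le \binom{h}{d-1}\prod_i p_i$ via counting compositions of the independent geometric waiting times, which is a slightly sharper (by a $1/(d-1)!$ factor) variant of the paper's \pref{lem:G_lb_pi} bound $\Pr(G\le h)\le \prod_i (h p_i)$ and correctly gives you the slack to handle the integer rounding of $h$.
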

\begin{proof}For any $h \in \bbN$ with $d \leq h \leq H$, the following holds:
\begin{align*}
    \bbE_{\textrm{unif}}\brk[\Big]{
    \sum_{\pi^*} \indicator{\pi^*(X_{1:G} = \pi(X_{1:G})}
    } 
    &= 
    \Pr(G \leq  h) \bbE_{\textrm{unif}}\brk[\Big]{
    \sum_{\pi^*} \indicator{\pi^*(X_{1:G} = \pi(X_{1:G})} ~ \mid G \leq h
    }\\
    &\quad+\Pr(G > h) \bbE_{\textrm{unif}}\brk[\Big]{
    \sum_{\pi^*} \indicator{\pi^*(X_{1:G} = \pi(X_{1:G})} ~ \mid G > h
    } \\
    &\leq 
    |\Pi| \Pr(G \leq h)  + \bbE_{\textrm{unif}}\brk[\Big]{
    \sum_{\pi^*} \indicator{\pi^*(X_{1:h} = \pi(X_{1:h})}}
    \\
    & \leq
     |\Pi| h^{d-1} \prod_{i=1}^{d-1} p_i + 
    \bbE_{\textrm{unif}}\brk[\Big]{
    \sum_{\pi^*} \indicator{\pi^*(X_{1:h} = \pi(X_{1:h})}}
    ~,
\end{align*}
where the last inequality applies \pref{lem:G_lb_pi}.
Note that by the construction of $\Pi$, there can only be one policy in $\Pi$ which agrees with $\pi$ on more than $\frac{7}{8}N$ observations in $\cX$. 
With all other policies, $\pi$ has to disagree on at least $1/8$ fraction of all possible observations.
To see this, assume that there were two policies $\pi_1 \neq \pi_2$ in $\Pi$ for which $\nrm*{\pi - \pi_i} < N/8$. Then by triangle inequality $\nrm*{\pi_1 - \pi_2} < N/4$ which contradicts the construction of $\Pi$.
Thus, we can further bound the quantity of interest as
\begin{align}
    \bbE_{\textrm{unif}}\brk[\Big]{
    \sum_{\pi^*} \indicator{\pi^*(X_{1:G} = \pi(X_{1:G})}
    }
    &\leq |\Pi| h^{d-1} \prod_{i=1}^{d-1} p_i + 1 + (|\Pi| - 1) \prn*{\frac{7}{8}}^h\nonumber
    \\
    &\leq  |\Pi| h^{d-1} \prod_{i=1}^{d-1} p_i + 1 + |\Pi| \exp\prn*{- h \log(8/7)}~.\label{eqn:expr_bound1_pi}
\end{align} 
We now set $h = \frac{\log|\Pi|}{\log(8/7)}$ which gives
\begin{align}
    \bbE_{\textrm{unif}}\brk[\Big]{
    \sum_{\pi^*} \indicator{\pi^*(X_{1:G} = \pi(X_{1:G})}
    }
    &\leq  2 + |\Pi|  \prod_{i=1}^{d-1}\prn[\Big]{ p_i \frac{\log|\Pi|}{\log(8/7)} } ~.
\end{align}
\end{proof}

\begin{lemma} 
\label{lem:G_lb} 
Let the progression probabilities $p_i = p$ for all $i \in [d-1]$ where $p \in (0, 1)$.
The time step $G$ within the episode at which a goal step is reached satisfies
\begin{align*}
\Pr \prn*{G \leq h} &\leq  (h-d+1) \prn[\Big]{\frac{2peh}{d}}^{d-1}. 
\end{align*}
\end{lemma}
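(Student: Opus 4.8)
The plan is to recognize $G$ as (essentially) the arrival time of the $(d-1)$-st success in a sequence of independent $\mathrm{Bernoulli}(p)$ trials, and then apply a union bound. First I would read off from the transition kernels in \pref{app:lower_bound_construction} that, regardless of which action is played and of whether the agent currently sits in a good or bad latent state, at each time step the chain index $i$ increases by one with probability $p$ and otherwise stays put, and that these increment indicators are mutually independent across time steps. Reaching a goal latent state $(d,g)$ or $(d,b)$ from the initial state $(1,g)$ requires exactly $d-1$ such increments, so the event $\{G \le h\}$ is contained in the event that at least $d-1$ of the increment indicators for the first $h$ time steps equal one.

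Next I would bound this latter event by a union bound over the $\binom{h}{d-1}$ subsets $S \subseteq \{1,\dots,h\}$ of size $d-1$: by independence the probability that the increments indexed by $S$ all equal one is exactly $p^{d-1}$, and every outcome with at least $d-1$ increments among the first $h$ steps contains such an $S$, hence
\[
    \Pr(G \le h) \;\le\; \binom{h}{d-1}\, p^{d-1}.
\]
(The same bound also follows by writing $\Pr(G\le h)=\sum_{g}\Pr(G=g)$, estimating $\Pr(G=g)\le\binom{g-1}{d-2}p^{d-1}$, and summing via the hockey-stick identity.) Then I would invoke the standard estimate $\binom{N}{n}\le (eN/n)^n$ to get $\binom{h}{d-1}\le \bigl(eh/(d-1)\bigr)^{d-1}$, and use $d-1\ge d/2$ (valid for $d\ge 2$) to further bound this by $\bigl(2eh/d\bigr)^{d-1}$, which yields $\Pr(G\le h)\le \bigl(2peh/d\bigr)^{d-1}$.

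Finally, to match the exact form in the statement I would observe that in the regime in which the lemma is applied we have $h\ge d$, so $h-d+1\ge 1$ and multiplying the right-hand side by $h-d+1$ only weakens the inequality (for $h\le d-2$ the left-hand side is simply $0$). I do not expect any genuine obstacle: the only two points requiring a little care are (i) justifying directly from the explicit transition dynamics that the index increments form an i.i.d.\ $\mathrm{Bernoulli}(p)$ sequence independent of the chosen actions and of the good/bad component, and (ii) the elementary algebra converting $\binom{h}{d-1}p^{d-1}$ into the stated $(h-d+1)\bigl(2peh/d\bigr)^{d-1}$; both are routine.
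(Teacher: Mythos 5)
Your proposal is correct and follows essentially the same route as the paper: both reduce $\{G \le h\}$ to counting $d-1$ Bernoulli($p$) progressions and bound the resulting binomial coefficient via $\binom{N}{n} \le (eN/n)^n$ together with $d \le 2(d-1)$. The only cosmetic difference is that you apply a single union bound to get $\binom{h}{d-1}p^{d-1}$ directly and then reinsert the harmless factor $h-d+1 \ge 1$ (valid in the regime $h \ge d$ where the lemma is invoked), whereas the paper sums $\Pr(G=i)$ over $i = d, \dots, h$, which is where its factor $h-d+1$ arises.
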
 
\begin{proof} For $G = i$, there must be exactly $d-1$ progressions in the $i-1$ previous time steps, each happening with probability $p$. Therefore
\begin{align*}
\Pr(G = i) = {i - 1 \choose d - 1} p^{d - 1} (1 - p)^{i - d}. 
\end{align*}
Thus, 
\begin{align*}
\Pr \prn*{G \leq h}   &= \sum_{i=d}^{h} \Pr(G = i)   
	=  \sum_{i=d}^{h} {i - 1 \choose d - 1} p^{d - 1} (1 - p)^{h - d }  \\
	&\leq  
	\sum_{i=d}^{h} {i - 1 \choose d - 1} p^{d - 1}  
	\leq \sum_{i=d}^{h} \prn[\Big]{\frac{e(i-1)}{d-1}}^{d-1} p^{d - 1}, \\
	&\leq (h-d+1) \prn[\Big]{\frac{2peh}{d}}^{d-1} , 
\end{align*} where the first inequality in the above is given by ignoring terms smaller than one, and the second inequality is due to the fact that any $n, k$, we have ${n \choose k} \leq \prn*{en / k}^k$ for $0 \leq k \leq n$. 
\end{proof} 

\begin{lemma} 
\label{lem:G_lb_pi} 
Let the probability for progressions be $p_1, \dots, p_{d-1}$. 
The time step $G$ within the episode at which a goal step is reached then satisfies
\begin{align*}
\Pr \prn*{G \leq h} & \leq  h^{d-1} \prod_{i=1}^{d-1} p_i
\end{align*}
\end{lemma}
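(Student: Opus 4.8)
The plan is to rerun the proof of \pref{lem:G_lb}, which treats the special case $p_1 = \dots = p_{d-1} = p$, while carrying the heterogeneous parameters $p_1,\dots,p_{d-1}$ through the computation. Recall from the construction in \pref{app:lower_bound_construction} that, independently of the action played and of the good/bad label, the chain index advances from $i$ to $i+1$ at each time step with probability $p_i$ and otherwise stays at $i$; moreover $G$ is the first time step at which latent state $(d,g)$ or $(d,b)$ is reached, which depends only on these progression coin flips. Thus the event $\{G \leq h\}$ requires exactly $d-1$ progressions to occur among the first $h$ (more precisely, $h-1$) time steps.

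First I would write down the probability mass function of $G$ explicitly. For $G = i$, the $d-1$ progression events must fall on some size-$(d-1)$ subset of the first $i-1$ time steps; conditioning on this subset, that event has probability $\prod_{j=1}^{d-1} p_j$ multiplied by a product of factors of the form $(1 - p_{i'}) \leq 1$ over the remaining (non-progression) steps. Bounding each such factor by $1$ and summing over all such subsets gives
\begin{align*}
\Pr(G = i) \leq {i - 1 \choose d - 1} \prod_{j=1}^{d-1} p_j.
\end{align*}
(Alternatively, one can skip the exact mass function and directly union bound over the at most ${h \choose d-1}$ choices of progression times in $\{1,\dots,h\}$, each contributing probability at most $\prod_{j=1}^{d-1} p_j$; this yields the same estimate.)

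Then I would sum over $i$ and invoke the hockey-stick identity $\sum_{i=d}^{h} {i-1 \choose d-1} = {h \choose d-1}$ together with the crude bound ${h \choose d-1} \leq h^{d-1}$:
\begin{align*}
\Pr(G \leq h) = \sum_{i=d}^{h} \Pr(G = i) \leq {h \choose d-1} \prod_{j=1}^{d-1} p_j \leq h^{d-1} \prod_{j=1}^{d-1} p_j,
\end{align*}
which is exactly the claimed bound.

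The only points requiring care are the off-by-one in relating $G$ to the number of progressions, and the fact that the progression coin flips are genuinely independent of the actions and of the $g/b$ labels; both are immediate from the construction in \pref{app:lower_bound_construction}. The combinatorial identity and the binomial estimate are routine, so I do not anticipate any genuine obstacle — this lemma is strictly a parametrized version of \pref{lem:G_lb}.
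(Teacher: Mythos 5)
Your proposal is essentially correct, but only via the parenthetical union-bound variant; the primary chain as written has a concrete slip. The hockey-stick identity is misquoted: $\sum_{i=d}^{h}\binom{i-1}{d-1}=\binom{h}{d}$, not $\binom{h}{d-1}$, and since $\binom{h}{d}\approx h^{d}/d!$ can exceed $h^{d-1}$ (whenever $h>d!$), summing the loose per-$i$ bound $\Pr(G=i)\leq\binom{i-1}{d-1}\prod_j p_j$ does \emph{not} deliver the stated $h^{d-1}\prod_j p_j$. The looseness comes from not pinning the last progression: the correct pmf bound is $\Pr(G=i)\leq\binom{i-2}{d-2}\prod_j p_j$ (the $(d-1)$-th progression must occur exactly at step $i-1$), and then $\sum_{i=d}^{h}\binom{i-2}{d-2}=\binom{h-1}{d-1}\leq h^{d-1}$ closes the argument. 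Equivalently, your alternative route — a single union bound over the at most $\binom{h}{d-1}\leq h^{d-1}$ placements of the $d-1$ progression times, each placement having probability at most $\prod_j p_j$ — is correct as stated and is the cleanest fix. Note also that this lemma is not quite "\pref{lem:G_lb} with parameters carried through": \pref{lem:G_lb}'s term-by-term summation is exactly what produces its extra $(h-d+1)$ factor, which the present bound does not have, so the homogeneous proof cannot be transplanted verbatim.

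For comparison, the paper proves this lemma by a different decomposition: it bounds, for each chain index $i$, the probability that the progression out of state $i$ occurs within the at most $h-1$ available trials by $1-(1-p_i)^{h-1}\leq (h-1)p_i\leq hp_i$, and multiplies over the $d-1$ states. That per-state product argument avoids any combinatorics over time placements; your union bound over placements is equally valid and arguably more transparent about where the factor $h^{d-1}$ (one factor of $h$ per required progression) comes from, at the cost of the care about pinning/ordering discussed above.
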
 
\begin{proof}
For the event $G \leq h$ to happen, there must have been a progression in each of the $d-1$ states within $h$ trials. Therefore
\begin{align*}
\Pr \prn*{G \leq h}  
&\leq  \prod_{i=1}^{d-1} (1 - (1 - p_i)^{h-1})
 \leq \prod_{i=1}^{d-1} ((h-1) p_i) \leq \prod_{i=1}^{d-1} (h p_i)~.
\end{align*} 
\end{proof} 
 
\end{document}